\documentclass[a4paper,oneside,english,british,australian,12pt,a4,round]{book}
\usepackage{lmodern}
\usepackage[T1]{fontenc}
\usepackage{fancyhdr}
\pagestyle{fancy}
\setcounter{secnumdepth}{3}
\setcounter{tocdepth}{3}
\setlength{\parindent}{0bp}
\usepackage{array}
\usepackage{float}
\usepackage{textcomp}
\usepackage{mathrsfs}
\usepackage{mathtools}
\usepackage{url}
\usepackage{multirow}
\usepackage{amsmath}
\usepackage{amsthm}
\usepackage{amssymb}
\usepackage{stackrel}
\usepackage{graphicx}
\usepackage{tablefootnote}
\usepackage{wasysym}
\usepackage[authoryear]{natbib}
\usepackage{nomencl}
\providecommand{\printnomenclature}{\printglossary}
\providecommand{\makenomenclature}{\makeglossary}
\makenomenclature

\makeatletter

\pdfpageheight\paperheight
\pdfpagewidth\paperwidth

\providecommand{\tabularnewline}{\\}
\floatstyle{ruled}
\newfloat{algorithm}{tbp}{loa}[chapter]
\providecommand{\algorithmname}{Algorithm}
\floatname{algorithm}{\protect\algorithmname}

\theoremstyle{plain}
    \ifx\thechapter\undefined
	    \newtheorem{thm}{\protect\theoremname}
	  \else
      \newtheorem{thm}{\protect\theoremname}[chapter]
    \fi
\theoremstyle{plain}
    \ifx\thechapter\undefined
      \newtheorem{lem}{\protect\lemmaname}
    \else
      \newtheorem{lem}{\protect\lemmaname}[chapter]
    \fi

\usepackage{graphicx}

\usepackage{colortbl} 
\definecolor{header_color}{rgb}{0.74,0.88,0.91}
\definecolor{even_color}{rgb}{0.9,0.9,0.9}
\definecolor{subheader_color}{rgb}{0.85,0.93,0.95}
\definecolor{childheader_color}{rgb}{1.0,0.93,0.87}

\definecolor{ccolor_best}{rgb}{1.0,0.9,0.9}
\definecolor{ccolor_wrong}{rgb}{1.0,0.85,0.85}


\usepackage[dvipsnames]{xcolor}
\usepackage{textcomp}
\usepackage{setspace}

\voffset 0pt
\leftmargin 0pt
\oddsidemargin 4.5cm \addtolength{\oddsidemargin}{-1in}
\evensidemargin 2.0cm \addtolength{\evensidemargin}{-1in}
\topmargin 2.5cm \addtolength{\topmargin}{-1.2in}
\headsep 0.63cm 
\footskip 0.63cm 
\addtolength{\footskip}{12pt}
\marginparsep 0.35cm \marginparpush 0.25cm \marginparwidth 2.39cm
\footskip .4in
\addtolength{\parskip}{\baselineskip}

\textwidth 15cm
\textheight 24.2cm

\setcounter{secnumdepth}{3}
\setcounter{tocdepth}{3}

\usepackage{graphicx}
\usepackage{amsmath}
\usepackage{latexsym}
\usepackage{colortbl}
\usepackage{algpseudocode}
\usepackage[latin1]{inputenc}

\usepackage{flushend}

\usepackage{multirow}

\usepackage{fancyhdr}
\fancyheadoffset{\marginparwidth-2.39cm}
\lhead{\nouppercase{\textrm{\small \rightmark}}}
\chead{}
\rhead{\thepage}
\lfoot{}
\cfoot{}
\rfoot{}

\usepackage[hidelinks]{hyperref}
\hypersetup{
     colorlinks   = true,
     citecolor    = Blue, 
     linkcolor    = Blue, 
     urlcolor     = Brown,
}

\renewcommand{\citet}{\citep}
\DeclareMathOperator{\sigmoid}{sigmoid}
\DeclareMathOperator{\relu}{relu}
\DeclareMathOperator{\sign}{sign}
\DeclareMathOperator{\softmax}{softmax}
\algnewcommand\algorithmicforeach{\textbf{for each}}
\algdef{S}[FOR]{ForEach}[1]{\algorithmicforeach\ #1\ \algorithmicdo}

\usepackage{chngcntr}

\usepackage{capt-of}

\makeatother

\usepackage{babel}
\addto\captionsaustralian{%
	\def\nomname{Nomenclature}%
}
\addto\captionsaustralian{\renewcommand{\lemmaname}{Lemma}}
\addto\captionsaustralian{\renewcommand{\theoremname}{Theorem}}
\addto\captionsbritish{%
	\def\nomname{Nomenclature}%
}
\addto\captionsbritish{\renewcommand{\algorithmname}{Algorithm}}
\addto\captionsbritish{\renewcommand{\lemmaname}{Lemma}}
\addto\captionsbritish{\renewcommand{\theoremname}{Theorem}}
\addto\captionsenglish{%
	\def\nomname{Nomenclature}%
}
\addto\captionsenglish{\renewcommand{\algorithmname}{Algorithm}}
\addto\captionsenglish{\renewcommand{\lemmaname}{Lemma}}
\addto\captionsenglish{\renewcommand{\theoremname}{Theorem}}

\def\nomname{Nomenclature}
\providecommand{\lemmaname}{Lemma}
\providecommand{\theoremname}{Theorem}

\begin{document}
\selectlanguage{british}%
\selectlanguage{english}%
\pagenumbering{roman} 
\setcounter{page}{1}
\pagestyle{empty}

\begin{center}
\vspace*{4cm}
\textbf{\LARGE{}Memory and attention }\\
\textbf{\LARGE{}in deep learning}{\LARGE\par}
\par\end{center}

\vspace{2cm}

\begin{center}
by
\par\end{center}

\vspace{-3cc}

\begin{center}
\textbf{\large{}Hung Thai Le}{\large\par}
\par\end{center}

\vspace{-3cc}

\begin{center}
BSc. (Honours)
\par\end{center}

\vspace{1.5cm}

\begin{center}
Submitted in fulfilment of the requirements for the degree of\\
 Doctor of Philosophy
\par\end{center}

\vspace{3cm}

\begin{center}
{\large{}Deakin University}{\large\par}
\par\end{center}

\vspace{-3cc}

\begin{center}
\emph{\small{}August 2019}{\small\par}
\par\end{center}\selectlanguage{australian}%

\selectlanguage{australian}%

\newpage{}

\pagestyle{fancy} 
\selectlanguage{english}%

\chapter*{Acknowledgements}

\markboth{}{Acknowledgements}

\selectlanguage{australian}%
\addcontentsline{toc}{chapter}{Acknowledgements}

\selectlanguage{english}%
I would like to thank my principal supervisor A/Prof. Truyen Tran
for his continual guidance and support. I have been lucky to have
an outstanding supervisor with deep insight and great vision, who
has taught me valuable lessons for both my work and personal life.
I would also like to express my appreciation to my co-supervisor Prof.
Svetha Venkatesh for giving me the opportunity to undertake research
at PRaDA and for her valuable advice and inspirational talks. Thanks
to my friends Kien Do, Tung Hoang, Phuoc Nguyen, Vuong Le, Romelo,
Tin Pham, Dung Nguyen, Thao Le, Duc Nguyen and everyone else at PRaDA
for making it an original and interesting place to do research. Most
of all, I would like to thank my parents, my sister and my wife for
their encouragement, love and support.\selectlanguage{australian}%

\newpage{}

\selectlanguage{english}%
\tableofcontents{}

\listoffigures

\listoftables

\listof{algorithm}{List of Algorithms}

\newpage{}

\chapter*{Abstract}

\markboth{}{Abstract}

\selectlanguage{australian}%
\addcontentsline{toc}{chapter}{Abstract}

\selectlanguage{english}%
Intelligence necessitates memory. Without memory, humans fail to perform
various nontrivial tasks such as reading novels, playing games or
solving \foreignlanguage{australian}{maths}. As the ultimate goal
of machine learning is to derive intelligent systems \foreignlanguage{british}{that}
learn and act automatically just like human, memory construction for
machine is inevitable. 

Artificial neural networks model neurons and synapses in the brain
by interconnecting computational units via weights, which is a typical
class of machine learning algorithms that resembles memory structure.
Their descendants with more complicated modeling techniques (a.k.a
deep learning) have been successfully applied to many practical problems
and demonstrated the importance of memory in the learning process
of machinery systems. 

Recent progresses on modeling memory in deep learning have revolved
around external memory constructions, which are highly inspired by
computational Turing models and biological neuronal systems. Attention
mechanisms are derived to support acquisition and retention operations
on the external memory. Despite the lack of theoretical foundations,
these approaches have shown promises to help machinery systems reach
a higher level of intelligence. The aim of this thesis is to advance
the understanding on memory and attention in deep learning. Its contributions
include: (i) presenting a collection of taxonomies for memory, (ii)
constructing new memory-augmented neural networks (MANNs) that support
multiple control and memory units, (iii) introducing variability via
memory in sequential generative models, (iv) searching for optimal
writing operations to maximise the memorisation capacity in slot-based
memory networks, and (v) simulating the Universal Turing Machine via
Neural Stored-program Memory\textendash a new kind of external memory
for neural networks. 

The simplest form of MANNs consists of a neural controller operating
on an external memory, which can encode/decode one stream of sequential
data at a time. Our proposed model called Dual Controller Write-Protected
Memory Augmented Neural Network extends MANNs to using dual controllers
executing the encoding and decoding process separately, which is essential
in some healthcare applications. One notable feature of our model
is the write-protected decoding for maintaining the stored information
for long inference. To handle two streams of inputs, we propose a
model named Dual Memory Neural Computer that consists of three controllers
working with two external memory modules. These designs provide MANNs
with more flexibility to process structural data types and thus expand
the range of application for MANNs. In particular, we demonstrate
that our architectures are effective for various healthcare tasks
such as treatment recommendation and disease progression. 

Learning generative models for sequential discrete data such as utterances
in conversation is a challenging problem. Standard neural variational
encoder-decoder networks often result in either trivial or digressive
conversational responses. To tackle this problem, our second work
presents a novel approach that models variability in stochastic sequential
processes via external memory, namely Variational Memory Encoder-Decoder.
By associating each read head of the memory with a mode in the mixture
distribution governing the latent space, our model can capture the
variability observed in natural conversations. 

The third work aims to give a theoretical explanation on optimal memory
operations. We realise that the scheme of regular writing in current
MANN is suboptimal in memory utilisation and introduces computational
redundancy. A theoretical bound on the amount of information stored
in slot-based memory models is formulated and our goal is to search
for optimal writing schemes that maximise the bound. The proposed
solution named Uniform Writing is proved to be optimal under the assumption
of equal contribution amongst timesteps. To balance between maximising
memorisation and overwriting forgetting, we modify the original solution,
resulting in a solution dubbed Cached Uniform Writing. The proposed
solutions are empirically demonstrated to outperform other recurrent
architectures, claiming the state-of-the-arts in various sequential
tasks.

MANNs can be viewed as a neural realisation of Turing Machines and
thus, can learn algorithms and other complex tasks. By leveraging
neural network simulation of Turing Machines to neural architecture
for Universal Turing Machines, we develop a new class of MANNs that
uses Neural Stored-program Memory to store the weights of the controller,
thereby following the stored-program principle in modern computer
architectures. By validating the computational universality of the
approach through an extensive set of experiments, we have demonstrated
that our models not only excel in classical algorithmic problems,
but also have potential for compositional, continual, few-shot learning
and question-answering tasks.

\newpage{}

\chapter*{Relevant Publications}

\markboth{}{Relevant Publications}

\selectlanguage{australian}%
\addcontentsline{toc}{chapter}{Relevant Publications}

\selectlanguage{english}%
Part of this thesis has been published or documented elsewhere. The
details of these publications are as follows: 

Chapter 4: 
\begin{itemize}
\item Le, H., Tran, T., \& Venkatesh, S. (2018). Dual control memory augmented
neural networks for treatment recommendations. In Pacific-Asia Conference
on Knowledge Discovery and Data Mining (pp. 273-284). Springer, Cham.
\item Le, H., Tran, T., \& Venkatesh, S. (2018). Dual memory neural computer
for asynchronous two-view sequential learning. In Proceedings of the
24th ACM SIGKDD International Conference on Knowledge Discovery \&
Data Mining (pp. 1637-1645). ACM.
\end{itemize}
Chapter 5:
\begin{itemize}
\item Le, H., Tran, T., Nguyen, T., \& Venkatesh, S. (2018). Variational
memory encoder-decoder. In Advances in Neural Information Processing
Systems (pp. 1508-1518).
\end{itemize}
Chapter 6:
\begin{itemize}
\item Le, H., Tran, T., \& Venkatesh, S. (2019). Learning to Remember More
with Less Memorization. In International Conference on Learning Representations.
2019.
\end{itemize}
Chapter 7:
\begin{itemize}
\item Le, H., Tran, T., \& Venkatesh, S. (2019). Neural Stored-program Memory.
In International Conference on Learning Representations. 2020.
\end{itemize}
Although not the main contributions, the following collaborative work
is the application of some work in the thesis:
\begin{itemize}
\item Khan, A., Le, H., Do, K., Tran, T., Ghose, A., Dam, H., \& Sindhgatta,
R. (2018). Memory-augmented neural networks for predictive process
analytics. arXiv preprint arXiv:1802.00938. 
\end{itemize}

\newpage{}

\markboth{}{Abbreviations}

\selectlanguage{australian}%
\addcontentsline{toc}{chapter}{Notation}

\selectlanguage{english}%
\renewcommand{\nomname}{Abbreviations} 

\printnomenclature{}

\newpage{}

\selectlanguage{australian}%
\pagenumbering{arabic}
\selectlanguage{english}%

\chapter{Introduction\label{chap:Introduction}}

\section{Motivations}

In a broad sense, memory is the ability to store, retain and then
retrieve information on request. In human brain, memory is involved
in not just remembering and forgetting but also reasoning, attention,
insight, abstract thinking, appreciation and imagination. Modern machine
learning models find and transfer patterns from training data into
some form of memory that will be utilised during inference. In the
case of neural networks, long-term memories on output-input associations
are stored in the weights on the connections between processing units.
These connections are a simple analogy of synapses between neurons
and this form of memory simulates the brain\textquoteright s neocortex
responsible for gradual acquisition of data patterns. Learning in
such scenario is slow since the signal from the output indicating
how to adjust the connecting weights will be both noisy and weak \citet{kumaran2016learning}.
While receiving training data samples, the learning algorithm performs
small update per sample to reach a global optimisation for the whole
set of data.

It is crucial to keep in mind that memory in neural networks does
not limit to the concept of storing associations in the observed data.
For example, in sequential processes, where the individual data points
are no longer independent and identically distributed (i.i.d.), some
form of short-term memory must be constructed across sequence before
the output is given to the network for weight updating. Otherwise,
the long-term memory on associations between the output and inputs,
which are given at different timestamps, will never be achieved. Interestingly,
both forms of memory are found in Recurrent Neural Networks (RNNs)
\citet{elman1990finding,jordan1997serial,rumelhart1988learning}\textendash{}
a special type of neural network capable of modeling sequences. The
featured short-term memory, also referred to as working memory, has
been known to relate with locally stable points \citet{hopfield1982neural,sussillo2014neural}
or transient dynamics \citet{maass2002real,jaeger2004harnessing}
of RNNs. Although these findings shed light into the formation of
the working memory, the beneath memory mechanisms and how they affect
the learning process remain unclear. With the rise of deep learning,
more layers with complicated interconnections between neurons have
been added to neural networks. These complications make it harder
to understand and exploit the working memory mechanisms. Worse still,
due to its short-term capacity, the working memory in RNNs struggles
to cope with long sequences. These challenges require new interpretations
and designs of memory for deep learning in general and RNNs in particular. 

In recent years, memory-augmented neural networks (MANNs) emerge as
a new form of memory construction for RNNs. They model external memories
explicitly and thus, overcome the short-term limitation of the working
memory. Known as one of the first attempts at representing explicit
memory for RNNs, the Long Short-Term Memory (LSTM) \citet{hochreiter1997long}
stores the \textquotedblleft world states\textquotedblright{} in a
cell memory vector, which is determined after a single exposure of
input at each timestep. By referring to the cell memory, LSTM can
bridge longer time lags between relevant input and output events,
extending the range of RNN's working memory. Recent advances have
proposed new external memory modules with multiple memory vectors
(slots) supporting attentional retrieval and fast-update \citet{2014arXiv1410.5401G,graves2016hybrid,weston2014memory}.
The memory slots are accessed and computed fast by a separated controller
whose parameters are slowly learnt weights. Because these memories
are external and separated, it is convenient to derive theoretical
explanations on memorisation capacity \citet{gulcehre2017memory,le2018learning}.
Nonetheless, with bigger memory and flexible read/write operators,
these models significantly outperform other recurrent counterparts
in various long-term sequential testbeds such as algorithmic tasks
\citet{2014arXiv1410.5401G,graves2016hybrid}, reasoning over graphs
\citet{graves2016hybrid}, continual learning \citet{lopez2017gradient},
few-shot learning \citet{santoro2016meta,Le2020Neural}, healthcare
\citet{le2018cdual,prakash2017condensed,Le:2018:DMN:3219819.3219981},
process analytics \citet{khan2018memory}, natural language understanding
\citet{le2018variational,le2018learning} and video question-answering
\citet{gao2018motion}. 

In this thesis, we focus on external memory of MANNs by explaining
and promoting its influence on deep neural architectures. In the original
formulation of MANNs, one controller is allowed to operate on one
external memory. This simple architecture is suitable for supervised
sequence labeling tasks where a sequence of inputs with target labels
are provided for supervised training. However, single controller/memory
design is limited for tasks involving sequence-to-sequence and especially,
multi-view sequential mappings. For example, an electronic medical
record (EMR) contains information on patient\textquoteright s admissions,
each of which consists of various views such as diagnosis, medical
procedure, and medicine. The complexity of view interactions, together
with the unalignment and long-term dependencies amongst views poses
a great challenge for classical MANNs. One important aspect of external
memory is its role in imagination or generative models. Sequence generation
can be supported by RNNs \citet{graves2013generating,chung2015recurrent},
yet how different kinds of memory in RNNs or MANNs cooperate in this
process has not been adequately addressed. Another underexplored problem
is to measure memorisation capacity of MANNs. There is no theoretical
analysis or clear understanding on optimal operations that a memory
should have to maximise its capacity. Finally, the current form of
external memory is definitely not the ultimate memory mechanism for
deep learning.  Current MANNs are equivalent to neural simulations
of Turing Machines \citet{graves2014neural}. Hence, in terms of computational
capacity, MANNs are not superior to RNNs, which are known to be Turing-complete
\citet{siegelmann1995computational}. This urges new designs of external
memory for MANNs that express higher computational power and more
importantly, reach the capacity of human memory. 

\section{Aims and Scope }

This thesis focuses on expanding the capacity of MANNs. Our objectives
are:
\begin{itemize}
\item To construct a taxonomy for memory in RNNs. 
\item To design novel MANN architectures for modeling different aspects
of memory in solving complicated tasks, which include multiple processes,
generative memory, optimal operation, and universality.
\item To apply such architectures to a wide range of sequential problems,
especially those require memory to remember long-term contexts.
\end{itemize}
We study several practical problems that require memory:
\begin{itemize}
\item \emph{Sequence to sequence mapping and multi-view sequential learning}.
The former can be found in treatment recommendation where given time\textendash ordered
medical history as input, we predict a sequence of future clinical
procedures and medications. The problem is harder than normal supervised
sequence labeling tasks because there are dual processes: input encoding
and output decoding. The latter is even more complicated as the input-output
relations not only extend throughout the sequence length, but also
span across views to form long-term intra-view and inter-view interactions,
which is common in drug prescription and disease progression in healthcare.
We aim to extend MANNs to handle these complexities, introducing generic
frameworks to solve multi-view sequence to sequence mapping problems. 
\item \emph{Learning generative models for sequential discrete data}. Tasks
such as translation, question-answering and dialog generation would
benefit from stochastic models that can produce a variety of outputs
for an input. Unfortunately, current approaches using neural encoder-decoder
models and their extensions using conditional variational \foreignlanguage{australian}{autoencoder}
often compose short and dull sentences. As memory plays an important
role in human imagination, we aim to use memory as a main component
that blends uncertainty and variance into neural encoder-decoder models,
thereby introducing variability while maintaining coherence in conversation
generation. 
\item \emph{Ultra-long sequential learning given limited memory resources}.
Current RAM-like memory models maintain memory accessing every timesteps,
thus they do not effectively leverage the short-term memory held in
the controller. Previous attempts try to learn ultra-long sequences
by expanding the memory, which is not always feasible and do not aim
to optimise the memory by some theoretical criterion. It is critical
to derive a theoretical bound on the amount of stored information
and formulate an optimisation problem that maximises the bound under
limited memory size constraint. Our theoretical analysis on this problem
results in novel writing mechanisms that exploit the short-term memory
and approximate the optimal solution. 
\item \emph{Universal sequential learning}. We focus on long-life learning
scenarios where sequences of tasks (subtasks) are handled by an agent,
which requires a memory for tasks to avoid catastrophic forgetting.
Similar situations occur when a Universal Turing Machine simulates
any other Turing Machines to perform universal tasks. Inspired by
the stored-program principle in computer architectures, we aim to
build a Neural Stored-program Memory that enables MANNs to switch
tasks through time, adapt to variable contexts and thus fully resemble
the Universal Turing Machine or Von Neumann Architecture. 
\end{itemize}

\section{Significance and Contribution }

The significance of this thesis is \foreignlanguage{australian}{organised}
around three central lines of work: (i) presenting taxonomy of memory
in RNNs that arise under distinct roles and relations to human memory
(ii) introducing novel MANN designs to model different aspects of
memory and (iii) applying these designs to a wide range of practical
problems in healthcare, dialog, natural language processing, few-shot,
continual learning, etc. In particular, our contributions are:
\begin{itemize}
\item A survey for various types of memory studied for RNNs. The survey
involves different forms of memory in the brain, popular memory constructions
in neural networks and a taxonomy of external memory based on  operational
mechanisms as well as relations to computational models. Several examples
of implementations by modern neural networks are also studied.
\item A generic deep learning model using external memory dubbed Dual Controller
Write-Protected Memory Augmented Neural Network for sequence to sequence
mapping. In the encoding phase, the memory is updated as new input
is read; at the end of this phase, the memory holds the history of
the inputs. During the decoding phase, the memory is write\textendash protected
and the decoding controller generates one output at a time. The proposed
model is demonstrated on the MIMIC-III dataset on two healthcare tasks:
procedure prediction and medication prescription. 
\item A novel MANN architecture named Dual Memory Neural Computer (DMNC)
that can model both synchronous and asynchronous dual view processes.
In the modeling facet, DMNC\textquoteright s contributions are three-fold:
(i) introducing a memory-augmented architecture for modeling multi-view
sequential processes, (ii) capturing long-term dependencies and different
types of interactions \foreignlanguage{british}{amongst} views including
intra-view, late and early inter-view interactions, and (iii) modeling
multiple clinical admissions by employing a persistent memory. In
the application facet, we contribute to the healthcare analytic practice
by demonstrating the efficacy of DMNC on drug prescription and disease
progression. 
\item A Variational Memory Encoder-Decoder (\foreignlanguage{british}{VMED})
framework for sequence generation. VMED introduces variability into
encoder-decoder architecture via the use of external memory as mixture
model. By modeling the latent temporal dependencies across timesteps,
our model produces a Mixture of Gaussians representing the latent
distribution. We form a theoretical basis for our model formulation
using mixture prior for every step of generation and apply our proposed
model to conversation generation problem. The results demonstrate
that VMED outperforms recent advances both quantitatively and qualitatively.
\item A theory driven approach for optimising memory operations in slot-based
MANNs. We contribute a meaningful measurement on MANN memory capacity.
Moreover, we propose Uniform Writing (UW) and Cached Uniform Writing
(CUW) as faster and optimal writing mechanisms for longer-term memorisation
in MANNs. Our models are grounded in theoretical analysis on the optimality
of the introduced measurement. With a comprehensive suite of synthetic
and practical experiments, we provide strong evidences that our simple
writing mechanisms are crucial to MANNs to reduce computation complexity
and achieve competitive performance in sequence modeling tasks.
\item A new type of external memory for neural networks that paves the way
for a new class of MANNs that simulate Universal Turing Machines.
The memory, which takes inspirations from the stored-program memory
in computer architecture, gives memory-augmented neural networks a
flexibility to change their control programs through time while maintaining
differentiability. The mechanism simulates modern computer behavior,
where CPU continually reads different instructions from RAM to execute
different functions, potentially making MANNs truly neural computers.
\end{itemize}

\section{Thesis Structure}

This thesis contains 8 chapters with supplementary materials in the
Appendix. The rest of the thesis is arranged in the following order:
\begin{itemize}
\item Chapter \ref{chap:Tax} presents our survey on taxonomy of memory
in RNNs. The chapter first reviews various memory definitions from
cognitive science. A brief introduction on the most basic neural network\textendash Feedforward
Neural Networks and their fundamental form of memory are then presented.
We process to the main part that covers Recurrent Neural Networks
(RNNs) and memory categories for RNNs based on their formations. Further
interpretations on memory taxonomy based on operational mechanisms
and automata simulations are also investigated. 
\item Chapter \ref{chap:MANN} reviews a special branch of memory in RNNs
and also the main focus of this thesis: memory-augmented neural networks
(MANNs). We first describe the Long Short-term Memory (LSTM) and its
variants. Next, we also spend a section for attention mechanism\textendash a
featured operation commonly exploited in accessing external memory
in MANNs. We then introduce several advanced developments that empower
RNNs with multiple memory slots, especially generic slot-based memory
architectures such as Neural Turing Machine and Differentiable Neural
Computer. 
\item Chapter \ref{chap:multiple} introduces Dual Control Memory-augmented
Neural Network (DC-MANN), an extension of MANN to model sequence to
sequence mapping. Our model supports write-protected decoding (DCw-MANN),
which is empirically proved suitable for sequence-to-sequence task.
We further extend our DC-MANN to a broader range of problems where
the input can come from multiple channels. To be specific, we propose
a general structure Dual Memory Neural Computer (DMNC) that can capture
the correlations between two views by exploiting two external memory
units. We conduct the experiments to validate the performance of these
models on applications in healthcare. 
\item Chapter \ref{chap:Variational-Memory-Encoder} presents a novel memory-augmented
generation framework called Variational Memory Encoder-Decoder. Our
external memory plays a role as a mixture model distribution generating
the latent variables to produce the output and take part in updating
the memory for future generation steps. We adapt Stochastic Gradient
Variational Bayes framework to train our model by minimising variational
approximation of KL divergence to accommodate the Mixture of Gaussians
in the latent space. We derive theoretical analysis to backup our
training protocol and evaluate our model on two open-domain and two
closed-domain conversational datasets.
\item Chapter \ref{chap:Optimal-Writing-in} suggests a meaningful measurement
on MANN's memory capacity. We then formulate an optimisation problem
that maximises the bound on the proposed measurement. The proposed
solution dubbed Uniform Writing is optimal under the assumption of
equal timestep contributions. To relax this assumption, we introduce
modifications to the original solution, resulting in a new solution
termed Cached Uniform Writing. This method aims to balance between
memorising and forgetting via allowing overwriting mechanism. To validate
the effectiveness of our solutions, we conduct experiments on six
ultra-long sequential learning problems given a limited number of
memory slots. 
\item Chapter \ref{chap:Neural-Stored-program-Memory} interprets MANNs
as neural realisations of Turing Machines. The chapter points out
a missing component\textendash the stored-program memory, that is
potential for making current MANNs truly neural computers. Then, a
design of Neural Stored-program Memory (NSM) is proposed to implement
stored-program principle, together with new MANN architectures that
materialise Universal Turing Machines. The significance of NSM lies
in its formulation as a new form of memory, standing in between slow-weight
and fast-weight concepts. NSM not only induces Universal Turing Machine
realisations, which imply universal artificial intelligence, but also
defines another type of adaptive weights, from which other neural
networks can also reap benefits. 
\item Chapter \ref{chap:Conclusions} \foreignlanguage{british}{summarises}
the main content of the thesis and outlines future directions.
\end{itemize}

\chapter{Taxonomy for Memory in RNNs\label{chap:Tax}}

\section{Memory in Brain}

Memory is a crucial part of any cognitive model studying the human
mind. This section briefly reviews memory types studied throughout
the cognitive and neuroscience literature. Fig. \ref{fig:Types-of-memory}
\foreignlanguage{australian}{shows} a taxonomy of cognitive memory
\citet{kotseruba201840}.

\subsection{Short-term Memory}

\paragraph*{Sensory memory}

Sensory memory caches impressions of sensory information after the
original stimuli have ended. It can also preprocess the information
before transmitting it to other cognitive processes. For example,
echoic memory keeps acoustic stimulus long enough for perceptual binding
and feature extraction processes. Sensory memory is known to associate
with temporal lope in the brain. In the neural network literature,
sensory memory can be designed as neural networks without synaptic
learning \citet{johnson2013robust}.

\paragraph*{Working memory}

Working memory holds temporary storage of information related to the
current task such as language comprehension, learning, and reasoning
\citet{baddeley1992working}. Just like computer that uses RAM for
its computations, the brain needs working memory as a mechanism to
store and update information to perform cognitive tasks such as attention,
reasoning and learning. Human neuroimaging studies show that when
people perform tasks requiring them to hold short-term memory, such
as the location of a flash of light, the prefrontal cortex becomes
active \citet{curtis2003persistent}. As we shall see later, recurrent
neural networks must construct some form of working memory to help
the networks learn the task at hand. As working memory is short-term
\citet{goldman1995cellular}, the working memory in RNNs also tends
to vanish quickly and needs the support from other memory mechanisms
to learn complex tasks that require long-term dependencies. 

\subsection{Long-term Memory}

\paragraph*{Motor/procedural memory}

The procedural memory, which is known to link to basal ganglia in
the brain, contains knowledge about how to get things done in motor
task domain. The knowledge may involve co-coordinating sequences of
motor activity, as would be needed when dancing, playing sports or
musical instruments. This procedural knowledge can be implemented
by a set of if-then rules learnt for a particular domain or a neural
network representing perceptual-motor associations \citet{salgado2012procedural}. 

\paragraph*{Semantic memory}

Semantic memory contains knowledge about facts, concepts, and ideas.
It allows us to identify objects and relationships between them. Semantic
memory is a highly structured system of information learnt gradually
from the world. The brain's neocortex is responsible for semantic
memory and its processing is seen as the propagation of activation
amongst neurons via weighted connections that slowly change \citet{kumaran2016learning}.

\paragraph*{Episodic memory}

Episodic memory stores specific instances of past experience. Different
from semantic memory, which does not require temporal and spatial
information, episodic remembering restores past experiences indexed
by event time or context \citet{tulving1972episodic}. Episodic memory
is widely acknowledged to depend on the hippocampus, acting like an
autoassociate memory that binds diverse inputs from different brain
areas that represent the constituents of an event \citet{kumaran2016learning}.
It is conjectured that the experiences stored in hippocampus transfer
to neocortex to form semantic knowledge as we sleep via consolidation
process. Recently, many attempts have been made to integrate episodic
memory into deep learning models and achieved promising results in
reinforcement \citet{mnih2015human,blundell2016model,pritzel2017neural}
and supervised learning \citet{graves2016hybrid,lopez2017gradient,Le:2018:DMN:3219819.3219981}.

\begin{figure}
\begin{centering}
\includegraphics[width=0.95\textwidth]{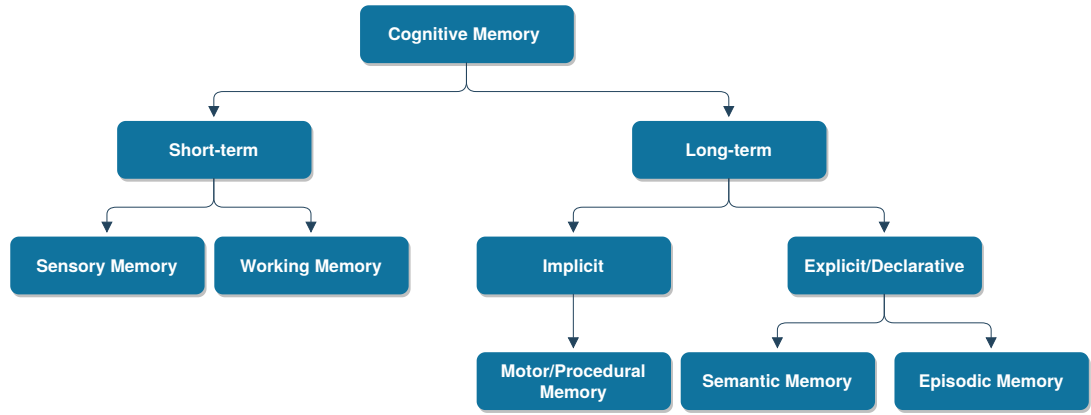}
\par\end{centering}
\caption{Types of memory in cognitive models\label{fig:Types-of-memory}}
\end{figure}

\section{Neural Networks and Memory}

\subsection{Introduction to Neural Networks}

\subsubsection*{Feed-forward neural networks}

A feed-forward neural network arranges neurons in layers with connections
going forward from one layer to another, creating a directed acyclic
graph. That is, connections going backwards or between nodes within
a layer are prohibited. Each neuron in the network is a computation
unit, which takes inputs from outputs of other neurons, then applies
a weighted sum followed by a nonlinear transform, and produces an
output. The multilayer perceptron (MLP) \nomenclature{MLP}{Multilayer Perceptron}
is a commonly used feed-forward neural network for classifying data
or approximating an unknown function. An example MLP is shown in Fig.
\ref{fig:A-multilayer-perceptron}, with three layers: input, output
and a single \textquotedblleft hidden\textquotedblright{} layer. In
order to distinguish linearly inseparable data points, the activation
function must be nonlinear. The weight of a connection, which resembles
synapse of the neocortex, is simply a coefficient by which the output
of a neuron is multiplied before being taken as the input to another
neuron. Hence, the total input to a neuron $j$ is

\begin{equation}
y_{j}=\underset{i}{\sum}w_{ij}x_{i}+b_{j}
\end{equation}
where $x_{i}$ is the output of a neuron $i$, $w_{ij}$ is the weight
of the connection from neuron $i$ to neuron $j$, and $b_{j}$ is
a constant offset or bias. The output of neuron $j$, or $x_{j}$,
is the result of applying an  activation function to $y_{j}$. The
following lists common activation functions used in modern neural
networks,

\begin{equation}
\sigmoid\left(z\right)=\frac{1}{1+e^{-z}}
\end{equation}

\begin{equation}
\tanh\left(z\right)=\frac{e^{z}-e^{-z}}{e^{z}+e^{-z}}
\end{equation}

\begin{equation}
\relu\left(z\right)=\max(z,0)
\end{equation}

\begin{figure}
\begin{centering}
\includegraphics[width=0.7\textwidth]{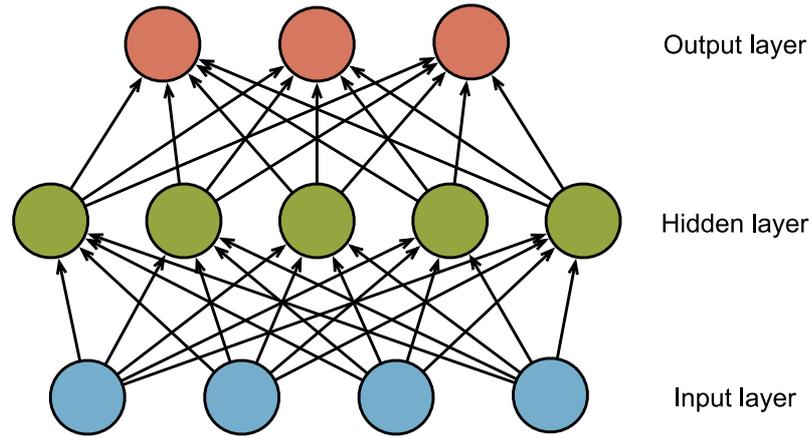}
\par\end{centering}
\caption{A multilayer perceptron with a single hidden-layer.\label{fig:A-multilayer-perceptron}}
\end{figure}

Given a set of training data with ground truth label for each data
points, the network is typically trained with gradient-based optimisation
algorithms, which estimate the parameters by minimising a loss function.
A popular loss function is the average negative log likelihood

\begin{equation}
\mathcal{L}=-\frac{1}{N}\stackrel[i=1]{N}{\sum}\log P\left(\hat{y}_{i}=y_{i}|x_{i}\right)
\end{equation}
where $N$ is the number of training samples, $x_{i}$ and $y_{i}$
is the $i$-th data sample and its label, respectively, and $\hat{y_{i}}$
is the predicted label. During training, forward propagation outputs
$\hat{y_{i}}$ and calculates the loss function. An algorithm called
back-propagation, which was first introduced in \citet{rumelhart1988learning},
computes the gradients of the loss function $\mathcal{L}$ with respect
to (w.r.t) the parameters $\theta=\left\{ w_{ij},b_{j}\right\} $.
Then, an \foreignlanguage{australian}{optimisation} algorithm such
as stochastic gradient descent updates the parameters based on their
gradients $\left\{ \frac{\partial\mathcal{L}}{\partial w_{ij}},\frac{\partial\mathcal{L}}{\partial b_{j}}\right\} $
as follows,

\begin{align}
w_{ij} & \coloneqq w_{ij}-\text{\ensuremath{\lambda\frac{\partial\mathcal{L}}{\partial w_{ij}}}}\\
b_{j} & \coloneqq b_{j}-\lambda\frac{\partial\mathcal{L}}{\partial b_{j}}
\end{align}
where $\lambda$ is a small learning rate. 

\subsubsection*{Recurrent neural networks}

A recurrent neural network (RNN)\nomenclature{RNN}{Recurrent Neural Network}
is an artificial neural network where connections between nodes form
a directed graph with self-looped feedback. This allows the network
to capture the hidden states calculated so far when activation functions
of neurons in the hidden layer are fed back to the input layer at
every time step in conjunction with other input features. The ability
to maintain the state of the system makes RNN especially useful for
processing sequential data such as sound, natural language or time
series signals. So far, many varieties of RNN have been proposed such
as Hopfield Network \citet{hopfield1982neural}, Echo State Network
\citet{jaeger2004harnessing} and Jordan Network \citet{jordan1997serial}.
Here, for the ease of analysis, we only discuss Elman\textquoteright s
RNN model \citet{elman1990finding} with single hidden layer as shown
in Fig. \ref{fig:A-typical-Recurrent}. 

\begin{figure}
\begin{centering}
\includegraphics[width=0.9\textwidth]{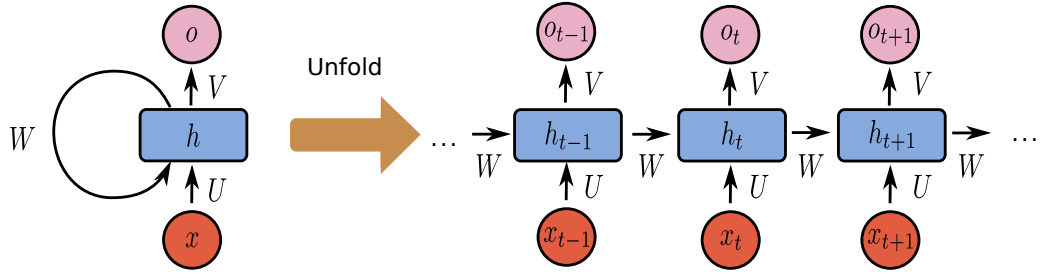}
\par\end{centering}
\caption{A typical Recurrent Neural Network (Left) and its unfolded representation
(Right). Each neuron at timestep $t$ takes into consideration the
current input $x_{t}$ and previous hidden state $h_{t-1}$ to generate
the $t$-th output $o_{t}$. $W$, $U$ and $V$ are learnable weight
matrices of the model.\label{fig:A-typical-Recurrent}}
\end{figure}

An Elman RNN consists of three layers, which are input ($x\in\mathbb{R^{\mathrm{N}}}$),
hidden ($h\in\mathbb{R}^{\mathrm{D}}$) and output ($o\in\mathbb{R}^{\mathrm{M}}$)
layer. At each timestep, the feedback connection forwards the previous
hidden state $h_{t-1}$ to the current hidden unit, together with
the values from input layer $x_{t}$, to compute the current state
$h_{t}$ and output value $o_{t}$. The forward pass begins with a
specification of the initial state $h_{0}$, then we apply the following
update equations
\begin{alignat}{1}
h_{t} & =f\left(h_{t-1}W+x_{t}U+b\right)\label{eq:h_rnn}\\
o_{t} & =g\left(h_{t}V+c\right)\label{eq:o_rnn}
\end{alignat}
where $b\in\mathbb{R}^{\mathrm{D}}$ and $c\in\mathbb{R}^{\mathrm{M}}$
are the bias parameters. $U\in\mathbb{R}^{\mathrm{N\times D}}$, $V\in\mathbb{R}^{\mathrm{D\times M}}$
and $W\in\mathbb{R}^{\mathrm{D\times D}}$ are weight matrices for
input-to-hidden, hidden-to-output and hidden-to-hidden connections,
respectively. $f$ and $g$ are functions that help to add non-linearity
to the transformation between layers. For classification problems,
$g$ is often chosen as the softmax function and the output $o_{t}$
represents the conditional distribution of $t$-th output given previous
inputs. The final output $\hat{y_{t}}$ is the label whose probability
score is the highest. By repeating the updates, one can map the input
sequence $x=\{x_{1},x_{2},...,x_{T}\}$ to an output sequence $\hat{y}=\{\hat{y}_{1},\hat{y}_{2},...,\hat{y}_{T}\}$.
The total loss for a given sequence $x$ paired with a ground-truth
sequence $y=\left\{ y_{1},y_{2},...,y_{T}\right\} $ would then be
the sum of the losses over all the timesteps

\[
\mathcal{L}\left(y|x\right)=\sum_{t=1}^{T}\mathcal{L}_{t}\left(y_{t}|x_{1},x_{2},...,x_{t}\right)=-\sum_{t=1}^{T}\log P\left(\hat{y}_{t}=y_{t}|x_{1},x_{2},...,x_{t}\right)
\]

The loss function can be minimised by using gradient descent approach.
The derivatives w.r.t the parameters can be determined by the Back-Propagation
Through Time algorithm \citet{werbos1990backpropagation}. RNNs are
widely used in sequential tasks such as language modeling \citet{mikolov2010recurrent},
handwriting generation \citet{graves2013generating} and speech recognition
\citet{graves2013speech}. RNNs demonstrate better performance than
other classical approaches using Hidden Markov Model (HMM) or Conditional
Random Fields (CRFs). 

\subsection{Semantic Memory in Neural Networks}

Neural networks learn structured knowledge representation from the
data by adjusting connection weights amongst the units in the network
under supervised training paradigms \citet{hinton1986learning,rumelhart1988learning,plunkett1992connectionism}.
The connection weights capture the semantic structure of the domain
under modeling \citet{mcclelland1995there,rogers2004semantic}. The
trained model generalises to novel examples rather than just naively
memorising training items. However, modern deep learning models are
often massively over-parameterised and thus prone to overfitting,
even to noise \citet{zhang2016understanding}. Further investigations
indicate that although deep networks may employ brute-force memorising
strategy, they should operate in a fashion that can perform inductive
generalisation \citet{arpit2017closer,krueger2017deep}. Unfortunately,
since all of these arguments are validated empirically or via simulations,
no theoretical principles governing semantic knowledge extraction
were given. 

The lack of theoretical guarantee remained until recently when\emph{
}Saxe et al. (2019) confirmed the existence of semantic memory in
neural network by theoretically describing the trajectory of knowledge
acquisition and organisation of neural semantic representations. The
paper is restricted to a simple linear neural network with one hidden
layer. The network is trained to correctly output the associated properties
or features of the input items (e.g., dog $\rightarrow$bark, horse
$\rightarrow$big). Each time a training sample $i$ is presented
as $\left\{ x_{i},y_{i}\right\} $, the weights of the network $W_{1}$
and $W_{2}$ are adjusted by a small amount to gradually minimise
the squared error loss $\mathcal{L}=\left\Vert y_{i}-\hat{y_{i}}\right\Vert ^{2}$.
The parameter update rule is derived via standard back propagation
as follows,

\begin{align}
\Delta W_{1} & =\lambda W_{2}^{\top}\left(y_{i}-\hat{y_{i}}\right)x_{i}^{\top}\\
\Delta W_{2} & =\lambda\left(y_{i}-\hat{y_{i}}\right)\left(W_{1}x_{i}\right)^{\top}
\end{align}
where $\lambda$ is the learning rate. We are interested in estimating
the total weight change after epoch $t$, which can be approximated,
when $\lambda\ll1$, as the following,

\begin{align}
\Delta W_{1}\left(t\right) & \approx\lambda PW_{2}\left(t\right)^{\top}\left(\varSigma^{yx}-W_{2}\left(t\right)W_{1}\left(t\right)\varSigma^{x}\right)\\
\Delta W_{2}\left(t\right) & \approx\lambda P\left(\varSigma^{yx}-W_{2}\left(t\right)W_{1}\left(t\right)\varSigma^{x}\right)W_{1}\left(t\right)^{\top}
\end{align}
where $P$ is the number of training samples; $\varSigma^{x}=E\left[xx^{\top}\right]$
and $\varSigma^{yx}=E\left[yx^{\top}\right]$ are input and input-output
correlation matrices, respectively. We can take the continuum limit
of this difference equation to obtain the following system of differential
equations

\begin{align}
\tau\frac{d}{dt}W_{1} & =W_{2}^{\top}\left(\varSigma^{yx}-W_{2}W_{1}\varSigma^{x}\right)\\
\tau\frac{d}{dt}W_{2} & =\left(\varSigma^{yx}-W_{2}W_{1}\varSigma^{x}\right)W_{1}^{\top}
\end{align}
where $\tau=\frac{1}{P\lambda}$. To simplify the equations, we assume
$\varSigma^{x}=I$ and apply reparametrisation trick to obtain

\begin{align}
\tau\frac{d}{dt}\overline{W}_{1} & =\overline{W}_{2}^{\top}\left(S-\overline{W}_{2}\overline{W}_{1}\right)\label{eq:w1de}\\
\tau\frac{d}{dt}\overline{W}_{2} & =\left(S-\overline{W}_{2}\overline{W}_{1}\right)\overline{W}_{1}^{\top}\label{eq:w2de}
\end{align}
where $S$ is the diagonal matrix in the singular value decomposition
of $\varSigma^{yx}=USV^{\top}$; $\overline{W}_{1}$ and $\overline{W}_{2}$
are new variables such that $W_{1}=R\overline{W}_{1}V^{\top}$ and
$W_{2}=U\overline{W}_{2}R$ with an arbitrary orthogonal matrix $R$.
When $\overline{W}_{1}\left(0\right)$ and $\overline{W}_{2}\left(0\right)$
are initialised with small random weights, we can approximate them
with diagonal matrices of equal modes. A closed form solution of the
scalar dynamic corresponding to each mode of Eqs. (\ref{eq:w1de})
and (\ref{eq:w2de}) can be derived as follows,

\begin{equation}
a_{\alpha}\left(t\right)=\frac{s_{\alpha}e^{2s_{\alpha}t/\tau}}{e^{2s_{\alpha}t/\tau}-1+s_{\alpha}/a_{\alpha}\left(0\right)}
\end{equation}
where $a_{\alpha}$ is a diagonal element of the time-dependent diagonal
matrix $A\left(t\right)$ such that $A\left(t\right)=\overline{W}_{2}\left(t\right)\overline{W}_{1}\left(t\right)$
. Inverting the change of variables yields

\begin{align}
W_{1}\left(t\right) & =Q\sqrt{A\left(t\right)}V^{\top}\\
W_{2}\left(t\right) & =U\sqrt{A\left(t\right)}Q^{-1}
\end{align}
where $Q$ is an arbitrary invertible matrix. If the initial weights
are small, then the matrix $Q$ will be close to a rotation matrix.
Factoring out the rotation, the hidden representation of item $i$
is

\begin{equation}
h_{i}^{\alpha}\left(t\right)=\sqrt{a_{\alpha}\left(t\right)}v_{i}^{\alpha}\label{eq:evol_ht}
\end{equation}
where $v_{i}^{\alpha}=V^{\top}\left[\alpha,i\right]$. Hence, we obtain
a temporal evolution of internal representations $h$ of the deep
network. By using multi-dimensional scaling (MDS) visualisation of
the evolution of internal representations over developmental time,
Saxe et al. (2019) demonstrated a progressive differentiation of hierarchy
in the evolution, which matched the data's underlying hierarchical
structure. When we have the explicit form of the evolution (Eq. (\ref{eq:evol_ht})),
this matching can be proved as an inevitable consequence of deep learning
dynamics when exposed to hierarchically structured data \citet{saxe2019mathematical}. 

\subsection{Associative Neural Networks}

Associative memory is used to store associations between items. It
is a general concept of memory that spans across episodic, semantic
and motor memory in the brain. We can use neural networks (either
feed-forward or recurrent) to implement associative memory. There
are three kinds of associative networks:
\begin{itemize}
\item Heteroassociative networks store $Q$ pair of vectors $\left\{ x^{1}\in\mathcal{X},y^{1}\in\mathcal{Y}\right\} $,
..., $\left\{ x^{Q}\right.\in\mathcal{X},$ $\left.y^{Q}\in\mathcal{Y}\right\} $
such that given some key $x^{k}$, they return value $y^{k}$. 
\item Autoassociative networks are a special type of the heteroassociative
networks, in which $y^{k}=x^{k}$ (each item is associated with itself).
\item Pattern recognition networks are also a special case where $x^{k}$
is associated with a scalar $k$ representing the item's category. 
\end{itemize}
Basically, these networks are used to represent associations between
two vectors. After two vectors are associated, one can be used as
a cue to retrieve the other. In principle, there are three functions
governing an associative memory:
\begin{itemize}
\item Encoding function $\otimes:\mathcal{X}\times\mathcal{\mathcal{Y}}\to\mathcal{M}$
associates input items into some form of memory trace $\mathcal{M}$.
\item Trace composition function $\mathcal{\oplus:M}\times\mathcal{\mathcal{M}}\to\mathcal{\mathcal{M}}$
combines memory traces to form the final representation for the whole
dataset.
\item Decoding function $\bullet:\mathcal{X}\times\mathcal{\mathcal{M}}\to\mathcal{\mathcal{Y}}$
produces a (noisy) version of the item given its associated.
\end{itemize}
Different models employ different kinds of functions (linear, non-linear,
dot product, outer product, tensor product, convolution, etc.). Associative
memory concept is potential to model memory in the brain \citet{marr1991theory}.
We will come across some embodiment of associative memory in the form
of neural networks in the next sections. 

\section{The Constructions of Memory in RNNs}

\subsection{Attractor dynamics }

Attractor dynamics denotes neuronal network dynamics which is dominated
by groups of persistently active neurons. In general, such a persistent
activation associates with an attractor state of the dynamics, which
for simplicity, can take the form of fixed-point \citet{amit1992modeling}.
This kind of network can be used to implement associative memory by
allowing the network's attractors to be exactly those vectors we would
like to store \citet{rojas2013neural}. The approach supports memory
for the items per se, and thus differs from semantic memory in the
sense that the items are often stored quickly and what being stored
cannot represent the semantic structure of the data. Rather, attractor
dynamics resembles working and episodic memory. Like episodic memory,
it acts as an associative memory, returning stored value when triggered
with the right clues. The capacity of attractor dynamics is low, which
reflects the short-term property of working memory. In the next part
of the sub-section, we will study these characteristics through one
embodiment of attractor dynamics.

\subsubsection*{Hopfield network}

The Hopfield network, originally proposed in 1982 \citet{hopfield1982neural},
is a recurrent neural network that implements associative memory using
fix-points as attractors. The function of the associative memory is
to recognise previously learnt input vectors, even in the case where
some noise has been added. To achieve this function, every neuron
in the network is connected to all of the others (see Fig. \ref{fig:Hopfield-network-and}
(a)). Each neuron outputs discrete values, normally $1$ or $-1$,
according to the following equation

\begin{equation}
x_{i}\left(t+1\right)=\sign\left(\stackrel[j=1]{N}{\sum}w_{ij}x_{j}\left(t\right)\right)\label{eq:hopfield_update}
\end{equation}
where $x_{i}\left(t\right)$ is the state of $i$-th neuron at time
$t$ and $N$ is the number of neurons. Hopfield network has a scalar
value associated with the state of all neurons $x$, referred to as
the \textquotedbl energy\textquotedbl{} or Lyapunov function,

\begin{equation}
E\left(x\right)=-\frac{1}{2}\stackrel[i=1]{N}{\sum}\stackrel[j=1]{N}{\sum}w_{ij}x_{i}x_{j}
\end{equation}

If we want to store $Q$ patterns $x^{p}$, $p=1,2,...,Q$, we can
use the Hebbian learning rule \citet{hebb1962organization} to assign
the values of the weights as follows,

\begin{equation}
w_{ij}=\stackrel[p=1]{Q}{\sum}x_{i}^{p}x_{j}^{p}
\end{equation}
which is equivalent to setting the weights to the elements of the
correlation matrix of the patterns\footnote{As an associative memory, Hopfield network implements $\otimes$,
$\oplus$, $\bullet$ by outer product, addition and nonlinear recurrent
function, respectively. }. 

Upon presentation of an input to the network, the activity of the
neurons can be updated (asynchronously) according to Eq. (\ref{eq:hopfield_update})
until the energy function has been minimised \citet{hopfield1982neural}.
Hence, repeated updates would eventually lead to convergence to one
of the stored patterns. However, the network will possibly converge
to spurious patterns (different from the stored patterns) as the energy
in these spurious patterns is also a local minimum. 

\subsubsection*{The capacity problem }

The memorisation of some pattern can be retrieved when the network
produces the desired vector $x^{p}$ such that $x\left(t+1\right)=x\left(t\right)=x^{p}$.
This happens when the crosstalk computed by

\begin{equation}
\stackrel[q=1,q\neq p]{Q}{\sum}x^{q}\left(x^{p}\cdot x^{q}\right)
\end{equation}
is less than $N$. If the crosstalk term becomes too large, it is
likely that previously stored patterns are lost because when they
are presented to the network, one or more of their bits are flipped
by the associative computation. We would like to keep the probability
that this could happen low, so that stored patterns can always be
recalled. If we set the upper bound for one bit failure at 0.01, the
maximum capacity of the network is $Q\thickapprox0.18N$ \citet{rojas2013neural}.
With this low capacity, RNNs designed as attractor dynamics have difficulty
handling big problems with massive amount of data. 

\begin{figure}
\begin{centering}
\includegraphics[width=0.95\textwidth]{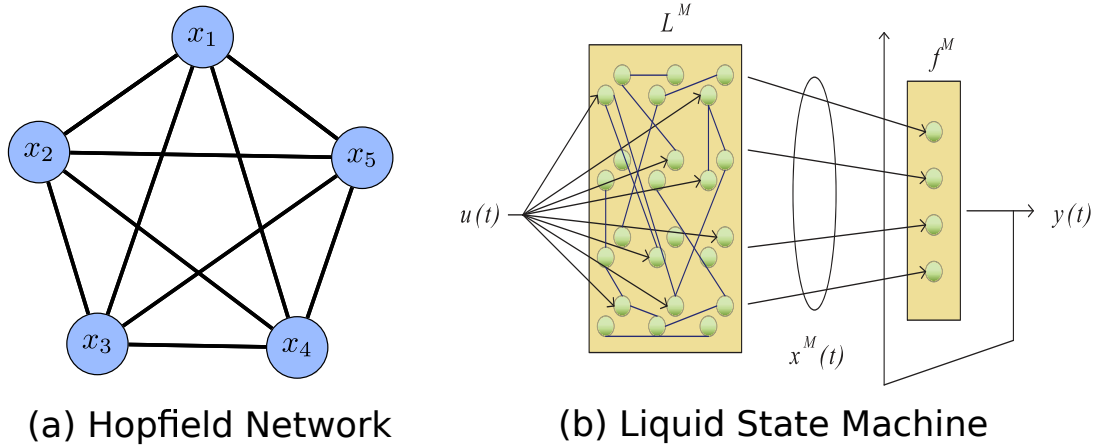}
\par\end{centering}
\caption{(a) Hopfield network with five neurons. (b) Structure of a Liquid
State Machine $M$. The machine wants to transform input stream $u(\cdot)$
into output stream $y(\cdot)$ using some dynamical system $L^{M}$
(the liquid). \label{fig:Hopfield-network-and}}
\end{figure}

\subsection{Transient Dynamics }

One major limitation of memorising by attractor mechanisms is the
incapability of remembering sequences of past inputs. This demands
a new paradigm to explain the working memory mechanism that enable
RNNs to capture sequential dependencies and memorise information between
distance external stimuli. Within this new paradigm, the trajectories
of network states should become the main carriers of information about
external sensory stimuli. Recent proposals \citet{maass2002real,maass2011liquid,jaeger2004harnessing}
have suggested that an arbitrary recurrent network could store information
about recent input sequences in its transient dynamics despite the
presence of attractors (the pattern might or might not converge to
the attractors). A useful analogy is the surface of a liquid. Transient
ripples on the surface can encode information about past objects that
were thrown in even though the water surface has no attractors \citet{ganguli2008memory}.
In the light of transient dynamics, RNNs carry past information to
serve a given task as a working memory. 

\subsubsection*{Liquid State Machines}

Liquid State Machines (LSMs) \nomenclature{LSM}{Liquid State Machine}
\citet{maass2002real} use a dynamic reservoir/liquid ($L^{M}$),
which consists of nodes randomly connected to each other, to handle
time-series data. The purpose is to map an input function of time
$u\left(t\right)$\textendash a continuous sequence of disturbances,
to an output function $y\left(t\right)$ that provides a real-time
analysis of the input sequence. In order to achieve that, we assume
that at every time $t$, $L^{M}$ generates an internal \textquotedblleft liquid
state\textquotedblright{} $x^{M}\left(t\right)$, which constitutes
its current response to preceding perturbations $u(s)$ for $s\leq t$.
After a certain time-period, the state of the liquid $x^{M}\left(t\right)$
is read as input for a readout network $f^{M}$, which by assumption,
has no temporal integration capability of its own. This readout network
learns to map the states of the liquid to the target outputs as illustrated
in Fig. \ref{fig:Hopfield-network-and} (b).

All information about the input $u(s)$ from preceding time points
$s\leq t$ that is needed to produce a target output $y(t)$ at time
$t$ has to be contained in the current liquid state $x^{M}\left(t\right)$.
LSMs allow realisation of large computational power on functions of
time even if all memory traces are continuously decaying. Instead
of worrying about the code and location where information about past
inputs is stored, the approach focuses on addressing the separation
question: for which later time point $t$ will any two significantly
different input functions of time $u\left(t\right)$ and $v\left(t\right)$
cause significantly different liquid states $x_{u}^{M}(t)$ and $x_{v}^{M}(t)$
\citet{maass2011liquid}. 

Most implementations of LSMs use the reservoir of untrained neurons.
In other words, there is no need to train the weights of the RNN.
The recurrent nature of the connections fuses the input sequence into
a spatio-temporal pattern of neuronal activation in the liquid and
computes a large variety of nonlinear functions on the input. This
mechanism is theoretically possible to perform universal continuous
computations. However, separation and approximation properties must
be fulfilled for the system to work well. Similar neural network design
can be found in Echo state networks \citet{jaeger2004harnessing}.
A Liquid State Machine is a particular kind of spiking neural networks
that more closely mimics biological neural networks \citet{maass1997networks}. 

\subsubsection*{Memory trace of recurrent networks\label{subsec:Memory-trace-of}}

When viewing recurrent networks as transient dynamics, one may want
to measure the lifetimes of transient memory traces in the networks.
Ganguli et al. (2018) studied a discrete time network whose dynamics
is given by

\begin{equation}
x_{i}\left(n\right)=f\left(\left[Wx\left(n-1\right)\right]_{i}+v_{i}s\left(n\right)+z_{i}\left(n\right)\right),\,i=1,...,N
\end{equation}
Here, a scalar time-varying signal $s(n)$ drives an RNN of $N$ neurons.
$x(n)$ is the network state at $n$-th timestep, $f(\cdot)$ is a
general sigmoidal function, $W$ is an $N\times N$ recurrent connectivity
matrix, and $v$ is a vector of feed-forward connections encoding
the signal into the network. $z(n)$ denotes a zero mean Gaussian
white noise with covariance $\left\langle z_{i}(k_{1}),z_{j}(k_{2})\right\rangle =\text{\ensuremath{\varepsilon\delta_{k_{1},k_{2}}}\ensuremath{\delta_{i,j}}}$.

The authors built upon Fisher information to construct useful measures
of the efficiency with which the network state $x(n)$ encodes the
history of the signal $s\left(n\right)$, which can be derived as

\begin{equation}
J\left(k\right)=v^{\top}W^{k\top}\left(\varepsilon\stackrel[k=0]{\infty}{\sum}W^{k}W^{k\top}\right)^{-1}W^{k}v\label{eq:fisher_jk}
\end{equation}
where $J\left(k\right)$ measures the Fisher information that $x(n)$
retains about a signal entering the network at $k$ time steps in
the past. For a special case of normal networks having a normal connectivity
matrix $W$, Eq. (\ref{eq:fisher_jk}) simplifies to

\begin{equation}
J\left(k\right)=\stackrel[i=1]{N}{\sum}v_{i}^{2}\left|\lambda_{i}\right|^{2k}\left(1-\left|\lambda_{i}\right|^{2}\right)
\end{equation}
where $\lambda_{i}$ is the $i$-th eigenvalue of $W$. For large
k, the decay of the Fisher information is determined by the magnitudes
of the largest eigenvalues and it decays exponentially. Similar findings
with different measurements on the memory trace in modern recurrent
networks are also found in a more recent work \citet{le2018learning}.

\section{External Memory for RNNs\label{sec:External-Memory-for}}

Recurrent networks can in principle use their feedback connections
to store representations of recent input events in the form of implicit
memory (either attractor or transient dynamics). Unfortunately, from
transient dynamics perspective, the implicit memory tends to decay
quickly \citet{ganguli2008memory,le2018learning}. This phenomenon
is closely related to gradient vanishing/exploding problems \citet{bengio1994learning,hochreiter1997long,pascanu2013difficulty}
which often occur when training RNNs with gradient-based algorithms
such as Back-Propagation Through Time \citet{williams1989learning,werbos1990backpropagation}.
A solution is to equip RNNs with external memory to cope with exponential
decay of the implicit short-term memory. The external memory enhances
RNNs with stronger working \citet{hochreiter1997long,cho2014gru,graves2014neural,graves2016hybrid}
or even episodic-like memory \citet{graves2014neural,santoro2016meta}.
We will spend the next sections to analyse different types of external
memory and their memory operation mechanisms. Examples of modern recurrent
neural networks that utilise external memory are also discussed. 

\subsection{Cell Memory\label{subsec:Cell-memory}}

Despite the fact that RNNs offer working memory mechanisms to handle
sequential inputs, learning what to put in and how to utilise the
memory is challenging. Back-Propagation Through Time \citet{williams1989learning,werbos1990backpropagation}
is the most common learning algorithm for RNNs, yet it is inefficient
in training long sequences mainly due to insufficient or decaying
backward error signals. This section will review the analysis of this
problem and study a group of methodologies that overcome the problem
through the use of cell memory and gated operation. 

\begin{figure}
\begin{centering}
\includegraphics[width=0.95\textwidth]{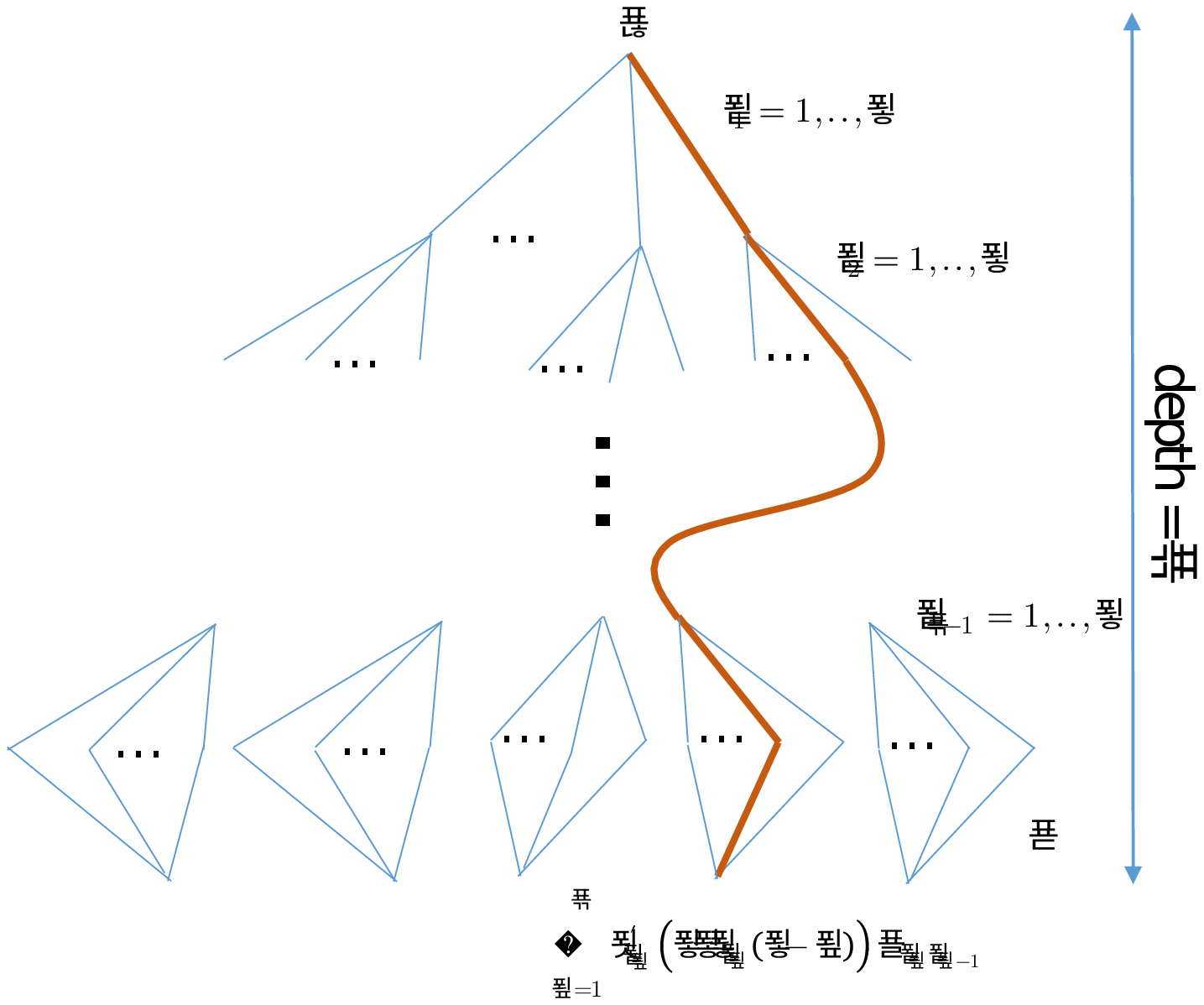}
\par\end{centering}
\caption{Error back flow from $\vartheta_{u}\left(t\right)$ to $\vartheta_{v}\left(t-q\right)$
in the computation graph. Each computation node has $n$ children.
Each product term corresponds to a computation path of depth $q$
from node $u$ to $v$. The sum of $n^{q-1}$ products is the total
error. \label{fig:Error-back-flow}}
\end{figure}

\subsubsection*{Hochreiter's analysis on gradient vanishing/exploding problems}

Let us assume that the hidden layer of an RNN has $n$ neurons. With
differentiable activation function $f_{i}$, the activation of a neuron
$i$ at step $t$ of the recurrent computation is as follow,

\begin{align}
y^{i}\left(t\right) & =f_{i}\left(\underset{j}{\sum}w_{ij}y^{j}\left(t-1\right)\right)\\
 & =f_{i}\left(net_{i}\left(t\right)\right)
\end{align}
The backpropagated error signal for neuron $j$ at step $t$ is

\begin{equation}
\vartheta_{j}\left(t\right)=f_{j}^{\prime}\left(net_{j}\left(t\right)\right)\underset{i}{\sum}w_{ij}\vartheta_{i}\left(t+1\right)
\end{equation}
The error occurring at an arbitrary neuron $u$ at time step $t$
($\vartheta_{u}\left(t\right)$) is backpropagated through time for
$q$ timesteps to an arbitrary neuron $v$ ($\vartheta_{v}\left(t-q\right)$).
We can measure the contribution of the former to the latter as the
following,

\begin{equation}
\frac{\partial\vartheta_{v}\left(t-q\right)}{\partial\vartheta_{u}\left(t\right)}=\begin{cases}
f_{v}^{\prime}\left(net_{v}\left(t-1\right)\right)w_{uv} & ;q=1\\
f_{v}^{\prime}\left(net_{v}\left(t-q\right)\right)\sum_{l=1}^{n}\frac{\partial\vartheta_{l}\left(t-q+1\right)}{\partial\vartheta_{u}\left(t\right)}w_{lv} & ;q>1
\end{cases}\label{eq:rec_bperr}
\end{equation}

By induction, we can obtain expressive form of the recursive Eq. (\ref{eq:rec_bperr})
as

\begin{equation}
\frac{\partial\vartheta_{v}\left(t-q\right)}{\partial\vartheta_{u}\left(t\right)}=\sum_{l_{1}=1}^{n}...\sum_{l_{q-1}=1}^{n}\prod_{m=1}^{q}f_{l_{m}}^{\prime}\left(net_{l_{m}}\left(t-m\right)\right)w_{l_{m}l_{m-1}}
\end{equation}
where $l_{q}=v$ and $l_{0}=u$. The computation can be visually explained
through a drawing of the computation graph as in Fig. \ref{fig:Error-back-flow}.

It is obvious to realise that if $\left|f_{l_{m}}^{\prime}\left(net_{l_{m}}\left(t-m\right)\right)w_{l_{m}l_{m-1}}\right|$
is greater (smaller) than $1$ for all $m$, then the largest product
increases (decreases) exponentially with $q$, which represents the
exploding and vanishing gradient problems in training neural networks.
These problems are critical since they prevent proper update on the
weights of the model, and thus freeze or disturb the learning process.
With nonlinear activation functions such as $\sigmoid$, the term
$\left|f_{l_{m}}^{\prime}\left(net_{l_{m}}\right)w_{l_{m}l_{m-1}}\right|$
goes to zero when $w_{l_{m}l_{m-1}}\to\infty$ and is less than $1$
when $\left|w_{l_{m}l_{m-1}}\right|<4$, which implies vanishing gradient
tends to occur with nonlinear activation function. 

We can also rewrite Eq. (\ref{eq:rec_bperr}) in matrix form for $q>1$
as follows,

\begin{equation}
\frac{\partial\vartheta_{v}\left(t-q\right)}{\partial\vartheta_{u}\left(t\right)}=W_{u}^{\top}F^{\prime}\left(t-1\right)\prod_{m=2}^{q-1}\left(WF^{\prime}\left(t-m\right)\right)W_{v}f_{v}^{\prime}\left(net_{v}\left(t-q\right)\right)
\end{equation}
where the weight matrix $W$ have its elements $W_{ij}=w_{ij}$. $W_{u}$
and $W_{v}$ are $u$'s incoming weight vector and $v$'s outgoing
weight vector, respectively, such that $\left[W_{u}\right]_{i}=w_{ui}$
and $\left[W_{v}\right]_{i}=w_{vi}$. $F^{\prime}\left(t-m\right)$
is a diagonal matrix whose diagonal elements $F^{\prime}\left(t-m\right)_{ii}=f_{i}^{\prime}\left(net_{i}\left(t-m\right)\right)$.
Using a matrix norm $\left\Vert \cdot\right\Vert _{A}$ compatible
with vector norm $\left\Vert \cdot\right\Vert _{p}$, we define

\begin{equation}
f_{max}^{\prime}\coloneqq\max_{m=1,...,q}\left\{ \left\Vert F^{\prime}\left(t-m\right)\right\Vert _{A}\right\} 
\end{equation}
By applying norm sub-multiplicativity and using the inequality 

\[
\left|x^{T}y\right|\leq n\left\Vert x\right\Vert _{\infty}\left\Vert y\right\Vert _{\infty}\leq n\left\Vert x\right\Vert _{p}\left\Vert y\right\Vert _{p},
\]
we obtain a weak upper bound for the contribution

\begin{equation}
\left|\frac{\partial\vartheta_{v}\left(t-q\right)}{\partial\vartheta_{u}\left(t\right)}\right|\leq n\left(f_{max}^{\prime}\left\Vert W\right\Vert _{A}\right)^{q}
\end{equation}

This result confirms the exploding and vanishing gradient problems
since the error backprob contribution decays (when $f_{max}^{\prime}\left\Vert W\right\Vert _{A}$
< 1) or grows (when $f_{max}^{\prime}\left\Vert W\right\Vert _{A}$
>1) exponentially with $q$. More recent analyses on the problems
are presented by Bengio et al., (1994) and Pascanu et al., (2013). 

\subsubsection*{Problem with naive solution}

When analysing a single neuron $j$ with a single connection to itself,
avoiding the exploding and vanishing gradient problems requires

\begin{equation}
f_{j}^{\prime}\left(net_{j}\left(t\right)\right)w_{jj}=1
\end{equation}
In this case, the constant error flow is enforced by using linear
function $f_{j}$ and constant activation (e.g., $f_{j}\left(x\right)=x$
with $\forall x$ and setting $w_{jj}=1$). These properties are known
as the \emph{constant error carousel} (CEC)\nomenclature{CEC}{Constant Error Carousel}.
The strict constraint makes this solution unattractive because it
limits computation capacity of RNNs with linear activation. Even worse,
neuron $j$ is connected to other neurons as well, which makes thing
complicated. Let us consider an additional input weight $w_{ji}$
connecting neuron $i$ to $j$. $w_{ji}$ is learnt to keep relevant
external input from $i$ such that $w_{ji}y_{i}>0$ when the input
signal $y_{i}$ is relevant. Assume that the loss function is reduced
by keeping neuron $j$ active ($>0$) for a while between two occurrences
of two relevant inputs. During that period, activation of neuron $j$
is possibly disturbed since with a fixed $w_{ji}$, $w_{ji}y_{i}<0$
with irrelevant inputs. Since $y^{j}\left(t\right)=f_{j}\left(w_{jj}y^{j}\left(t-1\right)+w_{ji}y^{i}\left(t-1\right)\right)$
where $f_{j}$ is linear, $y^{j}\left(t-1\right)$ is kept constant
and $y^{i}\left(t-1\right)$ scales with the external input, it is
likely to deactivate neuron $j$. Hence, if naively following CEC,
learning a $w_{ji}$ to capture relevant inputs while protecting neuron
$j$ from disturbances of irrelevant inputs is challenging (input
weight conflict \citet{hochreiter1997long}). Similar problem happens
with the output weight (output weight conflict). These conflicts make
the learning hard, and require a more flexible mechanism for controlling
input/output weight impact conditioned on the input signal. 

\subsubsection*{The original Long Short-Term Memory (LSTM)}

Hochreiter and Schmidhuber (1997) originally proposed LSTM\nomenclature{LSTM}{Long Short-Term Memory}
using multiplicative gate units and a memory cell unit to overcome
the weight conflicts while following CEC. The idea is to apply CEC
to neurons specialised for memorisation, each of which has an internal
state independent from the activation function. This separation between
memorisation and computation is essential for external memory concept.
Besides, to control input/output weight impact, gate units conditioned
on the inputs are multiplied with the incoming/outgoing connections,
modifying the connection value through time. In particular, if a neuron
$c_{j}$ becomes cell memory, its output is computed as

\begin{equation}
y^{c_{j}}\left(t\right)=y^{out_{j}}\left(t\right)h\left(s_{c_{j}}\left(t\right)\right)\label{eq:outcell}
\end{equation}
where $y^{out_{j}}\left(t\right)$ is the output gate, $h$ is a differentiable
function for scaling down the neuron's output, and $s_{c_{j}}$ captures
past information by using the dynamics

\begin{align}
s_{c_{j}}\left(0\right) & =0\\
s_{c_{j}}\left(t\right) & =y^{fg_{j}}\left(t\right)s_{c_{j}}\left(t-1\right)+y^{in_{j}}\left(t\right)f\left(net_{c_{j}}\left(t\right)\right)\,\mathrm{for}\,t>0\label{eq:scj}
\end{align}
where $y^{in_{j}}\left(t\right)$ is the input gate, $y^{fg_{j}}\left(t\right)$
is the (optional) forget gate and $f$ is the activation function,
which can be nonlinear. Without forget gate, $c_{j}$ can be viewed
as a neuron with an additional fixed self-connection. The computation
paths that mainly pass through this special neuron preserve the backward
error. The remaining problem is to protect this error from disturbance
from other paths. The gates are calculated as

\begin{equation}
y^{g_{j}}\left(t\right)=f_{g_{j}}\left(\sum_{u}w_{g_{j}u}y^{u}\left(t-1\right)\right)
\end{equation}
where $g$ can represent input, output and forget gate. The gates
are adaptive according to the input from other neurons, hence, it
is possible to learn $\left\{ w_{g_{j}u}\right\} $ to resolve the
input/output weight conflict problem. 

Although the cell memory provides a potential solution to cope with
training RNN over long time lag, unfortunately, in practice, the multiplicative
gates are not good enough to overcome a fundamental challenge of LSTM:
the gates are not coordinated at the start of training, which can
cause $s_{c_{j}}$ to explode quickly (internal state drift). Various
variants of LSTM have been proposed to tackle the problem \citet{greff2016lstm}.
We will review some of them in Chapter \ref{chap:MANN}. 

\subsubsection*{Cell memory as external memory}

From Eq. (\ref{eq:scj}), we can see the internal state of the cell
memory holds two types of information: (i) the previous cell state
and (ii) the normal state of RNN, which is the activation of current
computation. Therefore, the cell state contains a new form of external
memory for RNNs. The size of the memory is often equal the number
of hidden neurons in RNNs and thus, cell memory is also known as vector
memory. The memory supports writing and reading mechanisms implemented
as gated operations in $y^{in_{j}}\left(t\right)$ and $y^{out_{j}}$,
respectively. They control how much to write to and read from the
cell state. With the cell state, which is designed to keep information
across timesteps, the working memory capacity of LSTM should be greater
than that of RNNs. The memory reading and writing are also important
to determine the memory capacity. For instance, if writing irrelevant
information too often, the content in the cell state will saturate
and the memory fails to hold much information. Later works make use
of the gating mechanism to build skip-connections between inputs (a
source of raw memory) and neurons in higher layers \citet{srivastava2015training,he2016deep},
opening chance to ease the training of very deep networks. 

\subsection{Holographic Associative Memory}

The holographic associative memory (HAM\nomenclature{HAM}{Holographic Associative Memory})
roots its operation on the principle of optical holography, where
two beams of light are associated with one another in a holograms
such that reconstruction of one original beam can be made by presenting
another beam. Recall that the capacity of associative memory using
attractor dynamics is low. To maintain $Q$ pairs of key-value (in
Hopfield network, value is also key), it requires $N^{2}$ weight
storage where $Q\approx0.18N$. HAM presents a solution to compress
the key-values into a fixed size vector via Holographic Reduced Representation
(HRR) without substantial loss of information \citet{plate1995holographic}.
This can be done in real or complex domain using circular convolution
or element-wise complex multiplication for the encoding function ($\otimes$),
respectively. The compressed vector ($\mathcal{M}$), as we shall
see, can be used as external memory for RNNs. 

\subsubsection*{Holographic Reduced Representation }

Consider a complex-valued vector key $x\in\mathbb{C}^{N}$,

\begin{equation}
x=\left[x_{a}\left[1\right]e^{ix_{\phi}\left[1\right]},...,x_{a}\left[N\right]e^{ix_{\phi}\left[N\right]}\right]
\end{equation}
The association encoding is computed by

\begin{align}
m & =x\circledast y\\
 & =\left[x_{a}\left[1\right]y_{a}\left[1\right]e^{i\left(x_{\phi}\left[1\right]+y_{\phi}\left[1\right]\right)},...,x_{a}\left[N\right]y_{a}\left[N\right]e^{i\left(x_{\phi}\left[N\right]+y_{\phi}\left[N\right]\right)}\right]
\end{align}
where $\circledast$ is element-wise complex multiplication, which
multiplies the moduli and adds the phases of the elements. Trace composition
function is simply addition

\begin{equation}
m=x^{1}\circledast y^{1}+x^{2}\circledast y^{2}+...+x^{Q}\circledast y^{Q}
\end{equation}
Although the memory $m$ is a vector with the same dimension as that
of stored items, it can store many pairs of items since we only need
to store the information that discriminates them. The decoding function
is multiplying an inverse key $x^{-1}=\left[x_{a}\left[1\right]^{-1}e^{-ix_{\phi}\left[1\right]},...,x_{a}\left[N\right]^{-1}e^{-ix_{\phi}\left[N\right]}\right]$
with the memory as follows,

\begin{align}
\tilde{y} & =x^{-1}\circledast m\\
 & =x^{-1}\circledast\left(\sum_{\forall k}x^{k}\circledast y^{k}\right)\\
 & =y+x^{-1}\circledast\left(\sum_{\forall k:x^{k}\neq x}x^{k}\circledast y^{k}\right)\label{eq:holo_decode}
\end{align}
The second term in Eq. (\ref{eq:holo_decode}) is noise and should
be minimised. Under certain conditions, the noise term has zero mean
\citet{plate1995holographic}. One way to reconstruct better is to
pass the retrieved vector through an auto-associative memory to correct
any errors. 

\subsubsection*{Redundant Associative Long Short-Term Memory }

One recent attempt to apply HRR\nomenclature{HRR}{Holographic Reduced Representation}
to LSTM is the work by Danihelka et al. (2016). The authors first
propose Redundant Associative Memory, an extension of HRR with multiple
memory traces for multiple transformed copies of each key vector.
In particular, each key vector will be transformed $S$ times using
$S$ constant random permutation matrix $P_{s}$. Hence, we obtain
the memory trace $c_{s}$ for the $s$-th copy

\begin{equation}
c_{s}=\sum_{\forall k}\left(P_{s}x^{k}\right)\circledast y^{k}
\end{equation}
The $k$-th value is retrieved as follows,

\begin{align}
\tilde{y}^{k} & =\frac{1}{S}\sum_{s=1}^{S}\left(\overline{P_{s}x^{k}}\right)\circledast c_{s}\\
 & =y^{k}+\sum_{k^{\prime}\neq k}y^{k^{\prime}}\circledast\frac{1}{S}\sum_{s=1}^{S}P_{s}\left[\overline{x^{k}}\circledast x^{k^{\prime}}\right]
\end{align}
where $\overline{P_{s}x^{k}}$ and $\overline{x^{k}}$ are the complex
conjugates of $P_{s}x^{k}$ and $x^{k}$, respectively, which are
equal to the inverses if the modulus $x_{a}^{k}=1$. Since permuting
the key decorrelates the retrieval noise, the noise term has variance
$O\left(\frac{Q}{S}\right)$ and increase the number of copies will
enhance retrieval quality. 

Applying the idea to LSTM, we can turn the cell memory to a holographic
memory by encoding the term containing input activation in Eq. (\ref{eq:scj})
before added up to the cell memory. The network learns to generate
the key $x^{k}$ and the inverse key $\left(x^{-1}\right)^{k}$ for
$k$-th timestep. It should be noted that the inverse key at $k$-th
timestep can associate to some preceding key. Following Redundant
Associative Memory extension, multiple copies of cell memory are employed.
The cell state will be decoded to retrieve some past input activation
necessary for current output \citet{danihelka2016associative}. Then
the decoded value will be multiplied with the output gate as in Eq.
(\ref{eq:outcell}).

\subsection{Matrix Memory}

\subsubsection*{Correlation matrix memory}

Correlation Matrix Memory (CMM) stores associations between pairs
of vectors using outer product as the encoding function. Although
the purpose looks identical to that of attractor dynamics, CMM\nomenclature{CMM}{Correlation Matrix Memory}
is arranged differently using feed-forward neural network without
self-loop connections. The memory construction ($\otimes+\oplus$)
follows Hebbian learning

\begin{equation}
M=\sum_{i=1}^{Q}y_{i}x_{i}^{\top}
\end{equation}
where $Q$ is the number of stored patterns, $x_{i}$ and $y_{i}$
are the $i$-th key-value pair. The memory retrieval ($\bullet$)
is simply dot product

\begin{align}
\tilde{y_{j}} & =Mx_{j}\\
 & =\left(\sum_{i=1}^{Q}y_{i}x_{i}^{\top}\right)x_{j}\\
 & =\sum_{i=1,i\neq j}^{Q}y_{i}x_{i}^{\top}x_{j}+y_{j}\left\Vert x_{j}\right\Vert ^{2}
\end{align}
If the keys are orthonormal, then the retrieval is exact. Actually,
linear independence is enough for exact retrieval. In this case, WidrowHoff
learning rule should be used. 

When the stored values are binary vectors, a threshold function is
applied. The capacity for binary CMM is heavily dependent on the sparsity
of the patterns (the sparser the better). In general, CMM offers a
capacity that is at least comparable to that of the Hopfield model
\citet{baum1988internal}.

\subsubsection*{Fast-weight}

Fast-weights refer to synapses that change slower than neuronal activities
but much faster than the standard slow weights. These fast weights
form temporary memories of the recent past that support the working
memory of RNNs \citet{hinton1987using,schmidhuber1992learning,ba2016using}.
In a recent fast-weight proposal \citet{ba2016using}, the memory
is similar to a correlation matrix memory with decaying factor to
put more weight on the recent past. In particular, the fast memory
weight matrix $A$ is computed as follows,

\begin{equation}
A\left(t\right)=\lambda A\left(t-1\right)+\eta h\left(t\right)h\left(t\right)^{\top}
\end{equation}
where $\lambda$ and $\eta$ are the decay and learning rate, respectively.
$h\left(t\right)$ is the hidden state of the RNN and also the pattern
being stored in the associative memory. The memory is used to iteratively
refine the next hidden state of RNN as the following,

\begin{equation}
h_{s+1}\left(t+1\right)=f\left(\left[Wh\left(t\right)+Cx\left(t\right)\right]+A\left(t\right)h_{s}\left(t+1\right)\right)
\end{equation}
where $h_{0}\left(t+1\right)=f\left(Wh\left(t\right)+Cx\left(t\right)\right)$,
following the ordinary dynamics of RNNs and $h_{s}\left(t+1\right)$
is the hidden state at $s$-th step of refinement. 

\subsubsection*{Tensor product representation}

Tensor product representation (TPR) is a mechanism to store symbolic
structures. It shares common properties with CMM when the tensor is
of order 2, in which tensor product is equivalent to outer product.
In TPR\nomenclature{TPR}{Tensor Product Representation}, relations
between concepts are described by the set of filler-role bindings.
The vector space of filler and role are denoted as $V_{\mathcal{F}}$
and $V_{\mathcal{R}}$, respectively. The TPR is defined as a tensor
$T$ in a vector space $V_{\mathcal{F}}\otimes V_{\mathcal{R}}$,
where $\otimes$ is the tensor product operator, which is computed
as

\begin{equation}
T=\sum_{i}f_{i}\otimes r_{i}
\end{equation}
where $f_{i}$ and $r_{i}$ are vectors representing some filler and
role, respectively. The tensor dot product $\bullet$ is used to decode
the memory as follows,

\begin{equation}
f_{j}=T\bullet r_{j}
\end{equation}

For example, the following 4 concepts have relations: \emph{dog}(\emph{bark})
and \emph{horse}(\emph{big}) in which the set of filler is $\mathcal{F}=\left\{ bark,horse\right\} $
and the set of role is $\mathcal{R}=\left\{ bark,big\right\} $. The
TPR of these concepts is 

\begin{equation}
T=f_{dog}\otimes r_{bark}+f_{horse}\otimes r_{big}
\end{equation}
Or we can encode a tree structure as in Fig. \ref{fig:SDM's-memory-write}
(a) by the following operations:

\begin{align}
T & =A\otimes r_{0}\otimes+\left(B\otimes r_{0}+C\otimes r_{1}\right)\otimes r_{1}\\
 & =A\otimes r_{0}\otimes+B\otimes r_{0}\otimes r_{1}+C\otimes r_{1}\otimes r_{1}\\
 & =A\otimes r_{0}\otimes+B\otimes r_{01}+C\otimes r_{11}
\end{align}
This mechanism allows storing symbolic structures and grammars and
thus supports reasoning. For further details, we refer readers to
the original work \citet{smolensky1990tensor} and recent application
to deep learning \citet{schlag2018learning,le2020self}.

\begin{figure}
\begin{centering}
\includegraphics[width=0.9\textwidth]{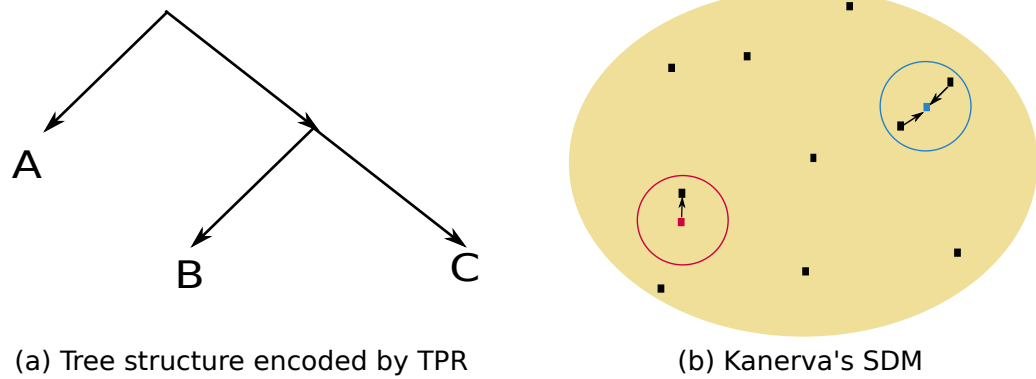}
\par\end{centering}
\caption{(a) Example of a tree encoded by TPR. (b) SDM's memory write (red)
and read (blue) access. The read and write involve all memory locations
around the queried points.\label{fig:SDM's-memory-write}}
\end{figure}

\subsection{Sparse Distributed Memory}

Matrix memory is a direct extension to vector memory for RNNs. There
are two ways to build a matrix memory: correlation matrix memory (or
tensor memory) and sparse distributed memory. While the former focuses
on storing the associations amongst items (e.g., Hopfield network,
Holographic memory and CMM), the latter aims to store each item as
a high-dimensional vector, which is closer to Random Access Memory
in computer architecture. Because each vector is physically stored
in a memory slot, we also refer to this model as slot-based memory.
Sparse distributed memory (SDM\nomenclature{SDM}{Sparse Distributed Memory})
can represent correlation matrix memory, computer memory, feed-forward
artificial neural networks and associative-memory models of the cerebellum.
Such a versatility naturally results in SDM's applications to RNN
as one form of external memory. 

\subsubsection*{Kanerva memory model }

In 1988, Pentti Kanerva introduced the SDM as a new approach to model
human long-term memory \citet{kanerva1988sparse}. The model revolves
around a simple idea that the distances between concepts in our minds
correspond to the distances between points of a high-dimensional space.
As we, when hinted by key signals, tend to remember specific things
such as individual, object, scene and place, the brain must make the
identification nearly automatic, and high-dimensional vectors as internal
representations of things do that. Another important property of high
dimensional spaces is that distance between two random points should
be far, which allows inexact representation of the point of interest.
In other words, using long vectors to store items enables a fault-tolerant
and robust memory. 

The SDM stores items (binary vectors) in a large number of hard locations
or memory slots whose addresses ($m_{a}$) are given by binary strings
of length $D$, randomly distributed throughout the address space
$\left\{ 0,1\right\} ^{D}$. Input to the memory consists of two binary
patterns, an address pattern (location to be accessed) and a content
pattern (item to be stored). The pattern is called self-addressing
when its content is also its address. Furthermore, in SDM, each memory
slot $m$ is armed with a vector of counters $m_{c}$ initialised
to $0$ with the same length of the content. The memory operations
are based on similarity between the addresses. 

\begin{algorithm}[t]
\begin{algorithmic}[1]
\Require{input $x$ and SDM}
\State{Find a set of chosen locations $M(x)$ using Eq. ($\ref{eq:sdm_r}$)}
\ForEach{$m$ in $M(x)$}
\For{$i=1,D$}
\If{$x_c[i]==1$}
\State{$m_c[i]\mathrel{+}=1$}
\Else
\State{$m_c[i]\mathrel{-}=1$}
\EndIf
\EndFor
\EndFor
\end{algorithmic} 

\caption{Memory writing in SDM\label{alg:kanerva_w}}
\end{algorithm}

\paragraph{Memory writing}

When storing input item $x=\left(x_{a},x_{c}\right)$ to the SDM,
the address pattern $x_{a}$ is compared against all memory location
addresses. Relevant physical locations to consider are those which
lie within a hypersphere of radius $r$ centered on the address pattern
point$ $

\begin{equation}
M\left(x\right)=\left\{ m:d\left(m_{a},x_{a}\right)<r\right\} \label{eq:sdm_r}
\end{equation}
where $d$ is some similarity measure between 2 vectors. In the original
model, Kanerva used Hamming distance. The content is distributed in
the set of locations $M\left(x\right)$ as in Algo. \ref{alg:kanerva_w}.

\paragraph{Memory reading}

Basically, reading from any point in the memory space pools the data
of all nearby locations. Given a cue address $x_{a}^{\prime}$, contents
of the counters at locations near $x_{a}^{\prime}$ are summed and
thresholded at zero to return the binary content. The proximity criteria
still follows Eq. (\ref{eq:sdm_r}). The reading mechanism allows
SDM to retrieve data from imprecise or noisy cues. Fig. \ref{fig:SDM's-memory-write}
(b) visualises the memory access behaviors. 

The assumption underlying the original SDM are: (i) the location addresses
are fixed, and only the contents of the locations are modifiable,
(ii) the locations are sparse and distributed across the address space
$\left\{ 0,1\right\} ^{D}$ (e.g., randomly sample $10^{6}$ addresses
from an address space of $1000$ dimensions ). These assumptions make
the model perform well on storing random input data.

\paragraph{SDM as an associative matrix memory}

We can implement SDM by using three operations of associative memory.
The minimum setting for this implementation includes:
\begin{itemize}
\item A hard-address matrix $A\in\mathbb{B}^{N\times D}$ where $N$ and
$D$ are the numbers of memory locations and the dimension of the
address space, respectively. 
\item A counter (content) matrix $C\in\mathbb{B}^{N\times D}$.
\item Cosine similarity is used to measure proximity.
\item Threshold function $\boldsymbol{y}$ that maps distances to binary
values:$\boldsymbol{y}\left(d\right)=1$ if $d\geq r$ and vice versa.
\item Threshold function $\boldsymbol{z}$ that converts a vector to binary
vector: $\boldsymbol{z}\left(x\right)=1$ if $x\geq0$ and vice versa.
\end{itemize}
Then, the memory writing ($\otimes+\oplus$) and reading ($\bullet$)
become

\begin{align}
C & \coloneqq C+\boldsymbol{y}\left(Ax_{a}\right)x_{c}^{\top}\\
x_{c}^{\prime} & =\boldsymbol{z}\left(C^{\top}\boldsymbol{y}\left(Ax_{a}^{\prime}\right)\right)
\end{align}
These expressions are closely related to attention mechanisms commonly
used nowadays (Sec. \ref{subsec:Attention-mechanism}).

In general, SDM overcomes limitations of correlation matrix memory
such as Hopfield network since the number of stored items in SDM is
not limited by the number of processing elements. Moreover, one can
design SDM to store a sequence of patterns. Readers are referred to
Keeler (1988) for a detailed comparison between SDM and Hopfield network
\citet{keeler1988comparison}. 

\subsubsection*{Memory-augmented neural networks and attention mechanisms}

The current wave of deep learning has leveraged the concept of \foreignlanguage{australian}{SDM}
to external neural memory capable of supporting the working memory
of RNNs \citet{weston2014memory,graves2014neural,graves2016hybrid,miller2016key}.
These models enhance the SDM with real-valued vectors and learnable
parameters. For example, the matrices $A$ and $C$ can be automatically
generated by a learnable neural network. To make whole architecture
learnable, differentiable functions and flexible memory operations
must be used. Attention mechanisms are the most common operations
used in MANNs\nomenclature{MANN}{Memory-augmented Neural Network}
to facilitate the similarity-based memory access of SDM. Through various
ways of employing attentions, RNNs can access the external memory
in the same manner as one accesses SDM. Details on neural distributed
(slot-based) memory and attention mechanisms will be provided in Chapter
\ref{chap:MANN}. 

\section{Relation to Computational Models\label{sec:Relation-to-Computational}}

Automatons are abstract models of machines that perform computations
on an input by moving through a series of states \citet{sipser2006introduction}.
Once the computation reaches a finish state, it accepts and possibly
produces the output of that input. In terms of computational capacity,
there are three major classes of automaton:
\begin{itemize}
\item Finite-state machine 
\item Pushdown automata 
\item Turing machine 
\end{itemize}
Pushdown automata and Turing machine can be thought of as extensions
of finite-state machines (FSMs\nomenclature{FSM}{Finite-state Machine})
when equipped with an external storage in the form of stack and memory
tape, respectively. With stored-program memory, an even more powerful
class of machines, which simulates any other Turing machines, can
be built as universal Turing machine \citet{turing1937computable}.
As some Turing machines are also universal, they are usually regarded
as one of the most general and powerful automata besides universal
Turing machines. 

One major objective of automata theory is to understand how machines
compute functions and measure computation power of models. For example,
RNNs, if properly wired, are Turing-complete \citet{siegelmann1995computational},
which means they can compute arbitrary sequences if they have unlimited
memory. Nevertheless, in practice, RNNs struggle to learn from the
data to predict output correctly given simple input sequence \citet{bengio1994learning}.
This poses a question on the effective computation power of RNNs. 

Another way to measure the capacity of RNNs is via simulations of
operations that they are capable of doing. The relationship between
RNNs and \foreignlanguage{australian}{FSMs} has been discovered by
many \citet{giles1992learning,casey1996dynamics,omlin1996constructing,tivno1998finite},
which suggest that RNNs can mimic FSMs by training with data. The
states of an RNN must be grouped into partitions representing the
states of the generating automation. Following this line of thinking,
we can come up with neural architectures that can simulate pushdown
automata, Turing machine and universal Turing machine. Neural stack
is an example which arms RNN with a stack as its external memory \citet{mozer1993connectionist,joulin2015inferring,grefenstette2015learning}.
By simulating push and pop operations, which are controlled by the
RNN, neural stack mimics the working mechanism of pushdown automata.
Neural Turing Machine and its extension Differentiable Neural Computer
\citet{graves2014neural,graves2016hybrid} are prominent neural realisations
of Turing machine. They use an RNN controller to read from and write
to an external memory in a manner resembling Turing machine's operations
on its memory tape. Since the memory access is not limited to the
top element as in neural stack, these models have more computational
flexibility. Until recently, Le et al.\emph{ }(2020) extended the
simulation to the level of universal Turing machine \citet{Le2020Neural,le2020neurocoder}
by employing the stored-program principle \citet{turing1937computable,vonNeumann:1993:FDR:612487.612553}.
We save a thorough analysis on the correspondence between these MANNs
and Turing machines for Chapter \ref{chap:Neural-Stored-program-Memory}.
Here, we briefly draw a correlation between models of recurrent neural
networks and automata (see Fig. \ref{fig:Relation-between-external}
). 

It should be noted that the illustration is found on the organisation
of memory in the models rather than the computational capacity. For
example, some Turing machines are equivalent to universal Turing machine
in terms of capacity; RNNs are on par with other MANNs because they
are all Turing-complete. Having said that, when neural networks are
organised in a way that simulates powerful automata, their effective
capacity is often greater and thus, they tend to perform better in
complicated sequential learning tasks \citet{graves2014neural,graves2016hybrid,Le2020Neural}.
A similar taxonomy with proof of inclusion relation amongst models
can be found in the literature \citet{ma2018taxonomy}. 

\begin{figure}
\begin{centering}
\includegraphics[width=0.95\textwidth]{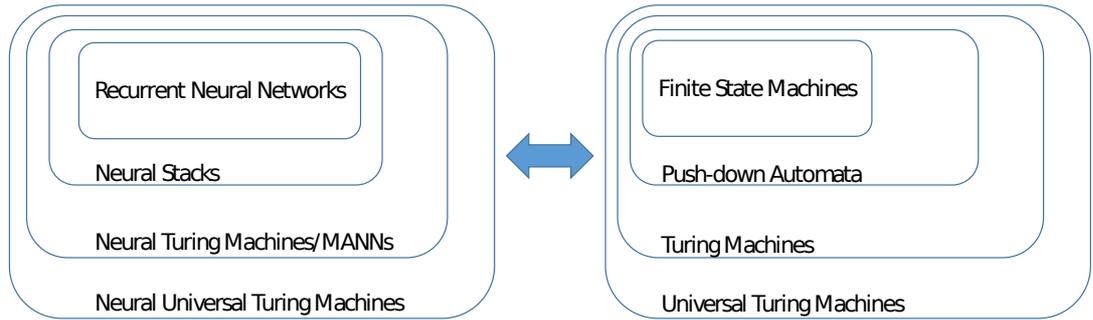}
\par\end{centering}
\caption{Relation between external memory and computational models\label{fig:Relation-between-external}}

\selectlanguage{australian}%
\selectlanguage{australian}%
\end{figure}

\section{Closing Remarks}

We have briefly reviewed different kinds of memory organisations in
the neural network literature. In particular, we described basic neural
networks such as Feed-forward and Recurrent Neural Networks and their
primary forms of memory constructions, followed by a taxonomy on mathematical
models of well-known external memory designs based on memory operational
mechanisms and relations to automation theory. In the next chapter,
we narrow the scope of literature review to the main content of this
thesis: Memory-augment Neural Networks and their extensions.

\chapter{Memory-augmented Neural Networks\label{chap:MANN}}

\section{Gated RNNs}

\subsection{Long Short-Term Memory\label{subsec:Long-Short-Term-Memory}}

Despite its ability to model temporal dependencies in sequential data,
RNNs face a big mathematical challenge of learning long sequences.
The basic problem is that gradients propagated over many steps tend
to either vanish or explode. Although the explosion can be prevented
with the use of activation functions (i.e., $\tanh$ or sigmoid) that
restrict the range of update values, the vanishing problem remains
with these nonlinear activation functions (Sec. \ref{subsec:Cell-memory}).
The difficulty with long-term dependencies arises from the exponentially
smaller weights given to long-term interactions compared to short-term
ones. In practice, experiments have shown that RNNs might find it
hard to learn sequences of only length 10 or 20 \citet{bengio1994learning}. 

Long Short-Term Memory (LSTM) \citet{hochreiter1997long} is introduced
as a simple yet clever way to alleviate the problem. The core idea
is to produce paths where the gradient can flow for long duration
by adding a linear self-loop memory cell to the computation of the
hidden unit. Notably, the weight of the linear self-loop is gated
(controlled by another hidden unit) and dependent on the input. This
enables the network to dynamically moderate the amount of information
passed by the hidden unit. In LSTM, there is a system of gating units
that controls the flow of information, as illustrated in Fig. \ref{fig:Block-diagram-of}.
The modern LSTM model is slightly different from the original LSTM
presented in Sec. \ref{subsec:Cell-memory}, in which we move from
neuronal to vector representation with additional parameters. 

\begin{figure}
\begin{centering}
\includegraphics[width=0.9\textwidth]{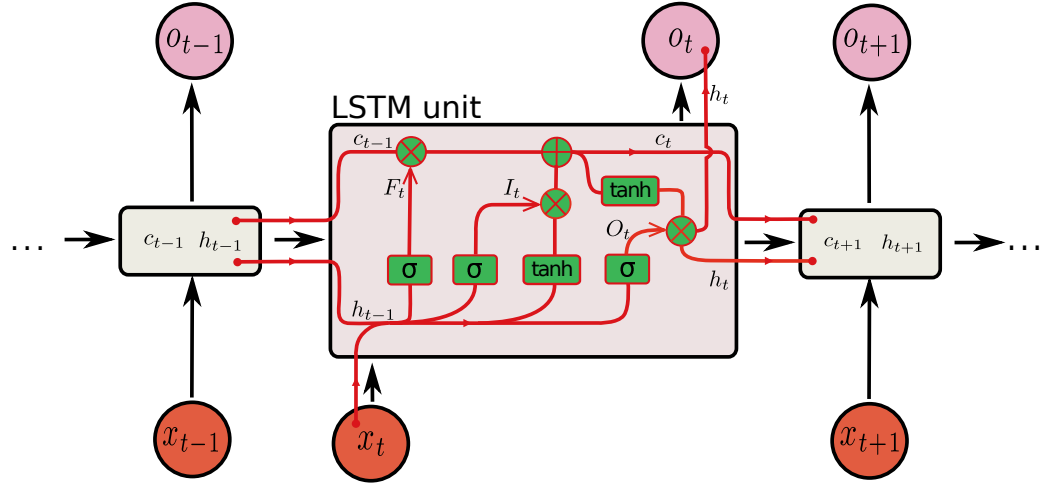}
\par\end{centering}
\caption{Block diagram of a modern LSTM unit\label{fig:Block-diagram-of}.
$\times$ and $+$ are element-wise product and add operators, respectively.
$\sigma$ and $\tanh$ are sigmoid and tanh functions, respectively. }
\end{figure}

The most important component is the cell memory $c_{t}$, which has
a linear self-loop formulation

\begin{equation}
c_{t}=f_{t}\ast c_{t-1}+i_{t}\ast\tilde{c_{t}}
\end{equation}
where $f_{t}$ is the forget gate, $c_{t-1}$ is the previous cell
value, $i_{t}$ is the input gate, $\tilde{c_{t}}$ is the candidate
value for current cell memory and $\ast$ denotes element-wise multiplication.
Similar to RNN's hidden state computation (Eq. (\ref{eq:h_rnn})),
$\tilde{c_{t}}$ is calculated as the following,

\begin{equation}
\tilde{c_{t}}=\tanh\left(W_{c}h_{t-1}+U_{c}x_{t}+b_{c}\right)
\end{equation}
The gates are also functions of previous hidden state and current
input with different parameters

\begin{align}
f_{t} & =\sigma\left(W_{f}h_{t-1}+U_{f}x_{t}+b_{f}\right)\\
i_{t} & =\sigma\left(W_{i}h_{t-1}+U_{i}x_{t}+b_{i}\right)\\
o_{t} & =\sigma\left(W_{o}h_{t-1}+U_{o}x_{t}+b_{o}\right)
\end{align}
where $\sigma$ is the sigmoid function that keeps the gate values
in range $\left[0,1\right]$. The final hidden state $h_{t}$ is computed
based on the cell memory $c_{t}$, gated by the output gate $o_{t}$
as follows,

\begin{equation}
h_{t}=o_{t}\ast\tanh\left(c_{t}\right)
\end{equation}
Given the hidden state $h_{t}$, other computations for the output
$o_{t}$ are the same as in Elman's RNN (Eq. (\ref{eq:o_rnn})).

In LSTM, the forget gate $f_{t}$ plays a crucial role in enabling
the network to capture long-term dependencies. If $f_{t}\rightarrow1$,
the previous memory will be preserved and thus, the product of derivatives
associated with a distant input is close to one. This allows a distant
input to take part in the backpropagation update and slow down the
gradient vanishing process. If $f_{t}\rightarrow0$, the path to previous
cells is disconnected and the model tends to remember only short-term
events.

Empirical results have shown that LSTM networks learn long-term dependencies
more easily than the simple RNNs. State-of-the-art performances were
obtained in various challenging sequence processing tasks \citet{graves2005framewise,vinyals2015neural}.
Other simpler alternatives to LSTM have been studied including Highway
Networks \citet{srivastava2015training} and GRUs \citet{cho2014gru}. 

\subsection{Gated Recurrent Unit}

One simplified variant of LSTM is Gated Recurrent Unit (GRU\nomenclature{GRU}{Gated Recurrent Unit })
\citet{cho2014gru}, which uses two multiplicative gates to harness
the vanishing gradients problem and capture longer dependencies in
the sequence. Unlike LSTM, GRU does not require a separate memory
cell. At each timestep, using a reset gate $r_{t}$, the model computes
a candidate hidden state $\tilde{h_{t}}$ as follows,
\begin{align}
r_{t} & =\sigma\left(W_{r}x_{t}+U_{r}h_{t-1}+b_{r}\right)\\
\tilde{h_{t}} & =\tanh\left(W_{h}x_{t}+U_{h}\left(r_{t}\ast h_{t-1}\right)+b_{h}\right)
\end{align}
The candidate hidden state is determined by current input and previous
hidden state. When $r_{t}$ is close to 0, the candidate hidden state
is reset with the current input, allowing the model to delete any
irrelevant information from the past. The hidden state is then updated
by linear interpolation between the previous hidden state and the
candidate hidden state

\begin{equation}
h_{t}=z_{t}\ast h_{t-1}+\left(1-z_{t}\right)\ast\tilde{h_{t}}
\end{equation}
where an update gate $z_{t}$ decides how much the hidden state should
update its content. The removal of input gate prevents the amount
of information in the hidden states from exploding. $z_{t}$ is computed
by 

\begin{align}
z_{t} & =\sigma\left(W_{z}x_{t}+U_{z}h_{t-1}+b_{z}\right)
\end{align}

A main advantage of GRU compared with LSTM is that GRU can run faster
while maintaining comparable performance \citet{chung2014empirical}.
The reduction of parameters also helps GRU less overfit to training
data as LSTM does. 

\section{Attentional RNNs\label{sec:Attentional-RNNs}}

\subsection{Encoder-Decoder Architecture\label{subsec:Encoder-decoder-architecture}}

Intuitively, attention mechanism is motivated by human visual attention
where our eyes are able to focus on a certain region of an image/language
with \textquotedblleft high resolution\textquotedblright{} while perceiving
the surrounding context in \textquotedblleft low resolution\textquotedblright .
This focus is adjusted dynamically overtime and directly contributes
to our decision making process. Before going into details, we will
briefly review sequence-to-sequence model\textendash a recurrent architecture
that is often used with attention mechanism. 

Amongst sequential modeling tasks, sequence-to-sequence mapping is
one of the most challenging one whose practical applications may include
machine translation, document \foreignlanguage{australian}{summarisation}
and dialog response generation. To solve such tasks, we may use an
RNN-like encoder to model the input sequence and then an RNN-like
decoder to model the output sequence. To link the two models, the
final hidden state of the encoder (thought vector) is passed to the
decoder as the latter's initial hidden state (see Fig. \ref{fig:seq2seq}
(a)). This encoder-decoder architecture, often referred to as Seq2Seq,
is firstly introduced by Cho et al. (2014) and has demonstrated superior
performance over LSTM in machine translation \citet{cho2014properties,sutskever2014sequence}. 

\begin{figure}
\begin{centering}
\includegraphics[width=1\textwidth]{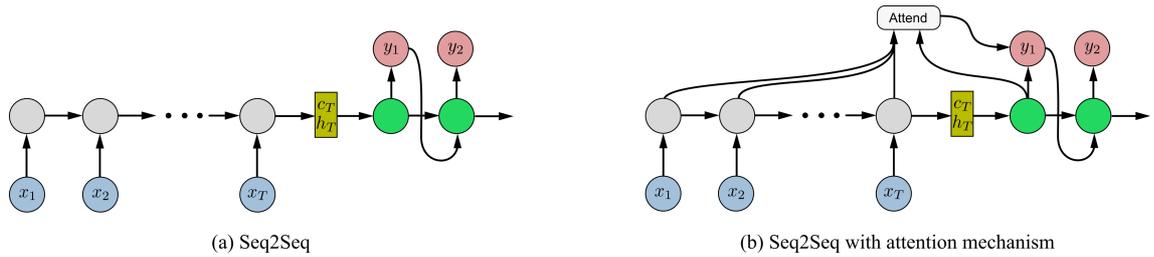}
\par\end{centering}
\caption{(a) Seq2Seq Model. Gray and green denote the LSTM encoder and decoder,
respectively. In this architecture, the output at each decoding step
can be fed as input for the next decoding step. (b) Seq2Seq Model
with attention mechanism. The attention computation is repeated across
decoding steps. \label{fig:seq2seq}}
\end{figure}

\subsection{Attention Mechanism\label{subsec:Attention-mechanism}}

Even when applying LSTM to Seq2Seq helps to ease the gradient vanishing
in general, the decoder in Seq2Seq is likely to face this problem
when the number of decoding steps becomes larger. Given that the decoder
receives a fixed-size though vector representing the whole input sequence,
it is hard to recover the contribution of distant encoding input in
predicting decoder's outputs. To overcome this, Bahdanau et al. (2015)
proposed using attention mechanism in encoder-decoder architecture.
The key idea is to let the decoder look over every piece of information
that the original input sequence holds at every decoding step, which
is equivalent to creating a direct connection from a decoder unit
to any encoder unit (see Fig. \ref{fig:seq2seq} (b)). Each connection
then will be weighted by an attention score, which is a function of
hidden states from both encoder and decoder. The weight $\alpha_{ij}$
between the $i$-th decoding step and the $j$-th encoding step is
defined as

\begin{align}
e_{ij} & =v^{T}\tanh\left(Ws_{i-1}+Uh_{j}\right)\\
\alpha_{ij} & =\frac{\exp\left(e_{ij}\right)}{\stackrel[k=1]{L}{\sum}\exp\left(e_{ik}\right)}\label{eq:attend_aij}
\end{align}
where $e_{ij}$ is the unnormalised weight, $v$ is a parametric vector
and $W$, $U$ are parametric matrices. $s$ and $h$ are used to
denote the hidden state of the decoder and the encoder, respectively.
Eq. (\ref{eq:attend_aij}) is the well-known softmax function to make
the weights sum to one over $L$ encoding steps. Then, a context vector
for the $i$-th decoding step is computed using a weighted summation
of all encoder's hidden states as follows, 

\begin{equation}
c_{i}=\stackrel[j=1]{L}{\sum}\alpha_{ij}h_{j}
\end{equation}
Finally, the context vector $c_{i}$ is combined with the decoder
hidden state $s_{i}$ to compute the $i$-th decoder's output and
next state \citet{Bahdanau2015a}. Attention mechanism has several
modifications such as hard attention \citet{xu2015show} and pointer
network \citet{vinyals2015pointer}. 

\subsection{Multi-Head Attention \label{subsec:Multi-head-Attention}}

Traditional RNNs read a sequence step by step to extract sequential
dependencies, which is slow and hard to capture far apart relations.
Attention helps link two distant timesteps quickly and thus, shows
potential to replace completely RNNs in modeling sequential data.
However, the vanilla attention mechanism is shallow with one step
of computation per timestep, which relies on the hidden state of RNNs
for richer representation. In an effort to replace RNNs with attention,
Vaswani et al. (2017) proposed a deeper attention mechanism with multiple
heads implemented efficiently using dot-product operation. The model
reads all timesteps in the sequence at once like Feed-forward Neural
Networks, which utilises parallel computing. Moreover, multiple keys,
values and queries are packed into matrices $K$, $V$ and $Q$, respectively.
Then, the multi-head attention operation is computed as follows,

\begin{equation}
Attention\left(Q,K,V\right)=\softmax\left(\frac{QK^{T}}{\sqrt{d_{k}}}\right)V
\end{equation}
where $d_{k}$ is the number of key dimension. The multi-head attention
lies at the core of self-attention mechanism, in which, relational
features are encoded from the input sequence (Fig. \ref{fig:self-att}
(a)). Similarly, the output sequence features can be extracted and
combined with the encoded input to form an encoder-decoder architecture
called The Transformer. (Fig. \ref{fig:self-att} (b)). 

The Transformer has empirically demonstrated that attention alone
can replace recurrent models in solving sequential tasks including
machine translation and language parsing \citet{vaswani2017attention}.
This opens a new research direction in deep learning where attention
can be used to extract relations between time-ordered events. The
limitation of self-attention is its quadratic complexity. However,
this can be compensated with parallel computation ability. Detailed
discussion of this new research angle is beyond the scope of this
thesis. Instead, we will focus on slot-based memory networks, another
approach with attention that is built upon a readable/writable external
memory. The approach resembles closely SDM as well as human associative
memory.

\begin{figure}
\begin{centering}
\includegraphics[width=0.95\textwidth]{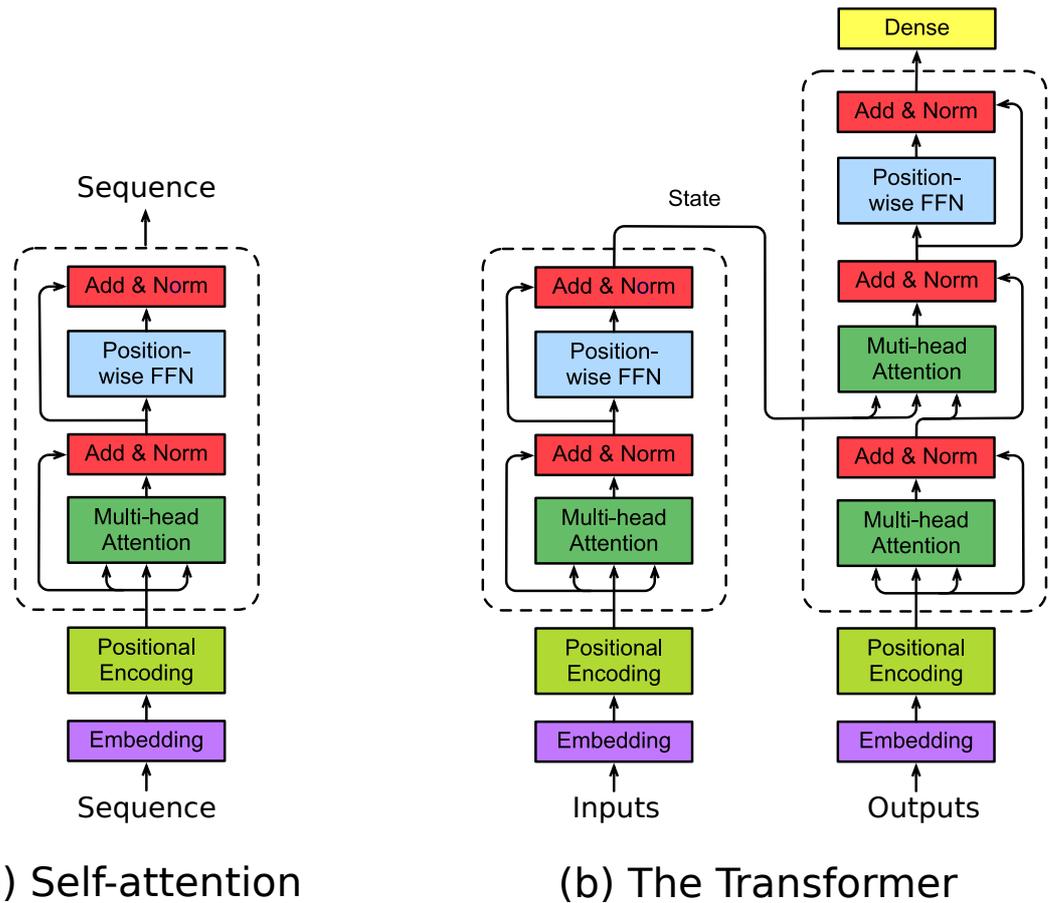}
\par\end{centering}
\caption{Computation stages of the encoding using self-attention (a) and encoding-decoding
architecture\textendash The Transformer (b). Embedding layers convert
input/output tokens to vectors of fix dimension, followed by Positional
Encoding layers that add temporal information to each vector. The
main block of computation combines multi-head attention, residual
connection, layer normalisation and Feed-forward layers, which can
be repeated multiple times.\label{fig:self-att}}
\end{figure}

\section{Slot-Based Memory Networks\label{sec:Slot-based-Memory-Networks}}

\subsection{Neural Stack}

Traditional stack is a storage of elements that works on the principle
of last-in-first-out, which describes the order in which the elements
come off a stack. In general, stack supports two operations: push,
which adds an element to the stack, and pop, which removes the most
recently added element (the top one). Additionally, a peek operation
may give access to the value of the top element without modifying
the stack. Stack is a convenient memory for solving problems with
hierarchical structures because it stores the temporary results in
a way that supports backtracking and tree traversal. Recently, researchers
have tried to implement continuously differentiable prototype of traditional
stacks using deep networks \citet{joulin2015inferring,grefenstette2015learning}.
We briefly review the implementations proposed by Grefenstette et
al. (2015) that aim to mimic Stack, Queue and Deque on solving natural
language transduction problems. 

In the implementations, a row-expandable matrix $V$ is used to store
the data. The $i$-th row $V\left[i\right]$ is associated with a
strength scalar $s\left[i\right]$. When $v_{t}$\textendash the item
at timestep $t$\textendash{} is presented, its value is added to
the matrix $V$ and never be modified, which yields,

\begin{equation}
V_{t}\left[i\right]=\begin{cases}
V_{t-1}\left[i\right]=v_{i} & \mathrm{if}\,1\leq i<t\\
v_{t} & \mathrm{if}\,i=t
\end{cases}\label{eq:stack_v}
\end{equation}
To modify the stack content under push and pop operations, we modify
the strength vector instead as the following,

\begin{equation}
s_{t}\left[i\right]=\begin{cases}
\max\left(0,s_{t-1}\left[i\right]-\max\left(0,u_{t}-\sum_{j=i+1}^{t-1}s_{t-1}\left[j\right]\right)\right) & \mathrm{if}\,1\leq i<t\\
d_{t} & \mathrm{if}\,i=t
\end{cases}
\end{equation}
where $u_{t}$ and $d_{t}$ are the pop and push signals generated
by a neural network, respectively. Basically, the strength for the
top item is set to the push signal. Then, we want to subtract the
strength of stored items ($s_{t}\left[i\right]$) by an amount of
the pop signal $\left(u_{t}\right)$ from the top (highest index)
to the bottom (lowest index) of the stack. If the pop signal is greater
than the strength, the strength of the item is set to $0$ (totally
popped out of the stack) and the remainder of the pop signal is passed
to lower items until we run out of pop signal. The peek or read operation
is carried out by 

\begin{equation}
r_{t}=\stackrel[i=1]{t}{\sum}\left(\min\left(s_{t}\left[i\right],\max\left(0,1-\sum_{j=i+1}^{t}s_{t}\left[j\right]\right)\right)\right)V_{t}\left[i\right]\label{eq:stack_r}
\end{equation}
The output $r_{t}$ of the read operation is the weighted sum of the
rows of $V_{t}$, scaled by the temporary strength values created
during the traversal. Intuitively, items with zero strength do not
contribute to read value and items on the bottom contribute less than
those near the top. Neural Queue and DeQue can be implemented in similar
manners by modifying Eqs. (\ref{eq:stack_v})-(\ref{eq:stack_r}). 

A controller implemented as RNN is employed to control stack operations.
The current input $i_{t}$ from the sequence and the previous read-out
$r_{t-1}$ will be concatenated as input for the RNN to produce the
current hidden state $h_{t}$ and the controller output $o_{t}^{\prime}$.
The controller output will be used to generate the item, control signals
and final output of the whole network as follows,

\begin{align}
d_{t} & =\sigma\left(W_{d}o_{t}^{\prime}+b_{d}\right)\\
u_{t} & =\sigma\left(W_{u}o_{t}^{\prime}+b_{u}\right)\\
v_{t} & =\tanh\left(W_{v}o_{t}^{\prime}+b_{v}\right)\\
o_{t} & =\tanh\left(W_{o}o_{t}^{\prime}+b_{o}\right)
\end{align}
Experiments have demonstrated that the proposed models are capable
of solving transduction tasks for which LSTM-based models falter \citet{grefenstette2015learning}. 

\subsection{Memory Networks}

One solution to ensure a model will not forget is to create a slot-based
memory module and store every piece of information into the memory
slots. The memory can be implemented as a matrix $M\in\ensuremath{\mathbb{R}}^{N\times D}$
whose rows contain vectors representing the considering piece of information.
Here, $N$ is the number of slots and $D$ is the dimension of the
representation vector (word size). Following this principle, Memory
Network (MemNN\nomenclature{MemNN}{Memory Network}) \citet{weston2014memory}
stores all information (e.g., knowledge base or background context)
into an external memory. When there is a retrieval request, it assigns
a relevance probability to each memory slot using content-based attention
scheme, and reads contents from each memory slot by taking their weighted
sum. Since the model is designed for language understanding, each
slot of the memory often associates with a document or a sentence.
When a query/question about facts related to the stored documents
is presented, MemNN will perform content-based attention as follows,

\begin{equation}
p_{i}=\softmax\left(u^{T}m_{i}\right)
\end{equation}
where $u$ is the feature and $m_{i}$ is the memory's $i$-th row
vector, which represent the query and the stored document, respectively.
$p_{i}$ is the attention score to the $i$-th memory slot, normalised
by softmax function. 

The output of the memory, given query $u$, is the read vector

\begin{equation}
r=\stackrel[i=1]{N}{\sum}p_{i}c_{i}
\end{equation}
where $c_{i}$ is the output vector corresponding to the $i$-th slot.
In MemNN, it is trainable parameter while in key-value memory network
\citet{miller2016key}, it comes from the data. Then, the model can
make prediction by feeding the read values to another feed-forward
neural network. 

A multi-hop version MemN2N\nomenclature{MemN2N}{End-to-End Memory Network}
has also been studied and outperforms LSTM and MemNN in question-answering
tasks \citet{sukhbaatar2015end}. MemN2N extends MemNN by adding refinement
updates on the query and the read-out. The refinement reads

\begin{equation}
u_{k+1}=Hu_{k}+r_{k}
\end{equation}
where $H$ is a parametric matrix and $k$ is the refinement step. 

Although memory networks have big advantages over LSTM due to the
use of external matrix memory, it is hard to scale to big dataset
since the number of memory slots grows linearly with the number of
data. Some tricks such as hashing have been proposed but they have
a trade-off between capacity and accuracy. More importantly, it is
unlikely that we tend to store everything in our brain. We have the
ability to forget the old memory and update with new knowledge, which
is ignored by memory network designs.

\subsection{Neural Turing Machine}

In contrast to MemNN, Neural Turing Machine (NTM\nomenclature{NTM}{Neural Turing Machine})
\citet{2014arXiv1410.5401G} introduces a slot-based read/write mechanism
to the memory module. The memory size does not need to equal the number
of considering pieces of information. The model learns to overwrite
obsolete or unimportant memory slots with recent and useful information
to \foreignlanguage{australian}{optimise} a final goal. This writing
scheme fits with sequential task where the prediction goal can be
achieved without paying attention to all timestep inputs. To control
the memory operations, NTM uses a neural controller network whose
parameters are slow-learning weights. The controller is responsible
for determining instantly after each timestep the content to be read
from and written to the memory. An illustration of NTM components
is described in Fig. \ref{fig:General-architecture-of} (a).

\begin{figure}
\begin{centering}
\includegraphics[width=0.9\textwidth]{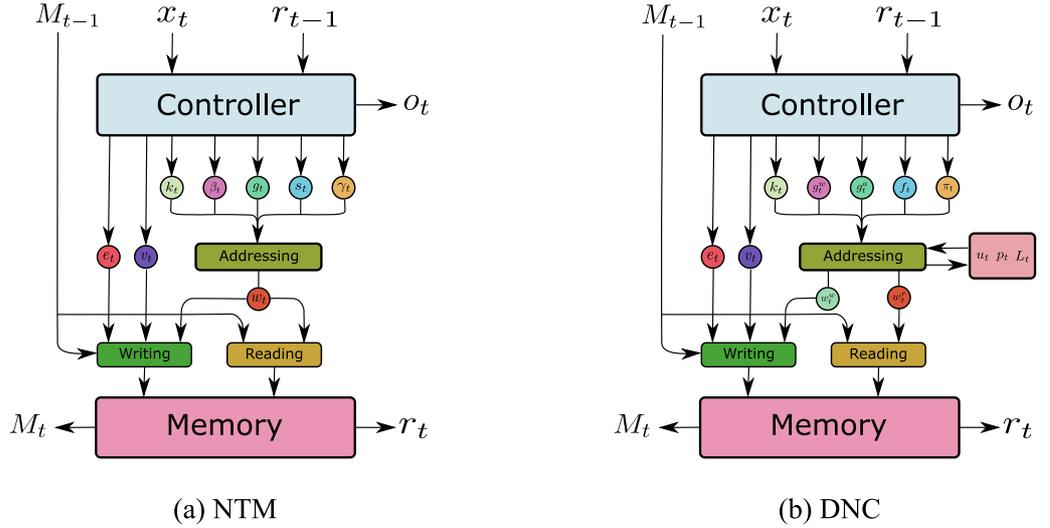}
\par\end{centering}
\caption{(a) Architecture of NTM. Circles denote intermediate variables computed
by the controller. The controller takes the current timestep data
$x_{t}$ and the previous read value $r_{t-1}$ as the input and produces
$r_{t}$, updates memory $M_{t}$ and predict output $o_{t}$. (b)
Architecture of DNC. The operation is similar to NTM's with extra
modules to keep track of memory usage $u_{t}$, precedence $p_{t}$
and link matrix $L_{t}$. \label{fig:General-architecture-of}}

\selectlanguage{australian}%
\selectlanguage{australian}%
\end{figure}

In NTM, both reading and writing locations are determined by the address,
which is a weight over the memory slots. The weight is initially computed
by the content-based attention,

\begin{equation}
w_{t}^{c}\left(i\right)=\frac{\exp\left(\beta_{t}m\left(k_{t},M_{t}\left(i\right)\right)\right)}{\stackrel[j=1]{D}{\sum}\exp\left(\beta_{t}m\left(k_{t},M_{t}\left(j\right)\right)\right)}\label{eq:cbaseatt}
\end{equation}
Here, $w_{t}^{c}\in\ensuremath{\mathbb{R}}^{N}$ is the content-based
weight, $\beta_{t}$ is a strength scalar, $m$ is a matching function
that measures the similarity between a key $k_{t}\in\ensuremath{\mathbb{R}}^{D}$
and the $i$-th memory slot $M_{t}\left(i\right)$. In practice, $m$
is implemented as cosine similarity

\begin{equation}
m\left(k_{t},M_{t}(i)\right)=\frac{k_{t}\cdot M_{t}(i)}{||k_{t}||\cdot||M_{t}(i)||}
\end{equation}

Besides the content-based addressing, NTM supports location-based
addressing started with an interpolation between content-based weight
and the previous weight

\begin{equation}
w_{t}^{g}=g_{t}w_{t}^{c}+\left(1-g_{t}\right)w_{t}
\end{equation}
where $g_{t}$ is the interpolation gate. This allows the system to
learn when to use (or ignore) content-based addressing. Also, the
model is able to shift focus to other rows by performing convolution
shift modulo $R$ as the following,

\begin{equation}
\tilde{w_{t}}\left(i\right)=\stackrel[j=0]{R}{\sum}w_{t}^{g}\left(i\right)s_{t}\left(i-j\right)
\end{equation}
where $s_{t}$ is the shift weighting. Finally, sharpening is used
to prevent the shifted weight from blurring, which results in the
final weight

\begin{equation}
w_{t}\left(i\right)=\frac{\tilde{w_{t}}\left(i\right)^{\gamma}}{\underset{j}{\sum}\tilde{w_{t}}\left(j\right)^{\gamma}}
\end{equation}

Given the weight calculated, the memory update is defined by using
these bellowing equations

\begin{align}
M_{t}^{erased}\left(i\right) & =M_{t-1}\left(i\right)\left[1-w_{t}\left(i\right)e_{t}\right]\\
M_{t}\left(i\right) & =M_{t}^{erased}\left(i\right)+w_{t}\left(i\right)v_{t}
\end{align}
where $e_{t}\in\ensuremath{\mathbb{R}}^{D}$ and $v_{t}\in\ensuremath{\mathbb{R}}^{D}$
are erase vector and update vector, respectively. The read value is
computed using the same address weight as follows,

\begin{equation}
r=\stackrel[i=1]{N}{\sum}w_{t}\left(i\right)M_{t}\left(i\right)
\end{equation}
The controller can be implemented as a feed-forward network or LSTM
fed with an concatenation of the read-out $r_{t}$ and the timestep
data $x_{t}$. The computation of the output $o_{t}$ follows the
same computing mechanism of the controller network (see Sec. \ref{subsec:Long-Short-Term-Memory}). 

With a fixed size external memory, NTM can scale well when dealing
with very long sequence while maintaining better remembering capacity
than other recurrent networks such as RNN, GRU and LSTM. Experiments
have shown NTM outperforms LSTM by a huge margin in memorisation testbeds
including copy, repeat copy, associative recall and priority sort
\citet{2014arXiv1410.5401G}.

\subsection{Differentiable Neural Computer\label{subsec:Differentiable-Neural-Computer}}

In this subsection, we briefly review DNC\nomenclature{DNC}{ Differentiable Neural Computer}
\citet{graves2016hybrid}, a powerful extension of the NTM. A DNC
consists of a controller, which accesses and modifies an external
memory module using a number of read heads and one write head. Given
some input $x_{t}$, and a set of $R$ read values from memory $r_{t-1}=\left[r_{t-1}^{1},...,r_{t-1}^{k},...,r_{t-1}^{R}\right]$,
the controller produces the output $o_{t}$ and the interface which
consists of intermediate variables, as depicted in Fig. \ref{fig:General-architecture-of}
(b). DNC also uses the content-based attention in Eq. (\ref{eq:cbaseatt})
to determine the content-based write-weight $w_{t}^{cw}$ and read-weights
$w_{t}^{cr,k}$. However, different from NTM, DNC does not support
location-based attention. Instead, DNC introduces dynamic memory allocation
and temporal memory linkage for computing the final write-weight $w_{t}^{w}$
and read-weights $w_{t}^{r,k}$ separately.\textbf{ }

\textbf{Dynamic memory allocation \& write weightings:} DNC maintains
a differentiable free list tracking the usage $u_{t}\in\left[0,1\right]^{N}$
for each memory location. Usage is increased after a write and optionally
decreased after a read, determined by the free gates $f_{t}^{k}$
as follows,

\begin{equation}
u_{t}=\left(u_{t-1}+w_{t-1}^{w}-u_{t-1}\circ w_{t-1}^{w}\right)\stackrel[k=1]{R}{\prod}\left(1-f_{t}^{k}w_{t}^{r,k}\right)
\end{equation}
The usage is sorted and then the allocation write-weight is defined
as

\begin{equation}
a_{t}\left[\varPhi_{t}\left[j\right]\right]=\left(1-u_{t}\left[\varPhi_{t}\left[j\right]\right]\right)\stackrel[i=1]{j-1}{\prod}u_{t}\left[\varPhi_{t}\left[i\right]\right]
\end{equation}
in which, $\varPhi_{t}$ contains elements from $u_{t}$ sorted by
ascending order from least to most used. Given the write gate $g_{t}^{w}$
and allocation gate $g_{t}^{a}$, the final write-weight then can
be computed by interpolating between the content-based write-weight
and the allocation write-weight,

\begin{equation}
w_{t}^{w}=g_{t}^{w}\left[g_{t}^{a}a_{t}+\left(1-g_{t}^{a}\right)w_{t}^{cw}\right]
\end{equation}
Then, the memory is updated by the following rule

\begin{equation}
M_{t}=M_{t-1}\circ\left(E-g_{t}^{w}w_{t}^{w}e_{t}^{\top}\right)+g_{t}^{w}w_{t}^{w}v_{t}^{\top}\label{eq:dnc_w}
\end{equation}

\textbf{Temporal memory linkage \& read weightings:} DNC uses a temporal
link matrix $L_{t}\in\left[0,1\right]{}^{N\times N}$ to keep track
of consecutively modified memory locations, and $L_{t}\left[i,j\right]$
represents the degree to which location $i$ was the location written
to after location $j$. Each time a memory location is modified, the
link matrix is updated to remove old links to and from that location,
and add new links from the last-written location. To compute the link
matrix, DNC maintains a precedence weighting to keep track of which
locations were most recently written by using the following equation

\begin{equation}
p_{t}=\left(1-\stackrel[i]{N}{\sum}w_{t}^{w}\left(i\right)\right)p_{t-1}+w_{t}^{w}
\end{equation}
Then $p_{t}$ is used to update the link matrix as follows,

\begin{equation}
L_{t}\left(i,j\right)=\left(1-w_{t}^{w}\left(i\right)-w_{t}^{w}\left(j\right)\right)L_{t-1}\left(i,j\right)+w_{t}^{w}\left(i\right)p_{t-1}\left(j\right)
\end{equation}
Given the link matrix, the final read-weight is given as follow,

\begin{equation}
w_{t}^{r,k}=\pi_{t}^{k}\left(1\right)L_{t}^{\top}w_{t-1}^{r,k}+\pi_{t}^{k}\left(2\right)w_{t}^{cr,k}+\pi_{t}^{k}\left(3\right)L_{t}w_{t-1}^{r,k}
\end{equation}
The read mode weight $\pi_{t}^{k}$ is used to balance between the
content-based read-weight and the forward $L_{t}w_{t-1}^{r,k}$ and
backward $L_{t}^{\top}w_{t-1}^{r,k}$ of the previous read. Then,
the $k$-th read value $r_{t}^{k}$ is retrieved using the final read-weight
vector

\begin{equation}
r_{t}^{k}=\stackrel[i]{N}{\sum}w_{t}^{r,k}(i)M_{t}(i)\label{eq:dncread}
\end{equation}

The performance of DNC is better than that of NTM, LSTM and other
variants of memory networks. It achieves state-of-the-art results
under various experimental settings such as copy/recall, question-answering
and graph reasoning \citet{graves2016hybrid}. 

\subsection{Memory-augmented Encoder-Decoder Architecture\label{subsec:Memory-augmented-Encoder-Decoder}}

A memory-augmented encoder-decoder (MAED\nomenclature{MAED}{Memory-augmented Encoder-Decoder Architecture})
consists of two neural controllers linked via external memory. This
is a natural extension to read-write MANNs to handle sequence-to-sequence
problems, which has been investigated in Sec. \ref{sec:Attentional-RNNs}.
MAED has recently demonstrated promising results in machine translation
\citet{britz2017efficient,wang2016memory}, OCR \citet{nguyen2019improving}
and healthcare \citet{le2018cdual,Le:2018:DMN:3219819.3219981,prakash2017condensed}.
In this thesis, MAED is the basic framework on which some of our proposed
architectures are built upon. Here, we briefly analyse the operations
of MAED. More details can be found in Chapter \ref{chap:multiple}.

As mentioned in Sec. \ref{subsec:Encoder-decoder-architecture}, Seq2Seq
only allows information transfer from the encoder to the decoder via
``thought vector''. Attention mechanism creates shortcut connections
via directly attending to other timesteps (see Sec. \ref{subsec:Attention-mechanism}
and \ref{subsec:Multi-head-Attention}). Unlike these approaches,
MAED uses an external memory as a reservoir containing past information
to which the controllers attend. That is, instead of directly attending
to other timesteps, the controllers (both encoder and decoder) attend
to the memory to store and retrieve relevant information The information
then participates in other decision-making processes such as output
predictions, feature extractions and memory operations. If the number
of memory slots is finite, the method keeps the computation resource
linear to the length of the sequence, which is feasible for life-long
learning. Furthermore, selectively attending to a fixed number of
memory slots requires the model to learn to compress the information
to the external memory. On one hand, it may make the learning harder
since the controllers must learn to write timestep information to
the memory efficiently. On the other hand, attending to all timesteps
is biologically implausible and information compression is a powerful
skill that mimics the capacity of memory in the brain. Hence, this
thesis has chosen MAED as the key framework to study and develop several
memory and attention techniques for neural networks (Chapter \ref{chap:multiple}
and \ref{chap:Variational-Memory-Encoder}). 

\section{Closing Remarks}

We have reviewed MANNs, a family of recurrent neural networks with
external memory, which is useful for a wide range of sequential learning
tasks. Armed with operations such as gating and attention mechanisms,
MANNs are able to control and update external vector or matrix memory
module. The power of MANNs has been verified in learning long sequential
data, in which their performances are superior to that of vanilla
RNNs. 

We hypothesise that MANNs are also effective in open challenges in
multi-process/multi-view data, data with uncertainty, ultra-long sequences
and problems required universal computation. We address these challenges
in Chapters \ref{chap:multiple}, \ref{chap:Variational-Memory-Encoder},
\ref{chap:Optimal-Writing-in} and \ref{chap:Neural-Stored-program-Memory},
respectively. A recurring theme across all of our proposed MANNs is
found on slot-based memory architecture.

\chapter{Memory Models for Multiple Processes \label{chap:multiple}}

\section{Introduction}

\subsection{Multi-Process Learning}

Traditional sequential learning focuses on modeling single processes,
in which the sequence of outputs shares the same domain with the input
sequence. We can extend the problem to broader scenarios where input
and output sequences are from different processes (sequence to sequence)
or there are multiple sequences acting as inputs and outputs (multi-view
sequential mapping). For example, in healthcare setting, there are
at least three processes that are executed: the disease progression,
the treatment protocols, and the recording rules. 

Let us start with a generic formulation of the multi-process learning.
Let $S^{i_{1}}$, ...,$S^{i_{M}}$ and $S^{o_{1}}$, ...,$S^{o_{N}}$
denote $M$ input and $N$ output view spaces, respectively. Each
sample of the multi-process problem $\left(\left\{ X^{i_{k}}\right\} _{k=1}^{M},\left\{ Y^{o_{k}}\right\} _{k=1}^{N}\right)$
consists of $M$ input views: $X^{i_{1}}=\left\{ x_{1}^{i_{1}},...,x_{t_{1}}^{i_{1}},...,x_{L^{i_{1}}}^{i_{1}}\right\} $,
..., $X^{i_{M}}=\left\{ x_{1}^{i_{M}},...,x_{t_{M}}^{i_{M}},...,x_{L^{i_{M}}}^{i_{M}}\right\} $
and $N$ output views $Y^{o_{1}}=\left\{ y_{1}^{o_{1}},...,y_{t_{1}}^{o_{1}},...,y_{L^{o_{1}}}^{o_{1}}\right\} ,...,Y^{o_{N}}=\left\{ y_{1}^{o_{N}},...,y_{t_{N}}^{o_{N}},...,y_{L^{o_{N}}}^{o_{N}}\right\} $.
Each view has a particular length ($L^{i_{1}}$, ..., $L^{i_{M}}$
and $L^{o_{1}}$, ..., $L^{o_{N}}$) and can be treated as a set/sequence
of events that belongs to different spaces ($x_{t_{1}}^{i_{1}}\in S^{i_{1}}$,
..., $x_{t_{M}}^{i_{M}}\in S^{i_{M}}$, $y_{t_{1}}^{o_{1}}\in S^{o_{1}}$,
..., $y_{t_{N}}^{o_{N}}\in S^{o_{N}}$). Each event then can be represented
by an one-hot vector $v\in\left[0,1\right]^{\left\Vert C\right\Vert }$,
where $C$ can be $S^{i_{1}}$, ..., $S^{i_{M}}$ or $S^{o_{1}}$,
..., $S^{o_{N}}$. In single process, $M=N=1$ and $S^{i_{1}}=S^{o_{1}}$.
Practical problems belonging to this setting can be solved efficiently
using RNNs or LSTM as in language modeling \citet{mikolov2010recurrent},
speech and optical character recognition \citet{graves2013speech,graves2013generating,graves2005framewise}.
We focus more on other complicated cases, which are dual processes
(sequence to sequence) and dual-view mapping, where $M=N=1$, $S^{i_{1}}\neq S^{o_{1}}$
and $M=2,N=1$, $S^{i_{1}}\neq S^{i_{2}}\neq S^{o_{1}}$, respectively.
These cases represent two classes of problems that traditional RNNs
may struggle to deal with. Thus, we aim to solve them using MAED (Sec.
\ref{subsec:Memory-augmented-Encoder-Decoder}) by proposing: (i)
separate controls for two separate sub-processes (encoding, decoding)
and (ii) multiple memories for multiple parallel, asynchronous processes.
To account for these settings, we need more powerful MAEDs to handle
the complexity of long-term dependencies and view interactions, which
will be presented in the next sections using healthcare as a practical
domain. 

\subsection{Real-World Motivation}

A main motivation for our work in this chapter is modeling healthcare
processes. In healthcare, a hospital visit is documented as one admission
record consisting of diagnosis and treatment codes for the admission.
The collection of these records are electronic medical records (EMRs)
of a patient. Recently, using EMRs as the data for scientists to analyse
and make treatment predictions has become the key for improving healthcare
\citet{Williams2008}. A typical EMR contains information about a
sequence of admissions for a patient and a wide range of information
can be stored in each admission, such as detailed records of symptoms,
data from monitoring devices, clinicians\textquoteright{} observations.
Diagnoses, procedures or drug prescriptions are the most important
information stored in EMRs and are typically coded in standardised
formats, each of which represents a medical process in EMR data.

In particular, diagnoses are coded using WHO\textquoteright s ICD
(International Classification of Diseases) coding schemes. For example,
in the ICD10 scheme, E10 encodes Type 1 diabetes mellitus, E11 encodes
Type 2 diabetes mellitus and F32 indicates depressive episode. The
treatment can be procedure or drug. The procedures are typically coded
in CPT (Current Procedural Terminology) or ICHI (International Classification
of Health Interventions) schemes. The drugs are often coded in ATC
(Anatomical Therapeutic Chemical) or NDC (National Drug Code).

It is important to note that there are order dependencies amongst
medical processes such as the diagnosis codes as well as procedure
or drug codes. For example, diagnosis codes are often sequenced under
some strict rules such as: (i) condition codes must be sequenced first,
followed by the manifestation codes, (ii) primary diagnosis that describe
the nature of the sequela must be coded before the secondary diagnosis
that describes the original injury, (iii) single conditions that require
more than one code have clear instructions that indicate which must
be coded first. Similarly, procedure or drug codes often follow specific
order, often corresponding to the order of diagnosis codes or the
order of prescriptions. Moreover, the dependency in EMR clinical codes
can be very long-term. Due to the fact that EMR data is temporally
sequenced by patient medical visits, clinical codes at current admission
may be related to other codes appearing in previous admissions. Since
it is normal for patients to periodically visit hospital for regular
health check, the number of admissions for some person should be very
large and thus the dependency amongst clinical codes should be very
long. Another characteristic of EMR data is its rarity. Although the
number of EMR records are increasing day by day, it cannot cover many
rarely-seen symptoms and treatments which appear sparsely throughout
the history of EMR documentation. More seriously, rare diseases are
deadly and rare treatments are expensive, so it is compulsory to make
use of these rare information in order to make significant predictions.

Unfortunately, it is not easy for prediction models to capture the
long-term dependencies and rarity of EMR\nomenclature{EMR}{ Electronic Medical Record}
data. Recent researches dealing with medical prediction have largely
focused on modeling the admission's diagnosis and treatments as two
set of codes and only capture sequential dependencies from one admission
to another \citet{nguyen2016deepr,pham2017predicting}. Also, most
of these methods avoid using rare data by only keeping codes that
appear frequently. This approach exposes three limitations. First,
using set representation ignores the internal internal sequential
dependencies and thus fail to discover sequential relations amongst
codes from the same admission. Second, refusing to use rare clinical
events makes the contribution less significant because in healthcare,
rare events are the more important ones. Third, outputs of this approach
are often set of codes, which again detaches from realistic need where
treatments and diagnoses have to follow strict orders. These challenges
motivate new approaches that treat EMR history as sequences of correlated
processes. 

\subsubsection*{Sequence to sequence mapping}

Treatment recommendation can be cast as sequence to sequence problem.
Once diagnoses are confirmed, we want to predict the output sequence
(treatment codes of the current visit) given the input sequence (all
diagnoses followed by treatments from the first visit to the previous
visit plus the diagnoses of the current visit). More formally, we
denote all the unique medical codes (diagnosis, procedures and drugs)
from the EMR data as $c_{1},c_{2},..c_{\left|C\right|}\in C$, where
$\left|C\right|$ is the number of unique medical codes. A patient's
$n$-th admission's input is represented by a sequence of codes: 
\begin{equation}
\left[c_{d_{1}}^{1},c_{d_{2}}^{1},...,\textrm{\ensuremath{\permil}},c_{p_{1}}^{1},c_{p_{2}}^{1}...,\diameter\allowbreak,...,c_{d_{1}}^{n-1},c_{d_{2}}^{n-1},...,\textrm{\ensuremath{\permil}},c_{p_{1},}^{n-1},c_{p_{2}}^{n-1},....,\diameter\allowbreak,c_{d_{1}}^{n},c_{d_{2}}^{n},...,\permil\right]
\end{equation}
 Here, $c_{d_{j}}^{k}$ and $c_{p_{j}}^{k}$ are the $j$-th diagnosis
and treatment code of the $k$-th admission, respectively. $\textrm{\ensuremath{\permil}}$
, $\diameter$ are special characters that informs the model about
the change from diagnosis to treatment codes and the end of an admission,
respectively. This reflects the natural structure of a medical history,
which is a sequence of clinical visits, each of which typically includes
a subsequence of diagnoses, and a subset of treatments. A diagnosis
subsequence usually starts with the primary condition followed by
secondary conditions. In a subset of treatments, the order is not
strictly enforced, but it may reflect the coding practice. The output
of the patient's $n$-th admission is : $\left[c_{p_{1}}^{n},c_{p_{2}}^{n},...,c_{p_{L_{out}}}^{n},\diameter\right]$,
in which $L_{out}$ is the length of the treatment sequence we want
to predict and $\diameter$ is used to inform the model to stop predicting.
Finally, each code is represented by one-hot vector $v_{c}\in\left[0,1\right]^{\left\Vert C\right\Vert }$,
where $v_{c}=\left[0,...,0,1,0..,0\right]$ ($v_{c}[i]=1$ if and
only if $v_{c}$ represents $c_{i}$). Unlike set encoding of each
admission, representing the data in this way preserves the admission's
internal order information allowing sequence-based methods to demonstrate
their power of capturing sequential events.

In Sec. \ref{sec:Dual-Control-Architecture}, we present a novel treatment
recommendation model using memory network to remember long-term dependencies
and rare events from EMR data. This model is built upon Differential
Neural Computer (DNC) \citet{graves2016hybrid}, a fully differentiable
implementation of memory-augmented neural network (MANN) (see Sec.
\ref{subsec:Differentiable-Neural-Computer}). In question-answer
bAbI task \citet{weston2015towards}, DNC treats the story, question
as sequences of words and perform well on the task of predicting sequence
of answers, which suggests the power of DNC in handling sequence input
and solving sequence prediction task. Despite of its successes, DNC
have never been applied to realistic domain such as healthcare, especially
in clinical treatment sequence prediction. This realisation motivates
us to design a DNC-based architecture that fits and works well with
healthcare domain. In our design, we make use of two controllers instead
of one to handle dual processes: diagnoses and treatments. Each controller
will employ different remembering strategies for each process and
thus increase the robustness of prediction and the speed of learning.
Besides, we apply a write-protected policy for our controller to direct
the model toward reasonable remember strategies. 

\subsubsection*{Two-view sequential learning \label{subsec:Asynchronous-Two-View-Sequential}}

For two-view problems, the generic notations simplify to $S^{i_{1}}$,
$S^{i_{2}}$ denoting input view spaces and $S$ the output view space.
Each sample of the two-view problem $\left(X^{i_{1}},X^{i_{2}},Y\right)$
consists of two input views: $X^{i_{1}}=\left\{ x_{1}^{i_{1}},...,x_{t_{1}}^{i_{1}},...,x_{L^{i_{i}}}^{i_{1}}\right\} $,
$X^{i_{2}}=\left\{ x_{1}^{i_{2}},...,x_{t_{2}}^{i_{2}},...,x_{L^{i_{2}}}^{i_{2}}\right\} $
and one output view $Y=\left\{ y_{1},...,y_{t},...,y_{L}\right\} $.
Each view has a particular length ($L^{i_{1}}$, $L^{i_{2}}$ or $L$),
each of which is a set/sequence of events that belongs to different
spaces ($x_{t_{1}}^{i_{1}}\in S^{i_{1}}$, $x_{t_{2}}^{i_{2}}\in S^{i_{2}}$,
$y_{t}\in S$). It should be noted that this formulation can be applied
to many situations including video-audio understanding, image-captioning
and other two-channel time-series signals. Here we focus effort on
solving the two-view problems in healthcare.

For example, in drug prescription, doctors prescribe drugs after considering
diagnoses and procedures administered to patients. In modeling disease
progression, doctor may refer to patient's history of admissions to
help diagnoses the current diseases or to predict the future disease
occurrences of the patient. There are clinical recording rules applying
to EMR codes such that diagnoses are \textquotedblleft ordered by
priority\textquotedblright{} or procedures follow the order that \textquotedblleft the
procedures were performed\textquotedblright \footnote{\url{https://mimic.physionet.org/mimictables/}}.
Besides, although medical codes from different views are highly correlated,
they are not aligned. For instances, some diagnoses may correspond
to one procedure or one diagnosis may result in multiple medicines.
Hence, these problems can be treated as asynchronous two-view sequential
learning.

In the drug prescription context, $S^{i_{1}}$ and $S^{i_{2}}$ represent
the diagnosis and procedure spaces, respectively and $S$ corresponds
to the medicine space. The drug prescription objective is to select
an optimal subset of medications from $S$ based on diagnosis and
procedure codes. Similarly, we can formulate the disease progression
problem as two input sequences (diagnoses and interventions) and one
output set (next diagnoses). Although our architecture can model sequential
output, the choice of representing output as set is to follow a common
practice in healthcare where the order of medical suggestions is specified.
Because a patient may have multiple admission records for different
hospital visits, a patient record can be represented as $\left\{ \left(X_{a}^{i_{1}},X_{a}^{i_{2}},Y_{a}\right)\right\} _{a=1}^{A}$,
where $A$ is the number of admissions this patient commits. In order
to predict $Y_{a}$, we may need to exploit not only $\left(X_{a}^{i_{1}},X_{a}^{i_{2}}\right)$
but also $\left\{ \left(X_{pa}^{i_{1}},X_{pa}^{i_{2}}\right)\right\} _{pa=1}^{a-1}$.
More details on how our work makes use of previous admissions and
handles long-term dependencies will be given in Sec. \ref{subsec:Persistent-Memory-for}.

In Sec. \ref{sec:Dual-Mem-Architecture}, we propose a novel memory
augmented neural network model solving the problem of asynchronous
interactions and long-term dependencies at the same time. Our model
makes use of three neural controllers and two external memories constituting
a dual memory neural computer. In our architecture, each input view
is assigned to a controller and a memory to model the intra-view interactions
in that particular view. At each time step, the controller reads an
input event, updates the memory, and generates an output based on
its current hidden state and read vectors from the memory. Corresponding
to the two types of inter-view interactions, there are two modes in
our architecture: late-fusion and early-fusion memories. In the late-fusion
mode, the memory space for each view is separated and independent,
that is, there is no information exchange between the two memories
during the encoding process. The memories' read values are only synthesised
to generate inter-view knowledge in the decoding phase. Contrast to
the late-fusion mode, the memory addressing space in the early-fusion
mode is shared amongst views. That is, the encoder from one view can
access and modify the contents of the other view's memory. This design
ensures the information is shared across views via memories accessing.
In order to facilitate this asynchronous sharing, we design novel
cache components that temporarily hold the write values of every timestep.
This enables related information at different time steps to be written
to the memories together. Finally, we apply memory write-protected
mechanism in the decoding process to make the inference of our model
more efficient. 

\section{Background}

\subsection{Multi-View Learning}

In multi-view learning, data can be naturally partitioned into channels
presenting different views of the same data. Multi-view sequential
learning is a sub-class of multi-view learning where each view data
is in the form of sequential events, which can be synchronous or asynchronous.
In the synchronous setting, all views share the same time step and
view length. Some problems of this type include video consisting of
visual and audio streams; and text as a joint sequence of words and
part-of-speech tags. Synchronous multi-view sequential learning is
an active area \citet{rajagopalan2016extending,zadeh2017tensor,zadeh2018memory}.
These works make assumptions on the time step alignment and thus they
are constrained by the scope of synchronous multi-view problems. 

In this work, we relax these assumptions and focus more on asynchronous
settings, that is, there is no alignment amongst views and the sequence
lengths vary across views. These occur when the data is collected
from channels having different time scales or we cannot infer the
precise time information when extracting data. In healthcare, for
instance, an electronic medical record (EMR) contains information
on patient's admissions, each of which consists of various views such
as diagnosis, medical procedure, and medicine. Although an admission
is time-stamped, medical events from each view inside the admission
are not synchronous and different in length. 

Asynchronous multi-view data often demonstrates three types of view
interactions. The first type is intra-view interactions, those involving
only one view, representing the internal dynamics. For example, each
EMR view has specific rules for coding its events, forming distinctive
correlations amongst medical events inside a particular view. The
second type is late inter-view interactions, those that span from
input views to output, representing the mapping function between the
inputs and the outputs. We call it ``late'' because the interaction
across input views is considered only in the inference process. The
third type is early inter-view interactions, those that account for
relations covering multiple input views and happening before the inference
process. For example, in drug prescription, the diagnosis view is
the cause of the medical procedure view, both of which affect the
output which are medicines prescribed for patient. The interactions
in sequential views not only span across views but also extend throughout
the length of the sequences. One example involves patients whose diseases
in current admission are related to other diseases or treatments from
distant admissions in the past. The complexity of view interactions,
together with the unalignment and long-term dependencies amongst views
poses a great challenge in asynchronous multi-view sequential problems.

\subsection{Existing Approaches }

\textbf{Deep learning for healthcare:} The recent success of deep
learning has drawn board interest in building AI systems to improve
healthcare. Several studies have used deep learning methods to better
categorise diseases and patients: denoising autoencoders, an unsupervised
approach, can be used to cluster breast cancer patients \citet{tan2014unsupervised},
and convolutional neural networks (CNNs) can help count mitotic divisions,
a feature that is highly correlated with disease outcome in histological
images \citet{cirecsan2013mitosis}. Another branch of deep learning
in healthcare is to solve biological problems such as using deep RNN
to predict gene targets of microRNAs \citet{zurada1994end}. Despite
these advances, a number of challenges exist in this area of research,
most notably how to make use of other disparate types of data such
as electronic medical records (EMRs). Recently, more efforts have
been made to utilise EMR data in disease prediction \citet{pham2017predicting},
unplanned admission and risk prediction \citet{nguyen2016deepr} problems.
Other works apply LSTMs, both with and without attention to clinical
time series for heart failure prediction \citet{choi2016retain} or
diagnoses prediction \citet{lipton2016learning}. Treatment recommendation
is also an active research field with recent deep learning works that
model EMR codes as sequence such as using sequence of billing codes
for medicine suggestions \citet{bajor2016predicting} or using set
of diagnoses for medicine sequence prediction \citet{zhang2017leap}.
Differing from these approaches, our works focus on modeling both
the admission data and the treatment output as two sequences to capture
order information from input codes and ensure dependencies amongst
output codes at the same time. 

\textbf{Multi-view learning in healthcare:} Multi-view learning is
a well-studied problem, where methods often exploit either the consensus
or the complementary principle \citet{xu2013survey}. A straightforward
approach is to concatenate all multiple views into one single view
making it suitable for conventional machine learning algorithms, both
for vector inputs \citet{gonzalez2015multiview,zadeh2016multimodal}
or sequential inputs \citet{morency2011towards,song2012multi}. Another
approach is co-training \citet{blum1998combining,nigam2000analyzing},
aiming to maximise the mutual agreement on views. Other approaches
either establish a latent subspace shared by multiple views \citet{quadrianto2009estimating}
or perform multiple kernel learning \citet{rakotomamonjy2007more}.
These works are typically limited to non-sequential views.

More recently, deep learning is increasingly applied for multi-view
problems, especially with sequential data. For example, LSTM \citet{hochreiter1997long}
is extended for multi-view problems \citet{rajagopalan2016extending}
or multiple kernel learning is combined with convolution networks
\citet{poria2015deep}. More recent methods focus on building deep
networks to extract features from each view before applying different
late-fusion techniques such as tensor products \citet{zadeh2017tensor},
contextual LSTM \citet{poria2017context} and gated memory \citet{zadeh2018memory}.
All of these deep learning methods are designed only for synchronous
sequential input views. Hence, the applications of these methods mostly
fall into tagging problems where the output is aligned with the input
views. As far as we know, the only work that can apply to asynchronous
inputs is in Chung et al., (2017), in which the authors construct
a dual LSTM for feature extraction and use attention for late-fusion
\citet{chung2017lip}. Without multiple memories for multi-view sequences,
LSTM-based models fail short in capturing long-term dependencies in
the sequences. LSTM encoder-decoder with attention can only support
late-fusion modeling, which may be insufficient for cases that require
early-fusion. More importantly, attentional LSTMs do not assume fixed
size memory, which may be impractical for ultra-long sequences, online
and life-long learning.

In healthcare, there are only few works that make use of multi-view
data. A multi-view multi-task model is proposed to predict future
chronic diseases given multi-media and multi-model observations \citet{nie2015beyond}.
However, this model is only designed for single-instance regression
problems. DeepCare \citet{pham2017predicting} solves the disease
progression problem by combining diagnosis and intervention views.
It treats medical events in each admission as a bag and uses pooling
to compute the feature vectors for the two views in an admission.
The sequential property of events inside each admission is ignored
and there is no mechanism to model inter-view interactions at event
level. There are many other works using deep learning such as RETAIN
\citet{choi2016retain}, Dipole \citet{ma2017dipole} and LEAP \citet{zhang2017leap}
that attack different problems in healthcare. However, they are designed
for single input view.

\textbf{MANNs for healthcare:} Memory augmented neural networks (MANNs)
have emerged as a new promising research topic in deep learning. Memory
Networks (MemNNs) \citet{weston2014memory} and Neural Turing Machines
(NTMs) \citet{2014arXiv1410.5401G} are the two classes of MANNs that
have been applied to many problems such as meta learning \citet{santoro2016meta}
and question answering \citet{sukhbaatar2015end}. In healthcare,
there is limited work applying MemNN-based models to handle medical-related
problems such as clinical textual QA \citet{hasan2016clinical} or
diagnosis inference \citet{prakash2017condensed}. However, these
works have been using clinical documents as input, rather than just
using medical codes stored in EMRs. Our work, on the other hand, learns
end-to-end from raw medical codes in EMRs by leveraging Differentiable
Neural Computer (DNC) \citet{graves2016hybrid}, the latest improvement
over the NTM. In practice, DNC and other NTM variants have been used
for various domains such as visual question answering \citet{ma2017visual},
and one-shot learning \citet{santoro2016meta}, yet it is the first
time DNC is adapted for healthcare tasks.

\section{Dual Control Architecture\label{sec:Dual-Control-Architecture}}

\begin{figure*}
\begin{centering}
\includegraphics[width=0.7\textwidth]{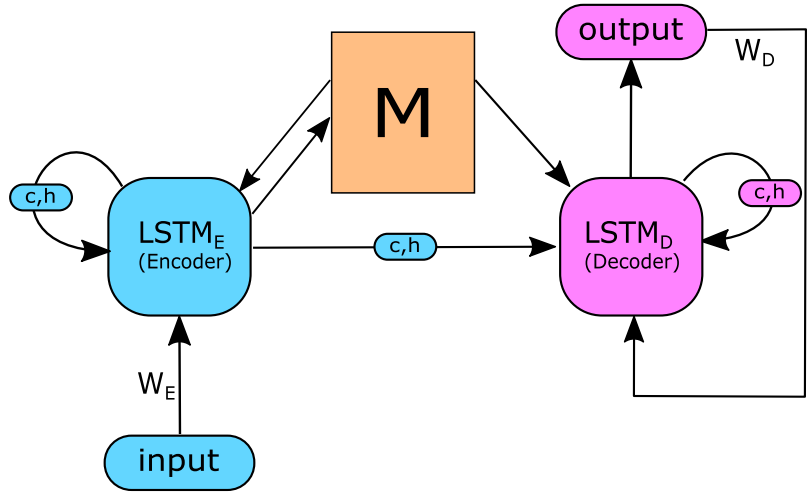}
\par\end{centering}
\centering{}\caption{Dual Controller Write-Protected Memory Augmented Neural Network. $LSTM_{E}$
is the encoding controller. $LSTM_{D}$ is the decoding controller.
Both are implemented as LSTMs. \label{fig:Dual-Controller-Write} }
\end{figure*}

We now present our first contribution\textendash a deep neural architecture
called Dual Controller Write-Protected Memory Augmented Neural Network
(DCw-MANN) (see Fig. \ref{fig:Dual-Controller-Write}). Our DCw-MANN\nomenclature{DCw-MANN}{Dual Controller Write-Protected Memory Augmented Neural Network}\nomenclature{DC-MANN}{Dual Controller Memory Augmented Neural Network}
introduces two simple but crucial modifications to the original DNC:
(i) using two controllers to handle dual processes of encoding and
decoding, respectively; and (ii) applying a write-protected policy
in the decoding phase. 

In the encoding phase, after going through embedding layer $W_{E}$,
the input sequence is fed to the first controller (encoder) $LSTM_{E}$.
At each time step, the controller reads from and writes to the memory
information necessary for the later decoding process. In the decoding
phase, the states of the first controller is passed to the second
controller (decoder) $LSTM_{D}$. The use of two controllers instead
of one is important in our setting because it is harder for a single
controller to learn many strategies at the same time. Using two controllers
will make the learning easier and more focused. Also different from
the encoder, the decoder can make use of its previous prediction (after
embedding layer $W_{D}$) as the input together with the read values
from the memory. Another important feature of DCw-MANN is its write-protected
mechanism in the decoding phase. This has an advantage over the writing
strategy used in the original DNC since at decoding step, there is
no new input that is fed into the system. Of course, there remains
dependencies amongst codes in the output sequence. However, as long
as the dependencies amongst output codes are not too long, they can
be well-captured by the cell memory $c_{t}$ inside the decoder's
LSTM. Therefore, the decoder in our design is prohibited from writing
to the memory. To be specific, at time step $t+1$ we have the hidden
state and cell memory of the controllers calculated as:

\begin{equation}
h_{t+1},c_{t+1}=\begin{cases}
LSTM_{E}\left(\left[W_{E}v_{d_{t}},r_{t}\right],h_{t},c_{t}\right); & t\leq L_{in}\\
LSTM_{D}\left(\left[W_{D}v_{p_{t}},r_{t}\right],h_{t},c_{t}\right); & t>L_{in}
\end{cases}
\end{equation}
where $v_{d_{t}}$ is the one-hot vector representing the input sequence's
code at time $t\leq L_{in}$ and $v_{p_{t}}$ is the predicted one-hot
vector output of the decoder at time $t>L_{in}$, defined as $v_{p_{t}}=onehot\left(o_{t}\right)$,
i.e.,: 
\begin{equation}
v_{p_{t}}\left[i\right]=\begin{cases}
1 & ;i=\underset{1\leqslant j\leqslant\left|C_{p}\right|}{argmax}(o_{t}\left[j\right])\\
0 & ;\mathrm{otherwise}
\end{cases}.
\end{equation}
We propose a new memory update rule to enable the write-protected
mechanism:

\begin{equation}
M_{t}=\begin{cases}
M_{t-1}\circ\left(E-w_{t}^{w}e_{t}^{\top}\right)+w_{t}^{w}v_{t}^{\top}; & t\leq L_{in}\\
M_{t-1}; & t>L_{in}
\end{cases}
\end{equation}
where $E$ is an $N\times D$ matrix of ones , $w_{t}^{w}\in\left[0,1\right]^{N}$
is the write-weight, $e_{t}\in\left[0,1\right]^{D}$ is an erase vector,
$v_{t}\in\ensuremath{\mathbb{R}}^{D}$ is a write vector, $\circ$
is point-wise multiplication, and $L_{in}$ is the length of input
sequence.

\section{Dual Memory Architecture\label{sec:Dual-Mem-Architecture}}

We now present the second contribution to solve the generic asynchronous
two-view sequential learning: a new deep memory augmented neural network
called Dual Memory Neural Computer (DMNC\nomenclature{DMNC}{Dual Memory Neural Computer}). 

\begin{figure}
\begin{centering}
\includegraphics[width=0.7\textwidth]{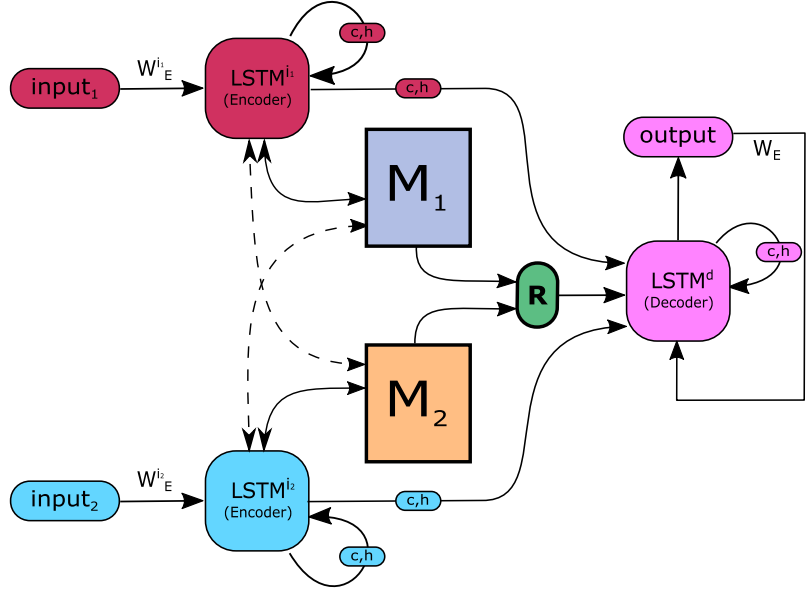}
\par\end{centering}
\caption{Dual Memory Neural Computer. $LSTM^{i_{1}}$, $LSTM^{i_{2}}$ are
the two encoding controllers implemented as LSTMs. $LSTM^{d}$ is
the decoding controller. The dash arrows represent cross-memory accessing
in early-fusion mode. \label{fig:Dual-Architecture-Neural}}
\end{figure}

\subsection{Dual Memory Neural Computer}

Our architecture consists of three neural controllers (two for encoding
and one for decoding), each of which interacts with two external memory
modules (see Fig. \ref{fig:Dual-Architecture-Neural}). Each of the
two memory modules is similar to the external memory module in DNC
\citet{graves2016hybrid}, that is, it is equipped with temporal linkage
and dynamic allocation. The three controllers have their own embedding
matrices $W_{E}^{i_{1}}$, $W_{E}^{i_{2}}$, $W_{E}$ which project
the one-hot representation of events to a unified $d$-dimensional
space. We use $\mathbf{x}_{t_{1}}^{i_{1}}$, $\mathbf{x}_{t_{2}}^{i_{2}}$,
$\mathbf{y}_{t}$ $\in\mathbb{R}^{d}$ to denote the embedding vector
of $x_{t_{1}}^{i_{1}}$, $x_{t_{2}}^{i_{2}}$, $y_{t}$, respectively,
in which $\mathbf{x}_{t_{1}}^{i_{1}}=W_{E}^{i_{1}}x_{t_{1}}^{i_{1}},\mathbf{x}_{t_{2}}^{i_{2}}=W_{E}^{i_{2}}x_{t_{2}}^{i_{2}},\mathbf{y}_{t}=W_{E}y_{t}$.
The embedding vectors $\mathbf{x}_{t_{1}}^{i_{1}}$, $\mathbf{x}_{t_{2}}^{i_{2}}$
are always used as inputs of the encoders while the embedding vector
$\mathbf{y}_{t}$ will only be used as input of the decoder if the
output view is a sequence. 

Each encoder will transform the embedding vectors to $h$-dimensional
hidden vectors. The current hidden vectors and outputs of the encoders
are computed as:

\begin{equation}
h_{t_{1}}^{i_{1}},o_{t_{1}}^{i_{1}}=LSTM{}^{i_{1}}\left(\left[\mathbf{x}_{t_{1}}^{i_{1}},r_{t_{1}-1}^{i_{1}}\right],h_{t_{1}-1}^{i_{1}}\right),1\leq t_{1}<L^{i_{1}}\label{eq:7}
\end{equation}

\begin{equation}
h_{t_{2}}^{i_{2}},o_{t_{2}}^{i_{2}}=LSTM{}^{i_{2}}\left(\left[\mathbf{x}_{t_{2}}^{i_{2}},r_{t_{2}-1}^{i_{2}}\right],h_{t_{2}-1}^{i_{2}}\right),1\leq t_{2}<L^{i_{2}}\label{eq:8}
\end{equation}
where $r_{t_{1}-1}^{i_{1}}$, $r_{t_{2}-1}^{i_{2}}$ are read vectors
at previous time step of each encoder and $L^{i_{1}},$$L^{i_{2}}$
are the lengths of input views. It should be noted that the time step
in each view may be asynchronous and the lengths may be different.
In our applications, since we treat input views as sequences, we use
$LSTM$ as the core of the encoders\footnote{For inputs as sets, we can replace the $LSTM$s with $MLP$s}.
Using separated encoder for each view naturally encourages the intra-view
interactions. To model inter-view interactions, we use two modes of
memories, late-fusion and early-fusion.

\textbf{Late-fusion memories:} In this mode, our architecture only
models late inter-view interactions. In particular, $r_{t_{1}}^{i_{1}}$
and $r_{t_{2}}^{i_{2}}$ are computed separately:

\begin{equation}
r_{t_{1}}^{i_{1}}=\left[r_{t_{1}}^{i_{1,1}},...,r_{t_{1}}^{i_{1,R}}\right]=m{}_{read}^{e_{1}}\left(o_{t_{1}}^{i_{1}},M_{1}\right)\label{eq:vr1}
\end{equation}

\begin{equation}
r_{t_{2}}^{i_{2}}=\left[r_{t_{2}}^{i_{2,1}},...,r_{t_{2}}^{i_{2,R}}\right]=m{}_{read}^{e_{2}}\left(o_{t_{2}}^{i_{2}},M_{2}\right)\label{eq:vr2}
\end{equation}
where $M_{1}$, $M_{2}$ are the two memory matrices containing view-specific
contents and $m_{read}^{e_{1}}$, $m_{read}^{e_{2}}$ are two read
functions of the encoders with separated set of parameters. Given
the encoder output vectors, the read functions produce the keys $k_{t_{1}}^{i_{1}}$,
$k_{t_{2}}^{i_{2}}$ in the manner of DNC. The keys are used to address
the corresponding memory and compute the read vectors using Eq.(\ref{eq:dncread}).
This design ensures the dynamics of computation in one view does not
affect the other's and only in-view contents are stored in view-specific
memory. This mode is important because in certain situations, writing
external contents to view-specific memory will interfere the acquired
knowledge and obstruct the learning process. In Section 4.1, we will
show a case study that fits with this setting and the empirical results
will demonstrate that the late-fusion mode is necessary to achieve
better performance. 

\textbf{Early-fusion memories: }When there exists a strong correlation
between the two input views, requiring to model early inter-view interactions,
we introduce another mode of memories: early-fusion mode. In this
mode, the two memories share the same addressing space, that is, the
encoder from one view can access the memory content from another view
and vice versa. Also, the read functions $m_{read}^{e}$ share the
same parameter set:

\begin{equation}
r_{t_{1}}^{i_{1}}=\left[r_{t_{1}}^{i_{1,1}},...,r_{t_{1}}^{i_{1,R}}\right]=m{}_{read}^{e}\left(o_{t_{1}}^{i_{1}},\left[M_{1},M_{2}\right]\right)\label{eq:evr1}
\end{equation}

\begin{equation}
r_{t_{2}}^{i_{2}}=\left[r_{t_{2}}^{i_{2,1}},...,r_{t_{2}}^{i_{2,R}}\right]=m{}_{read}^{e}\left(o_{t_{2}}^{i_{2}},\left[M_{1},M_{2}\right]\right)\label{eq:evr2}
\end{equation}

Since the read vectors for one encoder can come from either memories,
the encoder's next hidden values are dependent on both views' memory
contents, which enables possible early inter-view interactions in
this mode. 

\begin{algorithm}[t]
\begin{algorithmic}[1]
\Require{Training set $\{\{(X^{i_{1}}_a,X^{i_{2}}_a,Y_a\}_{a=1}^{A}\}_{n=1}^{N}$}
\State{Sample $B$ samples from training set}
\ForEach{sample in $B$}
\State{Clear memory $M_1$, $M_2$}
\For{$a=1,A$}
\State{$(X^{i_{1}},X^{i_{2}},Y)=(X^{i_{1}}_a,X^{i_{2}}_a,Y_a)$}
\While{$t_1<L^{i_1}$ or $t_2<L^{i_2}$}
\If{$t_1<L^{i_1}$}
\State{Use Eq.($\ref{eq:7}$) to calculate $h_{t_{1}}^{i_{1}},o_{t_{1}}^{i_{1}} $}
\State{Use Eq.($\ref{eq:dnc_w}$) or Eq.($\ref{eq:cache_w}$) to update $M_1$}
\State{Use Eq.($\ref{eq:vr1}$) or Eq.($\ref{eq:evr1}$) to read $M_1$}
\State{$t_1=t_1+1$}
\EndIf
\If{$t_2<L^{i_2}$}
\State{Use Eq.($\ref{eq:8}$) to calculate $h_{t_{2}}^{i_{2}},o_{t_{2}}^{i_{2}}$}
\State{Use Eq.($\ref{eq:dnc_w}$) or Eq.($\ref{eq:cache_w}$) to update $M_2$}
\State{Use Eq.($\ref{eq:vr2}$) or Eq.($\ref{eq:evr2}$) to read $M_2$}
\State{$t_2=t_2+1$}
\EndIf
\EndWhile
\State{Use Eq.($\ref{eq:read_1}$) and Eq.($\ref{eq:read_2}$) to read $M_1$,$M_2$ }
\State{Use Eq.($\ref{eq:out_y}$) to calculate $\widehat{y}$}
\State{Update parameter $\theta$ using $\nabla_{\theta}Loss_{set}\left(Y,\widehat{y}\right)$}
\EndFor
\EndFor
\end{algorithmic} 

\caption{Training algorithm for healthcare data (set output)\label{alg:Training-algorithm-for}}
\end{algorithm}

\textbf{Memories modification with cache components:} In both modes,
the two memories are updated every timestep by the two encoders. While
in the late-fusion mode, the writings to two memories are independent
and can be executed in parallel using Eq.(\ref{eq:dnc_w}), in the
early-fusion mode, the writings must be executed in an alternating
manner. In particular, the two encoders take turn writing to memories,
allowing the exchange of information at every timestep. Doing this
way is optimal if the two views are synchronous and equal in lengths.
To make it work with variable length input views, we introduce a new
component to our architecture: a cache memory that lies between the
controller and the external memory. Different from the original DNC
which writes directly the event's value to the external memory, in
the early-fusion mode of our architecture, each controller integrates
write values inside its own cache memory $c_{t}$ until an appropriate
moment before committing them to the external memory. We introduce
$g_{t}^{c}$ as a learnable cache gate to control the degree of integration
between current write value and the previous cache's content as follows:
\begin{eqnarray}
g_{t}^{c} & = & f^{c}\left(o_{t}^{i}\right)\\
c_{t} & = & g_{t}^{c}\circ c_{t-1}+\left(1-g_{t}^{c}\right)\circ v{}_{t}
\end{eqnarray}

In these equations, $g_{t}^{c}$ is the cache gate, $o_{t}^{i}$ is
the encoder output, $f^{c}$ is a learnable function\footnote{In this section, all $f$ functions are implemented as single-layer
feed-forward neural networks}, $c_{t}$ is the cache content and $v_{t}$ is the write value. Then,
the cache will be written to the memory using the following formula:

\begin{equation}
M_{t}=M_{t-1}\circ\left(E-g_{t}^{w}w_{t}^{w}e_{t}^{\top}\right)+g_{t}^{w}w_{t}^{w}c_{t}^{\top}\label{eq:cache_w}
\end{equation}

We propose this new writing mechanism for early-fusion mode to enable
one encoder to wait for another while processing input events (in
this context, waiting means the encoder stops writing to memory).
In the original DNC, if the write gate $g_{t}^{w}$ is close to zero,
the encoder does not write to memory and the write value at current
time step will be lost. However, in our design, even when there is
no writing, the write value somehow can be kept in the cache if $g_{t}^{c}<1$.
The cache in a view may choose to hold an event's write value instead
of writing it immediately at the read time step. Thus, the information
of the event is compressed in the cache until appropriate occasion,
which may be after the appearance of another event from the other
view. This mechanism enables two related asynchronous events to simultaneously
involve in building up the memories. 

\textbf{Write-protected memories: }In our architecture, during the
inference process, the decoder stops writing to memories. We add this
feature to our design because the decoder does not receive any new
input when producing output. Writing to memories in this phase may
deteriorate the memory contents, hampering the efficiency of the model. 

\subsection{Inference in DMNC}

In this section, we give more details on the operation of the decoder.
Because the decoder works differently for different output types (set
or sequence), we will present two versions of decoder implementation.

\textbf{Output as sequence: }In this setting, the decoder ingests
the encoders' final states as its initial hidden state $h_{0}=\left[h_{L^{i_{1}}}^{i_{1}},h_{L^{i_{2}}}^{i_{2}}\right]$
. The decoder's hidden and output vectors are given as: $h_{t},\left[o_{t}^{1},o_{t}^{2}\right]=LSTM^{d}\left(\left[\mathbf{y}_{t-1}^{*},r_{t-1}^{i_{1}},r_{t-1}^{i_{2}}\right],h_{t-1}\right)$.
Here, $\mathbf{y}_{t-1}^{*}$ is the embedding of the previous prediction
$y_{t-1}^{*}$. The decoder combines the read vectors from both memories
to produce a probability distribution over the output:

\begin{equation}
r_{t}^{i_{1}}=\left[r_{t}^{i_{1,1}},...,r_{t}^{i_{1,R}}\right]=m_{read}^{d}\left(o_{t}^{1},M_{1}\right)\label{eq:read_1}
\end{equation}

\begin{equation}
r_{t}^{i_{2}}=\left[r_{t}^{i_{2,1}},...,r_{t}^{i_{2,R}}\right]=m_{read}^{d}\left(o_{t}^{2},M_{2}\right)\label{eq:read_2}
\end{equation}

\begin{equation}
P\left(y_{t}|X^{i_{1}},X^{i_{2}}\right)=\pi\left(\left[o_{t}^{1},o_{t}^{2}\right]+f^{d}\left(\left[r_{t}^{i_{1}},r_{t}^{i_{2}}\right]\right)\right)
\end{equation}
where $r_{t}^{i_{1}},r_{t}^{i_{2}}$ are read vector from $M_{1},M_{2}$,
respectively, provided by the read function $m_{read}^{d}$, $f^{d}$
is a learnable function and $\pi$ is softmax function. The current
prediction is $y_{t}^{*}=\underset{y\in S}{argmax}\,P\left(y_{t}=y|X^{i_{1}},X^{i_{2}}\right)$
and the loss function is the cross entropy:

\begin{equation}
Loss_{seq}\left(Y,P\right)=-\stackrel[t=1]{L}{\sum}\log P\left(y_{t}|X^{i_{1}},X^{i_{2}}\right)\label{eq:18}
\end{equation}

\textbf{Output as set: }In this setting, the decoder uses $m_{read}^{d}$
to read from the memories once to get the read vectors $r^{i_{1}},r^{i_{2}}$.
The decoder combines these vectors with the encoders' final hidden
values to produce the output vector $\hat{y}\in\mathbb{R}^{\left|S\right|}$:

\begin{equation}
\hat{y}=\sigma\left(f^{d}(W_{1}r^{i_{1}}+W_{2}r^{i_{2}}+W_{3}\left[h_{L^{i_{1}}}^{i_{1}},h_{L^{i_{2}}}^{i_{2}}\right])\right)\label{eq:out_y}
\end{equation}
Here, the combination is simply the linear weighted summation with
parameter matrices $W_{1}$, $W_{2}$, $W_{3}$. $f^{d}$ is a learnable
function and $\sigma$ is the sigmoid function. For set output, the
loss function is multi-label loss defined as:
\begin{equation}
Loss_{set}\left(Y,\widehat{y}\right)=-\left(\underset{y_{l}\in Y}{\sum}\log\widehat{y}_{l}+\underset{y_{l}\notin Y}{\sum}\log\left(1-\widehat{y}_{l}\right)\right)\label{eq:20}
\end{equation}
For both settings, the decoder makes use of both memories' contents
and encoders' final hidden values to produce the output. While memory
contents represent the long-term knowledge, the encoder's hidden values
represent the short-term information stored inside the controllers.
Both are crucial to model inter-view interactions and necessary for
the decoder to predict the correct outputs.

\subsection{Persistent Memory for Multiple Admissions\label{subsec:Persistent-Memory-for}}

As mentioned earlier in Sec. \ref{subsec:Asynchronous-Two-View-Sequential},
one unique property of healthcare is the long-term dependencies amongst
admissions. Therefore, the output at the current admission $Y_{a}$
is dependent on the current and all previous admission's inputs $\left\{ \left(X_{pa}^{i_{1}},X_{pa}^{i_{1}}\right)\right\} _{pa=1}^{a}$.
There are several ways to model this property. The simplest solution
is to concatenate the current admission with previous ones to make
up single sequence input for the model. This method causes data replication
and preprocessing overhead. Another solution is to use recurrent neural
network to model the dependencies. Some papers use GRU and LSTM where
each time step is fed with an admission. The admission is treated
as a set of medical events and represented by a feature vector \citet{choi2015doctor,pham2017predicting}. 

In our memory-augmented architecture, we can model this dependencies
by using the memories to store information from previous admissions.
In the original DNC, the memory content is flushed every time new
data sample (i.e. new admission) is fed \textendash{} this certainly
loses the information of admission history. We modify this mechanism
by keeping the memories persistent during a patient's admissions processing.
That is, the content of memories is built up and modified during the
whole history of a patient's admissions. The memories are only cleared
prior to reading a new patient's record.

Persistent memories in our architecture play two important roles.
First, because the number of events across admissions are large while
memory sizes are moderate, the memory modules learn to compress efficiently
the input views, keeping only essential information. This makes memory
look-ups in the decoding process only limited to a fixed size of chosen
knowledge. This is more compact and focused than attention mechanisms,
in which the decoder has to attend to all events in the input. Second,
each memory slot can store information of any event in the input views,
which enables skip-connection reference in the decoding process, i.e.,
the decoder can jump to any input event, even the one in the farthest
admission, to look for relevant information. The whole process of
training our dual memory neural computer for healthcare data is summarised
in Algorithm \ref{alg:Training-algorithm-for}.

\section{Applications}

In this section, we perform experiments both on real-world data and
synthetic tasks. The purpose of the synthetic task is to study the
incremental impact of dual control modifications we propose. Moreover,
we demonstrate the effectiveness of our proposed dual memory model
DMNC. We use $DMNC_{l}$ and $DMNC_{e}$ to denote the late-fusion
and early-fusion mode of our model, respectively. The data for real-world
problems are real EMR data sets, some are public accessible. We make
the source code of DCw-MANN and DMNC publicly available at \url{https://github.com/thaihungle/MAED}
and \url{https://github.com/thaihungle/DMNC}, respectively.

\subsection{Synthetic Task: Odd-Even Sequence Prediction}

\begin{figure}
\centering{}%
\begin{minipage}[c][1\totalheight][t]{0.49\linewidth}%
\begin{center}
\includegraphics[width=1\linewidth]{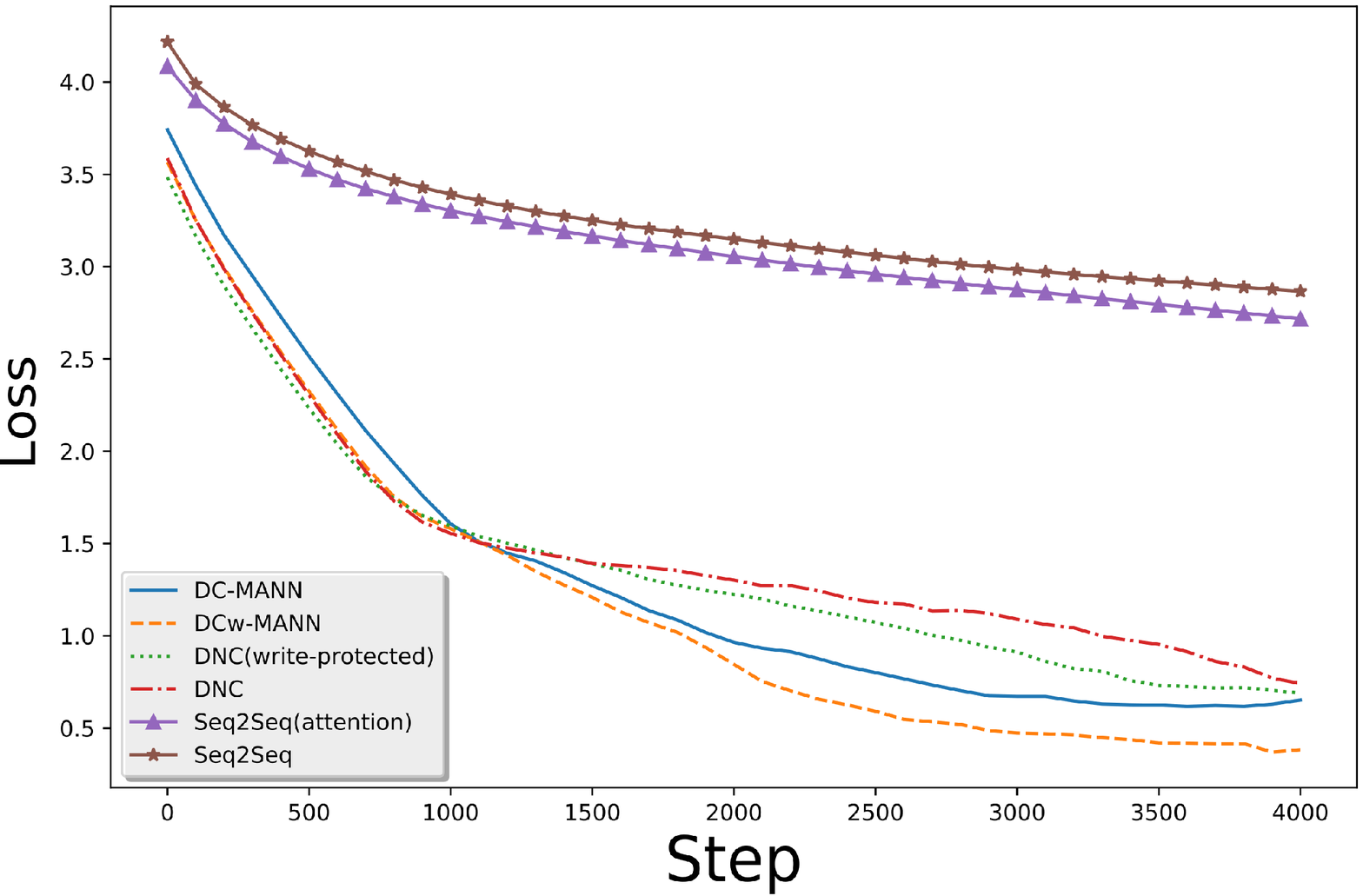}
\par\end{center}
\caption{Training Loss of Odd-Even Task\label{fig:Odd-Even's-Train-Loss}}
\end{minipage}\hfill{}%
\begin{minipage}[c][1\totalheight][t]{0.49\linewidth}%
\begin{center}
\includegraphics[width=1\linewidth]{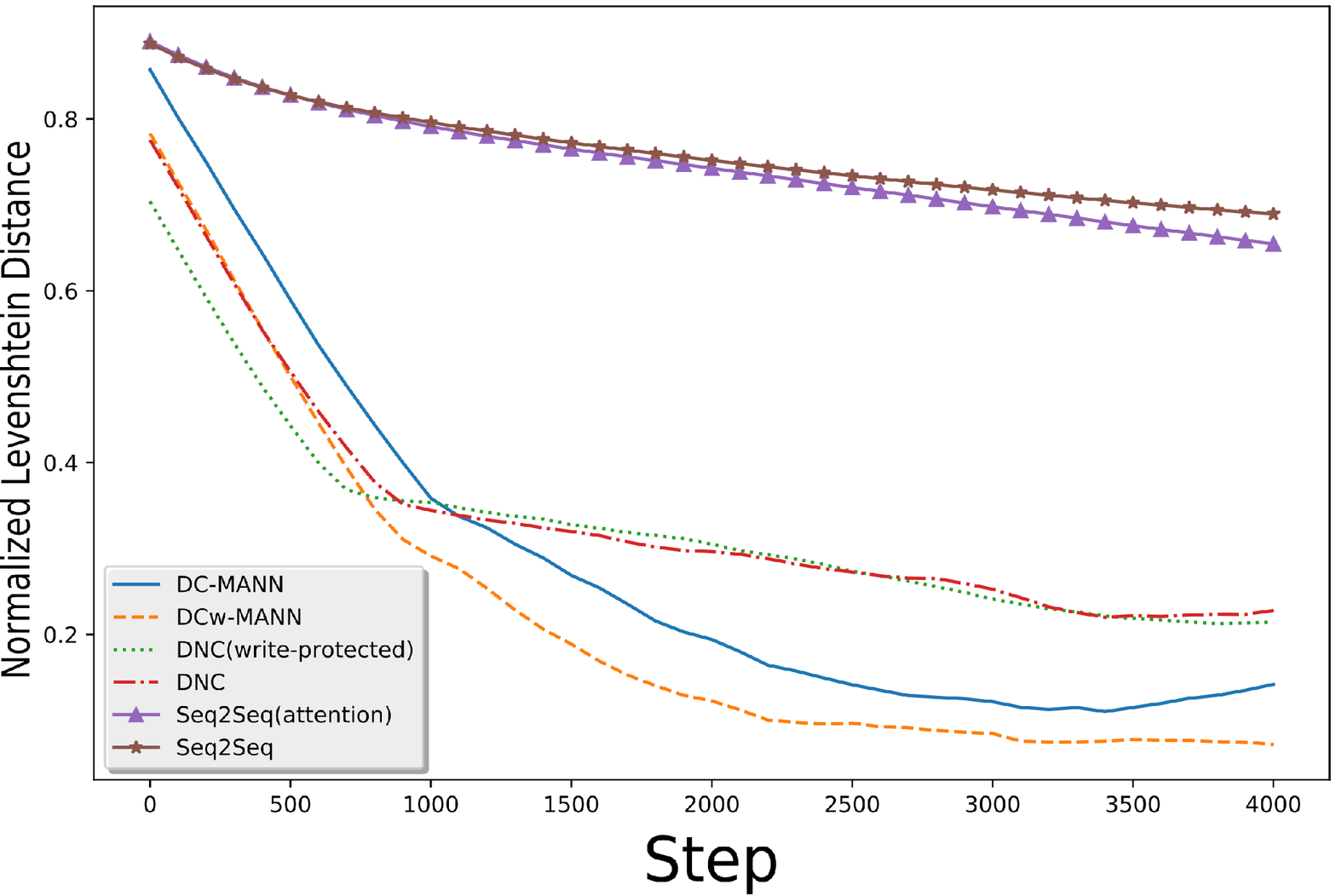}
\par\end{center}
\caption{Training NLD of Odd-Even Task\label{fig:Training-BLEU-score}}
\end{minipage}
\end{figure}

In this task, the input is sequence of random odd numbers chosen without
replacement from the set $S_{o}=\left\{ 1,3,5,...,49\right\} $ and
the output is sequence of even numbers from the set $S_{e}=\left\{ 2,4,6,..98\right\} .$
The $n$-th number $y_{n}$ in the output sequence is computed as:

$y_{n}=\begin{cases}
2x_{n} & n\leq\left\lfloor \frac{L}{2}\right\rfloor \\
y_{n-1}+2 & n>\left\lfloor \frac{L}{2}\right\rfloor 
\end{cases}$. $x_{n}$ is the $n$-th number in the input sequence and $L$ is
the length of both input and output sequence chosen randomly from
the range $\left[1,20\right]$. The formula is designed to reflect
healthcare situations where treatment options depend both on diagnoses
in the input sequence and other treatments in the same output sequence.
Here is an example of an input-output sequence pair with $L=7$: $input\coloneqq\left[11,7,25,39,31,1,13\right]$
and $output\coloneqq\left[22,14,50,52,54,56,58\right]$. We want to
predict the even numbers in the output sequence given odd numbers
in the input sequence, hence we name it odd-even prediction task.
In this task, the model has to ``remember'' the first half of the
input sequence to compute the first half of the output sequence, then
it should switch from using input to using previous output at the
middle of the output sequence to predict the second half. 

\textbf{Evaluations:} Our baselines are Seq2Seq \citet{DBLP:journals/corr/SutskeverVL14},
its attention version \citet{Bahdanau2015a} and the original DNC
\citet{graves2016hybrid}. Since we want to analyse the impact of
new modifications, in this task, we explore two other models: DNC
with write-protected mechanism in the decoding phase and dual controller
MANN without write-protected mechanism (DC-MANN). We use the Levenshtein
distance (edit distance) to measure the model's performance. To account
for variable sequence lengths, we \foreignlanguage{australian}{normalise}
this distance over the length of the longer sequence (between 2 sequences).
The predicted sequence is good if its Normalised Levenshtein Distance
(NLD) to the target sequence is small. 

\textbf{Implementation details:} For all experiments, deep learning
models are implemented in Tensorflow 1.3.0. Optimiser is Adam \citet{kingma2014adam}
with learning rate of 0.001 and other default parameters. The hidden
dimensions for LSTM and the embedding sizes for all models are set
to 256 and 64, respectively. Memory's parameters including number
of memory slots and the size of each slot are set to 128 and 128 ,
respectively.

\textbf{Results: }After training with 4000 input-output pair of sequences,
the models will be tested for the next 1000 pairs. The learning curves
of the models are plotted in Figs. \ref{fig:Odd-Even's-Train-Loss}
and \ref{fig:Training-BLEU-score}. The average NLD of the predictions
is summarised in Table \ref{tab:Tabel2}. As is clearly shown, the
proposed model outperforms other methods. Seq2Seq-based methods fail
to capture the data pattern and underperform other methods. The introduction
of two controllers helps boost the performance of DNC significantly.
Additional DNC-variant with write-protected also performs better than
the original one, which suggests the benefit of decoding without writing. 

\begin{figure}
\begin{minipage}[c][1\totalheight][t]{0.5\textwidth}%
\begin{center}
\begin{tabular}{cc}
\hline 
Model & NLD\tabularnewline
\hline 
Seq2Seq & 0.679\tabularnewline
Seq2Seq with attention & 0.637\tabularnewline
DNC & 0.267\tabularnewline
DNC (write-protected) & 0.250\tabularnewline
\hline 
DC-MANN & 0.161\tabularnewline
DCw-MANN & \textbf{0.082}\tabularnewline
\hline 
\end{tabular}
\par\end{center}
\captionof{table}{Test Results on Odd-Even Task (lower is better)}\label{tab:Tabel2}%
\end{minipage}\hfill{}%
\begin{minipage}[c][1\totalheight][t]{0.48\linewidth}%
\begin{center}
\includegraphics[width=1\linewidth]{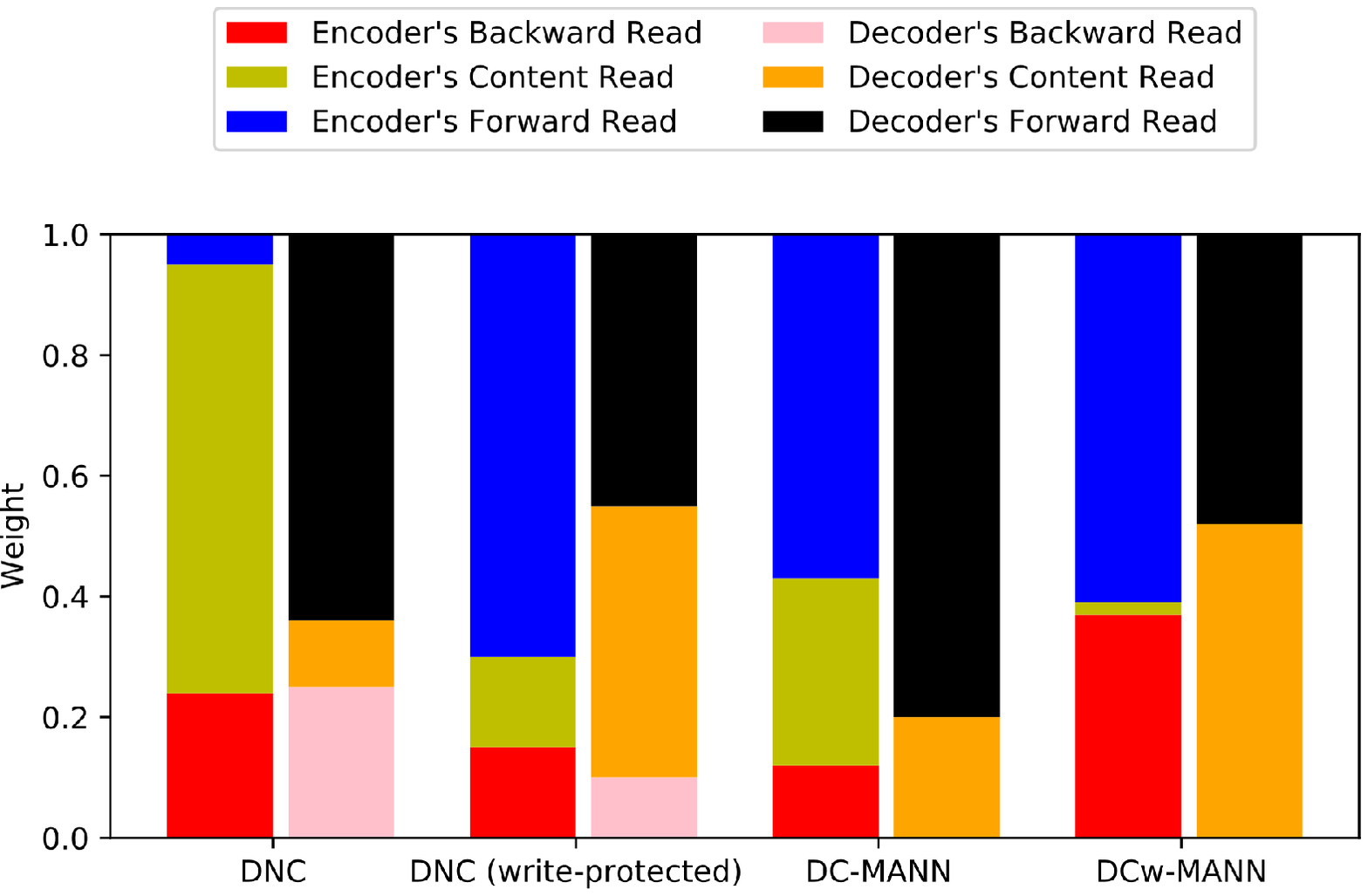}
\par\end{center}
\caption{Read Modes of MANNs on Odd-Even Task\label{fig:Read-Modes-of}}
\end{minipage}
\end{figure}

Fig. \ref{fig:Read-Modes-of} plots read mode weights for three reading
strategies employed in encoding and decoding phases. We can observe
the differences in the way the models prefer reading strategies. The
biggest failure of DNC is to keep using backward read in the decoding
process. This is redundant because in this problem, it is the forward
of the previous read location (if the memory location that corresponds
to $x_{n-1}$ is the previous read, then its forward is the memory
location that corresponds to $x_{n}$) that defines the current output
($y_{n}$). On the other hand, dual controllers with write-protected
mechanism seems help the model avoid bad strategies and focus more
on learning reasonable strategies. For example, using dual controllers
tends to lessen the usage of content-based read in the encoding phase.
This strategy is reasonable in this example since the input at each
time step is not repeated. Write-protected policy helps balance the
forward and content-based read in the decoding phase, which may reflect
the output pattern \textendash{} half-dependent on the input and half-dependent
on the previous output. 

\subsection{Treatment Recommendation Tasks}

The dataset used for this task is MIMIC-III \citet{johnson2016mimic},
which is a publicly available dataset consisting of more than 58k
EMR admissions from more than 46k patients. An admission history in
this dataset can contain hundreds of medical codes, which raises a
great challenge in handling long-term dependencies. In MIMIC-III,
there are both procedure and drug codes for the treatment process
so we consider two separate treatment recommendation tasks: procedure
prediction and drug prescription. In practice, if we use all the drug
codes in an EMR record, the drug sequence can be very long since,
each day in hospital, the doctor can prescribe several types of drugs
for the patient. Hence, we only pick the first drug used in a day
during the admission as the representative drug for that day. We also
follow the previous practice that only focuses on patients who have
more than one visit \citet{ma2017dipole,nguyen2016deepr,pham2017predicting}.
The statistics of the two sub-datasets is detailed in Table \ref{tab:Statistic-of-MIMIC-III}.

\begin{table}
\begin{centering}
\begin{tabular}{lcc}
\hline 
MIMIC-III Dataset (\# of visit >1) & Procedure as output & Drug as output\tabularnewline
\hline 
\# of patients & 6,314 & 5,620\tabularnewline
\# of admissions & 16,317 & 14,656\tabularnewline
\# of unique diagnosis codes & 4,669 & 4,563\tabularnewline
\# of unique treatment codes & 1,439 & 2,446\tabularnewline
Average \# of diagnosis sequence length & 13.3 & 13.8\tabularnewline
Max \# of diagnosis sequence length & 39 & 39\tabularnewline
Average \# of treatment sequence length & 4.7 & 11.4\tabularnewline
Max \# of treatment sequence length & 40 & 186\tabularnewline
Average \# of visits per patient & 2.5 & 2.6\tabularnewline
Max \# of visits per patient & 29 & 29\tabularnewline
\hline 
\end{tabular}
\par\end{centering}
\caption{Statistics of MIMIC-III sub-datasets\label{tab:Statistic-of-MIMIC-III}}
\end{table}

\textbf{Evaluations:} For comprehensiveness, beside direct competitors,
we also compare our methods with classical for healthcare predictions,
which are Logistic Regression and Random Forests. Because traditional
methods are not designed for sequence predictions, we simply pick
the top outputs (ignoring ordering information). In treatment recommendation
tasks, we use precision, which is defined as the number of correct
predicted treatment codes (ignoring the order) divided by the number
of predict treatment codes. More formally, let $S_{p}^{n}$ be the
set of ground truth treatments for the $n$-th admission, $S_{q}^{n}$
be the set of treatments that the model outputs. Then the precision
is: $\frac{1}{N}\stackrel[n=1]{N}{\sum}\frac{\left|S_{p}^{n}\cap S_{q}^{n}\right|}{\left|S_{q}^{n}\right|}$,
where $N$ is total number of test patients. To measure how closely
the generated treatment compares against the real treatment, we use
Mean Jaccard Coefficient\footnote{The metrics actually are at disadvantage to the proposed sequence-to-sequence
model, but we use to make them easy to compare against non-sequential
methods.}, which is defined as the size of the intersection divided by the
size of the union of ground truth treatment set and predicted treatment
set: $\frac{1}{N}\stackrel[n=1]{N}{\sum}\frac{\left|S_{p}^{n}\cap S_{q}^{n}\right|}{\left|S_{p}^{n}\cup S_{q}^{n}\right|}$. 

\textbf{Implementation details:} We randomly divide the dataset into
the training, validation and testing set in a $0.7:0.1:0.2$ ratio,
where the validation set is used to tune model's hyper-parameters.
For the classical Random Forests and Logistic Classifier, the input
is bag-of-words. Also, we apply One-vs-Rest strategy \citet{rifkin2004defense}
to enable these classifiers to handle multi-label output and the hyper-parameters
are found by grid-searching.

\textbf{Results:} Table \ref{tab:MIMIC-III-Procedure-Prediction}
reports the prediction results on two tasks (procedure prediction
and drug prescription). The performance of the proposed DCw-MANN is
higher than that of baselines on the testing data for both tasks,
validating the use of dual controllers with write-protected mechanism.
Without memory, Seq2Seq methods seem unable to outperform classical
methods, possibly because the evaluations are set-based, not sequence-based.
In the drug prescription task, there is a huge drop in performance
of the Seq2Seq-based approaches.
\begin{table}
\begin{centering}
\begin{tabular}{ccccc}
\hline 
\multirow{2}{*}{Model} & \multicolumn{2}{c}{Procedure Output} & \multicolumn{2}{c}{Drug Output}\tabularnewline
\cline{2-5} \cline{3-5} \cline{4-5} \cline{5-5} 
 & Precision & Jaccard & Precision & Jaccard\tabularnewline
\hline 
Logistic Regression & 0.256 & 0.185 & 0.412 & 0.311\tabularnewline
Random Forest & 0.276 & 0.199 & 0.491 & 0.405\tabularnewline
Seq2Seq & 0.263 & 0.196 & 0.220 & 0.138\tabularnewline
Seq2Seq with attention & 0.272 & 0.204 & 0.224 & 0.142\tabularnewline
DNC & 0.285 & 0.214 & 0.577 & 0.529\tabularnewline
\hline 
DCw-MANN  & \textbf{0.292} & \textbf{0.221} & \textbf{0.598} & \textbf{0.556}\tabularnewline
\hline 
\end{tabular}
\par\end{centering}
\caption{Results on MIMIC-III dataset for procedure prediction and drug prescription
(higher is better).\label{tab:MIMIC-III-Procedure-Prediction}}
\end{table}
It should be noted that, in drug prescription, the drug codes are
given day by day; hence, the average length of output sequence are
much longer than the procedure's one. This could be a very challenging
task for Seq2Seq. Memory-augmented models, on the other hand, have
an external memory to store information, so it can cope with long-term
dependencies. Figs. \ref{fig:Training-Loss-of-1} and Fig. \ref{fig:Valid-Loss-of}
show that compared to DNC, DCw-MANN is the faster learner in drug
prescription task. This case study demonstrates that a MANN with dual
controller and write-protected mechanism can significantly improve
the performance of the sequence prediction task in healthcare.

\begin{figure}
\centering{}%
\begin{minipage}[t]{0.48\textwidth}%
\begin{center}
\includegraphics[width=1\linewidth]{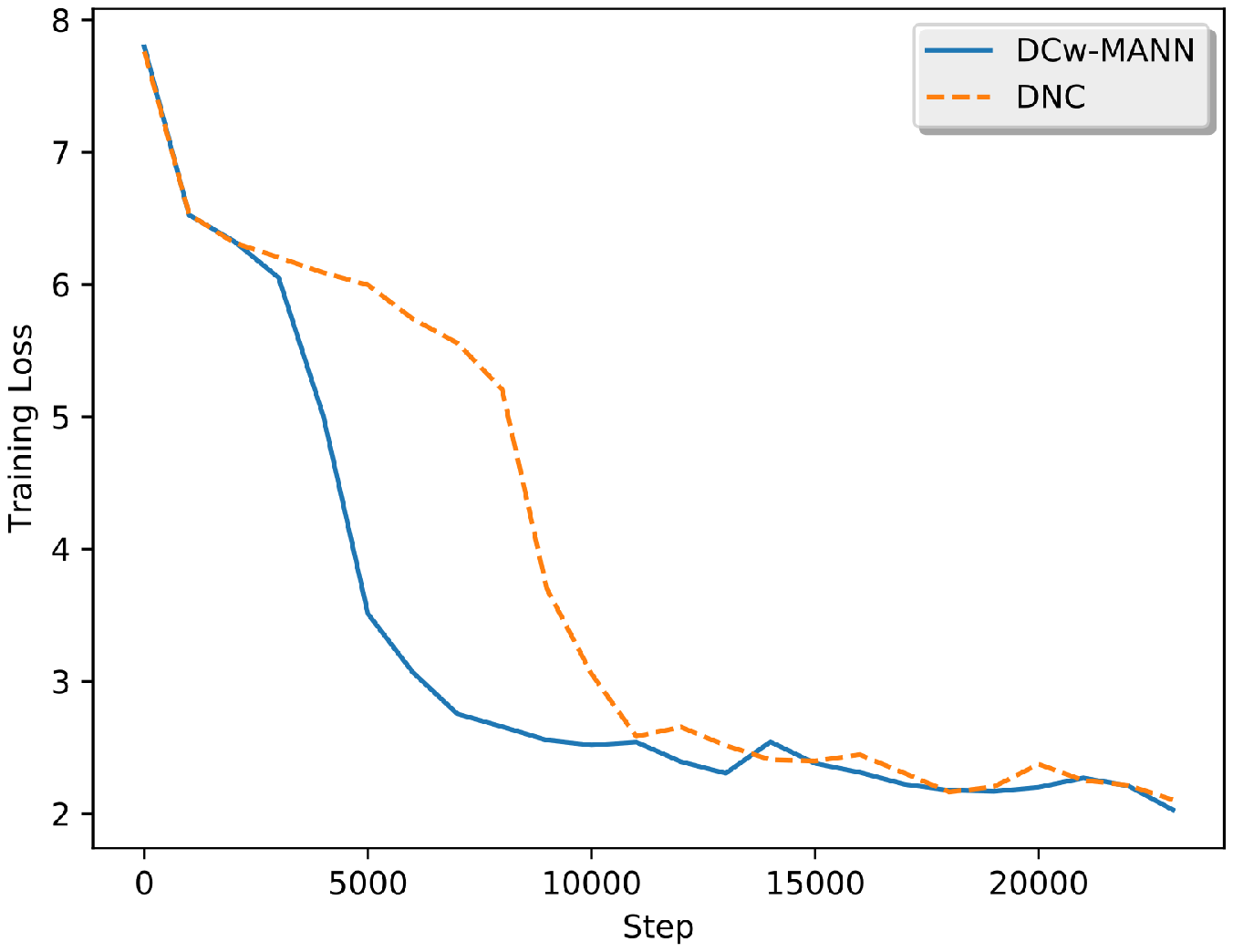}
\par\end{center}
\caption{Training Loss of Drug Prescription Task\label{fig:Training-Loss-of-1}}
\end{minipage}\hfill{}%
\begin{minipage}[t]{0.48\textwidth}%
\begin{center}
\includegraphics[width=1\linewidth]{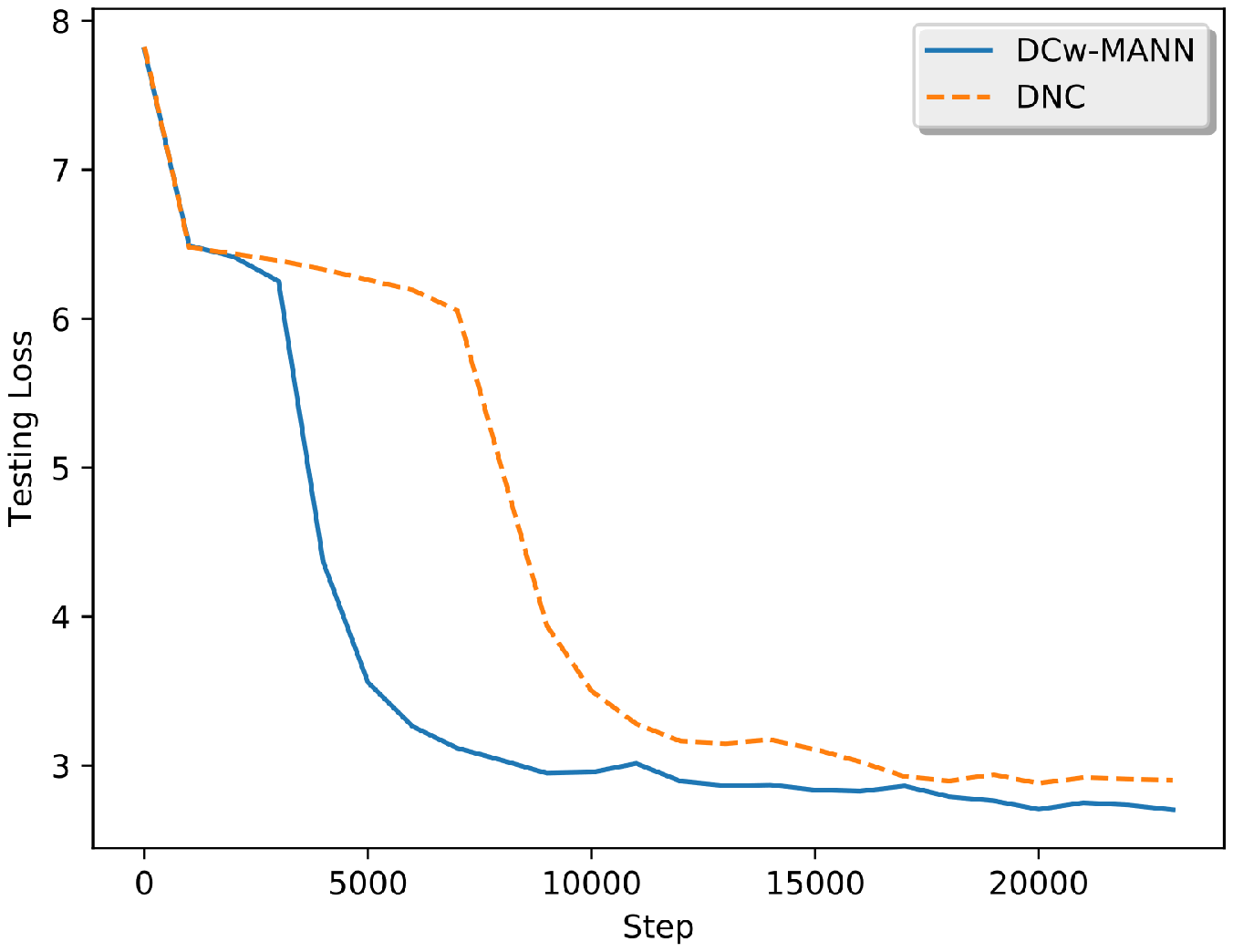}
\par\end{center}
\caption{Testing Loss of Drug Prescription Task\label{fig:Valid-Loss-of}}
\end{minipage}
\end{figure}

\subsection{Synthetic Task: Sum of Two Sequences}

\begin{figure}
\begin{centering}
\includegraphics[width=0.8\linewidth]{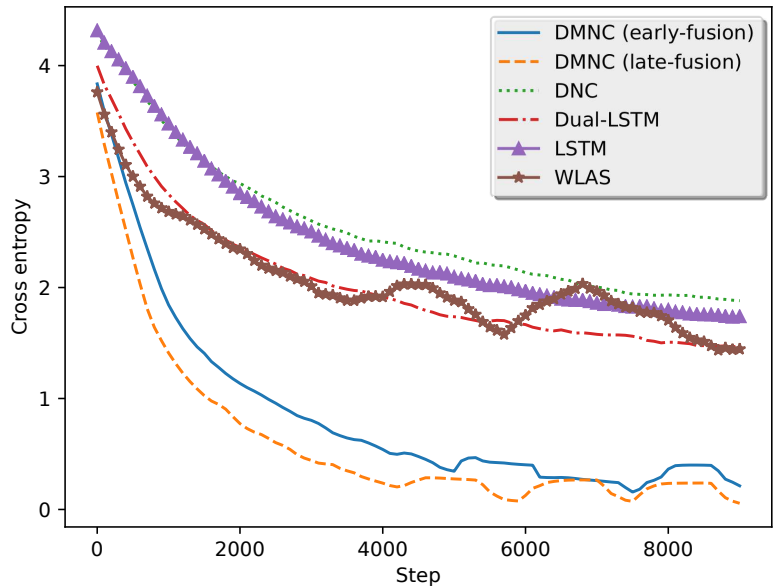}
\par\end{centering}
\caption{Training loss of sum of two sequences task. The training error curves
have similar patterns.\label{fig:Training-Loss-of}}
\end{figure}

\begin{table}
\begin{centering}
\caption{Sum of two sequences task test results. Max train sequence length
is 10.\label{tab:Sum-task-test}}
~
\par\end{centering}
\centering{}%
\begin{tabular}{cccc}
\hline 
\multirow{2}{*}{Model} & \multicolumn{3}{c}{Accuracy (\%)}\tabularnewline
\cline{2-4} \cline{3-4} \cline{4-4} 
 & $L_{max}=10$ & $L_{max}=15$ & $L_{max}=20$\tabularnewline
\hline 
LSTM & 35.17 & 24.12 & 18.64\tabularnewline
DNC & 37.8 & 20.43 & 14.67\tabularnewline
Dual LSTM & 52.41 & 42.57 & 30.47\tabularnewline
WLAS & 55.98 & 43.29 & 32.49\tabularnewline
\hline 
$DMNC_{l}$ & \textbf{99.76} & \textbf{98.53} & \textbf{78.17}\tabularnewline
$DMNC_{e}$ & 98.84 & 93.00 & 69.93\tabularnewline
\hline 
\end{tabular}
\end{table}

We conduct this synthetic experiment to verify our model performance
and behavior. In this problem, the input views are two randomly generated
sequence of numbers: $\left\{ x_{1}^{1},...,x_{L}^{1}\right\} $,
$\left\{ x_{1}^{2},...,x_{L}^{2}\right\} $. Each sequence has $L$
integer numbers. $L$ is randomly chosen from range $\left[1,L_{max}\right]$
and the numbers are randomly chosen from range $\left[1,50\right]$.
The output view is also a sequence of integer numbers defined as $\left\{ y_{i}=x_{i}^{1}+x_{L+1-i}^{2}\right\} _{i=1}^{L}$,
in which $y_{i}\in\left[2,100\right]$. Note that this summation form
is unknown to the model. During training, only the outputs are given.
Because the output's number is the sum of two numbers from the two
input views, we name the task as sum of two sequences. It should be
noted that two input numbers in the summation do not share the same
time step; hence, the problem is asynchronous. To learn and solve
the task, a model has to read all the numbers from the two input sequences
and discover the correct pair that will be used to produce the summation.
Synchronous multi-view models certainly fail this task because they
assume the inputs to be aligned. In the training phase, we choose
$L_{max}=10$, training for 10,000 iterations with mini batch size
$=50$. In the testing phase, we evaluate on 2500 random samples with
$L_{max}=10$, $L_{max}=15$, $L_{max}=20$ to verify the generalisation
of the models beyond the range where they are trained. 

\textbf{Evaluations}: the baselines for this synthetic task are chosen
as follows:
\begin{itemize}
\item View-concatenated sequential models: This concatenates events in input
views to form one long sequence. This technique transforms the two-view
sequential problem to normal sequence-to-sequence problem. We pick
LSTM and DNC as two representative methods for this approach.
\item Attention model WLAS \citet{chung2017lip}: This has a LSTM encoder
per view, and attention is used for decoding, similar to that in machine
translation \citet{cho2014properties,Bahdanau2015a}. The model is
applied successfully in the problem of video sentiment analysis. To
make it suitable for our tasks, we replace the encoders' feature-extraction
layers in the original WLAS by an embedding layer. We choose this
model as baseline since its architecture is somehow similar to ours.
The difference is that we make use of external memories instead of
attention mechanism. 
\item Dual LSTM: This model is the WLAS model without attention, that is,
only the final states of encoders are passed into the decoder. 
\end{itemize}
\textbf{Implementations: }For all models, embedding and hidden dimensions
are 64 and 128, respectively. Word size for memory-based methods are
64. Memory size for the view-concatenated DNC and DMNC are 32 and
16, respectively. We double the memory size for view-concatenated
DNC to account for the fact that the length of the input sequence
is nearly double due to view concatenation. We use Adam optimiser
with default parameters and apply gradient clipping size $=10$ to
train all models. Since output is a sequence, we use the cross-entropy
loss function in Eq.(\ref{eq:18}). The evaluation metric used in
this task is accuracy \textendash{} the number of correct predictions
over the length of output sequence. 

\textbf{Results: }The training loss curves of the models are plotted
in Fig. \ref{fig:Training-Loss-of}. The test average accuracy is
summarised in Table \ref{tab:Sum-task-test}. As clearly shown, overall
the proposed model outperforms other methods by a huge margin of about
45\%. Although dual LSTM and WLAS perform better than view-concatenated
methods, it's too hard for non-memory methods to ``remember'' correctly
pairs of inputs for later output summation. View-concatenated DNC
even with double memory size still fails to learn the sum rule because
storing two views' data in a single memory seems to mess up the information,
making this model perform worst. Between two versions of DMNC, late-fusion
mode is better perhaps due to the independence between two inputs'
number sequences. This is the occasion where trying to model early
cross-interactions damages the performance. The slight drop in performance
when testing with $L_{max}=15$ shows that our model really learns
the sum rule. When $L_{max}=20$, the input length is longer than
the memory size, so even when DMNCs can learn the sum rule, they cannot
store all input pairs for later summation. However, our methods still
manage to perform better than any other baseline. 

\subsection{Drug Prescription Task}

\begin{table}[t]
\begin{centering}
\caption{MIMIC-III data statistics.\label{tab:MIMIC-III-data-statistic}}
~
\par\end{centering}
\centering{}%
\begin{tabular}{lclc}
\hline 
\# of admissions & 42,586 & \# of diag & 6,461\tabularnewline
\# of patients & 34,594 & \# of proc & 1,881\tabularnewline
Avg. view len & 53.86 & \# of drug & 300\tabularnewline
\hline 
\end{tabular}
\end{table}

\begin{table}
\begin{centering}
\caption{Mimic-III drug prescription test results.\label{tab:Mimic-III-test-results}}
~
\par\end{centering}
\centering{}%
\begin{tabular}{cccccc}
\hline 
Model & AUC & F1 & P@1 & P@2 & P@5\tabularnewline
\hline 
\selectlanguage{australian}%
\selectlanguage{australian}%
 & \multicolumn{5}{c}{Diagnosis Only}\tabularnewline
\hline 
\multicolumn{1}{c}{Binary Relevance} & 82.6 & 69.1 & 79.9 & 77.1 & 70.3\tabularnewline
Classifier Chains & 66.8 & 63.8 & 68.3 & 66.8 & 61.1\tabularnewline
LSTM & 84.9 & 70.9 & 90.8 & 86.7 & 79.1\tabularnewline
DNC & 85.4 & 71.4 & 90.0 & 86.7 & 79.8\tabularnewline
\hline 
\selectlanguage{australian}%
\selectlanguage{australian}%
 & \multicolumn{5}{c}{Procedure Only}\tabularnewline
\hline 
Binary Relevance & 81.8 & 69.4 & 82.6 & 80.1 & 73.6\tabularnewline
Classifier Chains & 63.4 & 61.7 & 83.7 & 80.3 & 71.9\tabularnewline
LSTM & 83.9 & 70.8 & 88.1 & 86.0 & 78.4\tabularnewline
DNC & 83.2 & 70.4 & 88.4 & 85.8 & 78.7\tabularnewline
\hline 
\selectlanguage{australian}%
\selectlanguage{australian}%
 & \multicolumn{5}{c}{Diagnosis and procedure}\tabularnewline
\hline 
Binary Relevance & 84.1 & 70.3 & 81.0 & 78.2 & 72.3\tabularnewline
Classifier Chains & 64.6 & 63.0 & 84.6 & 81.5 & 74.2\tabularnewline
LSTM & 85.8 & 72.1 & 91.6 & 86.8 & 80.5\tabularnewline
DNC & 86.4 & 72.4 & 90.9 & 87.4 & 80.6\tabularnewline
Dual LSTM & 85.4 & 71.4 & 90.6 & 87.1 & 80.5\tabularnewline
WLAS  & 86.6 & 72.5 & 91.9 & 88.1 & 80.9\tabularnewline
\hline 
$DMNC_{l}$ & 87.4 & 73.2 & \textbf{92.4} & 88.9 & \textbf{82.6}\tabularnewline
$DMNC_{e}$ & \textbf{87.6} & \textbf{73.4} & 92.1 & \textbf{89.9} & 82.5\tabularnewline
\hline 
\end{tabular}
\end{table}

The data set used for this task is MIMIC-III, which is a publicly
available dataset consisting of more than 52k EMR admissions from
more than 46k patients. In this task, we keep all the diagnosis and
procedure codes and only preprocess the drug code since the raw drug
view's average length can reach hundreds of codes in an admission,
which is too long given the amount of data. Therefore, only top 300
frequently used of total 4781 drug types are kept (covering more than
70\% of the raw data). The final statistics of the preprocessed data
is summarised in Table \ref{tab:MIMIC-III-data-statistic}. 

\textbf{Evaluations: }We compare our model with the following baselines:
\begin{itemize}
\item Bag of words and traditional classifiers: In this approach, each input
view is considered as a set of events. The vector represents the view
is the sum of one-hot vectors representing the events. These view
vectors are then concatenated and passed into traditional classifiers:
SVM, Logistic Regression, Random Forest. To help traditional methods
handle multi-label output, we apply two popular techniques: Binary
Relevance \citet{luaces2012binary} and Classifier Chains \citet{read2011classifier}.
We will only report the best model for each of the two techniques,
which are Logistic Regression and Random Forest, respectively.
\item View-concatenated sequential models (LSTM, DNC), Dual LSTM and WLAS
\citet{chung2017lip}: similar to those described in the synthetic
task.
\item Single-view models: To see the performance gains when making use of
two input views, we also report results when only using one view for
Binary Relevance, Classifier Chains, LSTM and DNC. 
\end{itemize}
\textbf{Implementations:} We randomly divide the dataset into the
training, validation and testing set in a $2/3:1/6:1/6$ ratio. For
traditional methods, we use grid-searching over typical ranges of
hyper-parameters to search for best hyper-parameter values. Deep learning
models' best embedding and hidden dimensions are 64 and 64, respectively.
Optimal word and memory size for DMNC are 64 and 16, respectively.
The view-concatenated DNC shares the same setting except the memory
size is doubled to 32 memory slots. Since the output in this task
is a set, we use the multi-label loss function in Eq.(\ref{eq:20})
for deep learning methods. We measure the relative quality of model
performances by using common multi-label metrics, Area Under the ROC
Curve (AUC) and F1 scores, both of which are macro-averaged. Similar
results can be achieved when using micro-averaged so we did not report
them here. In practice, precision at $k$ (P@$k$) are often used
to judge the treatment recommendation quality. Therefore, we also
include them ($k=1,2,5$) in the evaluation metrics.

\textbf{Results:} Table \ref{tab:Mimic-III-test-results} shows the
performance of experimental models on aforementioned performance metrics.
We can see the benefit of using two input views instead of one, which
helps improve the model performances. Traditional methods clearly
underperform deep learning methods perhaps because these methods are
hard to scale when there are many output labels and the inputs in
our problem are not bag-of-words. Amongst deep learning models, our
proposed ones consistently outperform others in all type of measurements.
Our methods demonstrate 1-2\% improvements over the second runner-up
baseline WLAS. The late-fusion mode seems suitable for certain type
of metrics, but overall, the early-fusion mode is the winner, highlighting
the importance of modeling early inter-view interactions. 

\textbf{Case study: }In Table \ref{tab:Example-Recommend-Treatments},
we show an example of drugs prescribed for a patient given his current
diagnoses and procedures. The patient had serious problems with his
bowel as described in the first four diagnoses. The next three diagnoses
are also severe relating to his heart problems while the remaining
diagnoses are less urgent. It seems that heart-related diagnoses later
led to heart surgeries listed in the procedure codes. Both modes of
DMNC predict correctly the drug Docusate Sodium used to cure urgent
bowel symptoms. Relating to heart diseases and surgeries, our models
predict closely to expert's choices. Potassium Chloride is necessary
for a healthy heart. Acetaminophen and Propofol are commonly used
during surgeries. However, some heart medicine such as Heparin is
missed by the two models. Figs. \ref{fig:Memory-write-gate-1} and
\ref{fig:Memory-write-gate} demonstrate the ``focus'' of the two
memories on diagnosis and procedure view, respectively. The higher
the write gate values, the more information of the medical codes will
be written into the memories. We can see both modes pay less attention
on last diagnoses corresponding to less severe symptoms. Compared
to the late-fusion, the early-fusion mode keeps more information on
procedures, especially the heart-related events. This may help increase
the weight on heart-related medicines and enable it to include Acetylsalicylic
Acid, a common drug used after heart attack in the top recommendations. 

\begin{figure}
\begin{centering}
\includegraphics[width=0.8\linewidth]{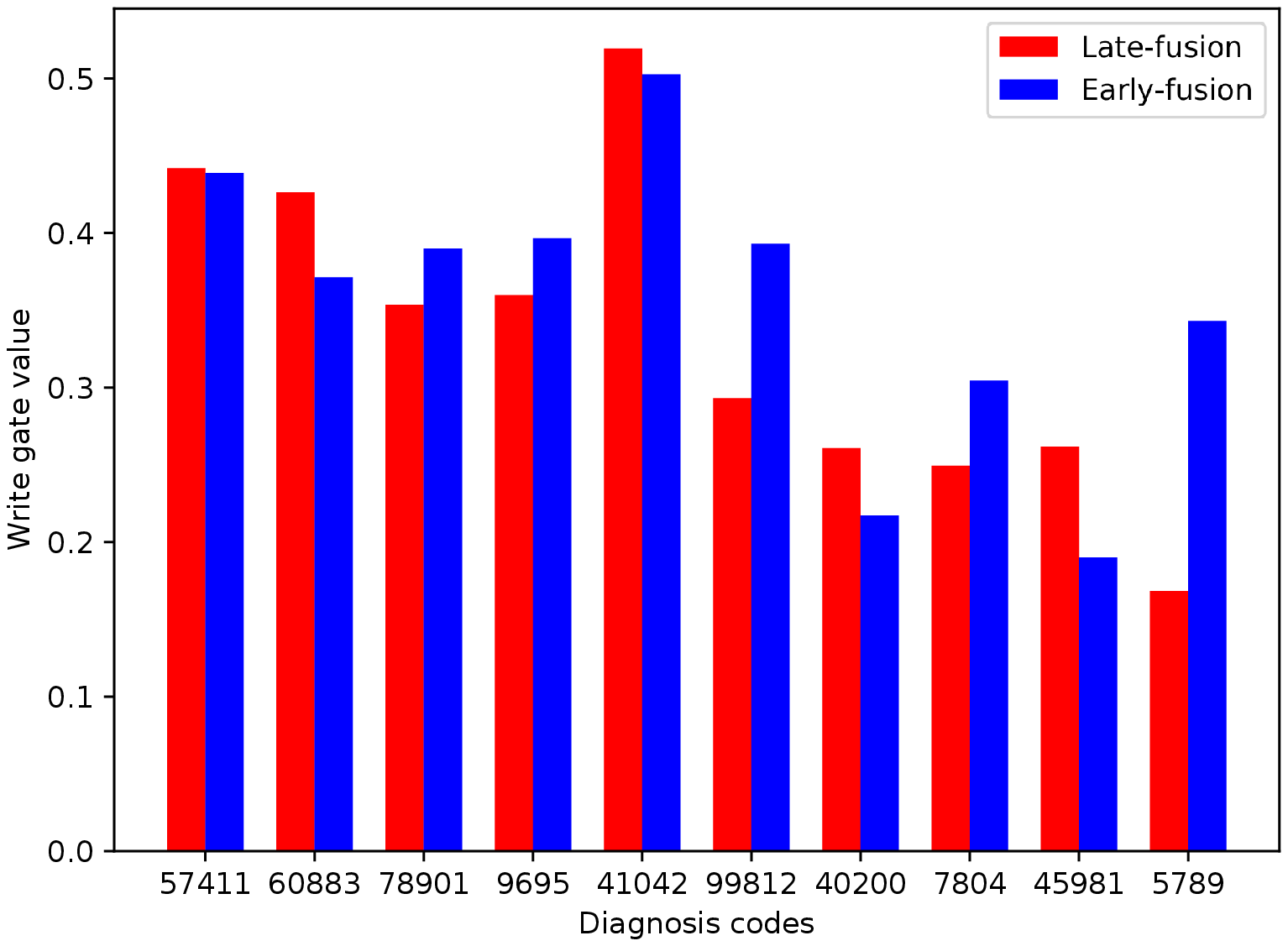}
\par\end{centering}
\caption{$M_{1}$'s $g_{t}^{w}$ over diagnoses. Diagnosis codes of a MIMIC-III
patient is listed along the x-axis (ordered by priority) with the
y-axis indicating how much the write gate allows a diagnosis to be
written to the memory $M_{1}$.\label{fig:Memory-write-gate-1}}
\end{figure}

\begin{figure}[t]
\begin{centering}
\includegraphics[width=0.8\linewidth]{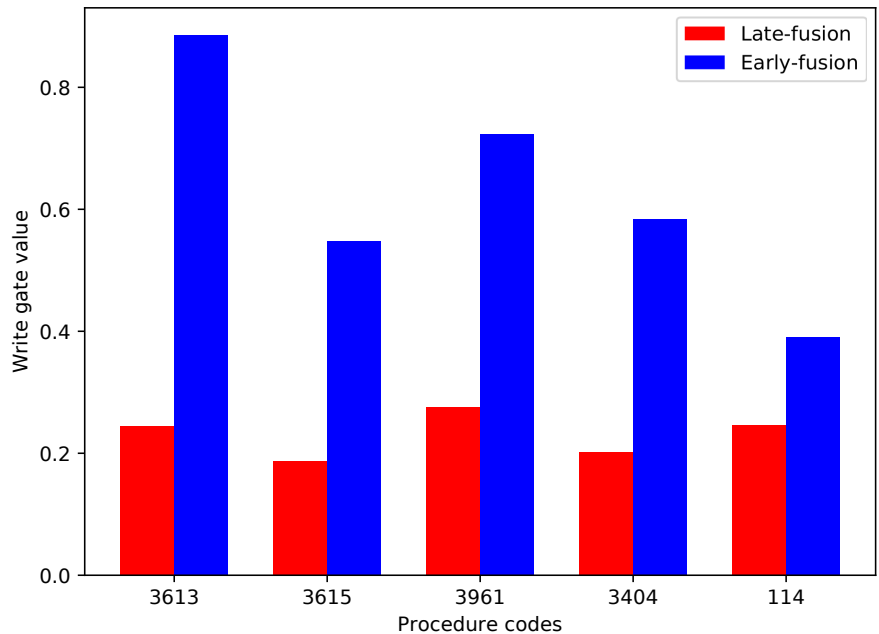}
\par\end{centering}
\caption{$M_{2}$'s $g_{t}^{w}$ over procedures. Medical procedure codes of
a MIMIC-III patient is listed along the x-axis (in the order of executions)
with the y-axis indicating how much the write gate allows a procedure
to be written to the memory $M_{2}$. \label{fig:Memory-write-gate}}
\end{figure}

\begin{table*}
\caption{Example Recommended Medications by DMNCs on MIMIC-III dataset. Bold
denotes matching against ground-truth.\label{tab:Example-Recommend-Treatments}}
~

\begin{tabular}{>{\raggedright}p{0.25\linewidth}>{\raggedright}p{0.7\linewidth}}
\hline 
Diagnoses & Calculus Of Gallbladder (57411),Vascular disorders of male genital
organs (60883), Abdominal Pain (78901), Poisoning By Other Tranquilizers
(9695), Acute Myocardial Infarction Of Other Inferior Wall (41042),
Hematoma Complicating (99812), Malignant hypertensive heart disease
40200), Dizziness and giddiness (7804), Venous (Peripheral) Insufficiency,
Unspecified (45981), Hemorrhage Of Gastrointestinal Tract (5789)\tabularnewline
\hline 
Procedures & Coronary Bypass Of Three Coronary Arteries (3613), Single Internal
Mammary Artery Bypass (3615), Extracorporeal circulation auxiliary
to open heart surgery (3961), Insertion Of Intercostal Catheter For
Drainage (3404), Operations on cornea(114)\tabularnewline
\hline 
Top 5 Ground-truth drugs (manually picked by experts) & Docusate Sodium (DOCU100L), Acetylsalicylic Acid (ASA81), Heparin
(HEPA5I), Acetaminophen (ACET325), Potassium Chloride (KCLBASE2)\tabularnewline
\hline 
Top 5 Late-fusion Recommendations & \textbf{Docusate Sodium (DOCU100L)}, Neostigmine (NEOSI), \textbf{Acetaminophen
(ACET325)}, Propofol (PROP100IG), \textbf{Potassium Chloride (KCLBASE2)}\tabularnewline
\hline 
Top 5 Early-fusion Recommendations  & \textbf{Docusate Sodium (DOCU100L)}, \textbf{Acetaminophen (ACET325)},
\textbf{Potassium Chloride (KCLBASE2)}, Dextrose (DEX50SY), \textbf{Acetylsalicylic
Acid (ASA81)}\tabularnewline
\hline 
\end{tabular}
\end{table*}

\subsection{Disease Progression Task}

Data used in this task are two chronic cohorts of diabetes and mental
EMRs collected between 2002-2013 from a large regional hospital in
Australia. Since we want to predict the next diagnoses for a patient
given his or her history of admission, we preprocessed the datasets
by removing patients with less than 2 admissions, which ends up with
53,208 and 52,049 admissions for the two cohorts. In this data set,
procedures and medicines are grouped into intervention codes, together
with diagnosis codes forming a patient's admission record. The number
of diagnosis and intervention codes are 249 and 1071, respectively.
We follow the same preprocessing steps and data split as in Pham et
al., (2017). Different from MIMIC-III, a patient record suffering
from chronic conditions often consists of multiple admissions, which
is suitable for the task of predicting disease progression. The average
number and the maximum number of admission per patient are 5.35 and
253, respectively.

\textbf{Evaluations: }For comparison, we choose the second best-runner
in our previous experiments WLAS and the current state-of-the-art
DeepCare \citet{pham2017predicting} as the two baselines. 

\textbf{Implementations:} We use the validation data set to tune the
hyper-parameters of our implementing methods and have the best embedding
and hidden dimensions are 20 and 64, respectively. The word and memory
size for DMNC are found to be 32 and 32, respectively. For performance
measurements, we use P@$k$ metric ($k=1,2,3$) to make it comparable
with DeepCare's results reported in Pham et al., (2017). 

\begin{table}
\begin{centering}
\caption{Regional hospital test results. P@K is precision at top K predictions
in \%.\label{tab:Local-hospital-test}}
\par\end{centering}
\begin{centering}
~
\par\end{centering}
\centering{}%
\begin{tabular}{ccccccc}
\hline 
\multirow{2}{*}{Model} & \multicolumn{3}{c}{Diabetes} & \multicolumn{3}{c}{Mental}\tabularnewline
\cline{2-7} \cline{3-7} \cline{4-7} \cline{5-7} \cline{6-7} \cline{7-7} 
 & P@1 & P@2 & P@3 & P@1 & P@2 & P@3\tabularnewline
\hline 
DeepCare & 66.2 & 59.6 & 53.7 & 52.7 & 46.9 & 40.2\tabularnewline
WLAS & 65.9 & 60.8 & 56.5 & 51.8 & 48.9 & 45.7\tabularnewline
\hline 
$DMNC_{l}$ & 66.5 & \textbf{61.3} & \textbf{57.0} & 52.7 & 49.4 & 46.2\tabularnewline
$DMNC_{e}$ & \textbf{67.6} & 61.2 & 56.9 & \textbf{53.6} & \textbf{50.0} & \textbf{47.1}\tabularnewline
\hline 
\end{tabular}
\end{table}

\textbf{Results: }We report the results on test data of models for
disease progression task in Table \ref{tab:Local-hospital-test}.
For both cohorts, our proposed model consistently outperforms other
methods and the performance gains become larger as the number of predictions
increase. Compared to DeepCare which uses pre-trained embeddings and
time-intervals as extra information, our methods only use raw medical
codes and perform better. This emphasises the importance of modeling
view interactions at event level. The late-fusion DMNC seems to perform
slightly better than the early-fusion DMNC in the diabetes cohort,
yet overall, the latter is the better one, which again validates its
ability to model all types of view interactions.

\section{Closing Remarks}

In this chapter, we have introduced DCw-MANN and DMNC, which are slot-based
MANNs designed for multiple processes. Under our designs, each input
sequence is assigned a neural controller to encode and store its events
to a dedicated memory. After all input sequences are stored, a decoder
will access the memories and synthesise the read contents to produce
the final output. Our methods can be generalised to sequence-to-sequence
and multi-view prediction tasks that require special handling of long-term
dependencies and view interactions. 

In summary, our main contributions are: (i) handling very long-term
dependencies and rare events in healthcare data by solving the sequence
prediction problem, (ii) proposing a novel memory-augmented architecture
that uses dual controller and write-protected mechanism (DCw-MANN)
to fit with sequence-in-sequence-out (SISO) task, (iii) proposing
a novel dual memory neural computer (DMNC) to solve the asynchronous
multi-view sequential problem and designing our architecture to model
view interactions and long-term dependencies, (iv) demonstrating the
efficacy of our proposed model on real-world medical data sets for
the problems of treatment recommendation, drug prescription and disease
progression. In particular, our models outperform other baselines
by 1-2\% across various metrics, which is significant in the domain.
More importantly, explainability is critical in healthcare and we
can somehow explain the behavior of our models by analysing the write
gate of the memory. 

We wish to emphasise that although our models are designed as predictive
model targeted to healthcare, they can be applied to other sequential
domains with similar data characteristics (i.e., sequential, long-term,
rare and multi-view) such as video understanding. Also, the current
work is limited to generating simple and deterministic output sequences.
In the next chapter, we will focus on another important problem, in
which, an external memory is necessary for holding temporal information
and composing mixture distributions in the latent space of generative
models.

\chapter{Variational Memory in Generative Models \label{chap:Variational-Memory-Encoder}}

\section{Introduction}

In the previous chapter, we have addressed the problem of encoder-decoder
architecture with long-term dependencies. However, the decoding is
an ill-posed problem, where there are many possible decoded sequences
given an input sequence. To account for such variation, we need a
method to model latent variables underlying these uncertainty. This
lends naturally to generative model of sequences.

Recent advances in generative modeling have led to exploration of
generative tasks. While generative models such as GAN \citet{goodfellow2014generative}
and VAE \citet{king14autoencoder,rezende2014stochastic} have been
applied successfully for image generation, learning generative models
for sequential discrete data is a long-standing problem. Early attempts
to generate sequences using RNNs \citet{graves2013generating} and
neural encoder-decoder models \citet{kalchbrenner2013recurrent,vinyals2015neural}
gave promising results, but the deterministic nature of these models
proves to be inadequate in many realistic settings. Tasks such as
translation, question-answering and dialog generation would benefit
from stochastic models that can produce a variety of outputs for an
input. For example, there are several ways to translate a sentence
from one language to another, multiple answers to a question and multiple
responses for an utterance in conversation. 

For tasks involving language understanding and production, handling
intrinsic uncertainty and latent variations is necessary. The choice
of words and grammars may change erratically depending on speaker
intentions, moods and previous languages used. The underlying RNNs
in neural sequential models find it hard to capture the dynamics and
their outputs are often trivial or too generic \citet{li2016diversity}.
One way to overcome these problems is to introduce variability into
these models. Unfortunately, sequential data such as speech and natural
language is a hard place to inject variability \citet{serban2017hierarchical}
since they require a coherence of grammars and semantics yet allow
freedom of word choice.

We propose a novel hybrid approach that integrates MAED (Sec. \ref{subsec:Memory-augmented-Encoder-Decoder})
and VAE, called Variational Memory Encoder-Decoder (VMED), to model
the sequential properties and inject variability in sequence generation
tasks. In this proposal, we utilise DC-MANN described in Sec. \ref{sec:Dual-Control-Architecture}
where the powerful DNC \citet{graves2016hybrid} is chosen as the
external memory. We prefer to allow writing to the memory during inference
because in this work, we focus on generating diverse output sequences,
which requires a dynamic memory for both encoding and decoding process.
Furthermore, we introduce latent random variables to model the variability
observed in the data and capture dependencies between the latent variables
across timesteps. Our assumption is that there are latent variables
governing an output at each timestep. In the conversation context,
for instance, the latent space may represent the speaker's hidden
intention and mood that dictate word choice and grammars. For a rich
latent multimodal space, we use a Mixture of Gaussians (MoG) because
a spoken word's latent intention and mood can come from different
modes, e.g., whether the speaker is asking or answering, or she/he
is happy or sad. By modeling the latent space as an MoG where each
mode associates with some memory slot, we aim to capture multiple
modes of the speaker's intention and mood when producing a word in
the response. Since the decoder in our model has multiple read heads,
the MoG can be computed directly from the content of chosen memory
slots. Our external memory plays a role as a mixture model distribution
generating the latent variables that are used to produce the output
and take part in updating the memory for future generative steps. 

To train our model, we adapt Stochastic Gradient Variational Bayes
(SGVB) framework \citet{king14autoencoder}. Instead of minimising
the $KL$ divergence directly, we resort to using its variational
approximation \citet{hershey2007approximating} to accommodate the
MoG in the latent space. We show that minimising the approximation
results in $KL$ divergence minimisation. We further derive an upper
bound on our total timestep-wise $KL$ divergence and demonstrate
that minimising the upper bound is equivalent to fitting a continuous
function by a scaled MoG. We validate the proposed model on the task
of conversational response generation. This task serves as a nice
testbed for the model because an utterance in a conversation is conditioned
on previous utterances, the intention and the mood of the speaker.
Finally, we evaluate our model on two open-domain and two closed-domain
conversational datasets. The results demonstrate our proposed VMED
gains significant improvement over state-of-the-art alternatives. 

\section{Preliminaries}

\subsection{Conditional Variational Autoencoder (CVAE) for Conversation Generation}

A dyadic conversation can be represented via three random variables:
the conversation context $x$ (all the chat before the response utterance),
the response utterance $y$ and a latent variable $z$, which is used
to capture the latent distribution over the reasonable responses.
A variational autoencoder conditioned on $x$ (CVAE\nomenclature{CVAE}{Conditional Variational Autoencoder})
is trained to maximise the conditional log likelihood of $y$ given
$x$, which involves an intractable marginalisation over the latent
variable $z$, i.e.,

\begin{equation}
p\left(y\mid x\right)=\int_{z}p\left(y,z\mid x\right)dz=\int_{z}p\left(y\mid x,z\right)p\left(z\mid x\right)dz
\end{equation}
Fortunately, CVAE can be efficiently trained with the Stochastic Gradient
Variational Bayes (SGVB) framework \citet{king14autoencoder} by maximising
the variational lower bound of the conditional log likelihood. In
a typical CVAE work, $z$ is assumed to follow multivariate Gaussian
distribution with a diagonal covariance matrix, which is conditioned
on $x$ as $p_{\phi}\left(z\mid x\right)$ and a recognition network
$q_{\theta}(z\mid x,y)$ to approximate the true posterior distribution
$p(z\mid x,y).$ The variational lower bound becomes

\begin{alignat}{1}
L\left(\phi,\theta;y,x\right)= & -KL\left(q_{\theta}\left(z\mid x,y\right)\parallel p_{\phi}\left(z\mid x\right)\right)\label{eq:L_CVAE}\\
 & +\mathbb{E}_{q_{\theta}\left(z\mid x,y\right)}\left[\log p\left(y\mid x,z\right)\right]\leq\log p\left(y\mid x\right)
\end{alignat}
where $KL$ is the Kullback\textendash Leibler divergence. With the
introduction of the neural approximator $q_{\theta}(z\mid x,y)$ and
the reparameterisation trick \citet{kingma2014semi}, we can apply
the standard back-propagation to compute the gradient of the variational
lower bound. Fig. \ref{fig:Graphical-Model-of}(a) depicts elements
of the graphical model for this approach in the case of using CVAE. 

\subsection{Related Works}

With the recent revival of recurrent neural networks (RNNs), there
has been much effort spent on learning generative models of sequences.
Early attempts include training RNN to generate the next output given
previous sequence, demonstrating RNNs' ability to generate text and
handwriting images \citet{graves2013generating}. Later, encoder-decoder
architecture \citet{sutskever2014sequence} enables generating a whole
sequence in machine translation \citet{kalchbrenner2013recurrent},
text summation \citet{nallapati2016abstractive} and conversation
generation \citet{vinyals2015neural}. Although these models have
achieved significant empirical successes, they fall short to capture
the complexity and variability of sequential processes.

These limitations have recently triggered a considerable effort on
introducing variability into the encoder-decoder architecture. Most
of the methods focus on conditional VAE (CVAE) by constructing a variational
lower bound conditioned on the context. The setting can be found in
many applications including machine translation \citet{zhang2016variational}
and dialog generation \citet{bowman2016generating,serban2017hierarchical,shen2017conditional,zhao2017learning}.
A common trick is to place a neural net between the encoder and the
decoder to compute the Gaussian prior and posterior of the CVAE. This
design is further enhanced by the use of external memory \citet{chen2018hierarchical}
and reinforcement learning \citet{wen2017latent}. In contrast to
this design, our VMED uses recurrent latent variable approach \citet{chung2015recurrent},
that is, our model requires a CVAE for each step of generation. Besides,
our external memory is used for producing the latent distribution,
which is different from the one proposed in \citet{chen2018hierarchical}
where the memory is used only for holding long-term dependencies at
sentence level. Compared to variational addressing scheme mentioned
in \citet{bornschein2017variational}, our memory uses deterministic
addressing scheme, yet the memory content itself is used to introduce
randomness to the architecture. More relevant to our work is GTMM
\citet{gemici2017generative} where memory read-outs involve in constructing
the prior and posterior at every timestep. However, this approach
uses Gaussian prior without conditional context.  

Using mixture of models instead of single Gaussian in VAE\nomenclature{VAE}{Variational Autoencoder}
framework is not a new concept. Several works proposed replacing the
Gaussian prior and posterior in VAE by MoGs for clustering and generating
image problems \citet{dilokthanakul2016deep,jiangvariational,nalisnick2016approximate}.
Others applied MoG prior to model transitions between video frames
and caption generation \citet{shu2016stochastic,wang2017diverse}.
These methods use simple feed forward network to produce Gaussian
sub-distributions independently. In our model, on the contrary, memory
slots are strongly correlated with each others, and thus modes in
our MoG work together to define the shape of the latent distributions
at specific timestep. To the best of our knowledge, our work is the
first attempt to use an external memory to induce mixture models  for
sequence generation problems. 

\section{Variational Memory Encoder-Decoder }

Built upon CVAE and partly inspired by VRNN \citet{chung2015recurrent},
we introduce a novel memory-augmented variational recurrent network
dubbed Variational Memory Encoder-Decoder (VMED\nomenclature{VMED}{Variational Memory Encoder-Decoder}).
With an external memory module, VMED explicitly models the dependencies
between latent random variables across subsequent timesteps. However,
unlike the VRNN which uses hidden values of RNN to model the latent
distribution as a Gaussian, our VMED uses read values $r$ from an
external memory $M$ as a Mixture of Gaussians (MoG) to model the
latent space. This choice of MoG\nomenclature{MoG}{Mixture of Gaussians}
also leads to new formulation for the prior $p_{\phi}$ and the posterior
$q_{\theta}$ mentioned in Eq. (\ref{eq:L_CVAE}). The graphical representation
of our model is shown in Fig. \ref{fig:Graphical-Model-of}(b).

\begin{figure}
\begin{centering}
\includegraphics[width=1\linewidth]{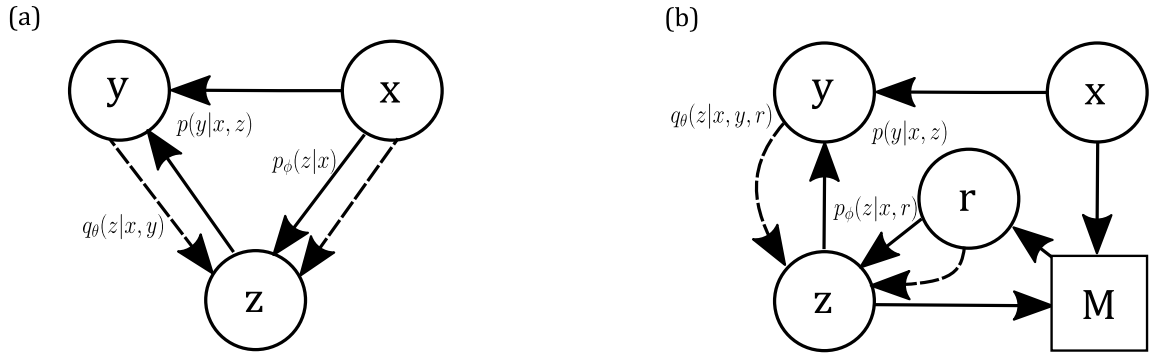}\caption{Graphical Models of the vanilla CVAE (a) and our proposed VMED (b)\label{fig:Graphical-Model-of}}
\par\end{centering}
\selectlanguage{australian}%
\selectlanguage{australian}%
\end{figure}

\subsection{Generative Process}

The VMED includes a CVAE at each time step of the decoder. These CVAEs
are conditioned on the context sequence via $K$ read values $r_{t-1}=\left[r_{t-1}^{1},r_{t-1}^{2},...,r_{t-1}^{K}\right]$
from the external memory. Since the read values are conditioned on
the previous state of the decoder $h_{t-1}^{d}$, our model takes
into account the temporal structure of the output. Unlike other designs
of CVAE where there is often only one CVAE with a Gaussian prior for
the whole decoding process, our model keeps reading the external memory
to produce the prior as a Mixture of Gaussians at every timestep.
At the $t$-th step of generating an utterance in the output sequence,
the decoder will read from the memory $K$ read values, representing
$K$ modes of the MoG. This multi-modal prior reflects the fact that
given a context $x$, there are different modes of uttering the output
word $y_{t}$, which a single mode cannot fully capture. The MoG prior
distribution is modeled as

\begin{equation}
g_{t}=p_{\phi}\left(z_{t}\mid x,r_{t-1}\right)=\stackrel[i=1]{K}{\sum}\pi_{t}^{i,x}\left(x,r_{t-1}^{i}\right)\mathcal{N}\left(z_{t};\mu_{t}^{i,x}\left(x,r_{t-1}^{i}\right),\sigma_{t}^{i,x}\left(x,r_{t-1}^{i}\right)^{2}\mathbf{I}\right)\label{eq:mog}
\end{equation}
We treat the mean $\mu_{t}^{i,x}$ and standard deviation (s.d.) $\sigma_{t}^{i,x}$
of each Gaussian distribution in the prior as neural functions of
the context sequence $x$ and read vectors from the memory. The context
is encoded into the memory by an $LSTM^{E}$ encoder. In decoding,
the decoder $LSTM^{D}$ attends to the memory and choose $K$ read
vectors. We split each read vector into two parts $r^{i,\mu}$ and
$r^{i,\sigma}$ , each of which is used to compute the mean and s.d.,
respectively: $\mu_{t}^{i,x}=r_{t-1}^{i,\mu}$, $\sigma_{t}^{i,x}=softplus\left(r_{t-1}^{i,\sigma}\right)$.
Here we use the softplus function for computing s.d. to ensure the
positiveness. The mode weight $\pi_{t}^{i,x}$ is chosen based on
the read attention weights $w_{t-1}^{i,r}$ over memory slots. Since
we use soft-attention, a read value is computed from all slots yet
the main contribution comes from the one with highest attention score.
Thus, we pick the maximum attention score in each read weight and
\foreignlanguage{australian}{normalise} to become the mode weights:
$\pi_{t}^{i,x}=\max\,w_{t-1}^{i,r}/\stackrel[i=1]{i=K}{\sum}\max\,w_{t-1}^{i,r}$. 

Armed with the prior, we follow a recurrent generative process by
alternatively using the memory to compute the MoG and using latent
variable $z$ sampled from the MoG to update the memory and produce
the output conditional distribution. The pseudo-algorithm of the generative
process is given in Algorithm \ref{alg:Generation-step-of}. 

\begin{algorithm}[t]
\begin{algorithmic}[1]
\small
\Require{Given $p_{\phi}$, $\left[r_{0}^{1},r_{0}^{2},...,r_{0}^{K}\right]$, $h_0^d$, $y_{0}^{*}$}
\For{$t=1,T$}
\State{Sampling $z_{t}\sim p_{\phi}\left(z_{t}\mid x,r_{t-1}\right)$ in Eq.($\ref{eq:mog}$)}
\State{Compute: $o_{t}^{d},h_{t}^{d}=LSTM^{D}\left(\left[y_{t-1}^{*},z_{t}\right],h_{t-1}^{d}\right)$}
\State{Compute the conditional distribution: $p\left(y_{t}\mid x,z_{\leq t}\right)=\softmax\left(W_{out}o_{t}^{d}\right)$}
\State{Update memory and read $[r_{t}^{1},r_{t}^{2},...,r_{t}^{K}]$ using $h_t^d$ as in DNC}
\State{Generate output $y_{t}^{*}=\underset{y\in Vocab}{argmax}\,p\left(y_{t}=y\mid x,z_{\leq t}\right)$}
\EndFor
\end{algorithmic} 

\caption{VMED Generation\label{alg:Generation-step-of}}

\selectlanguage{australian}%
\selectlanguage{australian}%
\end{algorithm}

\subsection{Neural Posterior Approximation}

At each step of the decoder, the true posterior $p\left(z_{t}\mid x,y\right)$
will be approximated by a neural function of $x,y$ and $r_{t-1}$,
denoted as $q_{\theta}\left(z_{t}\mid x,y,r_{t-1}\right)$ . Here,
we use a Gaussian distribution to approximate the posterior. The unimodal
posterior is chosen because given a response $y$, it is reasonable
to assume only one mode of latent space is responsible for this response.
Also, choosing a unimodel will allow the reparameterisation trick
during training and reduce the complexity of $KL$ divergence computation.
The approximated posterior is computed by the following the equation

\begin{equation}
f_{t}=q_{\theta}\left(z_{t}\mid x,y_{\leq t},r_{t-1}\right)=\mathcal{N}\left(z_{t};\mu_{t}^{x,y}\left(x,y_{\leq t},r_{t-1}\right),\sigma_{t}^{x,y}\left(x,y_{\leq t},r_{t-1}\right)^{2}\textrm{\ensuremath{\mathbf{I}}}\right)\label{eq:q_theta}
\end{equation}
 with mean $\mu_{t}^{x,y}$ and s.d. $\sigma_{t}^{x,y}$. We use
an $LSTM^{U}$ utterance encoder to model the ground truth utterance
sequence up to timestep $t$-th $y_{\leq t}$. The $t$-th hidden
value of the $LSTM^{U}$ is used to represent the given data in the
posterior: $h_{t}^{u}=LSTM^{U}\left(y_{t},h_{t-1}^{u}\right)$. The
neural posterior combines the read values $\mathbf{r}_{t}=\stackrel[i=1]{K}{\sum}\pi_{t}^{i,x}r_{t-1}^{i}$
together with the ground truth data to produce the Gaussian posterior:
$\mu_{t}^{x,y}=W_{\mu}\left[\mathbf{r}_{t},h_{t}^{u}\right]$, $\sigma_{t}^{x,y}=softplus\left(W_{\sigma}\left[\mathbf{r}_{t},h_{t}^{u}\right]\right)$.
In these equations, we use learnable matrix weights $W_{\mu}$ and
$W_{\sigma}$ as a recognition network to compute the mean and s.d.
of the posterior, ensuring that the distribution has the same dimension
as the prior. We apply the reparamterisation trick to calculate the
random variable sampled from the posterior as $z'_{t}=\mu_{t}^{x,y}+\sigma_{t}^{x,y}\odot\epsilon$,
$\epsilon\in\mathcal{N}\left(0,\mathbf{I}\right)$. Intuitively, the
reparameterisation trick bridges the gap between the generation model
and the inference model during the training. 

\subsection{Learning}

In the training phase, the neural posterior is used to produce the
latent variable $z'_{t}$. The read values from memory are used directly
as the MoG priors and the priors are trained to approximate the posterior
by reducing the $KL$ divergence. During testing, the decoder uses
the prior for generating latent variable $z_{t}$, from which the
output is computed. The training and testing diagram is illustrated
in Fig. \ref{fig:Training-of-VM3NN}. The objective function becomes
a timestep-wise variational lower bound by following similar derivation
presented in Chung et al., (2015),

\begin{equation}
\mathcal{L}\left(\theta,\phi;y,x\right)=E_{q*}\left[\stackrel[t=1]{T}{\sum}-KL\left(q_{\theta}\left(z_{t}\mid x,y_{\leq t},r_{t-1}\right)\parallel p_{\phi}\left(z_{t}\mid x,r_{t-1}\right)\right)+\log p\left(y_{t}\mid x,z_{\leq t}\right)\right]\label{eq:exloss}
\end{equation}
where $q*=q_{\theta}\left(z_{\leq T}\mid x,y_{\leq T},r_{<T}\right)$.
To maximise the objective function, we have to compute $KL$ divergence
between $f_{t}=q_{\theta}\left(z_{t}\mid x,y_{\leq t},r_{t-1}\right)$
and $g_{t}=p_{\phi}\left(z_{t}\mid x,r_{t-1}\right)$. Since there
is no closed-form for this $KL\left(f_{t}\parallel g_{t}\right)$
between Gaussian $f_{t}$ and Mixture of Gaussians $g_{t}$, we use
a closed-form approximation named $D_{var}$ \citet{hershey2007approximating}
to replace the $KL$ term in the objective function. For our case:
$KL\left(f_{t}\parallel g_{t}\right)\approx D_{var}\left(f_{t}\parallel g_{t}\right)=-\log\stackrel[i=1]{K}{\sum}\pi^{i}e^{-KL\left(f_{t}\parallel g_{t}^{i}\right)}$.
Here, $KL\left(f_{t}\parallel g_{t}^{i}\right)$ is the $KL$ divergence
between two Gaussians and $\pi^{i}$ is the mode weight of $g_{t}$.
The final objective function is

\begin{equation}
\begin{alignedat}{1}\mathcal{L}= & \stackrel[t=1]{T}{\sum}\log\stackrel[i=1]{K}{\sum}\left[\pi_{t}^{i,x}\exp\left(-KL\left(\mathcal{N}\left(\mu_{t}^{x,y},\sigma_{t}^{x,y}{}^{2}\textrm{\ensuremath{\mathbf{I}}}\right)\parallel\mathcal{N}\left(\mu_{t}^{i,x},\sigma_{t}^{i,x}{}^{2}\mathbf{I}\right)\right)\right)\right]\\
 & +\frac{1}{L}\stackrel[t=1]{T}{\sum}\stackrel[l=1]{L}{\sum}\log p\left(y_{t}\mid x,z_{\leq t}^{(l)}\right)
\end{alignedat}
\label{eq:loss}
\end{equation}

\begin{figure}
\begin{centering}
\includegraphics[width=0.9\linewidth]{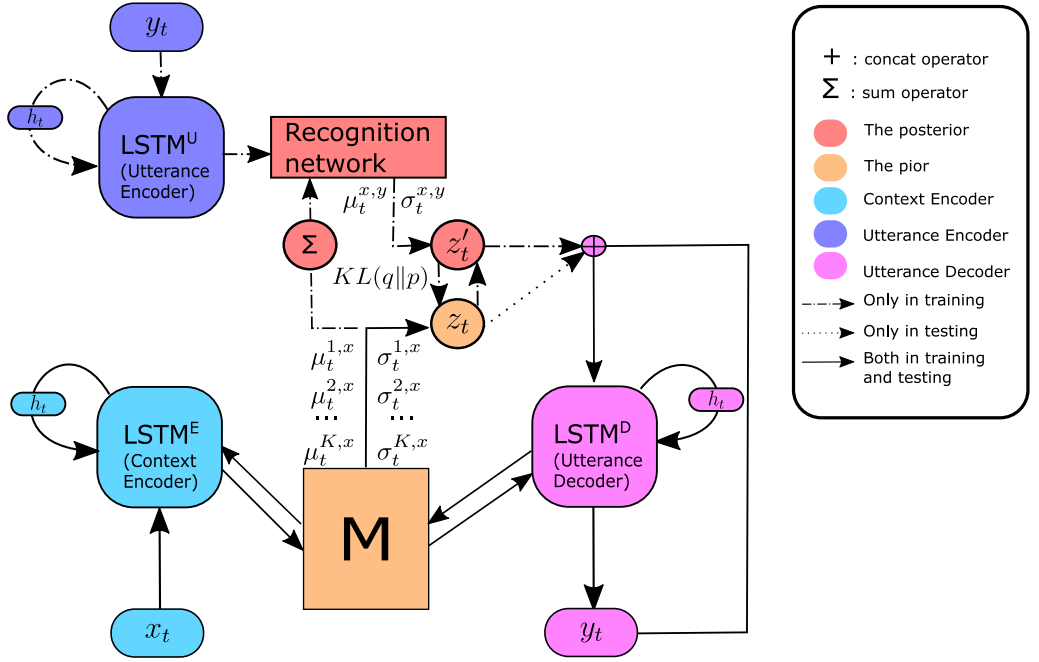}
\par\end{centering}
\caption{Training and testing of VMED\label{fig:Training-of-VM3NN}}
\end{figure}

\subsection{Theoretical Analysis }

We now show that by modeling the prior as MoG and the posterior as
Gaussian, minimising the approximation results in $KL$ divergence
minimisation.
\begin{thm}
The KL divergence between a Gaussian and a Mixture of Gaussians has
an upper bound $D_{var}$.\label{The-KL-divergence}
\end{thm}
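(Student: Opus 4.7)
The plan is to establish the bound via a variational argument that introduces an auxiliary probability vector $\phi$ over the $K$ mixture components and exploits Jensen's inequality. I will write the target Gaussian as $f_t$ and the MoG as $g_t = \sum_{i=1}^{K}\pi_i g_t^i$ (suppressing the $t,x$ superscripts of the weights for brevity), and start from $KL(f_t \parallel g_t) = \int f_t\log f_t\,dz - \int f_t\log\sum_i \pi_i g_t^i\,dz$. The core difficulty, and precisely the reason the mixed $KL$ is intractable in the first place, is that $\log\sum_i\pi_i g_t^i(z)$ admits no closed form; the whole proof hinges on replacing it by a tractable lower bound inside the integral.

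First I would insert any probability vector $\phi$ by writing $\sum_i \pi_i g_t^i(z) = \sum_i \phi_i\,(\pi_i g_t^i(z))/\phi_i$ and applying Jensen's inequality to the concave logarithm, obtaining $\log\sum_i \pi_i g_t^i(z) \geq \sum_i \phi_i \log[\pi_i g_t^i(z)/\phi_i]$. Integrating against $f_t$ and regrouping the terms produces
\[
KL(f_t \parallel g_t) \;\leq\; \sum_{i=1}^{K} \phi_i\, KL(f_t \parallel g_t^i) \;+\; KL(\phi \parallel \pi),
\]
a family of upper bounds indexed by $\phi$ in which every right-hand term is either a closed-form Gaussian-vs-Gaussian $KL$ or a finite-support discrete $KL$.

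Next I would tighten this family by minimising the right-hand side over the probability simplex. Writing the Lagrangian with a multiplier for $\sum_i\phi_i = 1$ and differentiating in $\phi_i$ gives the stationarity condition $\log\phi_i - \log\pi_i + KL(f_t\parallel g_t^i) = \mathrm{const}$, whose unique interior solution is $\phi_i^{\star} \propto \pi_i \exp(-KL(f_t\parallel g_t^i))$. Substituting $\phi^{\star}$ back into the bound, the two $KL(f_t\parallel g_t^i)$ contributions telescope away and leave exactly $-\log\sum_i \pi_i \exp(-KL(f_t\parallel g_t^i)) = D_{var}(f_t\parallel g_t)$, delivering the claim.

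The main obstacle is certifying that $\phi^{\star}$ is the global minimiser rather than a mere stationary point, and that the bound is well defined throughout. Strict convexity of $KL(\phi\parallel\pi)$ in $\phi$, combined with linearity of $\sum_i\phi_i KL(f_t\parallel g_t^i)$, makes the constrained objective strictly convex on the open simplex, so the stationary point is the unique minimiser; boundary points $\phi_i \to 0$ are handled by the convention $0\log 0 = 0$ and never improve the objective as long as every $KL(f_t\parallel g_t^i)$ is finite. That finiteness is ensured by the softplus parameterisation of the component variances in Eqs.~(\ref{eq:mog}) and~(\ref{eq:q_theta}), which keeps each $g_t^i$ non-degenerate. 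Once these items are verified, chaining the variational inequality with the optimality of $\phi^{\star}$ yields $KL(f_t\parallel g_t) \leq D_{var}(f_t\parallel g_t)$, as claimed.
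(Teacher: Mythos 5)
Your proof is correct and follows essentially the same route as the paper's: both rest on the variational/Jensen bound $\log\sum_i\pi_i g^i \geq \sum_i\phi_i\log\bigl(\pi_i g^i/\phi_i\bigr)$ with an auxiliary probability vector over components, followed by optimisation over that vector to recover $D_{var}=-\log\sum_i\pi_i e^{-KL(f\parallel g^i)}$. The only difference is presentational — you carry out the minimisation over $\phi$ explicitly (Lagrangian, convexity, boundary check), whereas the paper phrases the argument as maximising a lower bound on the log-likelihood $L_f(g)$ and cites Durrieu et al.\ (2012) for that optimisation step.
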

A sketch of the proof is derived (see \ref{subsec:Proof-of-theorem-KL}
for full derivation). Let define the log-likelihood $L_{f}\left(g\right)=E_{f\left(x\right)}\left[\log g\left(x\right)\right]$,
we have

\begin{alignat*}{1}
L_{f}\left(g\right)\geq & \log\stackrel[i=1]{K}{\sum}\pi^{i}e^{-KL\left(f\parallel g^{i}\right)}+L_{f}\left(f\right)=-D_{var}+L_{f}\left(f\right)\\
\Rightarrow D_{var}\geq & L_{f}\left(f\right)-L_{f}\left(g\right)=KL\left(f\parallel g\right)
\end{alignat*}
Thus, minimising $D_{var}$ results in $KL$ divergence minimisation.
Next, we establish an upper bound on the total timestep-wise $KL$
divergence in Eq. (\ref{eq:exloss}) and show that minimising this
upper bound is equivalent to fitting a continuous function by a scaled
MoG. The total timestep-wise $KL$ divergence reads

\begin{alignat*}{1}
\stackrel[t=1]{T}{\sum}KL\left(f_{t}\parallel g_{t}\right)= & \stackrel[-\infty]{+\infty}{\int}\stackrel[t=1]{T}{\sum}f_{t}\left(x\right)\log\left[f_{t}\left(x\right)\right]dx\quad-\stackrel[-\infty]{+\infty}{\int}\stackrel[t=1]{T}{\sum}f_{t}\left(x\right)\log\left[g_{t}\left(x\right)\right]dx
\end{alignat*}
where $g_{t}=\stackrel[i=1]{K}{\sum}\pi_{t}^{i}g_{t}^{i}$ and $g_{t}^{i}$
is the $i$-th Gaussian in the MoG at timestep $t$-th. If at each
decoding step, minimising $D_{var}$ results in adequate $KL$ divergence
such that the prior is optimised close to the neural posterior, according
to Chebyshev's sum inequality, we can derive an upper bound on the
total timestep-wise $KL$ divergence as (see  Supplementary Materials
 for full derivation)

\begin{equation}
\stackrel[-\infty]{+\infty}{\int}\stackrel[t=1]{T}{\sum}f_{t}\left(x\right)\log\left[f_{t}\left(x\right)\right]dx\quad-\stackrel[-\infty]{+\infty}{\int}\frac{1}{T}\stackrel[t=1]{T}{\sum}f_{t}\left(x\right)\log\left[\stackrel[t=1]{T}{\prod}g_{t}\left(x\right)\right]dx\label{eq:fupper}
\end{equation}
The left term is sum of the entropies of $f_{t}\left(x\right)$, which
does not depend on the training parameter $\phi$ used to compute
$g_{t}$, so we can ignore that. Thus given $f$, minimising the upper
bound of the total timestep-wise $KL$ divergence is equivalent to
maximising the right term of Eq. (\ref{eq:fupper}). Since $g_{t}$
is an MoG and products of MoG is proportional to an MoG, $\stackrel[t=1]{T}{\prod}g_{t}\left(x\right)$
is a scaled MoG (see  Supplementary material  for full proof). Maximising
the right term is equivalent to fitting function $\stackrel[t=1]{T}{\sum}f_{t}\left(x\right)$,
which is sum of Gaussians and thus continuous, by a scaled MoG. This,
in theory, is possible regardless of the form of $f_{t}$ since MoG
is a universal approximator \citet{bacharoglou2010approximation,maz1996approximate}. 

\section{Experiments and Results}

\subsection{Quantitative Results}

\begin{table}
\begin{centering}
\caption{BLEU-1, 4 and A-Glove on testing datasets. B1, B4, AG are acronyms
for BLEU-1, BLEU-4, A-Glove metrics, respectively (higher is better).
\label{tab:BLEUs-and-A-Glove}}
~
\par\end{centering}
\centering{}{\small{}}%
\begin{tabular}{ccccccccccccc}
\hline 
\multirow{2}{*}{{\footnotesize{}Model}} & \multicolumn{3}{c}{{\footnotesize{}Cornell Movies}} & \multicolumn{3}{c}{{\footnotesize{}OpenSubtitle}} & \multicolumn{3}{c}{{\footnotesize{}LJ users}} & \multicolumn{3}{c}{{\footnotesize{}Reddit comments}}\tabularnewline
\cline{2-13} \cline{3-13} \cline{4-13} \cline{5-13} \cline{6-13} \cline{7-13} \cline{8-13} \cline{9-13} \cline{10-13} \cline{11-13} \cline{12-13} \cline{13-13} 
 & {\scriptsize{}B1} & {\scriptsize{}B4} & {\scriptsize{}AG} & {\scriptsize{}B1} & {\scriptsize{}B4} & {\scriptsize{}AG} & {\scriptsize{}B1} & {\scriptsize{}B4} & {\scriptsize{}AG} & {\scriptsize{}B1} & {\scriptsize{}B4} & {\scriptsize{}AG}\tabularnewline
\hline 
{\footnotesize{}Seq2Seq} & {\scriptsize{}18.4} & {\scriptsize{}9.5} & {\scriptsize{}0.52} & {\scriptsize{}11.4} & {\scriptsize{}5.4} & {\scriptsize{}0.29} & {\scriptsize{}13.1} & {\scriptsize{}6.4} & {\scriptsize{}0.45} & {\scriptsize{}7.5} & {\scriptsize{}3.3} & {\scriptsize{}0.31}\tabularnewline
{\footnotesize{}Seq2Seq-att} & {\scriptsize{}17.7} & {\scriptsize{}9.2} & {\scriptsize{}0.54} & {\scriptsize{}13.2} & {\scriptsize{}6.5} & {\scriptsize{}0.42} & {\scriptsize{}11.4} & {\scriptsize{}5.6} & {\scriptsize{}0.49} & {\scriptsize{}5.5} & {\scriptsize{}2.4} & {\scriptsize{}0.25}\tabularnewline
{\footnotesize{}DNC} & {\scriptsize{}17.6} & {\scriptsize{}9.0} & {\scriptsize{}0.51} & {\scriptsize{}14.3} & {\scriptsize{}7.2} & {\scriptsize{}0.47} & {\scriptsize{}12.4} & {\scriptsize{}6.1} & {\scriptsize{}0.47} & {\scriptsize{}7.5} & {\scriptsize{}3.4} & {\scriptsize{}0.28}\tabularnewline
{\footnotesize{}CVAE} & {\scriptsize{}16.5} & {\scriptsize{}8.5} & {\scriptsize{}0.56} & {\scriptsize{}13.5} & {\scriptsize{}6.6} & {\scriptsize{}0.45} & {\scriptsize{}12.2} & {\scriptsize{}6.0} & {\scriptsize{}0.48} & {\scriptsize{}5.3} & {\scriptsize{}2.8} & {\scriptsize{}0.39}\tabularnewline
{\footnotesize{}VLSTM} & {\scriptsize{}18.6} & {\scriptsize{}9.7} & {\scriptsize{}0.59} & {\scriptsize{}16.4} & {\scriptsize{}8.1} & {\scriptsize{}0.43} & {\scriptsize{}11.5} & {\scriptsize{}5.6} & {\scriptsize{}0.46} & {\scriptsize{}6.9} & {\scriptsize{}3.1} & {\scriptsize{}0.27}\tabularnewline
\hline 
{\footnotesize{}VMED (K=1)} & {\scriptsize{}20.7} & {\scriptsize{}10.8} & {\scriptsize{}0.57} & {\scriptsize{}12.9} & {\scriptsize{}6.2} & {\scriptsize{}0.44} & {\scriptsize{}13.7} & {\scriptsize{}6.9} & {\scriptsize{}0.47} & {\scriptsize{}9.1} & {\scriptsize{}4.3} & {\scriptsize{}0.39}\tabularnewline
{\footnotesize{}VMED (K=2)} & {\scriptsize{}22.3} & {\scriptsize{}11.9} & \textbf{\scriptsize{}0.64} & {\scriptsize{}15.3} & {\scriptsize{}8.8} & {\scriptsize{}0.49} & {\scriptsize{}15.4} & {\scriptsize{}7.9} & \textbf{\scriptsize{}0.51} & {\scriptsize{}9.2} & {\scriptsize{}4.4} & {\scriptsize{}0.38}\tabularnewline
{\footnotesize{}VMED (K=3)} & {\scriptsize{}19.4} & {\scriptsize{}10.4} & {\scriptsize{}0.63} & \textbf{\scriptsize{}24.8} & \textbf{\scriptsize{}12.9} & \textbf{\scriptsize{}0.54} & \textbf{\scriptsize{}18.1} & \textbf{\scriptsize{}9.8} & {\scriptsize{}0.49} & \textbf{\scriptsize{}12.3} & \textbf{\scriptsize{}6.4} & \textbf{\scriptsize{}0.46}\tabularnewline
{\footnotesize{}VMED (K=4)} & \textbf{\scriptsize{}23.1} & \textbf{\scriptsize{}12.3} & {\scriptsize{}0.61} & {\scriptsize{}17.9} & {\scriptsize{}9.3} & {\scriptsize{}0.52} & {\scriptsize{}14.4} & {\scriptsize{}7.5} & {\scriptsize{}0.47} & {\scriptsize{}8.6} & {\scriptsize{}4.6} & {\scriptsize{}0.41}\tabularnewline
\hline 
\end{tabular}{\small\par}
\end{table}

\textbf{Datasets and pre-processing}: We perform experiments on two
collections: The first collection includes open-domain movie transcript
datasets containing casual conversations: Cornell Movies\footnote{\url{http://www.cs.cornell.edu/~cristian/Cornell_Movie-Dialogs_Corpus.html}}
and OpenSubtitle\footnote{\url{http://opus.nlpl.eu/OpenSubtitles.php}}.
They have been used commonly in evaluating conversational agents \citet{lison2017not,vinyals2015neural}.
The second are closed-domain datasets crawled from specific domains,
which are question-answering of LiveJournal (LJ) users and Reddit
comments on movie topics. For each dataset, we use 10,000 conversations
for validating and 10,000 for testing. 

\textbf{Baselines, implementations and metrics}: We compare our model
with three deterministic baselines: the encoder-decoder neural conversational
model (Seq2Seq) \citet{vinyals2015neural} and its two variants equipped
with attention mechanism \citet{cho2014properties,Bahdanau2015a}
(Seq2Seq-att) and a DNC external memory \citet{graves2016hybrid}
(DNC). The vanilla CVAE is also included in the baselines. To build
this CVAE, we follow similar architecture introduced in \citet{zhao2017learning}
without bag-of-word loss and dialog act features\footnote{Another variant of non-memory CVAE with MoG prior is also examined.
We produce a set of MoG parameters by a feed forward network with
the input as the last encoder hidden states. However, the model is
hard to train and fails to converge with these datasets.}. A variational recurrent model without memory is also included in
the baselines. The model termed VLSTM is implemented based on LSTM
instead of RNN as in VRNN framework \citet{chung2015recurrent}. We
try our model VMED\footnote{Source code is available at \url{https://github.com/thaihungle/VMED}}
with different number of modes ($K=1,2,3,4$). It should be noted
that, when $K=1$, our model's prior is exactly a Gaussian and the
$KL$ term in Eq. (\ref{eq:nsm_loss}) is no more an approximation.
Details of dataset descriptions and model implementations are included
in  Supplementary material.

We report results using two performance metrics in order to evaluate
the system from various linguistic points of view: (i) Smoothed Sentence-level
BLEU \citet{chen2014systematic}: BLEU is a popular metric that measures
the geometric mean of modified ngram precision with a length penalty.
We use BLEU-1 to 4 as our lexical similarity. (ii) Cosine Similarly
of Sentence Embedding: a simple method to obtain sentence embedding
is to take the average of all the word embeddings in the sentences
\citet{forgues2014bootstrapping}. We follow Zhao et al., (2017) and
choose Glove \citet{levy2014neural} as the word embedding in measuring
sentence similarly (A-Glove) \citet{zhao2017learning}. To measure
stochastic models, for each input, we generate output ten times. The
metric between the ground truth and the generated output is calculated
and taken average over ten responses. 

\textbf{Metric-based Analysi}s: We report results on four test datasets
in Table \ref{tab:BLEUs-and-A-Glove}. For BLEU scores, here we only
list results for BLEU-1 and 4. Other BLEUs show similar pattern and
will be listed in  Supplementary material. As clearly seen, VMED models
outperform other baselines over all metrics across four datasets.
In general, the performance of Seq2Seq is comparable with other deterministic
methods despite its simplicity. Surprisingly, CVAE or VLSTM does not
show much advantage over deterministic models. As we shall see, although
CVAE and VLSTM responses are diverse, they are often out of context.
Amongst different modes of VMED, there is often one best fit with
the data and thus shows superior performance. The optimal number of
modes in our experiments often falls to $K=3$, indicating that increasing
modes does not mean to improve accuracy. 

It should be noted that there is inconsistency between BLEU scores
and A-Glove metrics. This is because BLEU measures lexicon matching
while A-Glove evaluates semantic similarly in the embedding space.
For example, two sentences having different words may share the same
meaning and lie close in the embedding space. In either case, compared
to others, our optimal VMED always achieves better performance.

\subsection{Qualitative Analysis}

\begin{table}
\begin{centering}
\caption{Examples of context-response pairs. /{*}/ denotes separations between
stochastic responses.\label{tab:Examples-of-context-response}}
~
\par\end{centering}
\centering{}%
\begin{tabular}{>{\raggedright}p{0.2\columnwidth}|>{\raggedright}p{0.73\columnwidth}}
\hline 
{\footnotesize{}Input context} & {\footnotesize{}Response}\tabularnewline
\hline 
\multirow{6}{0.2\columnwidth}{\textbf{\footnotesize{}Reddit comment:}{\footnotesize{} What is your
favorite scene in film history ? Mine is the restaurant scene in the
Godfather. }} & \multirow{1}{0.73\columnwidth}{\textbf{\footnotesize{}Seq2Seq:}{\footnotesize{} The scene in}}\tabularnewline
 & \textbf{\footnotesize{}Seq2Seq-att: }{\footnotesize{}The final}\tabularnewline
 & \textbf{\footnotesize{}DNC: }{\footnotesize{}The scene in}\tabularnewline
 & \textbf{\footnotesize{}CVAE:}{\footnotesize{} Inception god! Not by
a shark /{*}/ Amour great /{*}/ Pro thing you know 3 dead}\tabularnewline
 & \textbf{\footnotesize{}VLSTM: }{\footnotesize{}The scene in /{*}/
The of a dead /{*}/ The sky in scene}\tabularnewline
 & \textbf{\footnotesize{}VMED (K=3): }{\footnotesize{}The opening scene
from history movie /{*}/ The scene in a shot nights! Robin movie /{*}/
The psycho scene in fight from }\tabularnewline
\hline 
\multirow{6}{0.2\columnwidth}{\textbf{\footnotesize{}Reddit comment:}{\footnotesize{} What actor
will win an Oscar in the next 10 years ?}} & \textbf{\footnotesize{}Seq2Seq: }{\footnotesize{}Colin}\tabularnewline
 & \textbf{\footnotesize{}Seq2Seq-att: }{\footnotesize{}Liam Neeson }\tabularnewline
 & \textbf{\footnotesize{}DNC:}{\footnotesize{} Tom Gyllenhaal}\tabularnewline
 & \textbf{\footnotesize{}CVAE:}{\footnotesize{} Daryl and Aaron /{*}/
Carefully count Alfred Deniro /{*}/ Ponyo Joker possible}\tabularnewline
 & \textbf{\footnotesize{}VLSTM:}{\footnotesize{} Michael Bullock /{*}/
Michael /{*}/ Michael De}\tabularnewline
 & \textbf{\footnotesize{}VMED (K=3)}{\footnotesize{}: Edward or Leo
Dicaprio goes on /{*}/ Dicaprio will /{*}/ Dicaprio Tom has actually
in jack on road }\tabularnewline
\hline 
\multirow{6}{0.2\columnwidth}{\textbf{\footnotesize{}LJ user:}{\footnotesize{} Your very pretty
and your is very cute. May i add ya to my friends list ? }} & \textbf{\footnotesize{}Seq2Seq:}{\footnotesize{} I! Add and too you
back}\tabularnewline
 & \textbf{\footnotesize{}Seq2Seq-att: }{\footnotesize{}I did may! Away
back }\tabularnewline
 & \textbf{\footnotesize{}DNC: }{\footnotesize{}Sure you added}\tabularnewline
 & \textbf{\footnotesize{}CVAE:}{\footnotesize{} Sure ! I can add you
/{*}/ Thank cool of surely /{*}/ Thank hi of sure!}\tabularnewline
 & \textbf{\footnotesize{}VLSTM:}{\footnotesize{} Haha of do go /{*}/
Haha of sure! Yes lol /{*}/ Haha sure! Go}\tabularnewline
 & \textbf{\footnotesize{}VMED (K=3):}{\footnotesize{} Thank cool for
sure /{*}/ Add definitely! Sure surely friends /{*}/ Thank girl go
ahead}\tabularnewline
\hline 
\multirow{6}{0.2\columnwidth}{\textbf{\footnotesize{}Cornell movies: }{\footnotesize{}Nobody can
murder someone with their feelings . I mean that is crazy ?}} & \textbf{\footnotesize{}Seq2Seq:}{\footnotesize{} Yes you are}\tabularnewline
 & \textbf{\footnotesize{}Seq2Seq-att: }{\footnotesize{}Really not is
it}\tabularnewline
 & \textbf{\footnotesize{}DNC: }{\footnotesize{}Managing the truth}\tabularnewline
 & \textbf{\footnotesize{}CVAE: }{\footnotesize{}Sure not to be in here
/{*}/ Oh yes but i know! /{*}/ That to doing with here and}\tabularnewline
 & \textbf{\footnotesize{}VLSTM:}{\footnotesize{} I am dead . ! That
is ... /{*}/ I did not what way . /{*}/ I am not . But his things
with ...}\tabularnewline
 & \textbf{\footnotesize{}VMED (K=4):}{\footnotesize{} You are right!
/{*}/ That is crazy /{*}/ You can't know Jimmy }\tabularnewline
\hline 
\multirow{6}{0.2\columnwidth}{\textbf{\footnotesize{}OpenSubtitle:}{\footnotesize{} I'm obliged
for your hospitality. I appreciate it and your husband too.}} & \textbf{\footnotesize{}Seq2Seq: }{\footnotesize{}That is have got
coming about these }\tabularnewline
 & \textbf{\footnotesize{}Seq2Seq-att:}{\footnotesize{} May you not what
nothing nobody}\tabularnewline
 & \textbf{\footnotesize{}DNC: }{\footnotesize{}Yes i am taking this}\tabularnewline
 & \textbf{\footnotesize{}CVAE:}{\footnotesize{} No . that for good!
And okay /{*}/ All in the of two thing /{*}/ Sure. Is this! }\tabularnewline
 & \textbf{\footnotesize{}VLSTM:}{\footnotesize{} I ... /{*}/ I understand
/{*}/ I ! . ...}\tabularnewline
 & \textbf{\footnotesize{}VMED (K=3):}{\footnotesize{} I know. I can
afford /{*}/ I know nothing to store for you pass /{*}/ I know. Doing
anymore you father}\tabularnewline
\hline 
\end{tabular}
\end{table}

Table \ref{tab:Examples-of-context-response} represents responses
generated by experimental models in reply to different input sentences.
The replies listed are chosen randomly from 50 generated responses
whose average of metric scores over all models are highest. For stochastic
models, we generate three times for each input, resulting in three
different responses. In general, the stochastic models often yield
longer and diverse sequences as expected. For closed-domain cases,
all models responses are fairly acceptable. Compared to the rest,
our VMED's responds seem to relate more to the context and contain
meaningful information. In this experiment, the open-domain input
seems nosier and harder than the closed-domain ones, thus create a
big challenge for all models. Despite that, the quality of VMED's
responses is superior to others. Amongst deterministic models, DNC's
generated responses look more reasonable than Seq2Seq's even though
its BLEU scores are not always higher. Perhaps, the reference to external
memory at every timestep enhances the coherence between output and
input, making the response more related to the context. VMED may inherit
this feature from its external memory and thus tends to produce reasonable
responses. By contrast, although responses from CVAE and VLSTM are
not trivial, they have more grammatical errors and sometimes unrelated
to the topic. 

\section{Closing Remarks}

In this chapter, we propose a novel approach to sequence generation
called Variational Memory Encoder-Decoder (VMED) that introduces variability
into encoder-decoder architecture via the use of external memory as
mixture model. By modeling the latent temporal dependencies across
timesteps, our VMED produces a MoG representing the latent distribution.
Each mode of the MoG associates with some memory slot and thus captures
some aspect of context supporting generation process. To accommodate
the MoG, we employ a $KL$\nomenclature{KL}{Kullback\textendash Leibler}
approximation and we demonstrate that minimising this approximation
is equivalent to minimising the $KL$ divergence. We derive an upper
bound on our total timestep-wise $KL$ divergence and indicate that
the optimisation of this upper bound is equivalent to fitting a continuous
function by an scaled MoG, which is in theory possible regardless
of the function form. This forms a theoretical basis for our model
formulation using MoG prior for every step of generation. We apply
our proposed model to conversation generation problem. The results
demonstrate that VMED outperforms recent advances both quantitatively
and qualitatively. 

So far we have designed and applied MANNs to long-term sequential
modeling problems without considering the memorisation ability of
these models. Besides, we have followed common memory access patterns
without judging the effectiveness of these accesses. We will tackle
these issues in the next chapter.

\chapter{Optimal Writing Memory \label{chap:Optimal-Writing-in}}

\section{Introduction }

Recall that a core task in sequence learning is to capture long-term
dependencies amongst timesteps which demands memorisation of distant
inputs (Sec. \ref{sec:External-Memory-for}). In recurrent neural
networks (RNNs), the memorisation is implicitly executed via integrating
the input history into the state of the networks. However, learning
vanilla RNNs over long distance proves to be difficult due to the
vanishing gradient problem (Sec. \ref{subsec:Cell-memory}). One alleviation
is to introduce skip-connections along the execution path, in the
form of dilated layers \citet{van2016wavenet,chang2017dilated}, which
is not our main focus. Other solutions have been mentioned in Chapter
\ref{chap:MANN} including attention mechanisms \citet{cho2014properties,vaswani2017attention,Bahdanau2015a}
and external slot-based memory \citet{graves2014neural,graves2016hybrid}. 

Amongst all, using external memory most resembles human cognitive
architecture where we perceive the world sequentially and make decision
by consulting our memory. Recent attempts have simulated this process
by using RAM-like memory architectures that store information into
memory slots. Reading and writing are governed by neural controllers
using attention mechanisms (Sec. \ref{sec:Slot-based-Memory-Networks}). 

Despite the promising empirical results, there is no theoretical analysis
or clear understanding on optimal operations that a memory should
have to maximise its performance. To the best of our knowledge, no
solution has been proposed to help MANNs handle ultra-long sequences
given limited memory. This scenario is practical because (i) sequences
in the real-world can be very long while the computer resources are
limited and (ii) it reflects the ability to compress in human brain
to perform life-long learning. Previous attempts \citet{rae2016scaling}
try to learn ultra-long sequences by expanding the memory, which is
not always feasible and do not aim to optimise the memory by some
theoretical criterion. This chapter presents a new approach towards
finding optimal operations for MANNs that serve the purpose of learning
longer sequences with finite memory. 

More specifically, upon analysing RNN and MANN operations we first
introduce a measurement on the amount of information that a MANN holds
after encoding a sequence. This metric reflects the quality of memorisation
under the assumption that contributions from timesteps are equally
important. We then derive a generic solution to optimise the measurement.
We term this optimal solution as Uniform Writing (UW\nomenclature{UW}{Uniform Writing}),
and it is applicable for any MANN due to its generality. Crucially,
UW helps reduce significantly the computation time of MANN. Third,
to relax the assumption and enable the method to work in realistic
settings, we further propose Cached Uniform Writing (CUW\nomenclature{CUW}{Cached Uniform Writing})
as an improvement over the Uniform Writing scheme. By combining uniform
writing with local attention, CUW can learn to discriminate timesteps
while maximising local memorisation. Finally we demonstrate that our
proposed models outperform several MANNs and other state-of-the-art
methods in various synthetic and practical sequence modeling tasks. 

\section{Related Backgrounds}

Traditional recurrent models such as RNN/LSTM \citet{elman1990finding,hochreiter1997long}
exhibit some weakness that prevents them from learning really long
sequences. The reason is mainly due to the vanishing gradient problem
or to be more specific, the exponential decay of input value over
time, which is analysed in Sec. \ref{subsec:Cell-memory}. It should
be noted that although LSTM is designed to diminish the problem, it
is limited to the capacity of cell memory. In other words, if the
amount of relevant information exceeds the capacity, LSTM will forget
distant inputs since exponential decay still exists outside the cell
memory. One different approach to overcome this problem is enforcing
the exponential decay factor close to one by putting a unitary constraint
on the recurrent weight \citet{arjovsky2016unitary,wisdom2016full}.
Although this approach is theoretically motivated, it restricts the
space of learnt parameters. 

More relevant to our work, the idea of using less or adaptive computation
has been proposed by many \citet{graves2016adaptive,yu2017learning,yu2018fast,seo2018neural}.
Most of these works are based on the assumption that some of timesteps
in a sequence are unimportant and thus can be ignored to reduce the
cost of computation and increase the performance of recurrent networks.
Different form our approach, these methods lack theoretical supports
and do not directly aim to solve the problem of memorising long-term
dependencies. 

Dilated RNN \citet{chang2017dilated} is another RNN-based proposal
which improves long-term learning by stacking multiple dilated recurrent
layers with hierarchical skip-connections. This theoretically guarantees
the mean recurrent length and shares with our method the idea to construct
a measurement on memorisation capacity of the system and propose solutions
to \foreignlanguage{australian}{optimise} it. The difference is that
our system is memory-augmented neural networks while theirs is multi-layer
RNNs, which leads to totally different \foreignlanguage{australian}{optimisation}
problems. 

Recent researches recommend to replace traditional recurrent models
by other neural architectures to overcome the vanishing gradient problem.
The Transformer \citet{vaswani2017attention} attends to all timesteps
at once, which ensures instant access to distant timestep yet requires
quadratic computation and physical memory proportional to the sequence
length (see Sec. \ref{subsec:Multi-head-Attention}). Memory-augmented
neural networks (MANNs), on the other hand, learn to establish a limited-size
memory and attend to the memory only, which is scalable to any-length
sequence (see Sec. \ref{sec:Slot-based-Memory-Networks}). Compared
to others, MANNs resemble both computer architecture design and human
working memory \citet{logie2014visuo}. However, the current understanding
of the underlying mechanisms and theoretical foundations for MANNs
are still limited.

Recent works on MANNs rely almost on reasonable intuitions. Some introduce
new addressing mechanisms such as location-based \citet{graves2014neural},
least-used \citet{santoro2016meta} and order-based \citet{graves2016hybrid}.
Others focus on the scalability of MANN by using sparse memory access
to avoid attending to a large number of memory slots \citet{rae2016scaling}.
These problems are different from ours which involves MANN memorisation
capacity \foreignlanguage{australian}{optimisation}. 

Our local optimal solution to this problem is related to some known
neural caching \citet{grave2017improving,grave2017unbounded,yogatama2018memory}
in terms of storing recent hidden states for later encoding uses.
These methods either aim to create structural bias to ease the learning
process \citet{yogatama2018memory} or support large scale retrieval
\citet{grave2017unbounded}. These are different from our caching
purpose, which encourages overwriting and relaxes the equal contribution
assumption of the optimal solution. Also, the details of implementation
are different as ours uses local memory-augmented attention mechanisms. 

\section{Theoretical Analysis on Memorisation }

\subsection{Generic Memory Operations}

Memory-augmented neural networks can be viewed as an extension of
RNNs with external memory $M$. The memory supports read and write
operations based on the output $o_{t}$ of the controller, which in
turn is a function of current timestep input $x_{t}$, previous hidden
state $h_{t-1}$ and read value $r_{t-1}$ from the memory. Let assume
we are given these operators from recent MANNs such as NTM \citet{graves2014neural}
or DNC \citet{graves2016hybrid}, represented as

\begin{equation}
r_{t}=f_{r}\left(o_{t},M_{t-1}\right)\label{eq:read_R}
\end{equation}
\begin{equation}
M_{t}=f_{w}\left(o_{t},M_{t-1}\right)\label{eq:write_v}
\end{equation}

The controller output and hidden state are updated as follows,

\begin{equation}
o_{t}=f_{o}\left(h_{t-1},r_{t-1},x_{t}\right)\label{eq:update_o}
\end{equation}
\begin{equation}
h_{t}=f_{h}\left(h_{t-1},r_{t-1},x_{t}\right)\label{eq:control_h}
\end{equation}

Here, $f_{o}$ and $f_{h}$ are often implemented as RNNs while $f_{r}$
and $f_{w}$ are designed specifically for different memory types. 

Current MANNs only support regular writing by applying Eq. (\ref{eq:write_v})
every timestep. In effect, regular writing ignores the accumulated
short-term memory stored in the controller hidden states which may
well-capture the recent subsequence. We argue that the controller
does not need to write to memory continuously as its hidden state
also supports memorising. Another problem of regular writing is time
complexity. As the memory access is very expensive, reading/writing
at every timestep makes MANNs much slower than RNNs. This motivates
a irregular writing strategy to utilise the memorisation capacity
of the controller and consequently, speed up the model. In the next
sections, we first define a metric to measure the memorisation performance
of RNNs, as well as MANNs. Then, we solve the problem of finding the
best irregular writing that optimises the metric. 

\subsection{Memory Analysis of RNNs}

We first define the ability to ``remember'' of recurrent neural
networks, which is closely related to the vanishing/exploding gradient
problem \citet{pascanu2013difficulty}. In RNNs, the state transition
$h_{t}=\phi\left(h_{t-1},x_{t}\right)$ contains contributions from
not only $x_{t}$, but also previous timesteps $x_{i<t}$ embedded
in $h_{t-1}$. Thus, $h_{t}$ can be considered as a function of timestep
inputs, i.e, $h_{t}=f\left(x_{1},x_{2},...,x_{t}\right)$. One way
to measure how much an input $x_{i}$ contributes to the value of
$h_{t}$ is to calculate the norm of the gradient $\left\Vert \frac{\partial h_{t}}{\partial x_{i}}\right\Vert $.
If the norm equals zero, $h_{t}$ is constant w.r.t $x_{i}$, that
is, $h_{t}$ does not ``remember'' $x_{i}$. As a bigger $\left\Vert \frac{\partial h_{t}}{\partial x_{i}}\right\Vert $
implies more influence of $x_{i}$ on $h_{t}$, we propose using $\left\Vert \frac{\partial h_{t}}{\partial x_{i}}\right\Vert $
to measure the contribution of the $i$-th input to the $t$-th hidden
state. Let $c_{i,t}$ denotes this term, we can show that in the case
of common RNNs, $\lambda_{c}c_{i,t}\geq c_{i-1,t}$ with some $\lambda_{c}\in\mathbb{R^{+}}$(see
Appendix \ref{subsec:Derivation-on-theRNN} - \ref{subsec:Derivation-on-theLSTM}
for proof). This means further to the past, the contribution decays
(when $\lambda_{c}<1$) or grows (when $\lambda_{c}>1$) with the
rate of at least $\lambda_{c}$ .We can measure the average amount
of contributions across $T$ timesteps as follows (see Appendix \ref{subsec:Proof-of-theorem-1}
for proof):
\begin{thm}
\label{thm:The-average-amount} There exists $\lambda\in\mathbb{R^{+}}$such
that the average contribution of a sequence of length T with respect
to a RNN can be quantified as the following,

\begin{equation}
I_{\lambda}=\frac{\stackrel[t=1]{T}{\sum}c_{t,T}}{T}=c_{T,T}\frac{\stackrel[t=1]{T}{\sum}\lambda^{T-t}}{T}\label{eq:d0}
\end{equation}
\end{thm}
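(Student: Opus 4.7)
The plan is to treat $\lambda$ as an averaged (or effective) per-step contribution ratio and obtain its existence via an intermediate-value argument, using the pointwise bound $c_{i-1,t} \leq \lambda_c c_{i,t}$ that the appendices derive for vanilla RNN and LSTM dynamics. That pointwise bound already implies $c_{t,T} \leq \lambda_c^{T-t} c_{T,T}$ by induction, so the sequence $(c_{t,T})_{t=1}^{T}$ is controlled above by a geometric envelope anchored at $c_{T,T}$. What the theorem asks for is the existence of a single scalar $\lambda$ that makes the envelope tight \emph{on average}, i.e.\ reproduces the exact sum $\sum_{t=1}^{T} c_{t,T}$ rather than merely bounding it.

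The concrete steps I would carry out are as follows. First, set $S := \sum_{t=1}^{T} c_{t,T}$ and define the auxiliary function
\[
F(\lambda) \;=\; c_{T,T}\sum_{t=1}^{T}\lambda^{T-t},\qquad \lambda\geq 0.
\]
Second, observe that $F$ is continuous and strictly increasing on $[0,\infty)$, with $F(0)=c_{T,T}$ and $F(\lambda)\to\infty$ as $\lambda\to\infty$. Third, note the trivial inequality $S\geq c_{T,T}=F(0)$, since $c_{T,T}$ appears as one of the summands in $S$ (and all $c_{t,T}\geq 0$ by definition as the norm of a gradient). Fourth, apply the intermediate value theorem to produce a unique $\lambda\in[0,\infty)$ with $F(\lambda)=S$, which is exactly the stated identity after dividing by $T$. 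Fifth, invoke the pointwise envelope $c_{t,T}\leq \lambda_c^{T-t} c_{T,T}$ to conclude $S\leq F(\lambda_c)$, so the solution $\lambda$ satisfies $\lambda\leq\lambda_c$; this gives the required $\lambda\in\mathbb{R}^+$ the bound inherited from the single-step contraction/expansion rate derived earlier.

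The step I expect to be the main obstacle is not the existence argument itself, which is elementary, but making the statement \emph{meaningful} rather than vacuous. The delicate point is ruling out degenerate cases where $c_{T,T}=0$ (in which case $F\equiv 0$ but $S$ might also vanish, and $\lambda$ is indeterminate) and ensuring positivity so that $\lambda$ truly lives in $\mathbb{R}^+$ rather than $\{0\}$. I would handle this by assuming the mild non-degeneracy $c_{T,T}>0$, which is automatic for any nontrivial RNN with a sensitive output map, and by noting that if $S>c_{T,T}$ strictly then $\lambda>0$ strictly; otherwise all earlier contributions vanish and one can set $\lambda=0$ by convention (with $\lambda^0=1$) and the identity still holds. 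The remaining cosmetic step is linking $\lambda$ back to a geometric-mean interpretation of the per-step ratios $c_{t-1,T}/c_{t,T}$, which is useful to motivate the subsequent optimisation of $I_\lambda$ under the uniform-writing scheme but is not required for the existence claim.
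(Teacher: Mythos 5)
Your proposal is correct and follows essentially the same route as the paper's own proof: both define $f(\lambda)=c_{T,T}\sum_{t=1}^{T}\lambda^{T-t}$, sandwich the true sum between $f(0)=c_{T,T}$ and $f(\lambda_{c})$ via the pointwise geometric envelope $c_{t,T}\leq\lambda_{c}^{T-t}c_{T,T}$, and invoke the intermediate value theorem to locate $\lambda\in(0,\lambda_{c}]$. Your additional remarks on monotonicity of $f$ and the degenerate case $c_{T,T}=0$ are sensible refinements the paper leaves implicit, but they do not change the argument.
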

If $\lambda<1$, $\lambda^{T-t}\rightarrow0$ as $T-t\rightarrow\infty$.
This is closely related to vanishing gradient problem. LSTM is known
to ``remember'' long sequences better than RNN by using extra memory
gating mechanisms, which help $\lambda$ to get closer to 1. If $\lambda>1$,
the system may be unstable and suffer from the exploding gradient
problem. 

\begin{figure}
\begin{centering}
\includegraphics[width=0.9\textwidth]{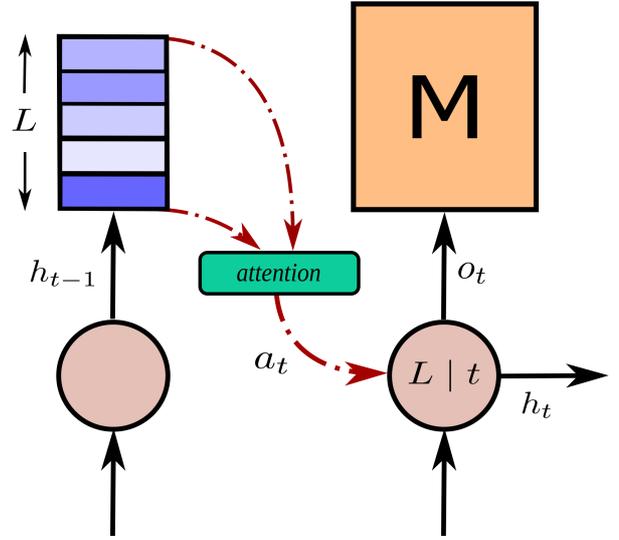}
\par\end{centering}
\caption{Writing mechanism in Cached Uniform Writing. During non-writing intervals,
the controller hidden states are pushed into the cache. When the writing
time comes, the controller attends to the cache, chooses suitable
states and accesses the memory. The cache is then emptied. \label{fig:Writing-mechanism-in}}
\end{figure}

\subsection{Memory Analysis of MANNs}

In slot-based MANNs, memory $M$ is a set of $D$ memory slots. A
write at step $t$ can be represented by the controller's hidden state
$h_{t}$, which accumulates inputs over several timesteps (i.e., $x_{1}$,
...,$x_{t}$). If another write happens at step $t+k$, the state
$h_{t+k}$'s information containing timesteps $x_{t+1}$, ...,$x_{t+k}$
is stored in the memory ($h_{t+k}$ may involves timesteps further
to the past, yet they are already stored in the previous write and
can be ignored). During writing, overwriting may happen, replacing
an old write with a new one. Thus after all, $D$ memory slots associate
with $D$ chosen writes of the controller. From these observations,
we can generalise Theorem \ref{thm:The-average-amount} to the case
of MANNs having $D$ memory slots (see Appendix \ref{subsec:Proof-of-theorem-2}
for proof).
\begin{thm}
\label{thm:Under-the-assumption}With any $D$ chosen writes at timesteps
$1\leq$$K_{1}$\textup{$<$} $K_{2}$$<$ ...$<$$K_{D}$$<T$, there
exist $\lambda,C\in\mathbb{R^{+}}$such that the lower bound on the
average contribution of a sequence of length $T$ with respect to
a MANN having $D$ memory slots can be quantified as the following,

\begin{align}
I_{\lambda} & =C\frac{\stackrel[t=1]{K_{1}}{\sum}\lambda^{K_{1}-t}+\stackrel[t=K_{1}+1]{K_{2}}{\sum}\lambda^{K_{2}-t}+...+\stackrel[t=K_{D-1}+1]{K_{D}}{\sum}\lambda^{K_{D}-t}+\stackrel[t=K_{D}+1]{T}{\sum}\lambda^{T-t}}{T}\nonumber \\
 & =\frac{C}{T}\stackrel[i=1]{D+1}{\sum}\stackrel[j=0]{l_{i}-1}{\sum}\lambda^{j}=\frac{C}{T}\stackrel[i=1]{D+1}{\sum}f_{\lambda}(l_{i})
\end{align}
where $l_{i}=\begin{cases}
K_{1} & ;i=1\\
K_{i}-K_{i-1} & ;D\geq i>1\\
T-K_{D} & ;i=D+1
\end{cases}$, $f_{\lambda}\left(x\right)=\begin{cases}
\frac{1-\lambda^{x}}{1-\lambda} & \lambda\neq1\\
x & \lambda=1
\end{cases}$, $\forall x\in\mathbb{R}^{+}$.
\end{thm}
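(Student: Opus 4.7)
The plan is to decompose the total contribution $\sum_{t=1}^{T}c_{t,T}$ into pieces corresponding to the $D+1$ intervals defined by the writes, namely $(0,K_1],(K_1,K_2],\dots,(K_{D-1},K_D],(K_D,T]$. The idea is that for any timestep $t$ in interval $(K_{i-1},K_i]$, the input $x_t$ influences the final read at time $T$ primarily through the memory slot that was written at step $K_i$; for $t$ in the tail $(K_D,T]$, it influences the read only through the controller's hidden state $h_T$. Each of these paths is itself an RNN-style state propagation, so I can apply the per-step contribution bound $c_{i-1,t}\leq\lambda_{c}c_{i,t}$ established in the proof of Theorem~\ref{thm:The-average-amount} on the controller to obtain the geometric $\lambda^{K_i-t}$ factor on each interval.

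Concretely, first I would fix the $D$ writes and argue that between two consecutive writes the controller evolves as an ordinary RNN, so by Theorem~\ref{thm:The-average-amount} applied to each sub-sequence of length $l_i$, the contribution of $x_t$ to the controller state just before the $i$-th write is at least $C'\lambda^{K_i - t}$ for some $\lambda,C'\in\mathbb{R}^+$ depending only on the controller's transition map. Second, I would invoke a Lipschitz-type estimate on the write operator $f_w$ and the read operator $f_r$ in Eqs.~(\ref{eq:write_v})--(\ref{eq:read_R}) to absorb the write--read pipeline into a single constant $C$, which gives $c_{t,T}\geq C\lambda^{K_i-t}$ for $t\in(K_{i-1},K_i]$. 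Third, for $t\in(K_D,T]$ I use the same RNN contribution bound on the controller alone (no memory hop), yielding $c_{t,T}\geq C\lambda^{T-t}$. Summing over all $D+1$ intervals and dividing by $T$ gives
\[
I_\lambda \;\geq\; \frac{C}{T}\sum_{i=1}^{D+1}\sum_{j=0}^{l_i-1}\lambda^{j} \;=\; \frac{C}{T}\sum_{i=1}^{D+1} f_\lambda(l_i),
\]
with the closed form of $f_\lambda$ coming from the geometric series (and the $\lambda=1$ case handled separately).

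The main obstacle is the middle step: justifying that the write--store--read pipeline preserves the contribution up to a single multiplicative constant. One has to be careful because (i) the $D$ chosen writes are not necessarily the only writes performed, so I need to appeal to the non-overwriting assumption implicit in the statement (the $D$ slots are held until $T$), and (ii) the addressing mechanism makes $\partial r_T/\partial M_{K_i}$ content-dependent. My plan here is to absorb all such slot-, addressing- and activation-dependent scalars into the universal constant $C$, which is legitimate because the statement only asserts the existence of some $C,\lambda$ rather than pinning them down. The other, milder obstacle is bookkeeping the boundary cases ($i=1$ and $i=D+1$) so that the interval lengths $l_i$ match the definition in the theorem; this falls out of a careful reindexing $j=K_i-t$ once the geometric bound is in place.
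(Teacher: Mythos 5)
Your interval decomposition and the per-interval application of Theorem~\ref{thm:The-average-amount} are exactly the skeleton of the paper's proof, but you diverge from it in one substantive way, and that divergence is where your argument has a gap. The paper never propagates anything through the write--read pipeline: it measures the contribution of $x_t$, for $t$ in the $i$-th interval, only up to the controller state at the write time $K_i$ (explicitly ``ignoring contributions before $K_i$ for simplicity''), declares that to be the information frozen into the slot, applies Theorem~\ref{thm:The-average-amount} on each sub-sequence to get an exact geometric form with an interval-specific $\lambda_i$ and leading constant $c_{K_i,K_i}$, and then unifies these into a single pair $(\lambda, C)$ by taking $\lambda=\min_i\lambda_i$ (using that the geometric sum $P(\lambda)$ is increasing in $\lambda$) and $C=\min_i c_{K_i,K_i}$. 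You instead try to bound the contribution of $x_t$ to the read at time $T$, which forces you through the $f_w$/$f_r$ composition --- the step you correctly flag as the main obstacle.

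The problem is that your proposed resolution of that obstacle does not work in the direction you need. A Lipschitz estimate on the write and read operators bounds $\bigl\Vert \partial r_T/\partial h_{K_i}\bigr\Vert$ from \emph{above}; for a \emph{lower} bound on $c_{t,T}$ you would need the composed Jacobian $\frac{\partial r_T}{\partial M(K_i)}\frac{\partial M(K_i)}{\partial h_{K_i}}$ to be bounded away from zero, which fails whenever the soft addressing places (near-)zero read weight on slot $i$ at time $T$ --- content-dependent behaviour you cannot absorb into a universal constant $C$ without an extra non-degeneracy assumption on the attention. Two smaller points: Theorem~\ref{thm:The-average-amount} produces a sequence-dependent $\lambda$ via the intermediate value theorem, so your per-interval constants are genuinely different $\lambda_i$'s and you need the explicit $\min$-and-monotonicity step to merge them; and the cross-interval contributions (of $x_t$ to writes later than its own interval) have to be explicitly discarded, as the paper does, rather than silently omitted. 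If you adopt the paper's weaker notion of contribution --- stop at the write time rather than at $T$ --- your outline closes without the pipeline lemma at all.
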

If $\lambda\leq1$, we want to maximise $I_{\lambda}$ to keep the
information from vanishing. On the contrary, if $\lambda>1$, we may
want to minimise $I_{\lambda}$ to prevent the information explosion.
As both scenarios share the same solution (see Appendix \ref{subsec:Proof-of-theorem}),
thereafter we assume that $\lambda\leq1$ holds for other analyses.
By taking average over $T$, we are making an assumption that all
timesteps are equally important. This helps simplify the measurement
as $I_{\lambda}$ is independent of the specific position of writing.
Rather, it is a function of the interval lengths between the writes.
This turns out to be an optimisation problem whose solution is stated
in the following theorem.
\begin{thm}
\label{thm:Given-the-number}Given $D$ memory slots, a sequence with
length $T$, a decay rate $0<\lambda\leq1$, then the optimal intervals
$\left\{ l_{i}\in\mathbb{R^{+}}\right\} _{i=1}^{D+1}$ satisfying
$T=\stackrel[i=1]{D+1}{\sum}l_{i}$ such that the lower bound on the
average contribution $I_{\lambda}=\frac{C}{T}\stackrel[i=1]{D+1}{\sum}f_{\lambda}(l_{i})$
is maximised are

\begin{equation}
l_{1}=l_{2}=...=l_{D+1}=\frac{T}{D+1}
\end{equation}
\end{thm}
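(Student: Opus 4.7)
I plan to recognize the theorem as a constrained maximization of a separable sum $\sum_{i=1}^{D+1} f_\lambda(l_i)$ subject to the affine constraint $\sum l_i = T$ with $l_i > 0$. Since the summand $f_\lambda$ is the same scalar function applied coordinate-wise, the natural tool is Jensen's inequality, once $f_\lambda$ is shown to be (strictly) concave on $\mathbb{R}^+$. The constant $C/T$ in front of the sum plays no role in the optimization and can be ignored.

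\textbf{Step 1: dispose of the boundary case.} When $\lambda = 1$ we have $f_\lambda(x) = x$, so $\sum_i f_\lambda(l_i) = \sum_i l_i = T$ is constant on the feasible set; the uniform allocation is therefore optimal (non-strictly). The remaining work is the case $0 < \lambda < 1$.

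\textbf{Step 2: establish strict concavity of $f_\lambda$.} For $0 < \lambda < 1$, direct differentiation of $f_\lambda(x) = (1-\lambda^x)/(1-\lambda)$ gives
\begin{equation}
f_\lambda'(x) = \frac{-\lambda^x \ln \lambda}{1-\lambda} > 0, \qquad f_\lambda''(x) = \frac{-(\ln \lambda)^2 \lambda^x}{1-\lambda} < 0,
\end{equation}
since $1-\lambda > 0$, $\lambda^x > 0$, and $(\ln\lambda)^2 > 0$. Thus $f_\lambda$ is strictly concave on $\mathbb{R}^+$.

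\textbf{Step 3: apply Jensen's inequality.} By strict concavity of $f_\lambda$ and the constraint $\sum_{i=1}^{D+1} l_i = T$, Jensen's inequality gives
\begin{equation}
\frac{1}{D+1}\sum_{i=1}^{D+1} f_\lambda(l_i) \;\leq\; f_\lambda\!\left(\frac{1}{D+1}\sum_{i=1}^{D+1} l_i\right) \;=\; f_\lambda\!\left(\frac{T}{D+1}\right),
\end{equation}
with equality if and only if $l_1 = \cdots = l_{D+1}$. Multiplying by $(D+1)$ and then by $C/T$ shows $I_\lambda$ attains its maximum precisely at $l_i = T/(D+1)$ for all $i$. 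As a sanity check I would also verify the same conclusion via Lagrange multipliers: the stationarity condition $f_\lambda'(l_i) = \mu$ combined with strict monotonicity of $f_\lambda'$ (from $f_\lambda'' < 0$) forces all $l_i$ to coincide, and the unique critical point must be the global maximum by concavity of the objective on the convex feasible set.

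\textbf{Main obstacle.} The analytical content is light; the real care is in (i) cleanly handling the piecewise definition of $f_\lambda$ at $\lambda = 1$ so that the statement is uniform across the whole range $0 < \lambda \leq 1$, and (ii) justifying that the relaxation to $l_i \in \mathbb{R}^+$ is the right optimization domain to consider, since in practice writes occur at integer timesteps; I would note that when $T/(D+1)$ is not integral, the discrete optimum sits at the nearest integer partition and the continuous argument still furnishes the asymptotically optimal uniform-writing prescription used by the UW and CUW schemes.
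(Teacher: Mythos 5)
Your proposal is correct and follows essentially the same route as the paper's own proof: compute $f_\lambda''(x)=-(\ln\lambda)^2\lambda^x/(1-\lambda)\le 0$, invoke Jensen's inequality on the separable sum under the constraint $\sum_i l_i = T$, and note equality holds exactly at the uniform allocation, with $\lambda=1$ handled as the degenerate case where every strategy gives $I_\lambda=C$. The Lagrange-multiplier check and the remark on rounding to integer intervals are harmless additions beyond what the paper records.
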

We name the optimal solution as Uniform Writing (UW) and refer to
the term $\frac{T}{D+1}$ and $\frac{D+1}{T}$ as the \textit{optimal
interval} and the\textit{ compression ratio}, respectively. The proof
is given in Appendix \ref{subsec:Proof-of-theorem}.

\begin{algorithm}[t]
\begin{algorithmic}[1]
\Require{a sequence $x=\left\{ x_{t}\right\} _{t=1}^{T}$, a cache $C$ sized $L$, a memory sized $D$}.
\For{$t=1,T$}
\State{$C$.append($h_{t-1}$)}
\If{$t \bmod L==0$}
\State{Use Eq.($\ref{eq:att}$) to calculate $a_t$}
\State{Execute Eq.($\ref{eq:update_o}$): $o_{t}=f_{o}\left(a_t,r_{t-1},x_{t}\right)$}
\State{Execute Eq.($\ref{eq:control_h}$): $h_{t}=f_{h}\left(a_t,r_{t-1},x_{t}\right)$}
\State{Update the memory using Eq.($\ref{eq:write_v}$)}
\State{Read $r_t$ from the memory using Eq.($\ref{eq:read_R}$)}
\State{$C$.clear()}
\Else
\State{Update the controller using Eq.($\ref{eq:control_h}$): $h_{t}=f_{h}\left(h_{t-1},r_{t-1},x_{t}\right)$}
\State{Assign $r_t=r_{t-1}$}
\EndIf
\EndFor
\end{algorithmic} 

\caption{Cached Uniform Writing\label{alg:Cached-Uniform-Writing}}
\end{algorithm}

\section{Optimal Writing for Slot-based Memory Models}

\subsection{Uniform Writing}

Uniform writing can apply to any MANNs that support writing operations.
Since the writing intervals are discrete, i.e., $l_{i}\in\mathbb{N^{+}}$,
UW is implemented as the following,

\begin{equation}
M_{t}=\begin{cases}
f_{w}\left(o_{t},M_{t-1}\right) & \mathrm{if}\:t=\left\lfloor \frac{T}{D+1}\right\rfloor k,k\in\mathbb{N^{+}}\\
M_{t-1} & \mathrm{otherwise}
\end{cases}\label{eq:uw}
\end{equation}

By following Eq. (\ref{eq:uw}), the write intervals are close to
the optimal interval defined in Theorem \ref{thm:Given-the-number}
and approximately maximise the average contribution. This writing
policy works well if timesteps are equally important and the task
is to remember all of them to produce outputs (i.e., in copy task).
However, in reality, timesteps are not created equal and a good model
may need to ignore unimportant or noisy timesteps. That is why overwriting
in MANN can be necessary. In the next section, we propose a method
that tries to balance between following the optimal strategy and employing
overwriting mechanism as in current MANNs. 

\subsection{Local Optimal Design}

To relax the assumptions of Theorem \ref{thm:Given-the-number}, we
propose two improvements of the\textit{ }Uniform Writing (UW) strategy.
First, the intervals between writes are equal with length $L$ ($1\leq L\leq\left\lfloor \frac{T}{D+1}\right\rfloor $).
If $L=1$, the strategy becomes regular writing and if $L=\left\lfloor \frac{T}{D+1}\right\rfloor $,
it becomes uniform writing. This ensures that after $\left\lfloor \frac{T}{L}\right\rfloor $
writes, all memory slots should be filled and the model has to learn
to overwrite. Meanwhile, the average kept information is still locally
maximised every $L*D$ timesteps.

Second, we introduce a cache of size $L$ to store the hidden states
of the controller during a write interval. Instead of using the hidden
state at the writing timestep to update the memory, we perform an
attention over the cache to choose the best representative hidden
state. The model will learn to assign attention weights to the elements
in the cache. This mechanism helps the model consider the importance
of each timestep input in the local interval and thus relax the equal
contribution assumption of Theorem \ref{thm:Given-the-number}. We
name the writing strategy that uses the two mentioned-above improvements
as Cached Uniform Writing (CUW). An illustration of the writing mechanism
is depicted in Fig. \ref{fig:Writing-mechanism-in}.

\subsection{Local Memory-Augmented Attention Unit}

In this subsection, we provide details of the attention mechanism
used in our CUW. To be specific, the best representative hidden state
$a_{t}$ is computed as follows,

\begin{equation}
\alpha_{tj}=softmax\left(v^{T}\tanh\left(Wh_{t-1}+Ud_{j}+Vr_{t-1}\right)\right)
\end{equation}

\begin{equation}
a_{t}=\stackrel[j=1]{L}{\sum}\alpha_{tj}d_{j}\label{eq:att}
\end{equation}

where $\alpha_{tj}$ is the attention score between the $t$-th writing
step and the $j$-th element in the cache; $W$, $U$, $V$ and $v$
are parameters; $h$ and $r$ are the hidden state of the controller
and the read-out (Eq. (\ref{eq:read_R})), respectively; $d_{j}$
is the cache element and can be implemented as the controller's hidden
state ($d_{j}=h_{t-1-L+j}$). 

The vector $a_{t}$ will be used to replace the previous hidden state
in updating the controller and memory. The whole process of performing
CUW is \foreignlanguage{australian}{summarised} in Algo. \ref{alg:Cached-Uniform-Writing}\footnote{Source code is available at \url{https://github.com/thaihungle/UW-DNC}}.

\section{Experiments and Results}

\begin{figure}
\begin{centering}
\includegraphics[width=1\linewidth]{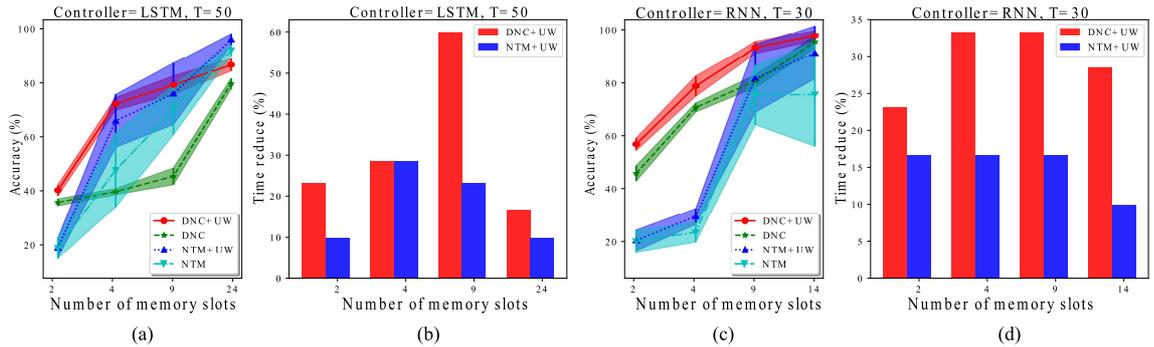}
\par\end{centering}
\caption{The accuracy (\%) and computation time reduction (\%) with different
memory types and number of memory slots. The controllers/sequence
lengths/memory sizes are chosen as LSTM/50/$\left\{ 2,4,9,24\right\} $
(a\&b) and RNN/30/$\left\{ 2,4,9,14\right\} $ (c\&d), respectively.
\label{fig:The-test-accuracy}}
\end{figure}

\subsection{An Ablation Study: Memory-Augmented Neural Networks with and without
Uniform Writing}

In this section, we study the impact of uniform writing on MANNs under
various circumstances (different controller types, memory types and
number of memory slots). We restrict the memorisation problem to the
double task in which the models must reconstruct a sequence of integers
sampled uniformly from range $\left[1,10\right]$ twice. We cast this
problem to a sequence to sequence problem with 10 possible outputs
per decoding step. The training stops after 10,000 iterations of batch
size 64. We choose DNC\footnote{Our reimplementation based on https://github.com/deepmind/dnc}
and NTM\footnote{https://github.com/MarkPKCollier/NeuralTuringMachine}
as the two MANNs in the experiment. The recurrent controllers can
be RNN or LSTM. With LSTM controller, the sequence length is set to
50. We choose sequence length of 30 to make it easier for the RNN
controller to learn the task. The number of memory slots $D$ is chosen
from the set $\left\{ 2,4,9,24\right\} $ and $\left\{ 2,4,9,14\right\} $
for LSTM and RNN controllers, respectively. More memory slots will
make UW equivalent to the regular writing scheme. For this experiment,
we use Adam optimiser \citet{kingma2014adam} with initial learning
rate and gradient clipping of $\left\{ 0.001,0.0001\right\} $ and
$\left\{ 1,5,10\right\} $, respectively. The metric used to measure
the performance is the average accuracy across decoding steps. For
each configuration of hyper-parameters, we run the experiment 5 times
and report the mean accuracy with error bars.

Figs. \ref{fig:The-test-accuracy}(a) and (c) depict the performance
of UW and regular writing under different configurations. In any case,
UW boosts the prediction accuracy of MANNs. The performance gain can
be seen clearly when the compression ratio is between $10-40\%$.
This is expected since when the compression ratio is too small or
too big, UW converges to regular writing. Interestingly, increasing
the memory size does not always improve the performance, as in the
case of NTM with RNN controllers. Perhaps, learning to attend to many
memory slots is tricky for some task given limited amount of training
data. This supports the need to apply UW to MANN with moderate memory
size. We also conduct experiments to verify the benefit of using UW
for bigger memory. The results can be found in Appendix \ref{subsec:UW-performance-on}. 

We also measure the speed-up of training time when applying UW on
DNC and NTM, which is illustrated in Figs. \ref{fig:The-test-accuracy}(b)
and (d). The result shows that with UW, the training time can drop
up to $60\%$ for DNC and $28\%$ for NTM, respectively. As DNC is
more complicated than NTM, using UW to reduce memory access demonstrates
clearer speed-up in training (similar behavior can be found for testing
time).

\subsection{Synthetic Memorisation}

Here we address a broader range of baselines on two synthetic memorisation
tasks, which are the sequence copy and reverse. In these tasks, there
is no discrimination amongst timesteps so the model's goal is to learn
to compress the input efficiently for later retrieval. We experiment
with different sequence lengths of 50 and 100 timesteps. Other details
are the same as the previous double task except that we fix the learning
rate and gradient clipping to 0.001 and 10, respectively. The standard
baselines include LSTM, NTM and DNC. All memory-augmented models have
the same memory size of $4$ slots, corresponding to compression ratio
of $10\%$ and $5\%$, respectively. We aim at this range of compression
ratio to match harsh practical requirements. UW and CUW (cache size
$L=5$) are built upon the DNC, which from our previous observations,
works best for given compression ratios. We choose different dimensions
$N_{h}$ for the hidden vector of the controllers to ensure the model
sizes are approximately equivalent. To further verify that our UW
is actually the optimal writing strategy, we design a new baseline,
which is DNC with random irregular writing strategy (RW). The write
is sampled from a binomial distribution with $p=\left(D+1\right)/T$
(equivalent to compression ratio). After sampling, we conduct the
training for that policy. The final performances of RW are taken average
from 3 different random policies' results.

The performance of the models is listed in Table \ref{tab:mem_test}.
As clearly seen, UW is the best performer for the pure memorisation
tests. This is expected from the theory as all timesteps are importantly
equivalent. Local attention mechanism in CUW does not help much in
this scenario and thus CUW finishes the task as the runner-up. Reverse
seems to be easier than copy as the models tend to ``remember''
more the last-seen timesteps whose contributions $\lambda^{T-t}$
remains significant. In both cases, other baselines including random
irregular and regular writing underperform our proposed models by
a huge margin. 

\begin{table}
\begin{centering}
\begin{tabular}{ccccccc}
\hline 
\multirow{2}{*}{Model} & \multirow{2}{*}{$N_{h}$} & \multirow{2}{*}{\# parameter} & \multicolumn{2}{c}{Copy} & \multicolumn{2}{c}{Reverse}\tabularnewline
\cline{4-7} \cline{5-7} \cline{6-7} \cline{7-7} 
 &  &  & L=50 & L=100 & L=50 & L=100\tabularnewline
\hline 
LSTM & 125 & 103,840 & 15.6 & 12.7 & 49.6 & 26,1\tabularnewline
NTM & 100 & 99,112 & 40.1 & 11.8 & 61.1 & 20.3\tabularnewline
DNC & 100 & 98,840 & 68.0 & 44.2 & 65.0 & 54.1\tabularnewline
DNC+RW & 100 & 98,840 & 47.6 & 37.0 & 70.8 & 50.1\tabularnewline
\hline 
DNC+UW & 100 & 98,840 & \textbf{97.7} & \textbf{69.3} & \textbf{100} & \textbf{79.5}\tabularnewline
DNC+CUW & 95  & 96,120 & 83.8 & 55.7 & 93.3 & 55.4\tabularnewline
\hline 
\end{tabular}
\par\end{centering}
\caption{Test accuracy (\%) on synthetic memorisation tasks. MANNs have 4 memory
slots.\label{tab:mem_test}}
\end{table}

\subsection{Synthetic Reasoning}

Tasks in the real world rarely involve just memorisation. Rather,
they require the ability to selectively remember the input data and
synthesise intermediate computations. To investigate whether our proposed
writing schemes help the memory-augmented models handle these challenges,
we conduct synthetic reasoning experiments which include add and max
tasks. In these tasks, each number in the output sequence is the sum
or the maximum of two numbers in the input sequence. The pairing is
fixed as: $y_{t}=\frac{x_{t}+x_{T-t}}{2},t=\overline{1,\left\lfloor \frac{T}{2}\right\rfloor }$
for add task and $y_{t}=\max\left(x_{2t},x_{2t+1}\right),t=\overline{1,\left\lfloor \frac{T}{2}\right\rfloor }$
for max task, respectively. The length of the output sequence is thus
half of the input sequence. A brief overview of input/output format
for these tasks can be found in Appendix \ref{subsec:Summary-of-synthetic}.
We deliberately use local (max) and distant (add) pairing rules to
test the model under different reasoning strategies. The same experimental
setting as in the previous section is applied except for the data
sample range for the max task, which is $\left[1,50\right]$\footnote{With small range like $\left[1,10\right]$, there is no much difference
in performance amongst models}. LSTM and NTM are excluded from the baselines as they fail on these
tasks. 

Table \ref{tab:reason_task} shows the testing results for the reasoning
tasks. Since the memory size is small compared to the number of events,
regular writing or random irregular writing cannot compete with the
uniform-based writing policies. Amongst all baselines, CUW demonstrates
superior performance in both tasks thanks to its local attention mechanism.
It should be noted that the timesteps should not be treated equally
in these reasoning tasks. The model should weight a timestep differently
based on either its content (max task) or location (add task) and
maintain its memory for a long time by following uniform criteria.
CUW is designed to balance the two approaches and thus it achieves
better performance. Further insights into memory operations of these
models are given in Appendix \ref{subsec:Memory-writing-behaviours}. 

\begin{table}
\begin{centering}
\begin{tabular}{ccccc}
\hline 
\multirow{2}{*}{Model} & \multicolumn{2}{c}{Add} & \multicolumn{2}{c}{Max}\tabularnewline
\cline{2-5} \cline{3-5} \cline{4-5} \cline{5-5} 
 & L=50 & L=100 & L=50 & L=100\tabularnewline
\hline 
DNC & 83.8 & 22.3 & 59.5 & 27.4\tabularnewline
DNC+RW & 83.0 & 22.7 & 59.7 & 36.5\tabularnewline
\hline 
DNC+UW & 84.8 & 50.9 & 71.7 & 66.2\tabularnewline
DNC+CUW & \textbf{94.4} & \textbf{60.1} & \textbf{82.3} & \textbf{70.7}\tabularnewline
\hline 
\end{tabular}
\par\end{centering}
\caption{Test accuracy (\%) on synthetic reasoning tasks. MANNs have 4 memory
slots.\label{tab:reason_task}}
\end{table}

\subsection{Synthetic Sinusoidal Regression }

In real-world settings, sometimes a long sequence can be captured
and fully reconstructed by memorising some of its feature points.
For example, a periodic function such as sinusoid can be well-captured
if we remember the peaks of the signal. By observing the peaks, we
can deduce the frequency, amplitude, phase and thus fully reconstructing
the function. To demonstrate that UW and CUW are useful for such scenarios,
we design a sequential continuation task, in which the input is a
sequence of sampling points across some sinusoid: $y=5+A\sin(2\pi fx+\varphi)$.
Here, $A\sim\mathcal{U}\left(1,5\right)$, $f\sim\mathcal{U}\left(10,30\right)$
and $\varphi\sim\mathcal{U}\left(0,100\right)$. After reading the
input $y=\left\{ y_{t}\right\} _{t=1}^{T}$, the model have to generate
a sequence of the following points in the sinusoid. To ensure the
sequence $y$ varies and covers at least one period of the sinusoid,
we set $x=\left\{ x_{t}\right\} _{t=1}^{T}$ where $x_{i}=\left(t+\epsilon_{1}\right)/1000$,
$\epsilon_{1}\sim\mathcal{U}\left(-1,1\right)$. The sequence length
for both input and output is fixed to $T=100$. The experimental models
are LSTM, DNC, UW and CUW (built upon DNC). For each model, optimal
hyperparameters including learning rate and clipping size are tuned
with 10,000 generated sinusoids. The memories have $4$ slots and
all baselines have similar parameter size. We also conduct the experiment
with noisy inputs by adding a noise $\epsilon_{2}\sim\mathcal{U}\left(-2,2\right)$
to the input sequence $y$. This increases the difficulty of the task.
The loss is the average of mean square error (MSE) over decoding timesteps.

We plot the mean learning curves with error bars over 5 runnings for
sinusoidal regression task under clean and noisy condition in Figs.
\ref{fig:Training-curves-of}(a) and (b), respectively. Regular writing
DNC learns fast at the beginning, yet soon saturates and approaches
the performance of LSTM ($MSE=1.05$ and $1.39$ in clean and noisy
condition, respectively). DNC performance does not improve much as
we increase the memory size to $50$, which implies the difficulty
in learning with big memory. Although UW starts slower, it ends up
with lower errors than DNC and perform slightly better than CUW in
clean condition ($MSE=0.44$ for UW and $0.61$ for CUW). CUW demonstrates
competitive performance against other baselines, approaching to better
solution than UW for noisy task where the model should discriminate
the timesteps ($MSE=0.98$ for UW and $0.55$ for CUW). More visualisations
can be found in Appendix \ref{subsec:Visualizations-of-model}. 

\begin{figure}
\begin{centering}
\includegraphics[width=1\textwidth]{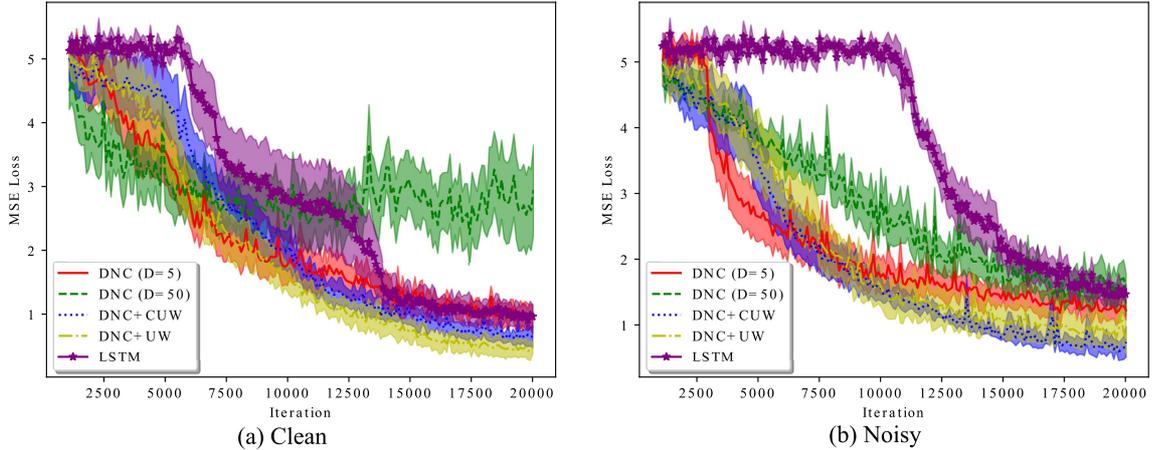}
\par\end{centering}
\caption{Learning curves of models in clean (a) and noisy (b) sinusoid regression
experiment.\label{fig:Training-curves-of}}
\end{figure}

\subsection{Flatten Image Recognition}

We want to compare our proposed models with DNC and other methods
designed to help recurrent networks learn longer sequence. The chosen
benchmark is a pixel-by-pixel image classification task on MNIST in
which pixels of each image are fed into a recurrent model sequentially
before a prediction is made. In this task, the sequence length is
fixed to 768 with highly redundant timesteps (black pixels). The training,
validation and testing sizes are 50,000, 10,000 and 10,000, respectively.
We test our models on both versions of non-permutation (MNIST) and
permutation (pMNIST) \citet{le2015simple}. More details on the task
and data can be found in Le et al., (2015). $\joinrel$For DNC, we
try with several memory slots from $\left\{ 15,30,60\right\} $ and
report the best results. For UW and CUW, memory size is fixed to 15
and cache size $L$ is set to 10. The controllers are implemented
as single layer GRU with $100$-dimensional hidden vector. To optimise
the models, we use RMSprop with initial learning rate of 0.0001. 

Table \ref{tab:mnist} shows that DNC underperforms r-LSTM, which
indicates that regular DNC with big memory finds it hard to beat LSTM-based
methods. After applying UW, the results get better and with CUW, it
shows significant improvement over r-LSTM and demonstrates competitive
performance against dilated-RNNs models. Notably, dilated-RNNs use
9 layers in their experiments compared to our singer layer controller.
Furthermore, our models exhibit more consistent performance than dilated-RNNs.
For completeness, we include comparisons between CUW and non-recurrent
methods in Appendix \ref{subsec:Comparsion-with-non-recurrent}

\begin{table}
\begin{centering}
\begin{tabular}{lcc}
\hline 
Model & MNIST & pMNIST\tabularnewline
\hline 
iRNN$^{\dagger}$  & 97.0 & 82.0\tabularnewline
uRNN$^{\circ}$ & 95.1 & 91.4\tabularnewline
r-LSTM Full BP$^{\star}$ & 98.4 & 95.2\tabularnewline
Dilated-RNN$^{\blacklozenge}$ & 95.5 & 96.1\tabularnewline
Dilated-GRU$^{\blacklozenge}$ & \textbf{99.2} & 94.6\tabularnewline
DNC & 98.1 & 94.0\tabularnewline
\hline 
DNC+UW & 98.6 & 95.6\tabularnewline
DNC+CUW & 99.1 & \textbf{96.3}\tabularnewline
\hline 
\end{tabular}
\par\end{centering}
\caption{Test accuracy (\%) on MNIST, pMNIST. Previously reported results are
from the literature \citet{le2015simple}$^{\dagger}$, \citet{arjovsky2016unitary}$^{\circ}$,
\citet{trinh2018learning}$^{\star}$, and \citet{chang2017dilated}$^{\blacklozenge}$.
\label{tab:mnist}}
\end{table}

\subsection{Document Classification}

To verify our proposed models in real-world applications, we conduct
experiments on document classification task. In the task, the input
is a sequence of words and the output is the classification label.
Following common practices \citet{yogatama2017generative,seo2018neural},
each word in the document is embedded into a $300$-dimensional vector
using Glove embedding \citet{pennington2014glove}. We use RMSprop
for optimisation, with initial learning rate of 0.0001. Early-stop
training is applied if there is no improvement after 5 epochs in the
validation set. Our UW and CUW are built upon DNC with single layer
$512$-dimensional LSTM controller and the memory size is chosen in
accordance with the average length of the document, which ensures
$10-20\%$ compression ratio. The cache size for CUW is fixed to 10.
The datasets used in this experiment are common big datasets where
the number of documents is between 120,000 and 1,400,000 with maximum
of 4,392 words per document (see Appendix \ref{subsec:Details-on-document}
for further details). The baselines are recent state-of-the-arts in
the domain, some of which are based on recurrent networks such as
D-LSTM \citet{yogatama2017generative} and Skim-LSTM \citet{seo2018neural}.
We exclude DNC from the baselines as it is inefficient to train the
model with big document datasets. 

Our results are reported in Table \ref{tab:text}. On five datasets
out of six, our models beat or match the best published results. For
IMDb dataset, our methods outperform the best recurrent model (Skim-LSTM).
The performance gain is competitive against that of the state-of-the-arts.
In most cases, CUW is better than UW, which emphasises the importance
of relaxing the timestep equality assumption in practical situations.
Details results across different runs for our methods are listed in
Appendix \ref{subsec:Document-classification-detailed}. 

\begin{table}
\begin{centering}
\begin{tabular}{lcccccc}
\hline 
Model & AG & IMDb\tablefootnote{Methods that use semi-supervised training to achieve higher accuracy
are not listed.} & Yelp P. & Yelp F. & DBP & Yah. A.\tabularnewline
\hline 
VDCNN$^{\bullet}$  & 91.3 & - & 95.7 & 64.7 & 98.7 & 73.4\tabularnewline
D-LSTM$^{\ast}$ & - & - & 92.6 & 59.6 & 98.7 & \textit{73.7}\tabularnewline
Standard LSTM$^{\ddagger}$ & 93.5 & 91.1 & - & - & - & -\tabularnewline
Skim-LSTM$^{\ddagger}$ & \textit{93.6} & 91.2 & - & - & - & -\tabularnewline
Region Embedding$^{\blacktriangle}$ & 92.8 & - & \textbf{\textit{96.4}} & \textit{64.9} & \textit{98.9} & \textit{73.7}\tabularnewline
\hline 
DNC+UW & 93.7 & \textbf{91.4} & \textbf{96.4} & 65.3 & \textbf{99.0} & 74.2\tabularnewline
DNC+CUW & \textbf{93.9} & 91.3 & \textbf{96.4} & \textbf{65.6} & \textbf{99.0} & \textbf{74.3}\tabularnewline
\hline 
\end{tabular}
\par\end{centering}
\caption{Document classification accuracy (\%) on several datasets. Previously
reported results are from the literature \citet{conneau2016very}$^{\bullet}$,
\citet{yogatama2017generative}$^{\ast}$, \citet{seo2018neural}$^{\ddagger}$
and \citet{qiao2018a}$^{\blacktriangle}$. We use italics to denote
the best published and bold the best records.\label{tab:text}}
\end{table}

\section{Closing Remarks}

We have introduced Uniform Writing (UW) and Cached Uniform Writing
(CUW) as faster solutions for longer-term memorisation in MANNs. With
a comprehensive suite of synthetic and practical experiments, we provide
strong evidences that our simple writing mechanisms are crucial to
MANNs to reduce computation complexity and achieve competitive performance
in sequence modeling tasks. In complement to the experimental results,
we have proposed a meaningful measurement on MANN memory capacity
and provided theoretical analysis showing the optimality of our methods.
In the next chapter, we shift the focus back to designing a better
MANN that is capable of learning multiple procedures at the same time
and behaving like a universal Turing machine rather than a Turing
machine.

\chapter{Neural Stored-Program Memory\label{chap:Neural-Stored-program-Memory}}

\section{Introduction}

Recurrent Neural Networks (RNNs) are Turing-complete, that is, we
can theoretically construct a Turing machine by using RNN elements
\citet{siegelmann1995computational}. However, in practice RNNs struggle
to learn simple procedures as they lack explicit memory \citet{graves2014neural,mozer1993connectionist}
. Memory Augmented Neural Networks (MANNs) as analysed in Chapter
\ref{chap:MANN}, lift these limitations as they are capable of emulating
modern computer behavior by detaching memorisation from computation
via memory and controller network, respectively. In previous chapters,
we have also demonstrated that MANNs are powerful in many settings,
ranging from encoding multi-processes, generating complex sequences,
to skip-readings. Nonetheless, MANNs have barely simulated general-purpose
computers as they miss a key concept in computer design: stored-program
memory. 

The concept of stored-program has emerged from the idea of Universal
Turing Machine (UTM) \citet{turing1937computable} and developed in
the Von Neumann Architecture (VNA) \citet{vonNeumann:1993:FDR:612487.612553}.
In UTM\nomenclature{UTM}{Universal Turing Machine}/VNA\nomenclature{VNA}{Von Neumann Architecture},
both data and programs that manipulate the data are stored in memory.
A control unit then reads the programs from the memory and executes
them with the data. This mechanism allows flexibility to perform universal
computations. Unfortunately, current MANNs such as Neural Turing Machine
(NTM) \citet{graves2014neural}, Differentiable Neural Computer (DNC)
\citet{graves2016hybrid} and Least Recently Used Access (LRUA\nomenclature{LRUA}{Least Recently Used Access})
\citet{santoro2016meta} only support memory for data and embed a
single program into the controller network, which goes against the
stored-program memory principle. 

Our goal is to advance a step further towards UTM/VNA by coupling
a MANN with an external program memory. The program memory co-exists
with the data memory in the MANN, providing more flexibility, reuseability
and modularity in learning complicated tasks. The program memory stores
the weights of the MANN's controller network, which are retrieved
quickly via a key-value attention mechanism across timesteps yet updated
slowly via backpropagation. By introducing a meta network to moderate
the operations of the program memory, our model, henceforth referred
to as Neural Stored-program Memory (NSM\nomenclature{NSM}{Neural Stored-program Memory}),
can learn to switch the programs/weights in the controller network
appropriately, adapting to different functionalities aligning with
different parts of a sequential task, or different tasks in continual
and few-shot learning.

To validate our proposal, the NTM armed with NSM, namely Neural Universal
Turing Machine (NUTM\nomenclature{NUTM}{Neural Universal Turing Machine}),
is tested on a variety of synthetic tasks including algorithmic tasks
\citet{graves2014neural}, composition of algorithmic tasks and continual
procedure learning. For these algorithmic problems, we demonstrate
clear improvements of NUTM over NTM. Further, we investigate NUTM
in few-shot learning by using LRUA as the MANN and achieve notably
better results. Finally, we expand NUTM application to linguistic
problems by equipping NUTM with DNC core and achieve competitive performances
against state-of-the-arts in the bAbI task \citet{weston2015towards}. 

Taken together, our study advances neural network simulation of Turing
Machines to neural architecture for Universal Turing Machines. This
develops a new class of MANNs that can store and query both the weights
and data of their own controllers, thereby following the stored-program
principle. A set of five diverse experiments demonstrate the computational
universality of the approach.

\section{Backgrounds\label{sec:Backgrounds}}

\subsection{Turing Machines and MANNs}

In this section, we briefly review MANN and its relations to Turing
Machines. A MANN consists of a controller network and an external
memory $\mathbf{M}\in\mathbb{R}^{N\times M}$, which is a collection
of $N$ $M$-dimensional vectors. The controller network is responsible
for accessing the memory, updating its state and optionally producing
output at each timestep. The first two functions are executed by an
interface network and a state network\footnote{Some MANNs (e.g., NTM with Feedforward Controller) neglect the state
network, only implementing the interface network and thus analogous
to one-state Turing Machine. }, respectively. Usually, the interface network is a Feedforward neural
network whose input is $c_{t}$ - the output of the state network
implemented as RNNs. Let $W^{c}$ denote the weight of the interface
network, then the state update and memory control are as follows,

\begin{equation}
h_{t},c_{t}=RNN\left(\left[x_{t},r_{t-1}\right],h_{t-1}\right)
\end{equation}
\begin{equation}
\xi_{t}=c_{t}W^{c}
\end{equation}

where $x_{t}$ and $r_{t-1}$ are data from current input and the
previous memory read, respectively. The interface vector $\xi_{t}$
then is used to read from and write to the memory $\mathbf{M}$. We
use a generic notation $memory\left(\xi_{t},\mathbf{M}\right)$ to
represent these memory operations that either update or retrieve read
value $r_{t}$ from the memory. To support multiple memory accesses
per step, there might be several interface networks to produce multiple
interfaces, also known as control heads. Readers are referred to Graves
et al., (2014, 2016) and Santoro et al., (2016) for details of memory
read/write examples. 

A deterministic one-tape Turing Machine can be defined by 4-tuple
$\left(Q,\Gamma,\delta,q_{0}\right)$, in which $Q$ is finite set
of states, $q_{0}\in Q$ is an initial state, $\Gamma$ is finite
set of symbol stored in the tape (the data) and $\delta$ is the transition
function (the program), $\delta:Q\times\Gamma\rightarrow\Gamma\times\left\{ -1,1\right\} \times Q$.
At each step, the machine performs the transition function, which
takes the current state and the read value from the tape as inputs
and outputs actions including writing new values, moving tape head
to new location (left/right) and jumping to another state. Roughly
mapping to current MANNs, $Q$, $\Gamma$ and $\delta$ map to the
set of the controller states, the read values and the controller network,
respectively. Further, the function $\delta$ can be factorised into
two sub functions: $Q\times\Gamma\rightarrow\Gamma\times\left\{ -1,1\right\} $
and $Q\times\Gamma\rightarrow Q$, which correspond to the interface
and state networks, respectively. 

By encoding a Turing Machine into the tape, one can build an UTM that
simulates the encoded machine \citet{turing1937computable}. The transition
function of the UTM queries the encoded Turing Machine that solves
the considering task. Amongst 4 tuples, $\delta$ is the most important
and hence uses most of the encoding bits. In other words, if we assume
that the space of $Q$, $\Gamma$ and $q_{0}$ are shared amongst
Turing Machines, we can simulate any Turing Machine by encoding only
its transition function $\delta$. Translating to neural language,
if we can store the controller network into a queriable memory and
make use of it, we can build a Neural Universal Turing Machine. Using
NSM is a simple way to achieve this goal, which we introduce in the
subsequent section.

\subsection{Related Approaches}

Previous investigations into MANNs mostly revolve around memory access
mechanisms. The works in Graves et al., (2014, 2016) introduce content-based,
location-based and dynamic memory reading/writing. Further, Rae et
al., (2016) scales to bigger memory by sparse access while Le et al.,
(2019a) \foreignlanguage{australian}{optimises} memory operations
with uniform writing. These works keep using memory for storing data
rather than the weights of the network and thus parallel to our approach.
Other DNC modifications \citet{csordas2018improving,W18-2606} are
also orthogonal to our work. 

Another line of related work involves modularisation of neural networks,
which is designed for visual question answering. In module networks
\citet{andreas2016neural,andreas-etal-2016-learning}, the modules
are manually aligned with predefined concepts and the order of execution
is decided by the question. Although the module in these works resembles
the program in NSM, our model is more generic and flexible with soft-attention
over programs and thus fully differentiable. Further, the motivation
of NSM does not limit to a specific application. Rather, NSM aims
to help MANN reach general-purpose computability. 

Finally, if we view NSM network as a dynamic weight generator, the
program in NSM can be linked to fast weight \citet{hinton1987using,ba2016using,munkhdalai2017meta}.
These papers share the idea of using different weights across timesteps
to enable dynamic adaptation. However, fast weights are directly generated
while our programs are interpolated from a set of slow weights. 

\section{Neural Stored-Program Memory and Neural Universal Turing Machine}

\subsection{Neural Stored-Program Memory}

A Neural Stored-program Memory (NSM) is a key-value memory $\mathbf{M}_{p}\in\mathbb{R}^{P\times(K+S)}$,
whose value is the weight of another neural network$-$the program.
$P$, $K$, and $S$ are the number of programs, the key space dimension
and the program size, respectively. This concept is a hybrid between
the traditional slow-weight and fast-weight \citet{hinton1987using}.
Like slow-weight, the weights in NSM are updated gradually by backpropagation.
However, they are dynamically recomputed on-the-fly during the processing
of a sequence, which resembles fast-weight computation. Let us denote
$\mathbf{M}_{p}\left(i\right).k$ and $\mathbf{M}_{p}\left(i\right).v$
as the key and the content of the $i$-th memory slot. At timestep
$t,$ given a query key $k_{t}^{p}$, the corresponding program is
retrieved as follows,

\begin{equation}
D\left(k_{t}^{p},\mathbf{M}_{p}(i).k\right)=\frac{k_{t}^{p}\cdot\mathbf{M}_{p}(i).k}{||k_{t}^{p}||\cdot||\mathbf{M}_{p}(i).k)||}\label{eq:d_p}
\end{equation}

\begin{equation}
p_{t}=\stackrel[i=1]{P}{\sum}\text{softmax}\left(\beta_{t}^{p}D\left(k_{t}^{p},\mathbf{M}_{p}(i).k\right)\right)\mathbf{M}_{p}\left(i\right).v\label{eq:pt}
\end{equation}
where $D\left(\cdot\right)$ is cosine similarity and $\beta_{t}^{p}$
is the scalar program strength parameter. The vector program $p_{t}$
is then reshaped to its matrix form and ready to be used in other
neural computations. 

The key-value design is essential for convenient memory access as
the size of the program stored in $\mathbf{M}_{p}$ can be millions
of dimensions and thus, direct content-based addressing \citet{graves2014neural,graves2016hybrid}
is infeasible. More importantly, we can inject external control on
the behavior of the memory by imposing constraints on the key space.
For example, program collapse will happen when the keys stored in
the memory stay close to each other. When this happens, $p_{t}$ is
a balanced mixture of all programs regardless of the query key and
thus having multiple programs is useless. We can avoid this phenomenon
by minimising a regularisation loss defined as the following,

\begin{equation}
l_{p}=\stackrel[i=1]{P}{\sum}\stackrel[j=i+1]{P}{\sum}D\left(\mathbf{M}_{p}(i).k,\mathbf{M}_{p}(j).k\right)
\end{equation}

\subsection{Neural Universal Turing Machine}

It turns out that the combination of MANN and NSM approximates an
Universal Turing Machine (Sec. \ref{sec:Backgrounds}). At each timestep,
the controller in MANN reads its state and memory to generate control
signal to the memory via the interface network $W^{c}$, then updates
its state using the state network $RNN$. Since the parameters of
$RNN$ and $W^{c}$ represent the encoding of $\delta$, we store
both into NSM to completely encode an MANN. For simplicity, in this
proposal, we only use NSM to store $W^{c}$, which is equivalent to
the Universal Turing Machine that can simulate any one-state Turing
Machine. 

In traditional MANN, $W^{c}$ is constant across timesteps and only
updated slowly during training, typically through backpropagation.
In our design, we compute $W_{t}^{c}$ from NSM for every timestep
and thus, we need a program interface network$-$the meta network
$P_{\mathscr{\mathcal{I}}}-$that generates an interface vector for
the program memory: $\xi_{t}^{p}=P_{\mathscr{\mathcal{I}}}\left(c_{t}\right)$,
where $\xi_{t}^{p}=\left[k_{t}^{p},\beta_{t}^{p}\right]$ and $P_{\mathscr{\mathcal{I}}}$
is implemented as a Feedforward neural network. The procedure for
computing $W_{t}^{c}$ is executed by following Eqs. (\ref{eq:d_p})-(\ref{eq:pt}),
hereafter referred to as $NSM\left(\xi_{t}^{p},\mathbf{M}_{p}\right)$.
Figure \ref{fig:NUTM-diagram} depicts the integration of NSM into
MANN.

\begin{figure}
\begin{centering}
\includegraphics[width=0.8\columnwidth]{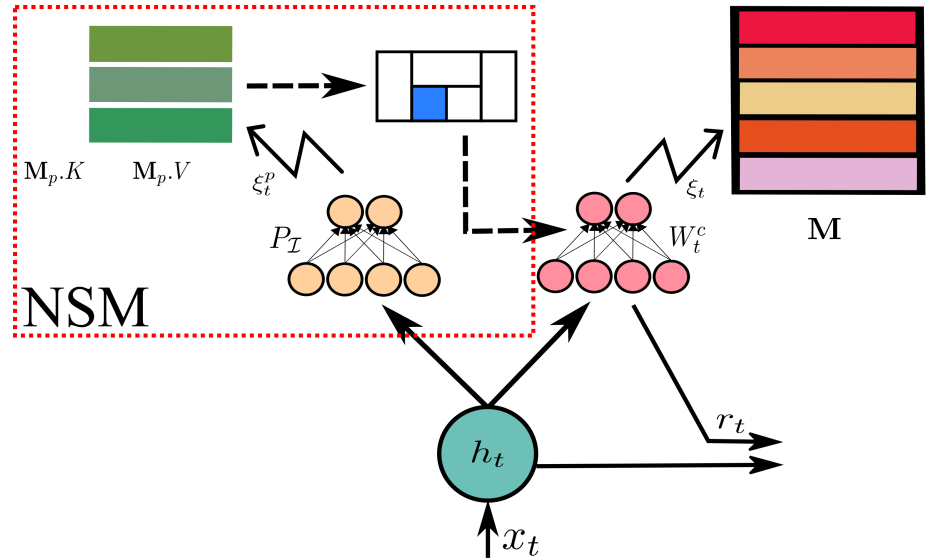}
\par\end{centering}
\caption{Introducing NSM into MANN. At each timestep, the program interface
network ($P_{\mathscr{\mathcal{I}}}$) receives input from the state
network and queries the program memory $\mathbf{M}_{p}$, acquiring
the working weight for the interface network ($W_{t}^{c}$). The interface
network then operates on the data memory $\mathbf{M}$. \label{fig:NUTM-diagram}}
\end{figure}

For the case of multi-head NTM, we implement one NSM per control head
and name this model Neural Universal Turing Machine (NUTM). Each control
head will read from (for read head) or write to (for write head) the
data memory $\mathbf{M}$ via $memory\left(\xi_{t},\mathbf{M}\right)$
as described in Graves et al., (2014). Other MANNs such as DNC \citet{graves2016hybrid}
and LRUA \citet{santoro2016meta} can be armed with NSM in this manner.
We also employ the regularisation loss $l_{p}$ to prevent the programs
from collapsing, resulting in a final loss as follows,

\begin{equation}
Loss=Loss_{pred}+\eta_{t}l_{p}\label{eq:nsm_loss}
\end{equation}
where $Loss_{pred}$ is the prediction loss and $\eta_{t}$ is annealing
factor, reducing as the training step increases. The details of NUTM
operations are presented in Algorithm \ref{alg:Neural-Uinversal-Turing}. 

\begin{algorithm}[t]
\begin{algorithmic}[1]
\Require{a sequence $x=\left\{ x_{t}\right\} _{t=1}^{T}$, a data memory $\mathbf{M}$ and $R$ program memories $\left\{ \mathbf{M}_{p,n}\right\} _{n=1}^{R}$ corresponding to $R$ control heads}
\State{Initilise $h_0$, $r_0$}
\For{$t=1,T$}
\State{$h_t,c_t=RNN([x_t,r_{t-1}],h_{t-1})$} \Comment{$RNN$ can be replaced by GRU/LSTM}
\For{$n=1,R$}
\State{Compute the program interface $\xi_{t,n}^{p}\leftarrow P_{\mathscr{\mathcal{I}},n}\left(c_{t}\right)$}
\State{Compute the program $W_{t,n}^{c}\leftarrow NSM\left(\xi_{t,n}^{p},\mathbf{M}_{p,n}\right)$}
\State{Compute the data interface $\xi_{t,n}\leftarrow c_{t}W_{t,n}^{c}$}
\State{Access/update data memory $r_{t,n}\leftarrow memory\left(\xi_{t,n},\mathbf{M}\right)$}\Comment{Write heads return $\emptyset$}
\EndFor
\State{$r_{t}\leftarrow\left[r_{t,1},...,r_{t,R}\right]$}
\EndFor
\end{algorithmic} 

\caption{Neural Universal Turing Machine\label{alg:Neural-Uinversal-Turing}}
\end{algorithm}

\subsection{On the Benefit of NSM to MANN: An Explanation from Multilevel Modeling\label{subsec:On-the-Benefit}}

Learning to access memory is a multi-dimensional regression problem.
Given the input $c_{t}$, which is derived from the state $h_{t}$
of the controller, the aim is to generate a correct interface vector
$\xi_{t}$ via optimising the interface network. Instead of searching
for one transformation that maps the whole space of $c_{t}$ to the
optimal space of $\xi_{t}$, NSM first partitions the space of $c_{t}$
into subspaces, then finds multiple transformations, each of which
covers subspace of $c_{t}$. The program interface network $P_{\mathscr{\mathcal{I}}}$
is a meta learner that routes $c_{t}$ to the appropriate transformation,
which then maps $c_{t}$ to the $\xi_{t}$ space. This is analogous
to multilevel regression in statistics \citet{andrew2006mr}. Many
practical studies have demonstrated that multilevel regression is
better than ordinary regression if the input is clustered \citet{cohen2014applied,huang2018multilevel}. 

RNNs have the capacity to learn to perform finite state computations
\citet{casey1996dynamics,tivno1998finite}. The states of a RNN must
be grouped into partitions representing the states of the generating
automation. As Turing Machine is finite state automata augmented with
an external memory tape, we expect MANN, if learnt well, will \foreignlanguage{australian}{organise}
its state space clustered in a way to reflect the states of the emulated
Turing Machine. That is, $h_{t}$ as well as $c_{t}$ should be clustered.
We realise that NSM helps NTM learn better clusterisation over this
space (see Appendix \ref{subsec:Clustering-on-The}), thereby improving
NTM's performances. 

\section{Applications}

\subsection{NTM Single Tasks\label{subsec:NTM-Single-Tasks}}

\begin{figure}
\begin{centering}
\includegraphics[width=1\linewidth]{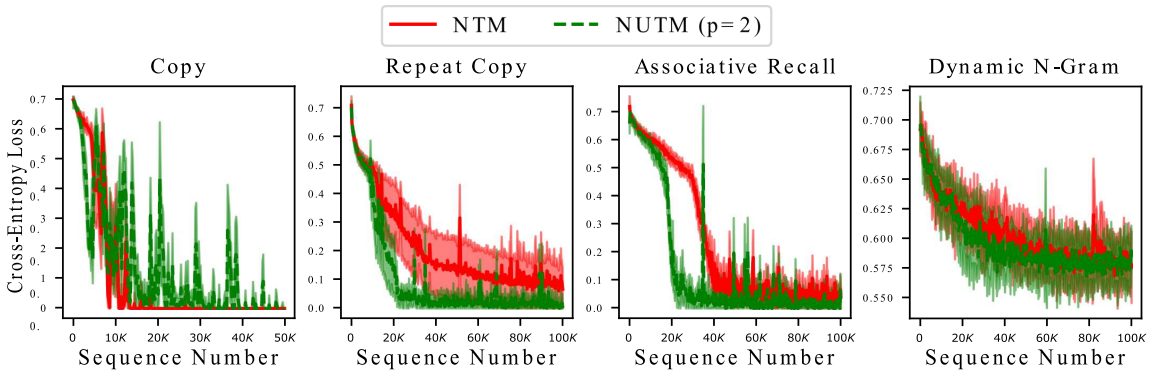}
\par\end{centering}
\caption{Learning curves on NTM tasks.\label{fig:Learning-curves-on}}
\end{figure}

\begin{table}
\begin{centering}
\begin{tabular}{c|cccccc}
\hline 
{\footnotesize{}Task} & {\footnotesize{}Copy} & {\footnotesize{}Repeat Copy} & {\footnotesize{}A. Recall} & {\footnotesize{}D. N-grams} & {\footnotesize{}Priority Sort} & {\footnotesize{}Long Copy}\tabularnewline
\hline 
{\footnotesize{}NTM} & \textbf{\footnotesize{}0.00} & {\footnotesize{}405.10} & {\footnotesize{}7.66} & {\footnotesize{}132.59} & {\footnotesize{}24.41} & {\footnotesize{}16.04}\tabularnewline
\hline 
{\footnotesize{}NUTM (p=2)} & \textbf{\footnotesize{}0.00} & \textbf{\footnotesize{}366.69} & \textbf{\footnotesize{}1.35} & \textbf{\footnotesize{}127.68} & \textbf{\footnotesize{}20.00} & \textbf{\footnotesize{}0.02}\tabularnewline
\hline 
\end{tabular}
\par\end{centering}
~

\caption{Generalisation performance of best models measured in average bit
error per sequence (lower is better). For each task, we pick a set
of 1,000 unseen sequences as test data. \label{tab:Generalisation-performance-of}}
\end{table}

In this section, we investigate the performance of NUTM on algorithmic
tasks introduced in Graves et al., (2014): Copy, Repeat Copy, Associative
Recall, Dynamic N-Gram and Priority Sort. Besides these five NTM tasks,
we add another task named Long Copy which doubles the length of training
sequences in the Copy task. In these tasks, the model will be fed
a sequence of input items and is required to infer a sequence of output
items. Each item is represented by a binary vector.

In the experiment, we compare two models: NTM\footnote{For algorithmic tasks, we choose NTM as the only baseline as NTM is
known to perform and generalise well on these tasks. If NSM can help
NTM in these tasks, it will probably help other MANNs as well.} and NUTM with two programs. Although the tasks are atomic, we argue
that there should be at least two memory manipulation schemes across
timesteps, one for encoding the inputs to the memory and another for
decoding the output from the memory. The two models are trained with
cross-entropy objective function under the same setting as in Graves
et al., (2014). For fair comparison, the controller hidden dimension
of NUTM is set smaller to make the total number of parameters of NUTM
equivalent to that of NTM (details in Appendix \ref{subsec:Details-on-Synthetic}).

We run each experiments five times and report the mean with error
bars of training losses for the first 4 tasks in Fig. \ref{fig:Learning-curves-on}.
Except for the Copy task, which is too simple, other tasks observe
convergence speed improvement of NUTM over that of NTM, thereby validating
the benefit of using two programs across timesteps even for the single
task setting. Full report is listed in Appendix \ref{subsec:Full-Learning-Curves}.
As NUTM requires fewer training samples to converge, it generalises
better to unseen sequences that are longer than training sequences.
Table \ref{tab:Generalisation-performance-of} reports the test results
of the best models chosen after five runs and confirms the outperformance
of NUTM over NTM for generalisation. 

To illustrate the program usage, we plot NUTM's program distributions
across timesteps for Repeat Copy and Priority Sort in Fig. \ref{fig:Memory-read-(a,c,d)/write}
(a) and (b), respectively. We observe two program usage patterns corresponding
to the encoding and decoding phases. For Repeat Copy, there is no
reading in encoding and thus, NUTM assigns the ``no-read'' strategy
mainly to the ``orange program''. In decoding, the sequential reading
is mostly done by the ``blue program'' with some contributions from
the ``orange program'' when resetting reading head. For Priority
Sort, while the encoding ``fitting writing'' (see Graves et al.,
(2014) for explanation on the strategy) is often executed by the ``blue
program'', the decoding writing is completely taken by the ``orange''
program (more visualisations in Appendix \ref{subsec:Program-Usage-Visualizations}). 

\begin{figure}
\begin{centering}
\includegraphics[width=0.95\linewidth]{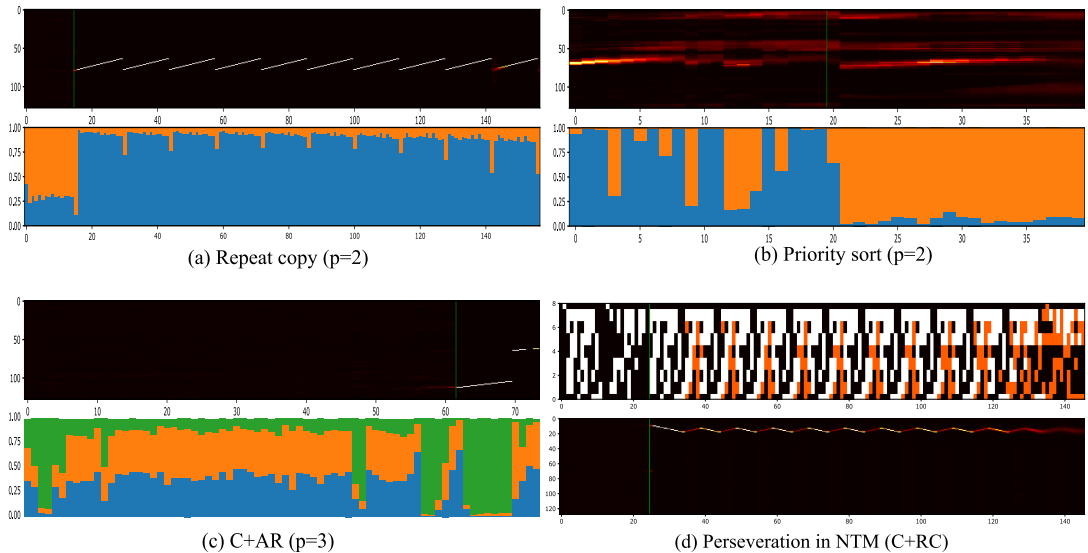}
\par\end{centering}
\caption{(a,b,c) visualises NUTM's executions in synthetic tasks: the upper
rows are memory read (left)/write (right) locations; the lower rows
are program distributions over timesteps. The green line indicates
the start of the decoding phase. (d) visualises perservation in NTM:
the upper row are input, output, predicted output with errors (orange
bits); the lower row is reading location. \label{fig:Memory-read-(a,c,d)/write}}
\end{figure}

\subsection{NTM Sequencing Tasks}

In neuroscience, sequencing tasks test the ability to remember a series
of tasks and switch tasks alternatively \citet{hal2019neur}. A dysfunctional
brain may have difficulty in changing from one task to the next and
get stuck in its preferred task (perseveration phenomenon). To analyse
this problem in neural algorithmic learners, we propose a new set
of experiments in which a task is generated by sequencing a list of
subtasks. The set of subtasks is chosen from the NTM single tasks
(excluding Dynamic N-grams for format discrepancy) and the order of
subtasks in the sequence is dictated by an indicator vector put at
the beginning of the sequence. Amongst possible combinations of subtasks,
we choose \{Copy, Repeat Copy\}(C+RC), \{Copy, Associative Recall\}
(C+AR), \{Copy, Priority Sort\} (C+PS) and all (C+RC+AC+PS)\footnote{We focus on the combinations that contain Copy as Copy is the only
task where NTM can reach NUTM's performance. If NTM fails in these
combinations, it will most likely fail in other combinations.}. The learner observes the order indicator following by a sequence
of subtasks' input items and is requested to consecutively produce
the output items of each subtasks. 

As shown in Fig. \ref{fig:Learning-curves-on-1}, some tasks such
as Copy and Associative Recall, easy to solve if trained separately,
become unsolvable by NTM when sequenced together. One reason for NTM's
poor performance is its failure to change the memory access behavior
(perseveration). For example, NTM keeps following repeat copy reading
strategy for all timesteps in C+RC task (Fig. \ref{fig:Memory-read-(a,c,d)/write}
(d)). Meanwhile, NUTM can learn to change program distribution when
a new subtask appears in the sequence and thus ensure different memory
accessing strategy per subtask (Fig. \ref{fig:Memory-read-(a,c,d)/write}
(c)).

\begin{figure}
\begin{centering}
\includegraphics[width=1\linewidth]{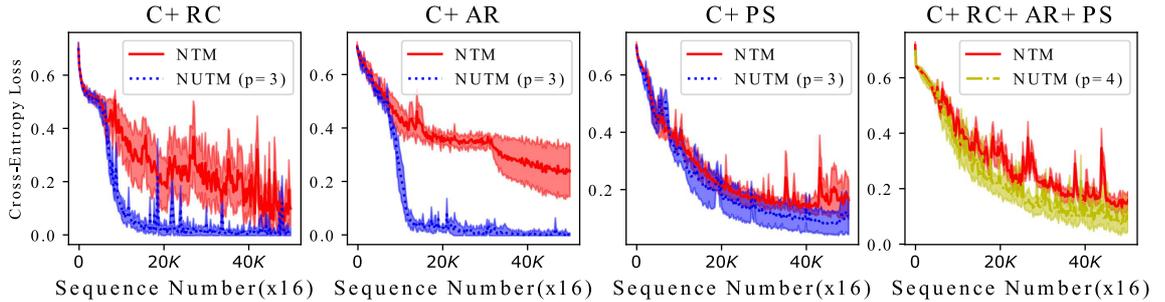}
\par\end{centering}
\caption{Learning curves on sequencing NTM tasks.\label{fig:Learning-curves-on-1}}
\end{figure}

\subsection{Continual Procedure Learning}

In continual learning, catastrophic forgetting happens when a neural
network quickly forgets previously acquired skills upon learning new
skills \citet{french1999catastrophic}. In this section, we prove
the versatility of NSM by showing that a naive application of NSM
without much modification can help NTM to mitigate catastrophic forgetting.
We design an experiment similar to the Split MNIST \citet{zenke2017continual}
to investigate whether NSM can improve NTM's performance. In our experiment,
we let the models see the training data from the 4 tasks: Copy (C),
Repeat Copy (RC), Associative Recall (AR) and Priority Sort (PS),
consecutively in this order. Each task is trained in 20,000 iterations
with batch size 16 (see Appendix \ref{subsec:Details-on-Synthetic}
for task details). To encourage NUTM to spend exactly one program
per task while freezing others, we force ``hard'' attention over
the programs by replacing the softmax function in Eq. \ref{eq:pt}
with the Gumbel-softmax \citet{jang2016categorical}. Also, to ignore
catastrophic forgetting in the state network, we use Feedforward controllers
in the two baselines.

After finishing one task, we evaluate the bit accuracy $-$measured
by $1-$(bit error per sequence/total bits per sequence)$-$over 4
tasks. As shown in Fig. \ref{fig:Mean-bit-accuracy}, NUTM outperforms
NTM by a moderate margin (10-40\% per task). Although NUTM also experiences
catastrophic forgetting, it somehow preserves some memories of previous
tasks. Especially, NUTM keeps performing perfectly on Copy even after
it learns Repeat Copy. For other dissimilar task transitions, the
performance drops significantly, which requires more effort to bring
NSM to continual learning. 

\begin{figure}
\centering{}\includegraphics[width=0.95\linewidth]{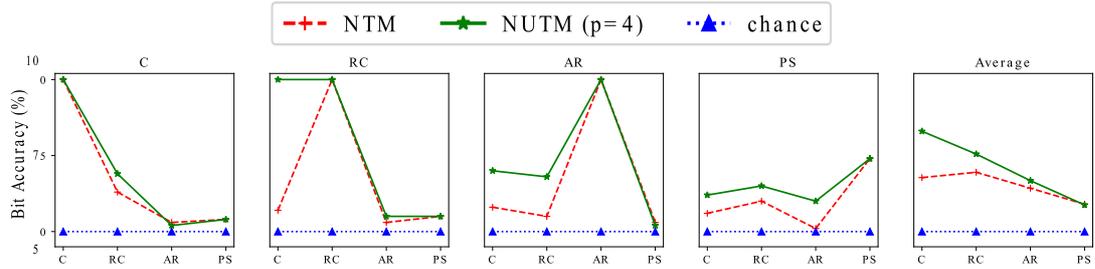}\caption{Mean bit accuracy for the continual algorithmic tasks. Each of the
first four panels show bit accuracy on four tasks after finishing
a task. The rightmost shows the average accuracy.\label{fig:Mean-bit-accuracy}}
\end{figure}

\subsection{Few-Shot Learning}

Few-shot learning or meta learning tests the ability to rapidly adapt
within a task while gradually capturing the way the task structure
varies \citet{thrun1998lifelong}. By storing sample-class bindings,
MANNs are capable of classifying new data after seeing only few samples
 \citet{santoro2016meta}. As NSM gives flexible memory controls,
it makes MANN more adaptive to changes and thus perform better in
this setting. To verify that, we apply NSM to the LRUA memory and
follow the experiments introduced in Santoro et al., (2016), using
the Omniglot dataset to measure few-shot classification accuracy.
The dataset includes images of 1623 characters, with 20 examples of
each character. During training, a sequence (episode) of images are
randomly selected from $C$ classes of characters in the training
set (1200 characters), where $C=5,10$ corresponding to sequence length
of 50, 75, respectively. Each class is assigned a random label which
shuffles between episodes and is revealed to the models after each
prediction. After 100,000 episodes of training, the models are tested
with unseen images from the testing set (423 characters). The two
baselines are MANN and NUTM (both use LRUA core). For NUTM, we only
tune $p$ and pick the best values: $p=2$ and $p=3$ for 5 classes
and 10 classes, respectively. 

Table \ref{tab:meta} reports the classification accuracy when the
models see characters for the second, third and fifth time. NUTM generally
achieves better results than MANN, especially when the number of classes
increases, demanding more adaptation within an episode. For the persistent
memory mode, which demands fast forgetting old experiences in previous
episodes, NUTM outperforms MANN significantly (10-20\%). 

\begin{table}
\begin{centering}
\begin{tabular}{lccccccc}
\hline 
\multirow{2}{*}{Model} & Persistent & \multicolumn{3}{c}{5 classes} & \multicolumn{3}{c}{10 classes}\tabularnewline
\cline{3-8} \cline{4-8} \cline{5-8} \cline{6-8} \cline{7-8} \cline{8-8} 
 & memory\tablefootnote{If the memory is not artificially erased between episodes, it is called
persistent. This mode is hard for the case of 5 classes \citet{santoro2016meta} } & $2^{nd}$ & $3^{rd}$ & $5^{th}$ & $2^{nd}$ & $3^{rd}$ & $5^{th}$\tabularnewline
\hline 
MANN (LRUA){*} & No & 82.8 & 91.0 & 94.9 & - & - & -\tabularnewline
MANN (LRUA) & No & 82.3 & 88.7 & 92.3 & 52.7 & 60.6 & 64.7\tabularnewline
NUTM (LRUA) & No & \textbf{85.7} & \textbf{91.3} & \textbf{95.5} & \textbf{68.0} & \textbf{78.1} & \textbf{82.8}\tabularnewline
\hline 
MANN (LRUA) & Yes & 66.2 & 73.4 & 81.0 & 51.3 & 59.2 & 63.3\tabularnewline
NUTM (LRUA) & Yes & \textbf{77.8} & \textbf{85.8} & \textbf{89.8} & \textbf{69.0} & \textbf{77.9} & \textbf{82.7}\tabularnewline
\hline 
\end{tabular}
\par\end{centering}
~

\caption{Test-set classification accuracy (\%) on the Omniglot dataset after
100,000 episodes of training. {*} denotes available results from Santoro
et al., (2016). See Appendix \ref{subsec:Details-on-Few-shot} for
more details. \label{tab:meta}}
\end{table}

\subsection{Text Question Answering }

Reading comprehension typically involves an iterative process of multiple
actions such as reading the story, reading the question, outputting
the answers and other implicit reasoning steps \citet{weston2015towards}.
We apply NUTM to the question answering domain by replacing the NTM
core with DNC \citet{graves2016hybrid}. Compared to NTM's sequential
addressing, dynamic memory addressing in DNC is more powerful and
thus suitable for NSM integration to solve non-algorithmic problems
such as question answering. Following previous works of DNC, we use
bAbI dataset \citet{weston2015towards} to measure the performance
of the NUTM with DNC core (two variants $p=2$ and $p=4$). In the
dataset, each story is followed by a series of questions and the network
reads all word by word, then predicts the answers. Although synthetically
generated, bAbI is a good benchmark that tests 20 aspects of natural
language reasoning including complex skills such as induction, counting
and path finding, 

We found that NUTM with 4 programs, after 50 epochs jointly trained
on all 20 question types, can achieve a mean test error rate of 3.3\%
and manages to solve 19/20 tasks (a task is considered solved if its
error <5\%). The mean and s.d. across 10 runs are also compared with
other results reported by recent works (see Table \ref{tab:Mean-bAbI-error}).
Excluding baselines under different setups, our result is the best
reported mean result on bAbI that we are aware of. More details are
described in Appendix \ref{subsec:Details-on-bAbI}. 

\begin{table}
\begin{centering}
\begin{tabular}{lc}
\hline 
Model & Error\tabularnewline
\hline 
DNC\citet{graves2016hybrid} & 16.7 \textpm{} 7.6 \tabularnewline
SDNC\citet{rae2016scaling} & 6.4 \textpm{} 2.5 \tabularnewline
ADNC\citet{W18-2606} & 6.3 \textpm{} 2.7 \tabularnewline
DNC-MD\citet{csordas2018improving} & 9.5 \textpm{} 1.6\tabularnewline
\hline 
NUTM (DNC core) $p=2$ & 7.5 \textpm{} 1.6\tabularnewline
NUTM (DNC core) $p=4$ & \textbf{5.6 \textpm{} 1.9}\tabularnewline
\hline 
\end{tabular}
\par\end{centering}
~

\caption{Mean and s.d. for bAbI error ($\%$).\label{tab:Mean-bAbI-error}}
\end{table}

\section{Closing Remarks}

This chapter introduces the Neural Stored-program Memory (NSM), a
new type of external memory for neural networks. The memory, which
takes inspirations from the stored-program memory in computer architecture,
gives memory-augmented neural networks (MANNs) flexibility to change
their control programs through time while maintaining differentiability.
The mechanism simulates modern computer behavior, potential making
MANNs truly neural computers. Our experiments demonstrated that when
coupled with our model, the Neural Turing Machine learns algorithms
better and adapts faster to new tasks at both sequence and sample
levels. When used in few-shot learning, our method helps MANN as well.
We also applied the NSM to the Differentiable Neural Computer and
observed a significant improvement, reaching the state-of-the-arts
in the bAbI task.

\chapter{Conclusions \label{chap:Conclusions}}

\section{Summary}

In this thesis, we have presented several types of memory for neural
networks in general and Recurrent Neural Networks (RNNs) in particular.
We emphasise the notion of the memory as an external storage for RNNs
in which, RNNs can learn to read from and write to the external memory
to support their working memory (Chapter \ref{chap:Tax}). We reviewed
advancements to solve the difficulties in training RNNs such as gating
and attention mechanisms and especially we focused on slot-based MANNs,
which is based on sparse distributed memory model\textendash the main
theme of new models we propose in the thesis (Chapter \ref{chap:MANN}).
Our main contributions are four-fold. First, we extended MANNs as
a multi-process multi-view model to handle complex problems such as
sequence-to-sequence mapping and multi-view sequential learning (Chapter
\ref{chap:multiple}). We further extended MANNs as a model for discrete
sequence generation on conversational data where variability and coherence
are required (Chapter \ref{chap:Variational-Memory-Encoder}). We
also shed light into memory operations to estimate MANN capacity and
proposed new writing schemes to maximise the capacity (Chapter \ref{chap:Optimal-Writing-in}).
Finally, we introduced a new class of MANNs that follows stored-program
memory principle and can switch the controller's programs to execute
different functionalities through time. 

In Chapter \ref{chap:multiple}, we presented our first contributions:
Dual Controller Write-Protected Memory Augmented Neural Network (DCw-MANN),
an extension of MANN to model sequence to sequence mapping, and Dual
Memory Neural Computer (DMNC) that can capture the correlations between
two views by exploiting two external memory units. We demonstrated
these models on predicting next disease stages and recommending medicines
in healthcare. The results are competitive against contemporary state-of-the-arts.

Chapter \ref{chap:Variational-Memory-Encoder} presented Variational
Memory Encoder-Decoder, a novel memory-augmented generation framework.
Our external memory not only holds long-term context but also constructs
mixture model distribution generating the latent variables. We derived
theoretical analysis to validate our training protocol using Stochastic
Gradient Variational Bayes framework by minimising variational approximation
of KL divergence. We evaluated our model on two open-domain and two
closed-domain conversational datasets and outperformed other baselines
by a large margin. 

Chapter \ref{chap:Optimal-Writing-in} focused more on theoretical
analysis of a meaningful measurement on MANN's memory capacity. We
proposed solution dubbed Uniform Writing is optimal in terms of maximising
the memory capacity. To encourage forgetting when necessary, we introduce
modifications to the original solution, resulting in a new solution
termed Cached Uniform Writing. This method aims to balance between
memorising and forgetting via allowing overwriting mechanism. We conducted
a set of diverse experiments on six ultra-long sequential learning
problems given a limited number of memory slots to demonstrate the
superiority of our methods.

Chapter \ref{chap:Neural-Stored-program-Memory} addressed the simulation
capacity of MANNs. To make current MANNs truly simulate modern computers,
a design of Neural Stored-program Memory (NSM) was proposed to implement
stored-program principle, which resulted in new MANN architectures
that materialise Universal Turing Machines. 

\section{Future Directions}

There are possible extensions of the work proposed in this thesis
for further investigations. First, in Chapter \ref{chap:multiple},
we can extend the DCw-MANN to handle multiple healthcare tasks by
developing new capabilities for medical question answering, which
can be cast to a sequence-to-sequence mapping problem. Moreover, DMNC
can be generalised to multi-input multi-output settings and extending
the range of applications to bigger problems such as multi-media and
multi-agent systems. For VMED (Chapter \ref{chap:Variational-Memory-Encoder}),
future explorations may involve implementing a dynamic number of modes
that enable learning of the optimal $K$ for each timestep. Another
aspect would be multi-person dialog setting, where our memory as mixture
model may be useful to capture more complex modes of speaking in the
dialog. Future investigations for Chapter \ref{chap:Optimal-Writing-in}
will focus on tightening the measurement bound. Last but not least,
we emphasise that apart from MANNs, other neural networks can also
reap benefits from the NSM proposed in Chapter \ref{chap:Neural-Stored-program-Memory}.
More effort will be spent on integrating NSM into different neural
networks to make them more flexible and modular.

\selectlanguage{australian}%
\newpage{}

\chapter*{Appendix}

\selectlanguage{english}%
\markboth{}{Appendix}

\selectlanguage{australian}%
\addcontentsline{toc}{chapter}{Appendix}

\counterwithin{thm}{section} 

\counterwithin{lem}{section} 

\counterwithin{table}{section} 

\counterwithin{figure}{section} 

\selectlanguage{english}%
\renewcommand\thesection{\Alph{section}}

\selectlanguage{australian}%
\selectlanguage{english}%

\section{Supplementary for Chapter 5}

\subsection{Proof of Theorem \ref{The-KL-divergence}\label{subsec:Proof-of-theorem-KL}}
\begin{proof}
$D_{var}$$\left(f\parallel g\right)$ \citet{hershey2007approximating}
is an approximation of $KL$ divergence between two Mixture of Gaussians
(MoG), which is defined as the following,

\begin{alignat}{1}
D_{var}\left(f\parallel g\right)= & \underset{j}{\sum}\pi_{j}^{f}\log\frac{\underset{j'}{\sum}\pi_{j'}^{f}e^{-KL\left(f_{j}\parallel f_{j'}\right)}}{\underset{i}{\sum}\pi_{i}^{g}e^{-KL\left(f_{j}\parallel g_{i}\right)}}\label{eq:ori_dvar}
\end{alignat}
In our case, $f$ is a Gaussian, a special case of MoG where the number
of mode equals one. Then, Eq. (\ref{eq:ori_dvar}) becomes

\[
D_{var}\left(f\parallel g\right)=\log\frac{1}{\stackrel[i=1]{K}{\sum}\pi_{i}^{g}e^{-KL\left(f\parallel g^{i}\right)}}=-\log\stackrel[i=1]{K}{\sum}\pi^{i}e^{-KL\left(f\parallel g^{i}\right)}
\]
Let define the log-likelihood $L_{f}\left(g\right)=E_{f\left(x\right)}\left[\log g\left(x\right)\right]$,
the lower bound for $L_{f}\left(g\right)$ can be also be derived,
using variational parameters as follows,

\begin{alignat*}{1}
L_{f}\left(g\right)= & E_{f}\left[\log\left(\stackrel[i=1]{K}{\sum}\pi^{i}g^{i}\left(x\right)\right)\right]\\
= & \stackrel[-\infty]{+\infty}{\int}f\left(x\right)\log\left(\stackrel[i=1]{K}{\sum}\beta^{i}\pi^{i}\frac{g^{i}\left(x\right)}{\beta^{i}}\right)dx\\
\geq & \stackrel[i=1]{K}{\sum}\beta^{i}\stackrel[-\infty]{+\infty}{\int}f\left(x\right)\log\left(\pi^{i}\frac{g^{i}\left(x\right)}{\beta^{i}}\right)dx
\end{alignat*}
where $\beta^{i}\geq0$ and $\stackrel[i=1]{K}{\sum}\beta^{i}=1$.
According to Durrieu et al., (2012), maximising the RHS of the above
inequality with respect to $\beta^{i}$ provides a lower bound for
$L_{f}\left(g\right)$ \citet{durrieu2012lower},

\begin{alignat*}{1}
L_{f}\left(g\right)\geq & \log\stackrel[i=1]{K}{\sum}\pi^{i}e^{-KL\left(f\parallel g^{i}\right)}+L_{f}\left(f\right)\\
= & -D_{var}+L_{f}\left(f\right)\\
\Rightarrow D_{var}\geq & L_{f}\left(f\right)-L_{f}\left(g\right)\\
= & KL\left(f\parallel g\right)
\end{alignat*}
Therefore, the $KL$ divergence has an upper bound: $D_{var}$.
\end{proof}

\subsection{Derivation of the Upper Bound on the Total Timestep-Wise $KL$ Divergence}
\begin{lem}
Chebyshev's sum inequality: \label{lem:Chebyshev's-sum-inequality:}\\
if 

\[
a_{1}\geq a_{2}\geq...\geq a_{n}
\]
and
\end{lem}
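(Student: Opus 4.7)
The plan is to prove Chebyshev's sum inequality in its full form: assuming $a_1 \geq a_2 \geq \cdots \geq a_n$ and $b_1 \geq b_2 \geq \cdots \geq b_n$, one has
\[
n \sum_{i=1}^{n} a_i b_i \;\geq\; \Bigl(\sum_{i=1}^{n} a_i\Bigr)\Bigl(\sum_{i=1}^{n} b_i\Bigr),
\]
with the reverse inequality holding whenever the two sequences are oppositely ordered. My strategy is the standard algebraic identity route: exhibit the gap between the two sides as a double sum of products, each of which is manifestly non-negative under the ordering hypothesis. This keeps the proof self-contained and avoids appealing to the rearrangement inequality as a black box.

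First I would expand the symmetric double sum
\[
S \;=\; \sum_{i=1}^{n}\sum_{j=1}^{n}(a_i - a_j)(b_i - b_j).
\]
Multiplying out the four cross terms and collecting gives $S = 2n\sum_i a_i b_i - 2\bigl(\sum_i a_i\bigr)\bigl(\sum_j b_j\bigr)$, so the claimed inequality reduces exactly to $S \geq 0$. This algebraic step is routine and I would only sketch the expansion rather than grind through it.

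The second step is to verify $S \geq 0$ termwise. For any pair with $i < j$ the ordering hypothesis gives $a_i - a_j \geq 0$ and $b_i - b_j \geq 0$, so the product is non-negative; for $i > j$ both factors flip sign and the product is again non-negative; the diagonal $i=j$ contributes zero. Hence every summand in $S$ is non-negative, which yields $S \geq 0$ and completes the proof. The oppositely-ordered case is obtained by applying the same argument to $(a_i)$ and $(-b_i)$, flipping the inequality.

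I do not anticipate any real obstacle: the only place where care is needed is bookkeeping in the algebraic expansion of $S$, since it is easy to drop a factor of two. If the algebraic identity felt unsatisfying I would fall back on a rearrangement-style argument, summing $\sum_i a_i b_{\sigma(i)}$ over the $n$ cyclic shifts $\sigma$ and using that each such sum is bounded above by $\sum_i a_i b_i$ when the sequences are similarly ordered; but the identity-based proof is shorter and more transparent, so I would present that as the main argument.
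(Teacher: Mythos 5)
Your proof is correct and follows essentially the same route as the paper: both expand the symmetric double sum $S=\sum_{i,j}(a_i-a_j)(b_i-b_j)$, observe that each summand is non-negative because similarly ordered sequences make the two factors share a sign, and read off the inequality from the identity $S = 2n\sum_i a_i b_i - 2\bigl(\sum_i a_i\bigr)\bigl(\sum_i b_i\bigr)$. No gaps to report.
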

\[
b_{1}\geq b_{2}\geq...\geq b_{n}
\]
then\\
\[
\frac{1}{n}\stackrel[k=1]{n}{\sum}a_{k}b_{k}\geq\left(\frac{1}{n}\stackrel[k=1]{n}{\sum}a_{k}\right)\left(\frac{1}{n}\stackrel[k=1]{n}{\sum}b_{k}\right)
\]

\begin{proof}
Consider the sum

\[
S=\stackrel[j=1]{n}{\sum}\stackrel[k=1]{n}{\sum}\left(a_{j}-a_{k}\right)\left(b_{j}-b_{k}\right)
\]
The two sequences are non-increasing, therefore $a_{j}-a_{k}$ and
$b_{j}-b_{k}$ have the same sign for any $j,k$. Hence $S\ge0$.
Opening the brackets, we deduce

\[
{\displaystyle 0\leq2n\sum_{j=1}^{n}a_{j}b_{j}-2\sum_{j=1}^{n}a_{j}\,\sum_{k=1}^{n}b_{k}}
\]
whence

\[
{\displaystyle {\frac{1}{n}}\sum_{j=1}^{n}a_{j}b_{j}\geq\left({\frac{1}{n}}\sum_{j=1}^{n}a_{j}\right)\,\left({\frac{1}{n}}\sum_{k=1}^{n}b_{k}\right)}
\]
\end{proof}
In our problem, $a_{i}=f_{i}\left(x\right)$ and $b_{i}=\log\left[g_{i}\left(x\right)\right]$,
$i=\overline{1,T}$. Under the assumption that at each step, thanks
to minimising $D_{var}$, the approximation between the MoG and the
Gaussian is adequate to preserve the order of these values, that is,
if $f_{i}\left(x\right)\leq f_{j}\left(x\right)$, then $g_{i}\left(x\right)\leq g_{j}\left(x\right)$
and $\log\left[g_{i}\left(x\right)\right]\leq\log\left[g_{j}\left(x\right)\right]$.
Without loss of generality, we hypothesise that $f_{1}\left(x\right)\leq f_{2}\left(x\right)\leq...\leq f_{T}\left(x\right)$,
then we have $\log\left[g_{1}\left(x\right)\right]\leq\log\left[g_{2}\left(x\right)\right]\leq...\leq\log\left[g_{T}\left(x\right)\right]$.
Thus, applying Lemma \ref{lem:Chebyshev's-sum-inequality:}, we have

\begin{alignat*}{1}
\frac{1}{T}\stackrel[t=1]{T}{\sum}f_{t}\left(x\right)\log\left[g_{t}\left(x\right)\right]dx\geq & \frac{1}{T}\stackrel[t=1]{T}{\sum}f_{t}\left(x\right)\frac{1}{T}\stackrel[t=1]{T}{\sum}\log\left[g_{t}\left(x\right)\right]dx\\
\Rightarrow\stackrel[-\infty]{+\infty}{\int}\stackrel[t=1]{T}{\sum}f_{t}\left(x\right)\log\left[g_{t}\left(x\right)\right]dx\geq & \stackrel[-\infty]{+\infty}{\int}\frac{1}{T}\stackrel[t=1]{T}{\sum}f_{t}\left(x\right)\stackrel[t=1]{T}{\sum}\log\left[g_{t}\left(x\right)\right]dx\\
\Rightarrow\stackrel[-\infty]{+\infty}{\int}\stackrel[t=1]{T}{\sum}f_{t}\left(x\right)\log\left[g_{t}\left(x\right)\right]dx\geq & \stackrel[-\infty]{+\infty}{\int}\frac{1}{T}\stackrel[t=1]{T}{\sum}f_{t}\left(x\right)\log\left[\stackrel[t=1]{T}{\prod}g_{t}\left(x\right)\right]dx
\end{alignat*}
Thus, the upper bound on the total timestep-wise $KL$ divergence
reads

\[
\stackrel[-\infty]{+\infty}{\int}\stackrel[t=1]{T}{\sum}f_{t}\left(x\right)\log\left[f_{t}\left(x\right)\right]dx\quad-\stackrel[-\infty]{+\infty}{\int}\frac{1}{T}\stackrel[t=1]{T}{\sum}f_{t}\left(x\right)\log\left[\stackrel[t=1]{T}{\prod}g_{t}\left(x\right)\right]dx
\]

\subsection{Proof $\stackrel[t=1]{T}{\prod}g_{t}\left(x\right)=\stackrel[t=1]{T}{\prod}\stackrel[i=1]{K}{\sum}\pi_{t}^{i}g_{t}^{i}\left(x\right)$
Is a Scaled MoG}
\begin{lem}
Product of two Gaussians is a scaled Gaussian. \label{lem:Product-of-two-1}
\end{lem}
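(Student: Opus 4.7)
The plan is to prove this lemma by direct computation: write out the product of the two Gaussian densities, collect the exponent as a single quadratic in $x$, and complete the square to recognise the result as a Gaussian in $x$ up to an $x$-independent multiplicative constant. First I would write the product $\mathcal{N}(x;\mu_1,\sigma_1^2)\mathcal{N}(x;\mu_2,\sigma_2^2) = \frac{1}{2\pi\sigma_1\sigma_2}\exp\!\left(-\frac{(x-\mu_1)^2}{2\sigma_1^2}-\frac{(x-\mu_2)^2}{2\sigma_2^2}\right)$ and gather the exponent into the canonical quadratic form $-\tfrac{1}{2}(ax^2-2bx+c)$, where $a = \sigma_1^{-2}+\sigma_2^{-2}$, $b = \mu_1\sigma_1^{-2}+\mu_2\sigma_2^{-2}$, and $c = \mu_1^2\sigma_1^{-2}+\mu_2^2\sigma_2^{-2}$.

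Next I would complete the square, rewriting $ax^2-2bx+c = a(x-b/a)^2 + (c-b^2/a)$. The coefficient $a$ identifies the new precision, so the new variance is $\sigma^2 = \sigma_1^2\sigma_2^2/(\sigma_1^2+\sigma_2^2)$, and $b/a$ identifies the new mean $\mu = \sigma^2(\mu_1/\sigma_1^2+\mu_2/\sigma_2^2)$. Multiplying and dividing by $\sqrt{2\pi}\sigma$ extracts a clean Gaussian factor $\mathcal{N}(x;\mu,\sigma^2)$ in $x$, leaving only an $x$-independent scalar pre-factor built from $a,b,c$ and the original normalising constants.

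Finally I would simplify the residual $c-b^2/a$ by clearing denominators over $\sigma_1^2\sigma_2^2$, which collapses the algebra into $(\mu_1-\mu_2)^2/(\sigma_1^2+\sigma_2^2)$; combined with the leftover prefactor $1/(2\pi\sigma_1\sigma_2)\cdot\sqrt{2\pi}\sigma$, this rewrites the scaling constant compactly as $Z = \mathcal{N}(\mu_1;\mu_2,\sigma_1^2+\sigma_2^2)$. The lemma then reads $\mathcal{N}(x;\mu_1,\sigma_1^2)\mathcal{N}(x;\mu_2,\sigma_2^2) = Z\cdot\mathcal{N}(x;\mu,\sigma^2)$. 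None of the steps is conceptually hard; the only bookkeeping obstacle is the algebraic simplification of $c-b^2/a$ together with the ratio of normalising constants into the compact Gaussian form for $Z$, which I would do by factoring out $\sigma_1^2\sigma_2^2$ throughout. Once this base case is established, the extension used in the main argument follows inductively: a product of $T$ Gaussians is a scaled Gaussian by repeated application, and distributing $\prod_{t=1}^{T}\sum_{i=1}^{K}\pi_t^i g_t^i(x)$ over the product yields a finite sum of $K^T$ products of $T$ Gaussians, each of which is a scaled Gaussian, so the whole expression $\prod_{t=1}^{T}g_t(x)$ is a scaled mixture of Gaussians as claimed.
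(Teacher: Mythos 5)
Your completing-the-square derivation is correct and takes essentially the same route as the paper, which simply states the resulting mean, covariance and scaling constant $c_{c}=\mathcal{N}_{m_{1}}\left(m_{2},\Sigma_{1}+\Sigma_{2}\right)$ without showing the algebra; your scalar computation supplies exactly that algebra, and it lifts verbatim to the paper's multivariate statement by replacing $\sigma^{-2}$ with $\Sigma^{-1}$. The inductive extension to products of mixtures that you sketch at the end is likewise precisely how the paper proceeds in the following lemma, so nothing is missing.
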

\begin{proof}
Let $\mathcal{N}_{x}\left(\mu,\Sigma\right)$ denote a density of
$x$, then

\[
\mathcal{N}_{x}\left(\mu_{1},\Sigma_{1}\right)\cdot\mathcal{N}_{x}\left(\mu_{2},\Sigma_{2}\right)=c_{c}\mathcal{N}_{x}\left(\mu_{c},\Sigma_{c}\right)
\]
where

\begin{alignat*}{1}
c_{c}= & \frac{1}{\sqrt{\det\left(2\pi\left(\Sigma_{1}+\Sigma_{2}\right)\right)}}\exp\left(-\frac{1}{2}\left(m_{1}-m_{2}\right)^{T}\left(\Sigma_{1}+\Sigma_{2}\right)^{-1}\left(m_{1}-m_{2}\right)\right)\\
m_{c}= & \left(\Sigma_{1}^{-1}+\Sigma_{2}^{-1}\right)^{-1}\left(\Sigma_{1}^{-1}m_{1}+\Sigma_{2}^{-1}m_{2}\right)\\
\Sigma_{c}= & \left(\Sigma_{1}^{-1}+\Sigma_{2}^{-1}\right)
\end{alignat*}
\end{proof}
\begin{lem}
Product of two MoGs is proportional to an MoG. \label{lem:Product-of-two}
\end{lem}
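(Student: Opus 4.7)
The plan is to reduce this to the previous lemma (Lemma on products of two Gaussians) by distributing the product over the two finite sums. Let the two MoGs be $p_1(x) = \sum_{i=1}^{K_1} \pi_i^1 \mathcal{N}_x(\mu_i^1, \Sigma_i^1)$ and $p_2(x) = \sum_{j=1}^{K_2} \pi_j^2 \mathcal{N}_x(\mu_j^2, \Sigma_j^2)$. First I would expand the product as a double sum $p_1(x)p_2(x) = \sum_{i,j} \pi_i^1 \pi_j^2 \mathcal{N}_x(\mu_i^1,\Sigma_i^1)\mathcal{N}_x(\mu_j^2,\Sigma_j^2)$, which is valid because both sums are finite.

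Next, I would apply the previous lemma term by term: each pairwise product $\mathcal{N}_x(\mu_i^1,\Sigma_i^1)\mathcal{N}_x(\mu_j^2,\Sigma_j^2)$ equals $c_{ij}\mathcal{N}_x(\mu_{ij},\Sigma_{ij})$ for explicit $c_{ij}, \mu_{ij}, \Sigma_{ij}$ given by the Gaussian-product formulas already stated in the excerpt. Substituting back gives $p_1(x)p_2(x) = \sum_{i,j} \alpha_{ij}\,\mathcal{N}_x(\mu_{ij},\Sigma_{ij})$ with nonnegative coefficients $\alpha_{ij} = \pi_i^1\pi_j^2 c_{ij}$.

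Finally I would normalise: let $Z = \sum_{i,j}\alpha_{ij}$, which is a finite positive constant independent of $x$. Then $p_1(x)p_2(x) = Z \sum_{i,j}(\alpha_{ij}/Z)\mathcal{N}_x(\mu_{ij},\Sigma_{ij})$. The bracketed expression has nonnegative weights summing to one, hence it is a bona fide MoG with $K_1 K_2$ modes, and the product of the two MoGs is exactly $Z$ times this MoG. Iterating this pairwise argument across $T$ factors shows by induction that $\prod_{t=1}^{T} g_t(x)$ is proportional to an MoG with at most $\prod_t K_t$ components, as required in the derivation of the upper bound on the total timestep-wise KL divergence.

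There is no genuine obstacle in this proof; the entire content is carried by the Gaussian-product lemma already established, and the only care needed is bookkeeping to ensure that $Z$ is finite and $x$-independent so that the remaining sum is a valid mixture. The one subtlety worth flagging explicitly is that the $c_{ij}$ may be very small but are always strictly positive, so $Z>0$ and the normalisation step is well defined.
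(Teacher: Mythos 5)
Your proof is correct and follows essentially the same route as the paper's: distribute the product over the two finite sums, apply the pairwise Gaussian-product lemma to each of the $K_1 K_2$ terms, and factor out the constant $C=\sum_{i,j}\pi_{1,i}\pi_{2,j}c_{ij}$ to exhibit a scaled MoG, with the $T$-factor statement following by induction. The only addition you make beyond the paper is the explicit remark that the $c_{ij}$ are strictly positive so the normalisation is well defined, which is a harmless refinement.
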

\begin{proof}
Let $g_{1}\left(x\right)=\stackrel[i=1]{K_{1}}{\sum}\pi_{1,i}\mathcal{N}_{x}\left(\mu_{1,i},\Sigma_{1,i}\right)$
and $g_{2}\left(x\right)=\stackrel[j=1]{K_{2}}{\sum}\pi_{2,j}\mathcal{N}_{x}\left(\mu_{2,j},\Sigma_{2,j}\right)$
are two Mixtures of Gaussians. We have

\begin{align}
g_{1}\left(x\right)\cdot g_{2}\left(x\right)= & \stackrel[i=1]{K_{1}}{\sum}\pi_{1,i}\mathcal{N}_{x}\left(\mu_{1,i},\Sigma_{1,i}\right)\cdot\stackrel[j=1]{K_{2}}{\sum}\pi_{2,j}\mathcal{N}_{x}\left(\mu_{2,j},\Sigma_{2,j}\right)\nonumber \\
= & \stackrel[i=1]{K_{1}}{\sum}\stackrel[,j=1]{K_{2}}{\sum}\pi_{1,i}\pi_{2,j}\mathcal{N}_{x}\left(\mu_{1,i},\Sigma_{1,i}\right)\cdot\mathcal{N}_{x}\left(\mu_{2,j},\Sigma_{2,j}\right)\label{eq:big_mix}
\end{align}
By applying Lemma \ref{lem:Product-of-two-1} to Eq. (\ref{eq:big_mix}),
we have

\begin{alignat}{1}
g_{1}\left(x\right)\cdot g_{2}\left(x\right)= & \stackrel[i=1]{K_{1}}{\sum}\stackrel[,j=1]{K_{2}}{\sum}\pi_{1,i}\pi_{2,j}c_{ij}\mathcal{N}_{x}\left(\mu_{ij},\Sigma_{ij}\right)\nonumber \\
= & \:C\stackrel[i=1]{K_{1}}{\sum}\stackrel[,j=1]{K_{2}}{\sum}\frac{\pi_{1,i}\pi_{2,j}c_{ij}}{C}\mathcal{N}_{x}\left(\mu_{ij},\Sigma_{ij}\right)\label{eq:mixprop}
\end{alignat}
where $C=\stackrel[i=1]{K_{1}}{\sum}\stackrel[,j=1]{K_{2}}{\sum}\pi_{1,i}\pi_{2,j}c_{ij}$.
Clearly, Eq. (\ref{eq:mixprop}) is proportional to an MoG with $K_{1}\cdot K_{2}$
modes 
\end{proof}
\begin{thm}
$\stackrel[t=1]{T}{\prod}g_{t}\left(x\right)=\stackrel[t=1]{T}{\prod}\stackrel[i=1]{K}{\sum}\pi_{t}^{i}g_{t}^{i}\left(x\right)$
is a scaled MoG.
\end{thm}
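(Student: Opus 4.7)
The plan is a straightforward induction on $T$, with Lemma~\ref{lem:Product-of-two} (product of two MoGs is proportional to an MoG) doing the real work. The base case $T=1$ is immediate, since a single MoG is trivially a scaled MoG with scale $1$; and $T=2$ is exactly the conclusion of Lemma~\ref{lem:Product-of-two}. For the inductive step, I would assume $\prod_{t=1}^{T-1} g_{t}(x) = C_{T-1}\, h_{T-1}(x)$ where $h_{T-1}$ is an MoG and $C_{T-1}\in\mathbb{R}^{+}$, then write
\[
\prod_{t=1}^{T} g_{t}(x) \;=\; C_{T-1}\, h_{T-1}(x)\, g_{T}(x),
\]
and invoke Lemma~\ref{lem:Product-of-two} on the product $h_{T-1}(x)\, g_{T}(x)$ to obtain $h_{T-1}(x)\, g_{T}(x) = C'\, h_{T}(x)$ for some $C'\in\mathbb{R}^{+}$ and some MoG $h_{T}$. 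Hence $\prod_{t=1}^{T} g_{t}(x) = (C_{T-1}\, C')\, h_{T}(x)$, which is a scaled MoG, closing the induction.

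As an alternative that makes the final structure explicit, I would keep in mind a direct expansion: writing $g_{t}(x) = \sum_{i=1}^{K} \pi_{t}^{i}\, \mathcal{N}_{x}\!\left(\mu_{t}^{i}, \Sigma_{t}^{i}\right)$ and distributing the product over index tuples $\mathbf{i}=(i_{1},\ldots,i_{T}) \in \{1,\ldots,K\}^{T}$ gives
\[
\prod_{t=1}^{T} g_{t}(x) \;=\; \sum_{\mathbf{i}} \left(\prod_{t=1}^{T} \pi_{t}^{i_{t}}\right) \prod_{t=1}^{T} \mathcal{N}_{x}\!\left(\mu_{t}^{i_{t}}, \Sigma_{t}^{i_{t}}\right).
\]
Iterating Lemma~\ref{lem:Product-of-two-1} on each inner product of Gaussians collapses it to a scaled Gaussian $c_{\mathbf{i}}\, \mathcal{N}_{x}(\mu_{\mathbf{i}}, \Sigma_{\mathbf{i}})$, and normalising by $C := \sum_{\mathbf{i}} \left(\prod_{t} \pi_{t}^{i_{t}}\right) c_{\mathbf{i}}$ extracts a proper MoG over $K^{T}$ modes times a positive scale $C$, matching the inductive conclusion and giving a concrete count of modes.

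The main obstacle is bookkeeping rather than mathematics: I would need to track the scale factor and verify it is strictly positive, so that the resulting mixture weights $\left(\prod_{t}\pi_{t}^{i_{t}}\right)c_{\mathbf{i}}/C$ are nonnegative and sum to one. This positivity follows from the explicit formula in Lemma~\ref{lem:Product-of-two-1}, where $c_{c}$ is the value of a Gaussian density at a point times a positive determinant factor, hence strictly positive; iterating preserves positivity so $C>0$. The number of modes grows as $K^{T}$, which is exponential in $T$ but is irrelevant to the theoretical claim and only matters if one were to use this representation computationally.
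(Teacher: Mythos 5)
Your proof is correct and takes essentially the same route as the paper, which simply invokes induction on Lemma~\ref{lem:Product-of-two} to conclude that the product of $T$ MoGs is proportional to an MoG. Your additional explicit expansion over index tuples and the positivity check on the scale factor are sound elaborations of details the paper leaves implicit, but they do not change the underlying argument.
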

\begin{proof}
By induction from Lemma \ref{lem:Product-of-two}, we can easily show
that product of $T$ MoGs is also proportional to an MoG. That means
$\stackrel[t=1]{T}{\prod}g_{t}\left(x\right)$ equals to a scaled
MoG.
\end{proof}

\subsection{Details of Data Descriptions and Model Implementations}

Here we list all datasets used in our experiments:
\begin{itemize}
\item Open-domain datasets: 
\begin{itemize}
\item Cornell movie dialog: This corpus contains a large metadata-rich collection
of fictional conversations extracted from 617 raw movies with 220,579
conversational exchanges between 10,292 pairs of movie characters.
For each dialog, we preprocess the data by limiting the context length
and the utterance output length to 20 and 10, respectively. The vocabulary
is kept to top 20,000 frequently-used words in the dataset.
\item OpenSubtitles: This dataset consists of movie conversations in XML
format. It also contains sentences uttered by characters in movies,
yet it is much bigger and noisier than Cornell dataset. After preprocessing
as above, there are more than 1.6 million pairs of contexts and utterance
with chosen vocabulary of 40,000 words. 
\end{itemize}
\item Closed-domain datasets:: 
\begin{itemize}
\item Live Journal (LJ) user question-answering dataset: question-answer
dialog by LJ users who are members of anxiety, arthritis, asthma,
autism, depression, diabetes, and obesity LJ communities\footnote{\url{https://www.livejournal.com/}}.
After preprocessing as above, we get a dataset of more than 112,000
conversations. We limit the vocabulary size to 20,000 most common
words. 
\item Reddit comments dataset: This dataset consists of posts and comments
about movies in Reddit website\footnote{\url{https://www.reddit.com/r/movies/}}.
A single post may have multiple comments constituting a multi-people
dialog amongst the poster and commentors, which makes this dataset
the most challenging one. We crawl over four millions posts from Reddit
website and after preprocessing by retaining conversations whose utterance's
length are less than 20, we have a dataset of nearly 200 thousand
conversations with a vocabulary of more than 16 thousand words. 
\end{itemize}
\end{itemize}
We trained with the following hyperparameters (according to the performance
on the validate dataset): word embedding has size 96 and is shared
across everywhere. We initialise the word embedding from Google's
Word2Vec \citet{mikolov2013distributed} pretrained word vectors.
The hidden dimension of LSTM in all controllers is set to 768 for
all datasets except the big OpenSubtitles whose LSTM dimension is
1024. The number of LSTM layers for every controller is set to 3.
All the initial weights are sampled from a normal distribution with
mean $0$, standard deviation 0.$1$. The mini-batch size is chosen
as 256. The models are trained end-to-end using the Adam optimiser
\citet{kingma2014adam} with a learning rate of 0.001 and gradient
clipping at 10. For models using memory, we set the number and the
size of memory slots to 16 and 64, respectively. As indicated previously,
it is not trivial to optimise VAE with RNN-like decoder due to the
vanishing latent variable problem \citet{bowman2016generating}. Hence,
to make the variational models in our experiments converge we have
to use the $KL$ annealing trick by adding to the $KL$ loss term
an annealing coefficient $\alpha$ starts with a very small value
and gradually increase up to 1.

\subsection{Full Reports on Model Performance}

\begin{table}[H]
\begin{centering}
\begin{tabular}{cccccc}
\hline 
Model & BLEU-1 & BLEU-2 & BLEU-3 & BLEU-4 & A-glove\tabularnewline
\hline 
Seq2Seq & 18.4 & 14.5 & 12.1 & 9.5 & 0.52\tabularnewline
Seq2Seq-att & 17.7 & 14.0 & 11.7 & 9.2 & 0.54\tabularnewline
DNC & 17.6 & 13.9 & 11.5 & 9.0 & 0.51\tabularnewline
CVAE & 16.5 & 13.0 & 10.9 & 8.5 & 0.56\tabularnewline
VLSTM & 18.6 & 14.8 & 12.4 & 9.7 & 0.59\tabularnewline
\hline 
VMED (K=1) & 20.7 & 16.5 & 13.8 & 10.8 & 0.57\tabularnewline
VMED (K=2) & 22.3 & 18.0 & 15.2 & 11.9 & \textbf{0.64}\tabularnewline
VMED (K=3) & 19.4 & 15.6 & 13.2 & 10.4 & 0.63\tabularnewline
VMED (K=4) & \textbf{23.1} & \textbf{18.5} & \textbf{15.5} & \textbf{12.3} & 0.61\tabularnewline
\hline 
\end{tabular}
\par\end{centering}
\caption{Results on Cornell Movies}
\end{table}

\begin{table}[H]
\begin{centering}
\begin{tabular}{cccccc}
\hline 
Model & BLEU-1 & BLEU-2 & BLEU-3 & BLEU-4 & A-glove\tabularnewline
\hline 
Seq2Seq & 11.4 & 8.7 & 7.1 & 5.4 & 0.29\tabularnewline
Seq2Seq-att & 13.2 & 10.2 & 8.4 & 6.5 & 0.42\tabularnewline
DNC & 14.3 & 11.2 & 9.3 & 7.2 & 0.47\tabularnewline
CVAE & 13.5 & 10.2 & 8.4 & 6.6 & 0.45\tabularnewline
VLSTM & 16.4 & 12.7 & 10.4 & 8.1 & 0.43\tabularnewline
\hline 
VMED (K=1) & 12.9 & 9.5 & 7.5 & 6.2 & 0.44\tabularnewline
VMED (K=2) & 15.3 & 13.8 & 10.4 & 8.8 & 0.49\tabularnewline
VMED (K=3) & \textbf{24.8} & \textbf{19.7} & \textbf{16.4} & \textbf{12.9} & \textbf{0.54}\tabularnewline
VMED (K=4) & 17.9 & 14.2 & 11.8 & 9.3 & 0.52\tabularnewline
\hline 
\end{tabular}
\par\end{centering}
\caption{Results on OpenSubtitles}
\end{table}

\begin{table}[H]
\begin{centering}
\begin{tabular}{cccccc}
\hline 
Model & BLEU-1 & BLEU-2 & BLEU-3 & BLEU-4 & A-glove\tabularnewline
\hline 
Seq2Seq & 13.1 & 10.1 & 8.3 & 6.4 & 0.45\tabularnewline
Seq2Seq-att & 11.4 & 8.7 & 7.1 & 5.6 & 0.49\tabularnewline
DNC & 12.4 & 9.6 & 7.8 & 6.1 & 0.47\tabularnewline
CVAE & 12.2 & 9.4 & 7.7 & 6.0 & 0.48\tabularnewline
VLSTM & 11.5 & 8.8 & 7.3 & 5.6 & 0.46\tabularnewline
\hline 
VMED (K=1) & 13.7 & 10.7 & 8.9 & 6.9 & 0.47\tabularnewline
VMED (K=2) & 15.4 & 12.2 & 10.1 & 7.9 & \textbf{0.51}\tabularnewline
VMED (K=3) & \textbf{18.1} & \textbf{14.8} & \textbf{12.4} & \textbf{9.8} & 0.49\tabularnewline
VMED (K=4) & 14.4 & 11.4 & 9.5 & 7.5 & 0.47\tabularnewline
\hline 
\end{tabular}
\par\end{centering}
\caption{Results on LJ users question-answering}
\end{table}

\begin{table}[H]
\begin{centering}
\begin{tabular}{cccccc}
\hline 
Model & BLEU-1 & BLEU-2 & BLEU-3 & BLEU-4 & A-glove\tabularnewline
\hline 
Seq2Seq & 7.5 & 5.5 & 4.4 & 3.3 & 0.31\tabularnewline
Seq2Seq-att & 5.5 & 4.0 & 3.1 & 2.4 & 0.25\tabularnewline
DNC & 7.5 & 5.6 & 4.5 & 3.4 & 0.28\tabularnewline
CVAE & 5.3 & 4.3 & 3.6 & 2.8 & 0.39\tabularnewline
VLSTM & 6.9 & 5.1 & 4.1 & 3.1 & 0.27\tabularnewline
\hline 
VMED (K=1) & 9.1 & 6.8 & 5.5 & 4.3 & 0.39\tabularnewline
VMED (K=2) & 9.2 & 7.0 & 5.7 & 4.4 & 0.38\tabularnewline
VMED (K=3) & \textbf{12.3} & \textbf{9.7} & \textbf{8.1} & \textbf{6.4} & \textbf{0.46}\tabularnewline
VMED (K=4) & 8.6 & 6.9 & 5.9 & 4.6 & 0.41\tabularnewline
\hline 
\end{tabular}
\par\end{centering}
\caption{Results on Reddit comments}
\end{table}

\section{Supplementary for Chapter 6}

\subsection{Derivation on the Bound Inequality in Linear Dynamic System\label{subsec:Derivation-on-theRNN}}

The linear dynamic system hidden state is described by the following
recursive equation

\[
h_{t}=Wx_{t}+Uh_{t-1}+b
\]
By induction,

\[
h_{t}=\stackrel[i=1]{t}{\sum}U^{t-i}Wx_{i}+C
\]
where $C$ is some constant with respect to $x_{i}$. In this case,
$\frac{\partial h_{t}}{\partial x_{i}}=U^{t-i}W$. By applying norm
sub-multiplicativity\footnote{If not explicitly stated otherwise, norm refers to any consistent
matrix norm which satisfies sub-multiplicativity. }, 

\begin{align*}
c_{i-1,t} & =\left\Vert U^{t-i+1}W\right\Vert \\
 & \leq\left\Vert U\right\Vert \left\Vert U^{t-i}W\right\Vert \\
 & =\left\Vert U\right\Vert c_{i,t}
\end{align*}
That is, $\lambda_{c}=\left\Vert U\right\Vert $.

\subsection{Derivation on the Bound Inequality in Standard RNN\label{subsec:Derivation-on-theRNN-1}}

The standard RNN hidden state is described by the following recursive
equation

\[
h_{t}=\tanh\left(Wx_{t}+Uh_{t-1}+b\right)
\]
From $\frac{\partial h_{t}}{\partial x_{i}}=\frac{\partial h_{t}}{\partial h_{t-1}}\frac{\partial h_{t-1}}{\partial x_{i}}$,
by induction, 

\[
\frac{\partial h_{t}}{\partial x_{i}}=\left(\stackrel[j=i+1]{t}{\prod}\frac{\partial h_{j}}{\partial h_{j-1}}\right)\frac{\partial h_{i}}{\partial x_{i}}=\left(\stackrel[j=i]{t}{\prod}diag\left(\tanh^{\prime}\left(a_{j}\right)\right)\right)U^{t-i}W
\]
where $a_{j}=Wx_{j}+Uh_{j-1}+b$ and $diag\left(\cdot\right)$ converts
a vector into a diagonal matrix. As $0\leq\tanh^{\prime}\left(x\right)=1-\tanh\left(x\right)^{2}\leq1$,
$\left\Vert diag\left(\tanh^{\prime}\left(X\right)\right)\right\Vert $
is bounded by some value $B$. By applying norm sub-multiplicativity, 

\begin{align*}
c_{i-1,t} & =\left\Vert \left[\stackrel[j=i]{t}{\prod}diag\left(\tanh^{\prime}\left(a_{j}\right)\right)\right]diag\left(\tanh^{\prime}\left(a_{i-1}\right)\right)U^{t-i+1}W\right\Vert \\
 & \leq\left\Vert U\right\Vert \left\Vert \stackrel[j=i]{t}{\prod}diag\left(\tanh^{\prime}\left(a_{j}\right)\right)U^{t-i}W\right\Vert \left\Vert diag\left(\tanh^{\prime}\left(a_{i-1}\right)\right)\right\Vert \\
 & =B\left\Vert U\right\Vert c_{i,t}
\end{align*}
That is, $\lambda_{c}=B\left\Vert U\right\Vert $.

\subsection{Derivation on the Bound Inequality in LSTM\label{subsec:Derivation-on-theLSTM}}

For the case of LSTM, the recursive equation reads
\begin{align*}
c_{t} & =\sigma\left(U_{f}x_{t}+W_{f}h_{t-1}+b_{f}\right)\odot c_{t-1}\\
 & +\sigma\left(U_{i}x_{t}+W_{i}h_{t-1}+b_{i}\right)\odot\tanh\left(U_{z}x_{t}+W_{z}h_{t-1}+b_{z}\right)\\
h_{t} & =\sigma\left(U_{o}x_{t}+W_{o}h_{t-1}+b_{o}\right)\odot\tanh\left(c_{t}\right)
\end{align*}
Taking derivatives,

\begin{align*}
\frac{\partial h_{j}}{\partial h_{j-1}} & =\sigma^{\prime}\left(o_{j}\right)\tanh\left(c_{j}\right)W_{o}+\sigma\left(o_{j}\right)\tanh^{\prime}\left(c_{j}\right)\sigma^{\prime}\left(f_{j}\right)c_{j-1}W_{f}\\
 & +\sigma\left(o_{j}\right)\tanh^{\prime}\left(c_{j}\right)\sigma^{\prime}\left(i_{j}\right)\tanh\left(z_{j}\right)W_{i}+\sigma\left(o_{j}\right)\tanh^{\prime}\left(c_{j}\right)\sigma\left(i_{j}\right)\tanh^{\prime}\left(z_{j}\right)W_{z}\\
\frac{\partial h_{j-1}}{\partial x_{j-1}} & =\sigma^{\prime}\left(o_{j-1}\right)\tanh\left(c_{j-1}\right)U_{o}+\sigma\left(o_{j-1}\right)\tanh^{\prime}\left(c_{j-1}\right)\sigma^{\prime}\left(f_{j-1}\right)c_{j-2}U_{f}\\
 & +\sigma\left(o_{j-1}\right)\tanh^{\prime}\left(c_{j-1}\right)\sigma^{\prime}\left(i_{j-1}\right)\tanh\left(z_{j-1}\right)U_{i}\\
 & +\sigma\left(o_{j-1}\right)\tanh^{\prime}\left(c_{j-1}\right)\sigma\left(i_{j-1}\right)\tanh^{\prime}\left(z_{j-1}\right)U_{z}\\
\frac{\partial h_{j}}{\partial x_{j}} & =\sigma^{\prime}\left(o_{j}\right)\tanh\left(c_{j}\right)U_{o}+\sigma\left(o_{j}\right)\tanh^{\prime}\left(c_{j}\right)\sigma^{\prime}\left(f_{j}\right)c_{j-1}U_{f}\\
 & +\sigma\left(o_{j}\right)\tanh^{\prime}\left(c_{j}\right)\sigma^{\prime}\left(i_{j}\right)\tanh\left(z_{j}\right)U_{i}+\sigma\left(o_{j}\right)\tanh^{\prime}\left(c_{j}\right)\sigma\left(i_{j}\right)\tanh^{\prime}\left(z_{j}\right)U_{z}
\end{align*}
where $o_{j}$ denotes the value in the output gate at $j$-th timestep
(similar notations are used for input gate ($i_{j}$), forget gate
($f_{j}$) and cell value ($z_{j}$)) and ``non-matrix'' terms actually
represent diagonal matrices corresponding to these terms. Under the
assumption that $h_{0}$=0, we then make use of the results stating
that $\left\Vert c_{t}\right\Vert _{\infty}$ is bounded for all $t$
\citet{miller2018recurrent} . By applying $l_{\infty}$-norm sub-multiplicativity
and triangle inequality, we can show that

\[
\frac{\partial h_{j}}{\partial h_{j-1}}\frac{\partial h_{j-1}}{\partial x_{j-1}}=M\frac{\partial h_{j}}{\partial x_{j}}+N
\]
with

\begin{align*}
\left\Vert M\right\Vert _{\infty} & \leq1/4\left\Vert W_{o}\right\Vert _{\infty}+1/4\left\Vert W_{f}\right\Vert _{\infty}\left\Vert c_{j}\right\Vert _{\infty}+1/4\left\Vert W_{i}\right\Vert _{\infty}+\left\Vert W_{z}\right\Vert _{\infty}=B_{m}\\
\left\Vert N\right\Vert _{\infty} & \leq1/16\left\Vert W_{o}U_{i}\right\Vert _{\infty}+1/16\left\Vert W_{o}U_{f}\right\Vert _{\infty}\left(\left\Vert c_{j}\right\Vert _{\infty}+\left\Vert c_{j-1}\right\Vert _{\infty}\right)+1/4\left\Vert W_{o}U_{z}\right\Vert _{\infty}\\
 & +1/16\left\Vert W_{i}U_{o}\right\Vert _{\infty}+1/16\left\Vert W_{i}U_{f}\right\Vert _{\infty}\left(\left\Vert c_{j}\right\Vert _{\infty}+\left\Vert c_{j-1}\right\Vert _{\infty}\right)+1/4\left\Vert W_{i}U_{z}\right\Vert _{\infty}\\
 & +1/16\left(\left\Vert W_{f}U_{o}\right\Vert _{\infty}+1/16\left\Vert W_{f}U_{i}\right\Vert _{\infty}+1/4\left\Vert W_{f}U_{z}\right\Vert _{\infty}\right)\left(\left\Vert c_{j}\right\Vert _{\infty}+\left\Vert c_{j-1}\right\Vert _{\infty}\right)\\
 & +1/4\left\Vert W_{z}U_{o}\right\Vert _{\infty}+1/4\left\Vert W_{z}U_{f}\right\Vert _{\infty}\left(\left\Vert c_{j}\right\Vert _{\infty}+\left\Vert c_{j-1}\right\Vert _{\infty}\right)+1/4\left\Vert W_{z}U_{i}\right\Vert _{\infty}\\
 & =B_{n}
\end{align*}
By applying $l_{\infty}$-norm sub-multiplicativity and triangle inequality,

\begin{align*}
c_{i-1,t} & =\left\Vert \stackrel[j=i+1]{t}{\prod}\frac{\partial h_{j}}{\partial h_{j-1}}\frac{\partial h_{i}}{\partial h_{i-1}}\frac{\partial h_{i-1}}{\partial x_{i-1}}\right\Vert _{\infty}\\
 & =\left\Vert \stackrel[j=i+1]{t}{\prod}\frac{\partial h_{j}}{\partial h_{j-1}}\left(\frac{\partial h_{i}}{\partial x_{i}}m+n\right)\right\Vert _{\infty}\\
 & \leq B_{m}c_{i,t}+B_{n}\stackrel[j=i+1]{t}{\prod}\left\Vert \frac{\partial h_{j}}{\partial h_{j-1}}\right\Vert _{\infty}
\end{align*}

As LSTM is $\lambda$-contractive with $\lambda<1$ in the $l_{\infty}$-norm
(readers are recommended to refer to Miller and Hardt, (2018) for
proof), which implies $\left\Vert \frac{\partial h_{j}}{\partial h_{j-1}}\right\Vert _{\infty}<1$,
$B_{n}\stackrel[j=i+1]{t}{\prod}\left\Vert \frac{\partial h_{j}}{\partial h_{j-1}}\right\Vert _{\infty}\rightarrow0$
as $t-i\rightarrow\infty$. For $t-i<\infty$, under the assumption
that $\frac{\partial h_{j}}{\partial x_{j}}\neq0$, we can always
find some value $B<\infty$ such that $c_{i-1,t}\leq Bc_{i,t}$. For
$t-i\rightarrow\infty$, $\lambda_{c}\rightarrow B_{m}$. That is,
$\lambda_{c}=\max\left(B_{m},B\right)$.

\subsection{Proof of Theorem \ref{thm:The-average-amount}\label{subsec:Proof-of-theorem-1}}
\begin{proof}
Given that $\lambda_{c}c_{i,t}\geq c_{i-1,t}$ with some $\lambda_{c}\in\mathbb{R^{+}}$,
we can use $c_{t,t}\lambda_{c}^{t-i}$ as the upper bound on $c_{i,t}$
with $i=\overline{1,t}$, respectively. Therefore,

\[
f(0)\leq\stackrel[t=1]{T}{\sum}c_{t,T}\leq c_{T,T}\stackrel[t=1]{T}{\sum}\lambda_{c}^{T-t}=f\left(\lambda_{c}\right)
\]
where $f(\lambda)=c_{T,T}\stackrel[t=1]{T}{\sum}\lambda^{T-t}$ is
continuous on $\mathbb{R^{+}}$. According to intermediate value theorem,
there exists $\lambda\in\left(0,\lambda_{c}\right]$ such that $c_{T,T}\stackrel[t=1]{T}{\sum}\lambda^{T-t}=\stackrel[t=1]{T}{\sum}c_{t,T}$.
\end{proof}

\subsection{Proof of Theorem \ref{thm:Under-the-assumption}\label{subsec:Proof-of-theorem-2}}
\begin{proof}
According to Theorem \ref{thm:The-average-amount}, there exists some
$\lambda_{i}\in\mathbb{R^{+}}$such that the summation of contribution
stored between $K_{i}$ and $K_{i+1}$ can be quantified as $c_{K_{i+1},K_{i+1}}\stackrel[t=K_{i}]{K_{i+1}}{\sum}\lambda_{i}^{K_{i+1}-t}$
(after ignoring contributions before $K_{i}$-th timestep for simplicity).
Let denote $P(\lambda)=\stackrel[t=K_{i}]{K_{i+1}}{\sum}\lambda^{K_{i+1}-t}$,
we have $P^{\prime}\left(\lambda\right)>0,\mathbb{\forall\lambda\in R^{+}}$.
Therefore, $P(\lambda_{i})\geq P\left(\underset{i}{\min}\left(\lambda_{i}\right)\right)$.
Let $C=\underset{i}{\min}\left(c_{i,i}\right)$ and $\lambda=\underset{i}{\min}\left(\lambda_{i}\right)$,
the average contribution stored in a MANN has a lower bound quantified
as $I_{\lambda}$, where

\begin{equation}
I_{\lambda}=C\frac{\stackrel[t=1]{K_{1}}{\sum}\lambda^{K_{1}-t}+\stackrel[t=K_{1}+1]{K_{2}}{\sum}\lambda^{K_{2}-t}+...+\stackrel[t=K_{D-1}+1]{K_{D}}{\sum}\lambda^{K_{D}-t}+\stackrel[t=K_{D}+1]{T}{\sum}\lambda^{T-t}}{T}
\end{equation}
.
\end{proof}

\subsection{Proof of Theorem \ref{thm:Given-the-number}\label{subsec:Proof-of-theorem}}
\begin{proof}
The second-order derivative of $f_{\lambda}\left(x\right)$ reads

\begin{equation}
f_{\lambda}^{\prime\prime}\left(x\right)=-\frac{\left(\ln\lambda\right)^{2}}{1-\lambda}\lambda^{x}
\end{equation}

We have $f_{\lambda,}^{\prime\prime}\left(x\right)\leq0$ with $\forall x\in\mathbb{R^{+}}$
and $1>\lambda>0$, so $f_{\lambda}\left(x\right)$ is a concave function.
Thus, we can apply Jensen inequality as follows: 
\begin{equation}
\frac{1}{D+1}\stackrel[i=1]{D+1}{\sum}f_{\lambda}(l_{i})\leq f_{\lambda}\left(\stackrel[i=1]{D+1}{\sum}\frac{1}{D+1}l_{i}\right)=f_{\lambda}\left(\frac{T}{D+1}\right)\label{eq:jensen}
\end{equation}
Equality holds if and only if $l_{1}=l_{2}=...=l_{D+1}=\frac{T}{D+1}$.
We refer to this as\textit{ Uniform Writing} strategy. By plugging
the optimal values of $l_{i}$, we can derive the maximised average
contribution as  follows,

\begin{equation}
I_{\lambda}max\equiv g_{\lambda}\left(T,D\right)=\frac{C\left(D+1\right)}{T}\left(\frac{1-\lambda^{\frac{T}{D+1}}}{1-\lambda}\right)
\end{equation}
When $\lambda=1$, $I_{\lambda}=\frac{C}{T}\stackrel[i=1]{D+1}{\sum}l_{i}=C$.
This is true for all writing strategies. Thus, Uniform Writing is
optimal for $0<\lambda\leq1$.
\end{proof}
We can show that this solution is also optimal for the case $\lambda>1$.
As $f_{\lambda}^{\prime\prime}\left(x\right)>0$ with $\forall x\in\mathbb{R^{+}};\lambda>1$,
$f_{\lambda}\left(x\right)$ is a convex function and Eq. (\ref{eq:jensen})
flips the inequality sign. Thus, $I_{\lambda}$ reaches its minimum
with Uniform Writing. For $\lambda>1$, minimising $I_{\lambda}$
is desirable to prevent the system from diverging. 

We can derive some properties of function $g$. Let $x=\frac{D+1}{L},g_{\lambda}\left(L,D\right)=g_{\lambda}\left(x\right)=Cx(\frac{\lambda^{\frac{1}{x}}-1}{\lambda-1})$.
We have $g_{\lambda}^{\prime}\left(x\right)=C\lambda\left(1-\lambda^{\frac{1}{x}}\right)\left(x-\ln\lambda\right)>0$
with $0<\lambda\leq1,\forall x\geq0$, so $g_{\lambda}\left(T,D\right)$
is an increasing function if we fix $T$ and let $D$ vary. That explains
why having more memory slots helps improve memorisation capacity.
If $D=0$, $g_{\lambda}\left(T,0\right)$ becomes E.q (\ref{eq:d0}).
In this case, MANNs memorisation capacity converges to that of recurrent
networks.

\subsection{Summary of Synthetic Discrete Task Format\label{subsec:Summary-of-synthetic}}

\begin{table}[H]
\begin{centering}
\begin{tabular}{ccc}
\hline 
Task & Input & Output\tabularnewline
\hline 
Double & $x_{1}x_{2}...x_{T}$ & $x_{1}x_{2}...x_{T}x_{1}x_{2}...x_{T}$\tabularnewline
Copy & $x_{1}x_{2}...x_{T}$ & $x_{1}x_{2}...x_{T}$\tabularnewline
Reverse & $x_{1}x_{2}...x_{T}$ & $x_{T}x_{T-1}...x_{1}$\tabularnewline
Add & $x_{1}x_{2}...x_{T}$ & $\frac{x_{1}+x_{T-1}}{2}\frac{x_{2}+x_{T-2}}{2}...\frac{x_{\left\lfloor T/2\right\rfloor }+x_{\left\lceil T/2\right\rceil }}{2}$\tabularnewline
Max & $x_{1}x_{2}...x_{T}$ & $\max\left(x_{1},x_{2}\right)\max\left(x_{3},x_{4}\right)...\max\left(x_{T-1},x_{T}\right)$\tabularnewline
\hline 
\end{tabular}
\par\end{centering}
\caption{Synthetic discrete task's input-output formats. $T$ is the sequence
length.}
\end{table}

\subsection{UW Performance on Bigger Memory\label{subsec:UW-performance-on}}

\begin{table}[H]
\begin{centering}
\begin{tabular}{ccc}
\hline 
Model & $N_{h}$ & Copy (L=500)\tabularnewline
\hline 
DNC & 128 & 24.19\%\tabularnewline
\hline 
DNC+UW & 128 & 81.45\%\tabularnewline
\hline 
\end{tabular}
\par\end{centering}
\caption{Test accuracy (\%) on synthetic copy task. MANNs have 50 memory slots.
Both models are trained with 100,000 mini-batches of size 32.}
\end{table}

\subsection{Memory Operating Behaviors on Synthetic Tasks\label{subsec:Memory-writing-behaviours}}

In this section, we pick three models (DNC, DNC+UW and DNC+CUW) to
analyse their memory operating behaviors. Fig. \ref{fig:Memory-operations-on}
visualises the values of the write weights and read weights for the
copy task during encoding input and decoding output sequence, respectively.
In the copy task, as the sequence length is 50 while the memory size
is 4, one memory slot should contain the accumulation of multiple
timesteps. This principle is reflected in the decoding process in
three models, in which one memory slot is read repeatedly across several
timesteps. Notably, the number of timesteps consecutively spent for
one slot is close to $10$-the optimal interval, even for DNC ( Fig.
\ref{fig:Memory-operations-on}(a)), which implies that the ultimate
rule would be the uniform rule. As UW and CUW are equipped with uniform
writing, their writing patterns follow the rule perfectly. Interestingly,
UW chooses the first written location for the final write (corresponding
to the <eos> token) while CUW picks the last written location. As
indicated in Figs. \ref{fig:Memory-operations-on}(b) and (c), both
of them can learn the corresponding reading pattern for decoding process,
which leads to good performances. On the other hand, regular DNC fails
to learn a perfect writing strategy. Except for the timesteps at the
end of the sequence, the timesteps are distributed to several memory
slots while the reading phase attends to one memory slot repeatedly.
This explains why regular DNC cannot compete with the other two proposed
methods in this task. 

For the max task, Fig. \ref{fig:Memory-operations-on-1} displays
similar visualisation with an addition of write gate during encoding
phase. The write gate indicates how much the model should write the
input at some timestep to the memory. A zero write gate means there
is no writing. For this task, a good model should discriminate between
timesteps and prefer writing the greater ones. As clearly seen in
Fig. \ref{fig:Memory-operations-on-1}(a), DNC suffers the same problem
as in copy task, unable to synchronise encoding writing with decoding
reading. Also, DNC's write gate pattern does not show reasonable discrimination.
For UW (Fig. \ref{fig:Memory-operations-on-1}(b)), it tends to write
every timestep and relies on uniform writing principle to achieve
write/read accordance and thus better results than DNC. Amongst all,
CUW is able to ignore irrelevant timesteps and follows uniform writing
at the same time (see Fig. \ref{fig:Memory-operations-on-1}(c)).

\begin{figure}
\begin{centering}
\includegraphics[width=1\textwidth]{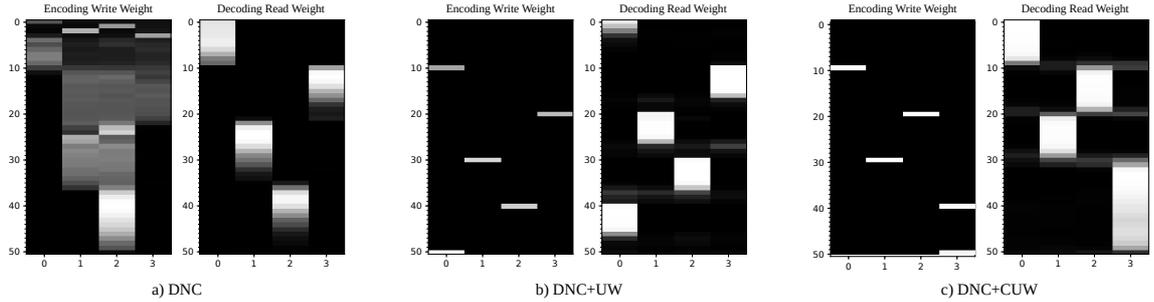}
\par\end{centering}
\caption{Memory operations on copy task in DNC (a), DNC+UW (b) and DNC+CUW(c).
Each row is a timestep and each column is a memory slot.\label{fig:Memory-operations-on}}
\end{figure}

\begin{figure}
\begin{centering}
\includegraphics[width=1\textwidth]{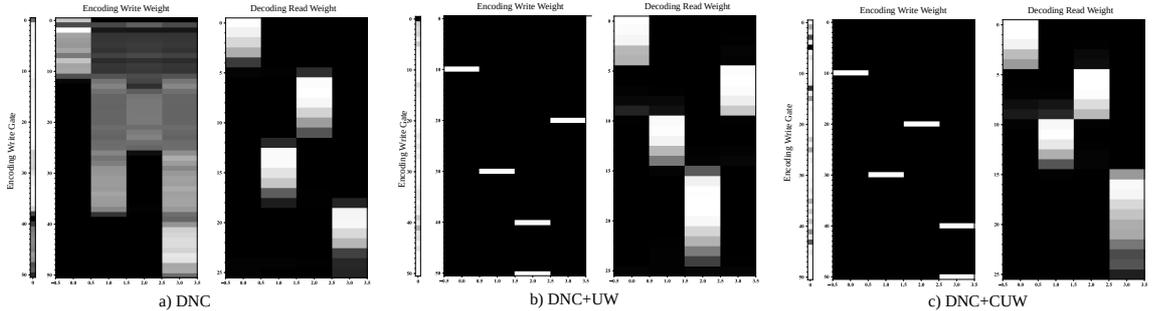}
\par\end{centering}
\caption{Memory operations on max task in DNC (a), DNC+UW (b) and DNC+CUW(c).
Each row is a timestep and each column is a memory slot.\label{fig:Memory-operations-on-1}}
\end{figure}

\subsection{Visualisations of Model Performance on Sinusoidal Regression Tasks\label{subsec:Visualizations-of-model}}

We pick randomly 3 input sequences and plot the output sequences produced
by DNC, UW and CUW in Figs. \ref{fig:Sinusoid} (clean) and \ref{fig:Sinusoid-noisy}
(noisy). In each plot, the first and last 100 timesteps correspond
to the given input and generated output, respectively. The ground
truth sequence is plotted in red while the predicted in blue. We also
visualise the values of MANN write gates through time in the bottom
of each plots. In irregular writing encoding phase, the write gate
is computed even when there is no write as it reflects how much weight
the controller puts on the timesteps. In decoding, we let MANNs write
to memory at every timestep to allow instant update of memory during
inference. 

Under clean condition, all models seem to attend more to late timesteps
during encoding, which makes sense as focusing on late periods of
sine wave is enough for later reconstruction. However, this pattern
is not clear in DNC and UW as in CUW. During decoding, the write gates
tend to oscillate in the shape of sine wave, which is also a good
strategy as this directly reflects the amplitude of generation target.
In this case, both UW and CUW demonstrate this behavior clearer than
DNC. 

Under noisy condition, DNC and CUW try to follow sine-shape writing
strategy. However, only CUW can learn the pattern and assign write
values in accordance with the signal period, which helps CUW decoding
achieve highest accuracy. On the other hand, UW choose to assign write
value equally and relies only on its maximisation of timestep contribution.
Although it achieves better results than DNC, it underperforms CUW. 

\begin{figure}
\begin{centering}
\noindent\begin{minipage}[t]{1\columnwidth}%
\includegraphics[width=0.3\linewidth]{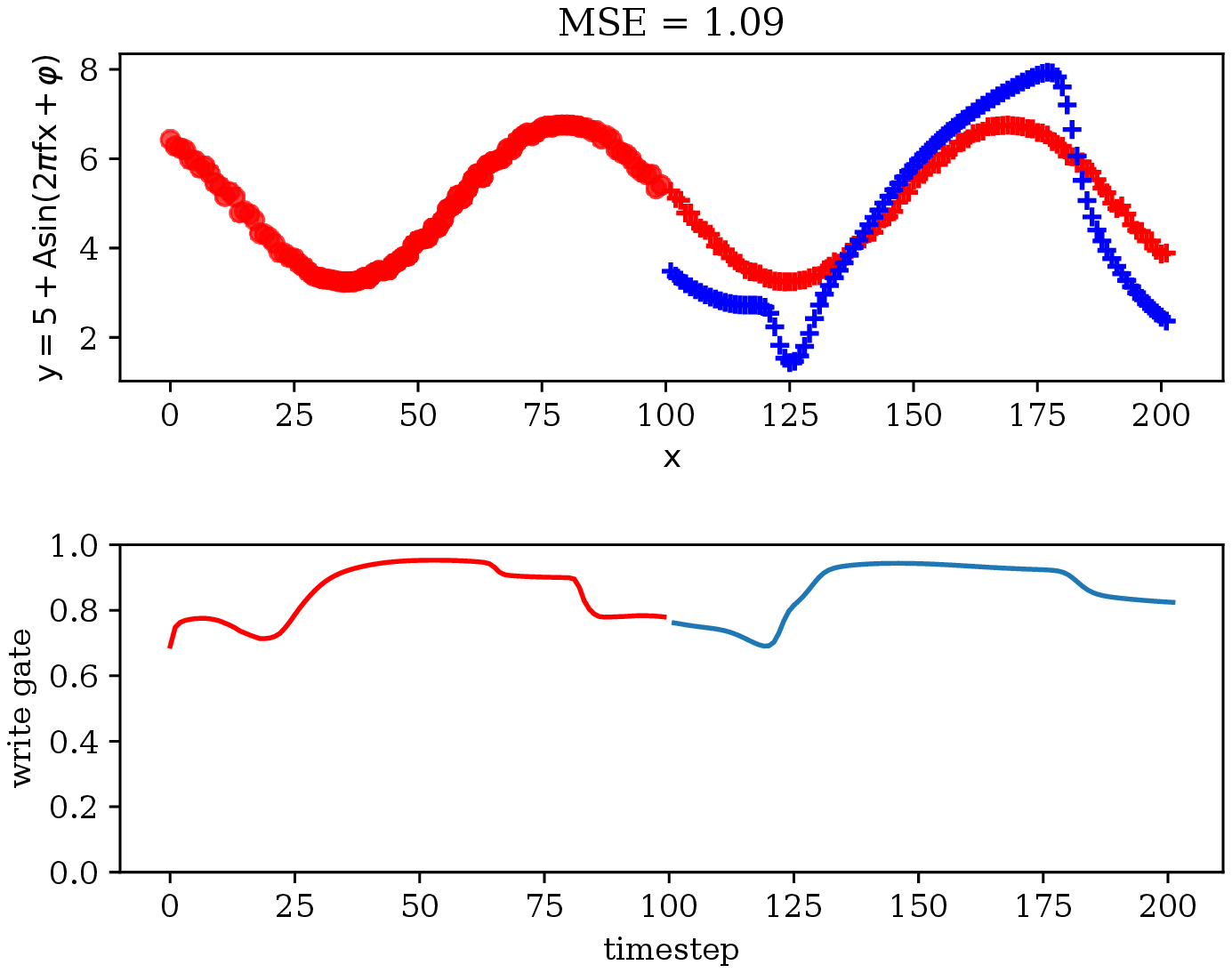}\includegraphics[width=0.3\linewidth]{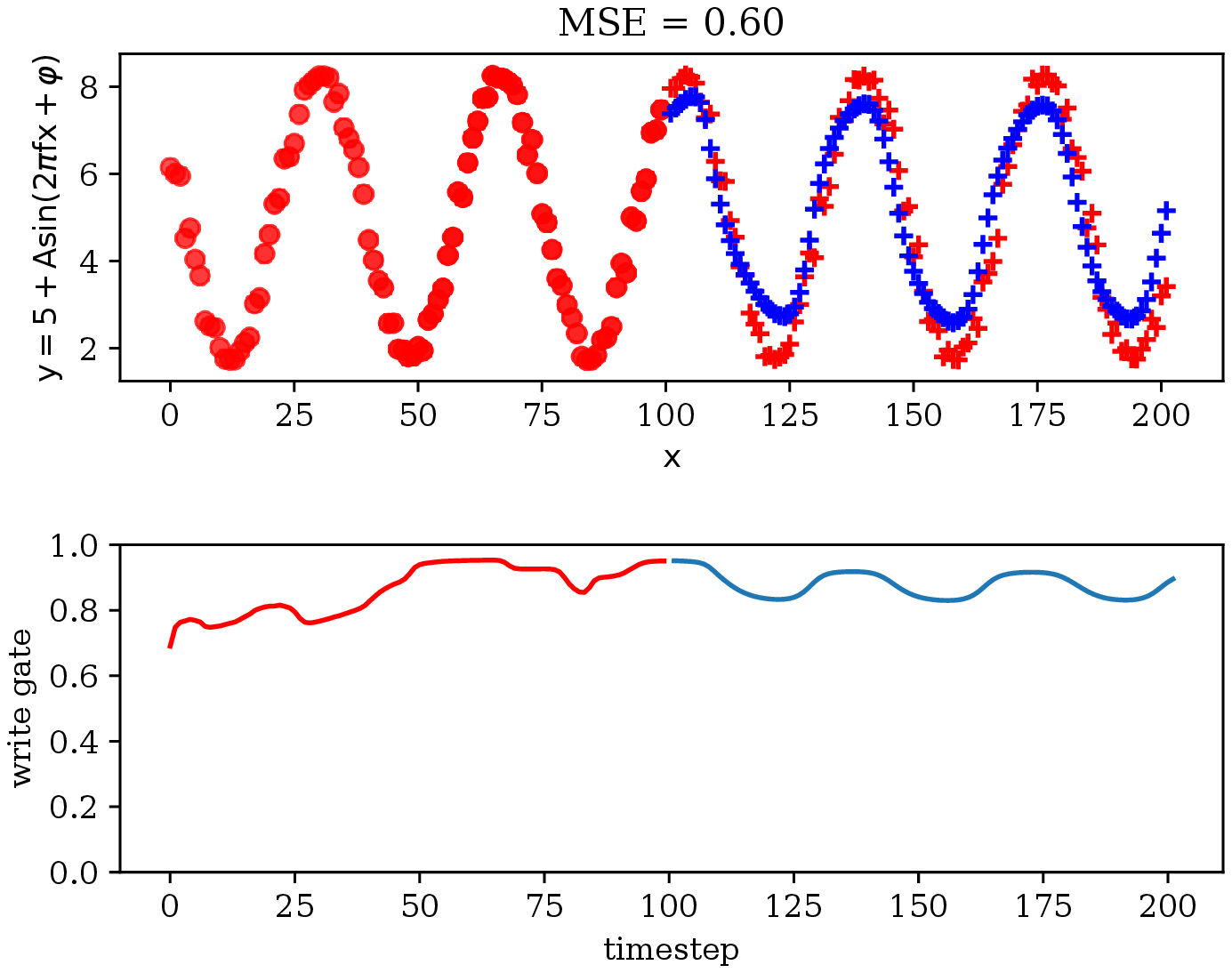}\includegraphics[width=0.3\linewidth]{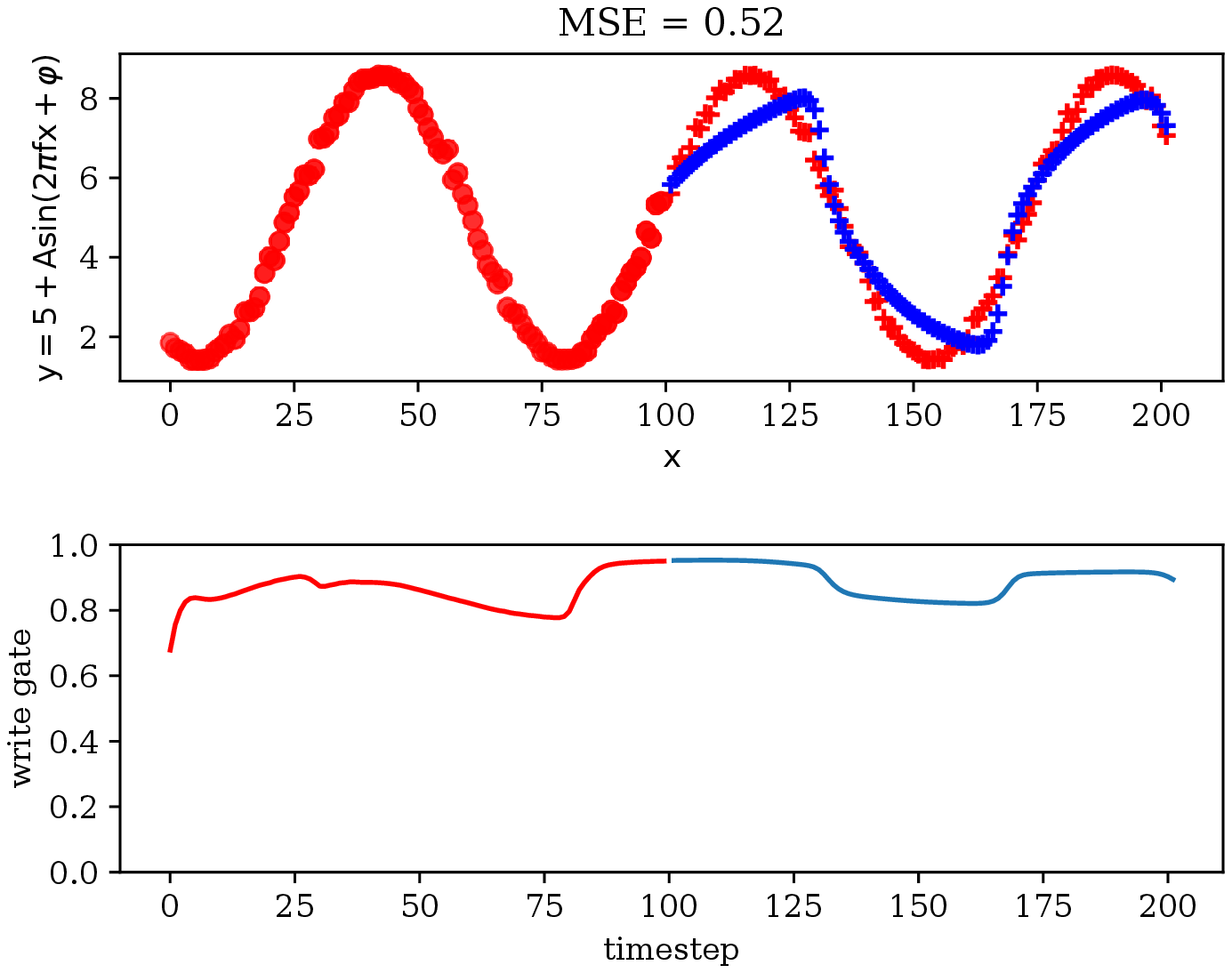}%
\end{minipage}
\par\end{centering}
\begin{centering}
\noindent\begin{minipage}[t]{1\columnwidth}%
\includegraphics[width=0.3\linewidth]{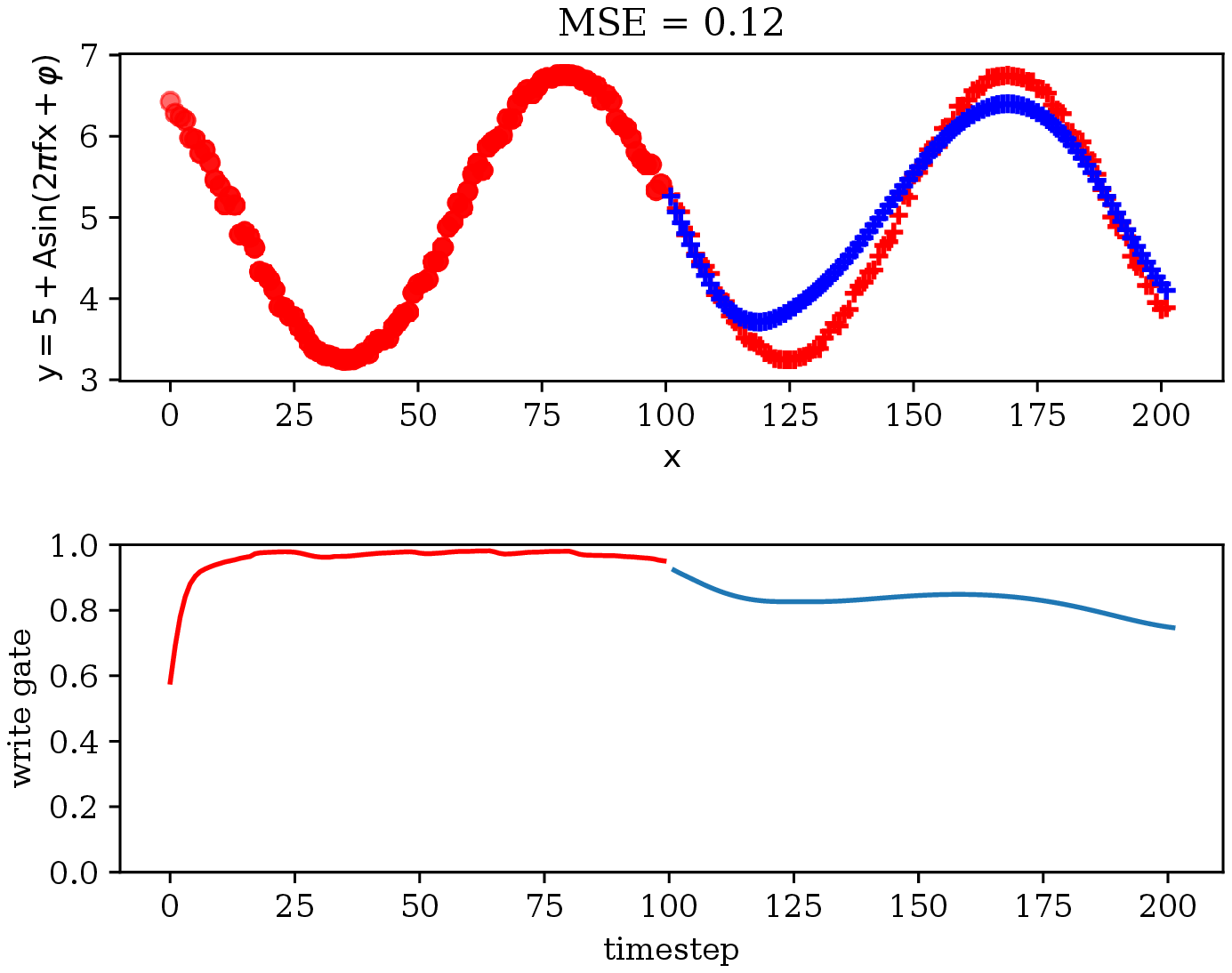}\includegraphics[width=0.3\linewidth]{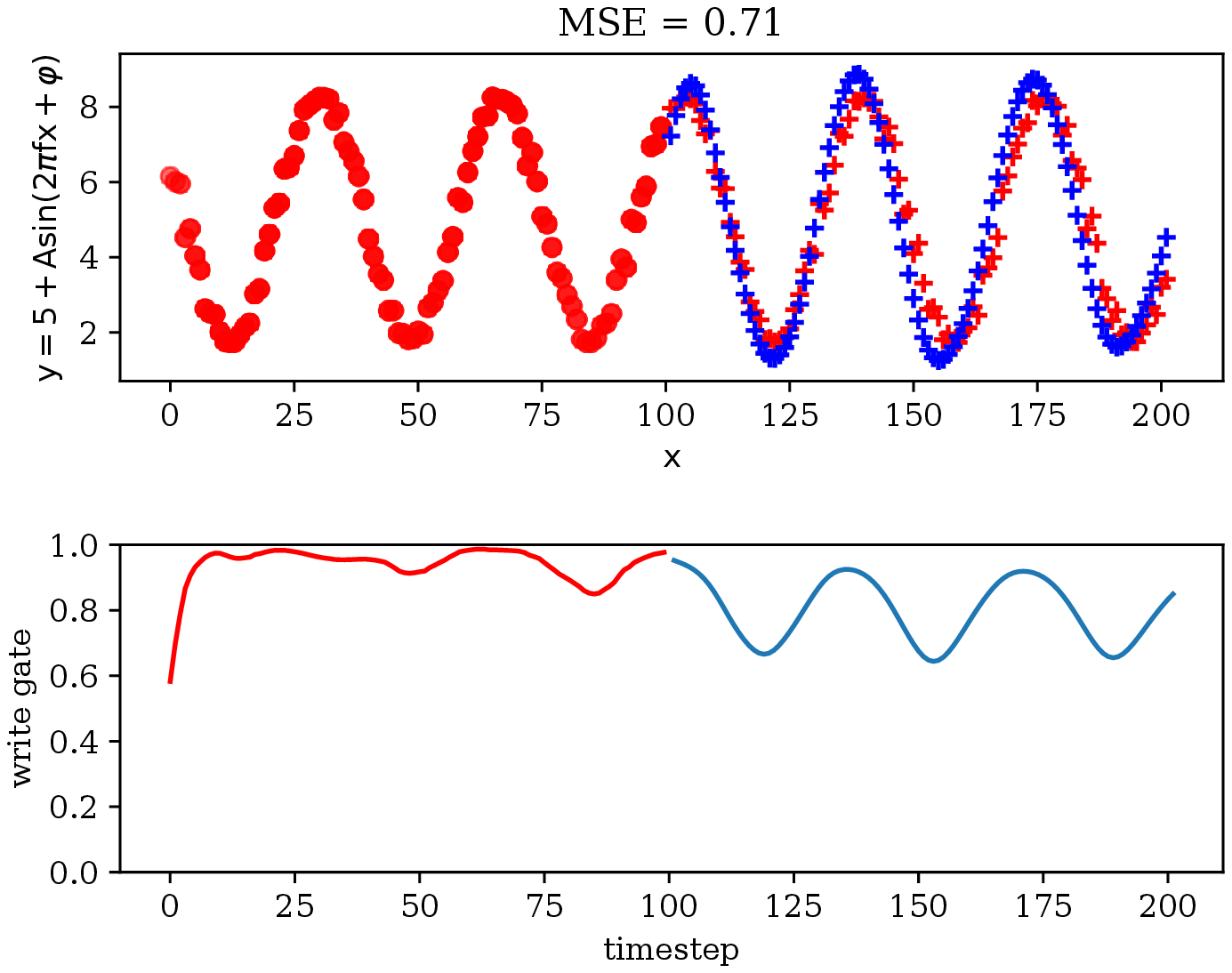}\includegraphics[width=0.3\linewidth]{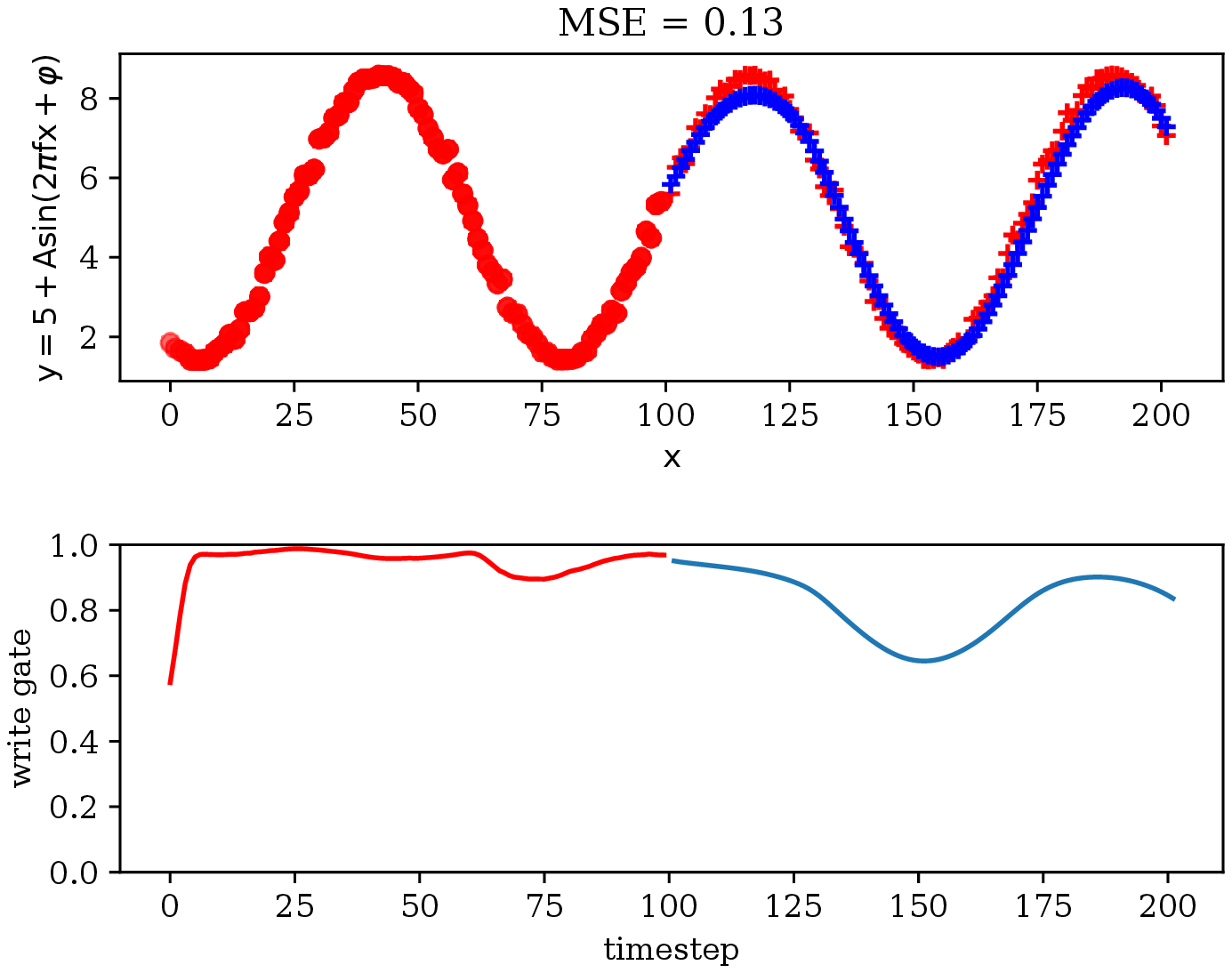}%
\end{minipage}
\par\end{centering}
\begin{centering}
\noindent\begin{minipage}[t]{1\columnwidth}%
\includegraphics[width=0.3\linewidth]{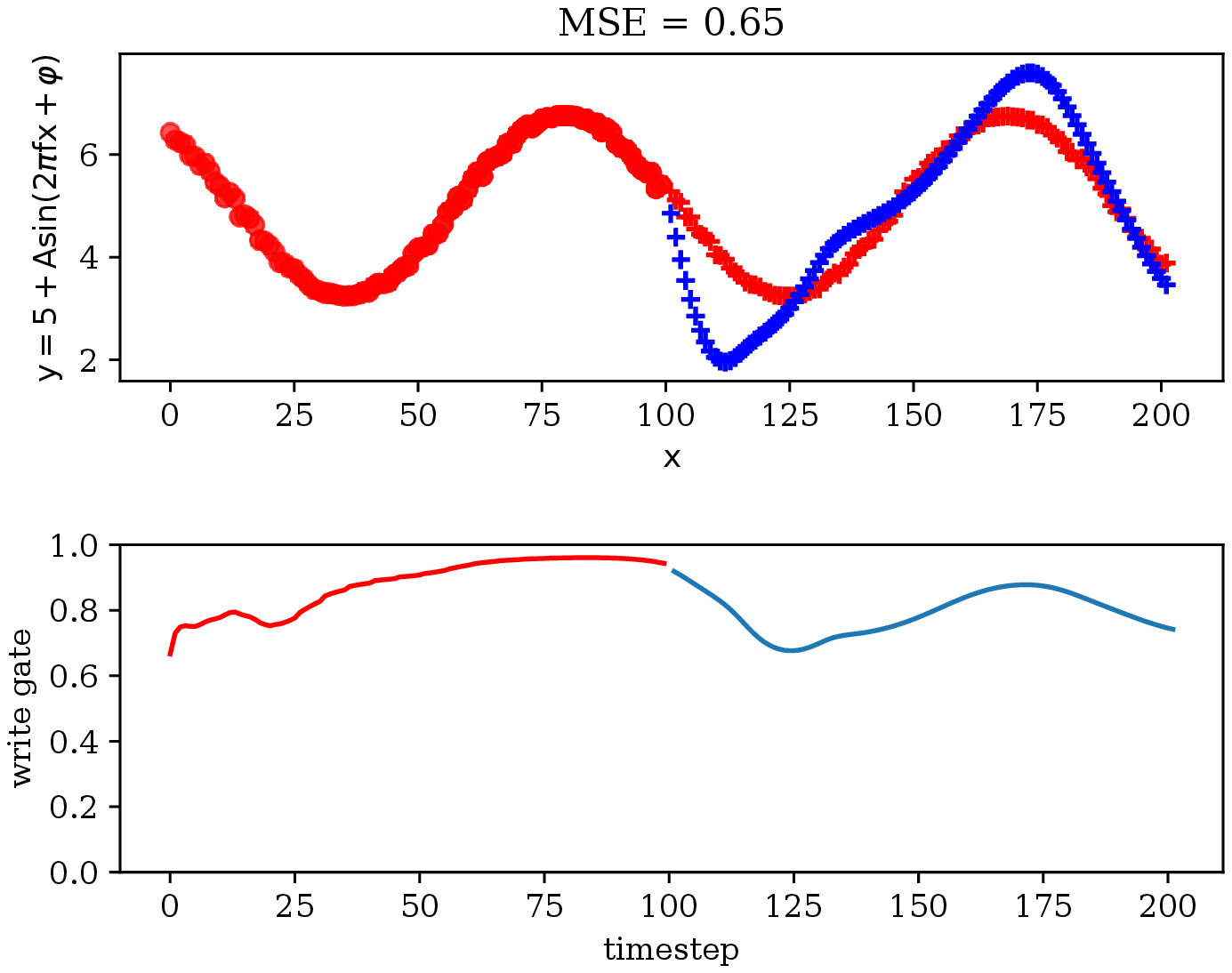}\includegraphics[width=0.3\linewidth]{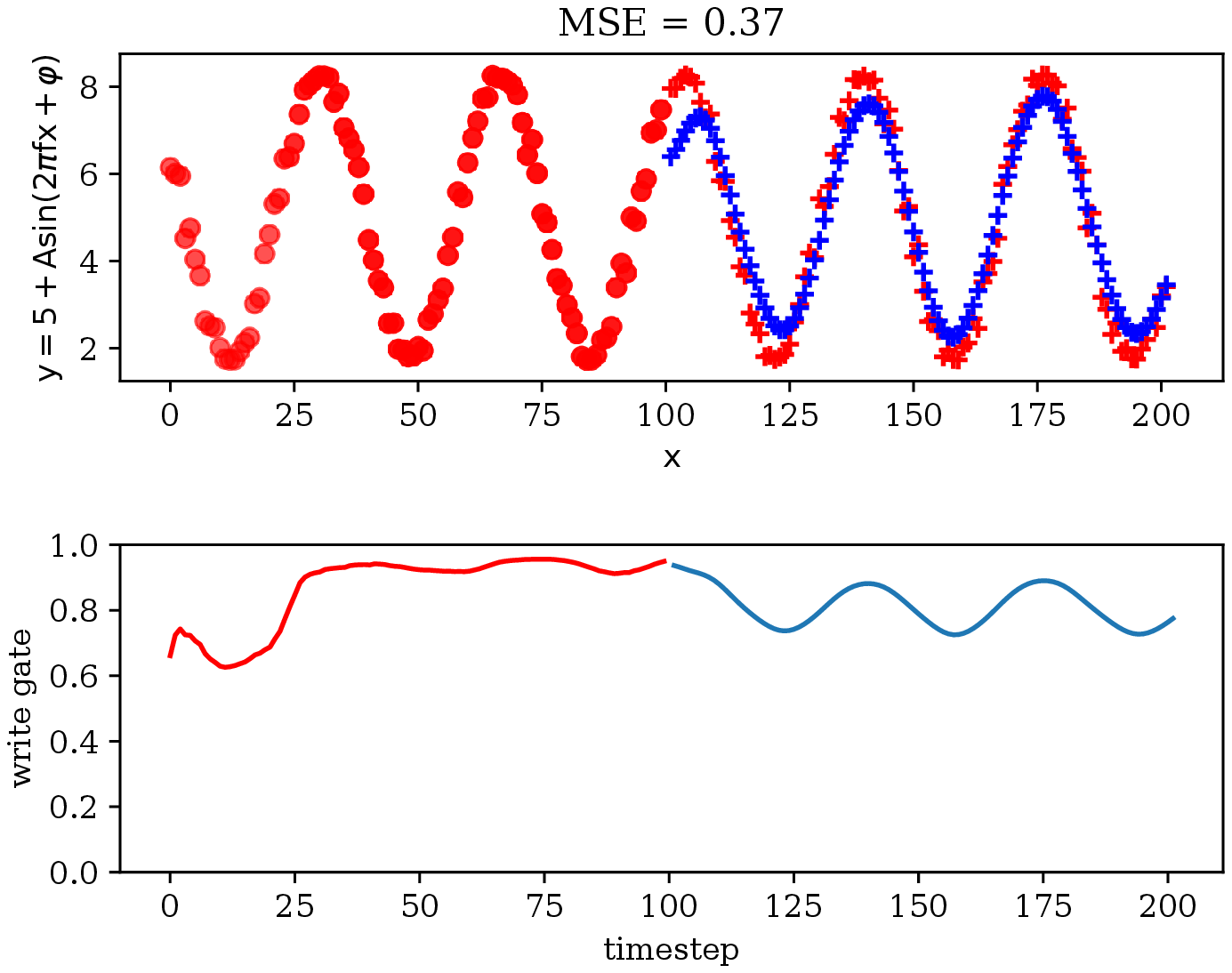}\includegraphics[width=0.3\linewidth]{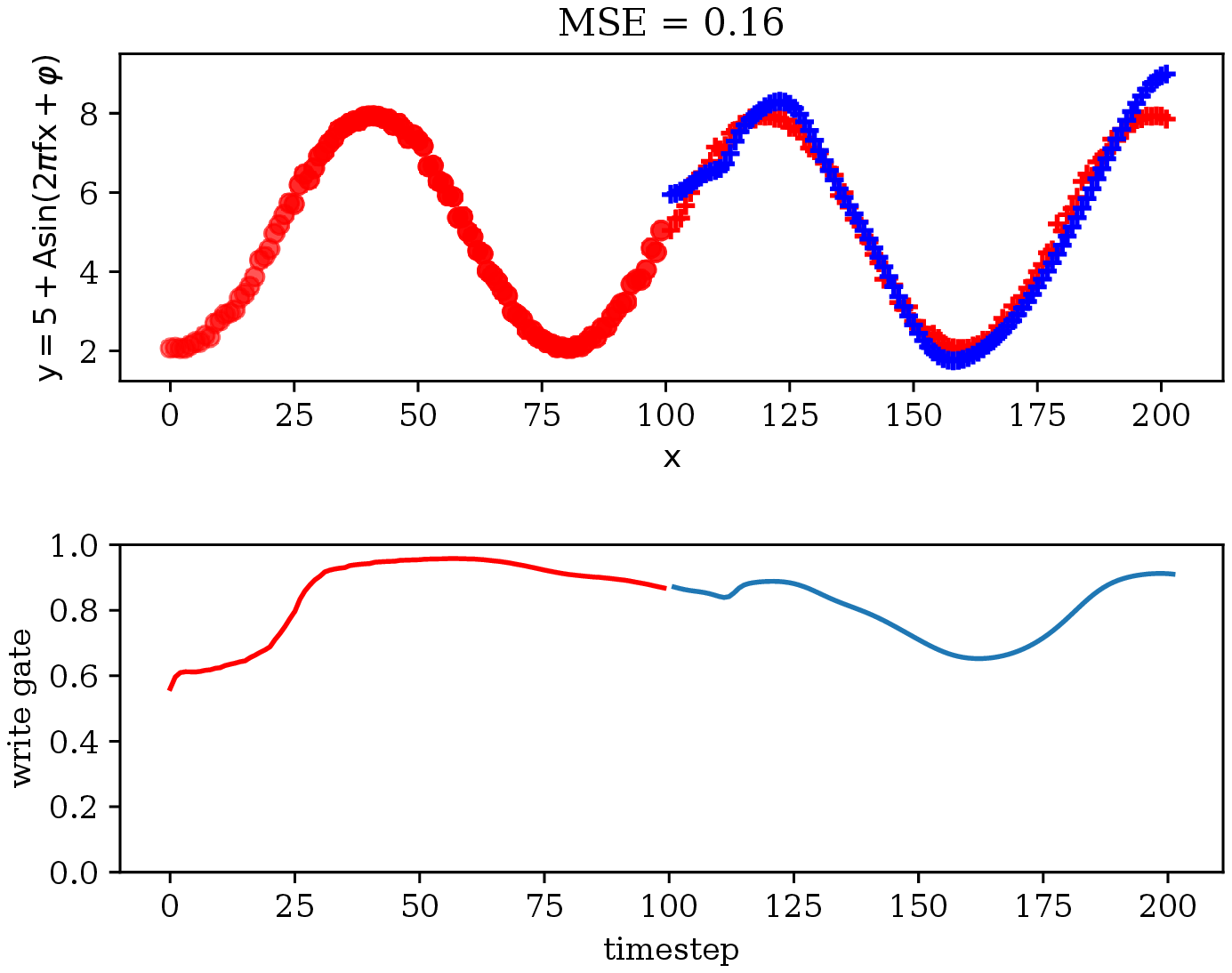}%
\end{minipage}
\par\end{centering}
\caption{Sinusoidal generation with clean input sequence for DNC, UW and CUW
in top-down order. \label{fig:Sinusoid}}
\end{figure}

\begin{figure}
\begin{centering}
\noindent\begin{minipage}[t]{1\columnwidth}%
\includegraphics[width=0.3\linewidth]{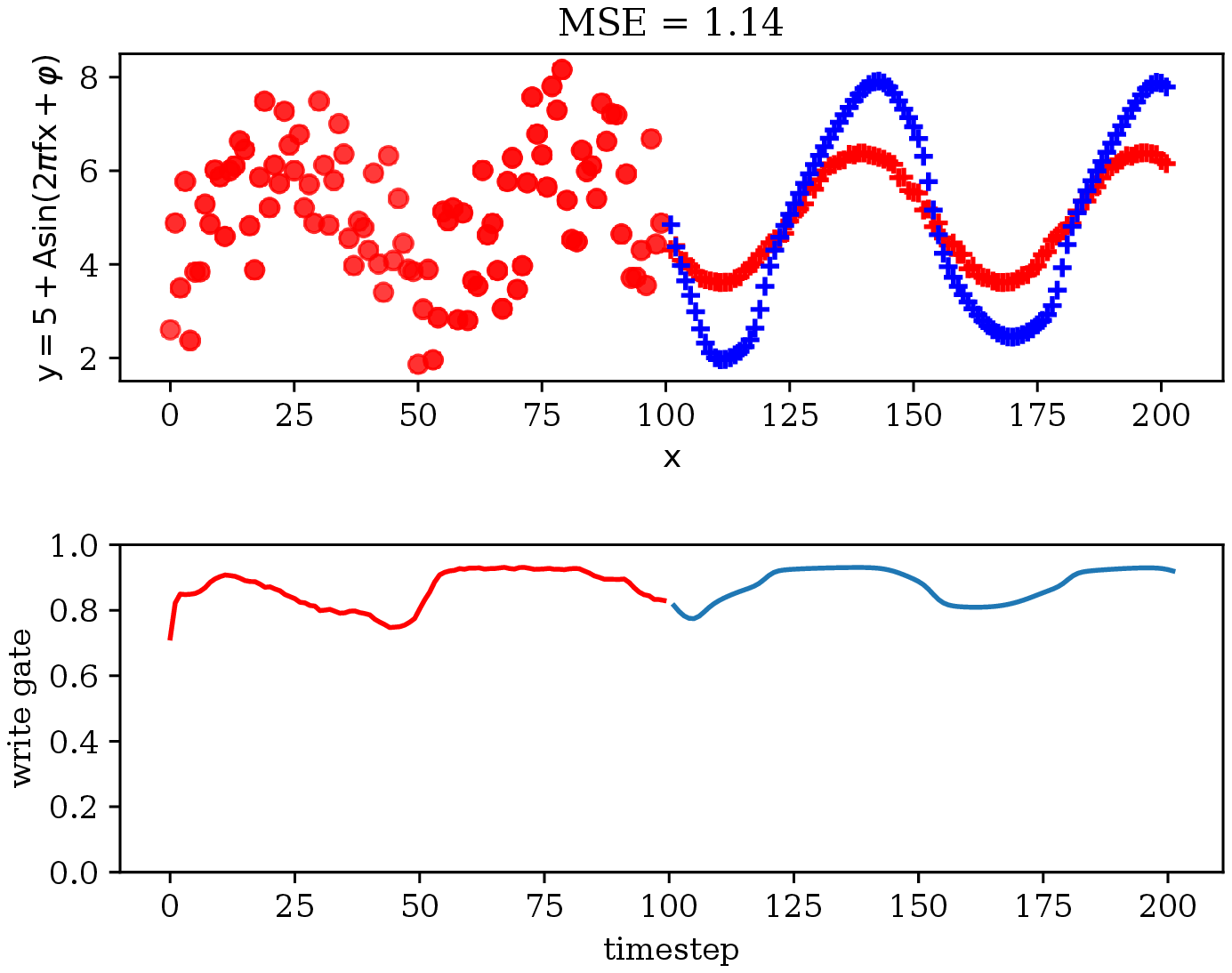}\includegraphics[width=0.3\linewidth]{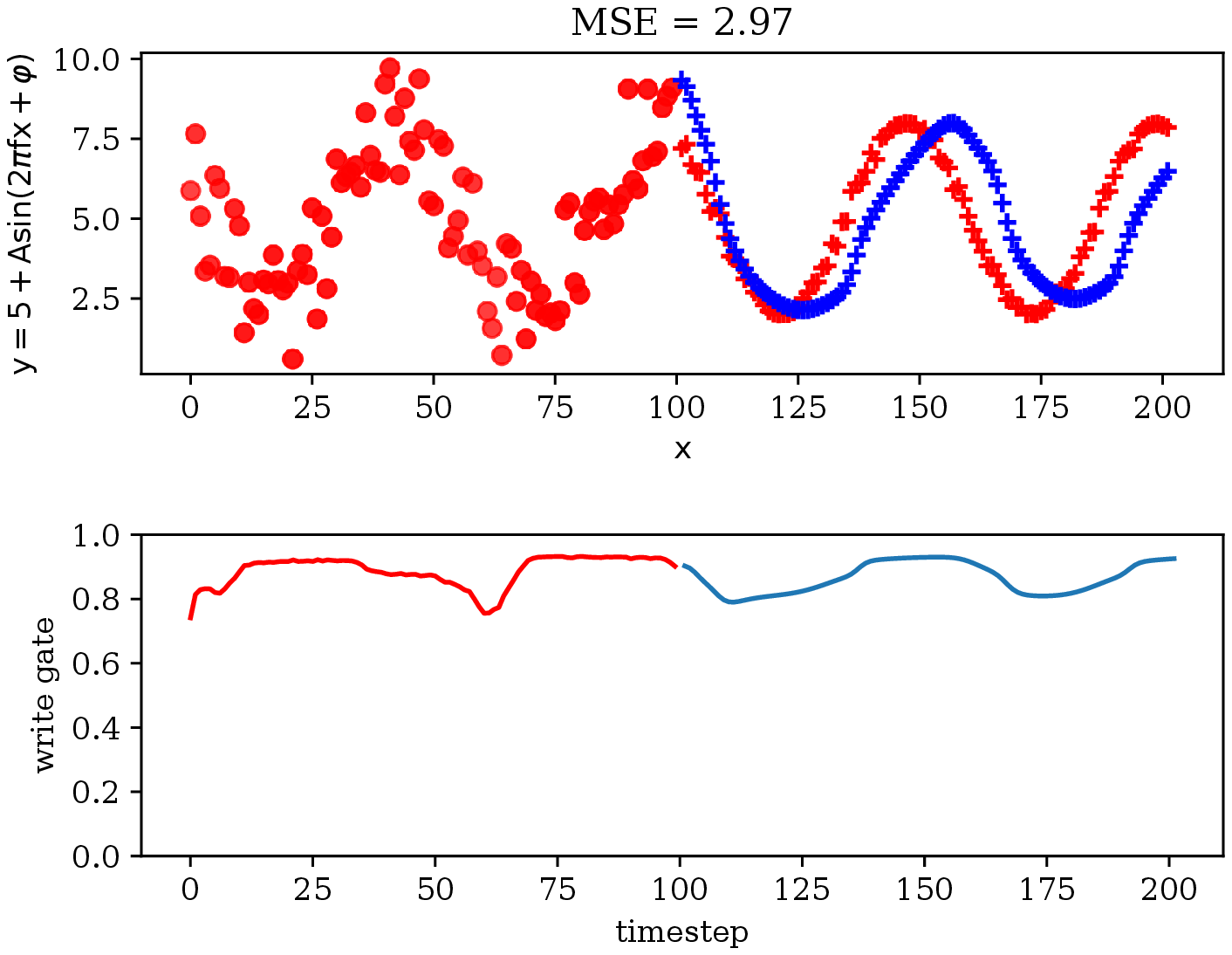}\includegraphics[width=0.3\linewidth]{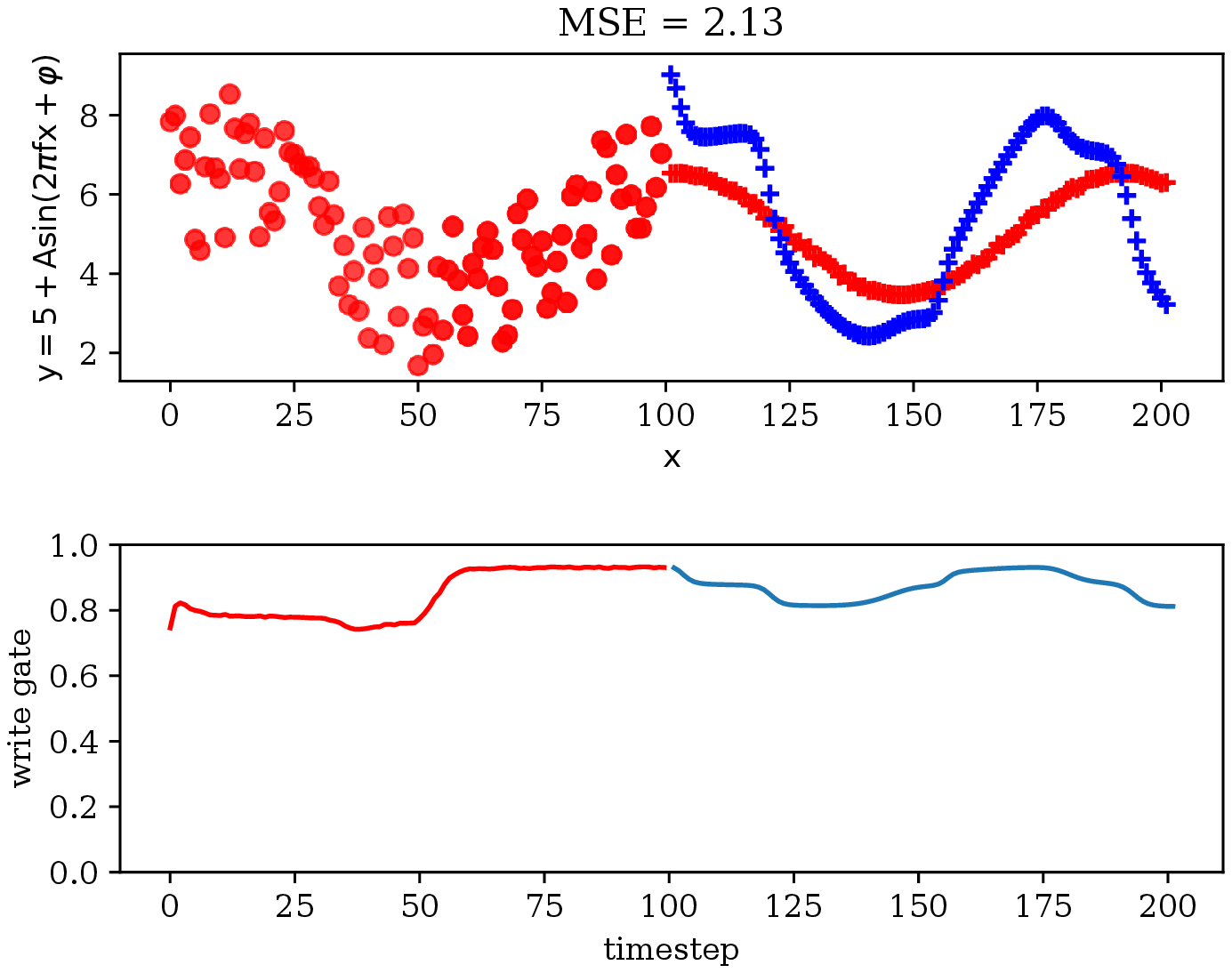}%
\end{minipage}
\par\end{centering}
\begin{centering}
\noindent\begin{minipage}[t]{1\columnwidth}%
\includegraphics[width=0.3\linewidth]{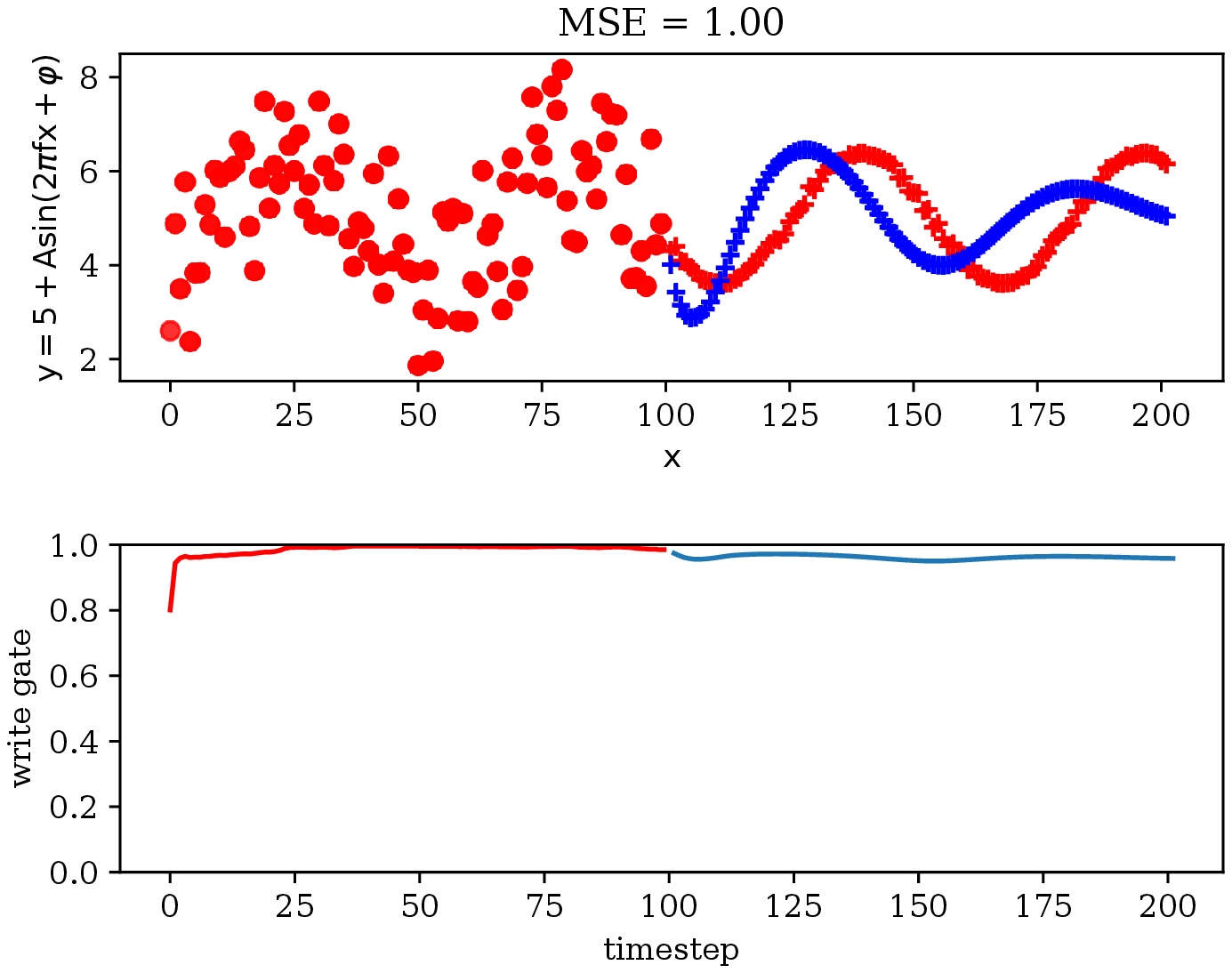}\includegraphics[width=0.3\linewidth]{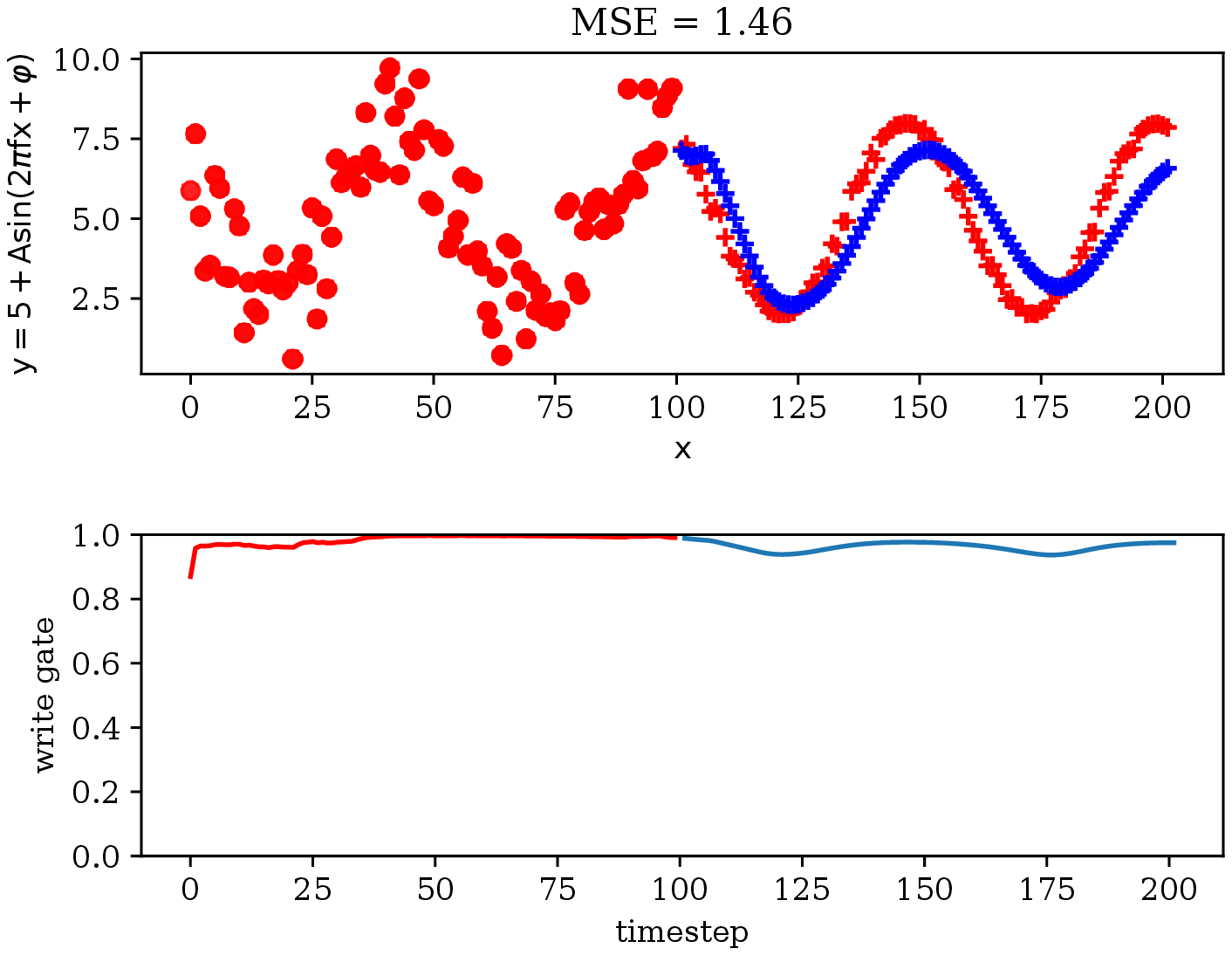}\includegraphics[width=0.3\linewidth]{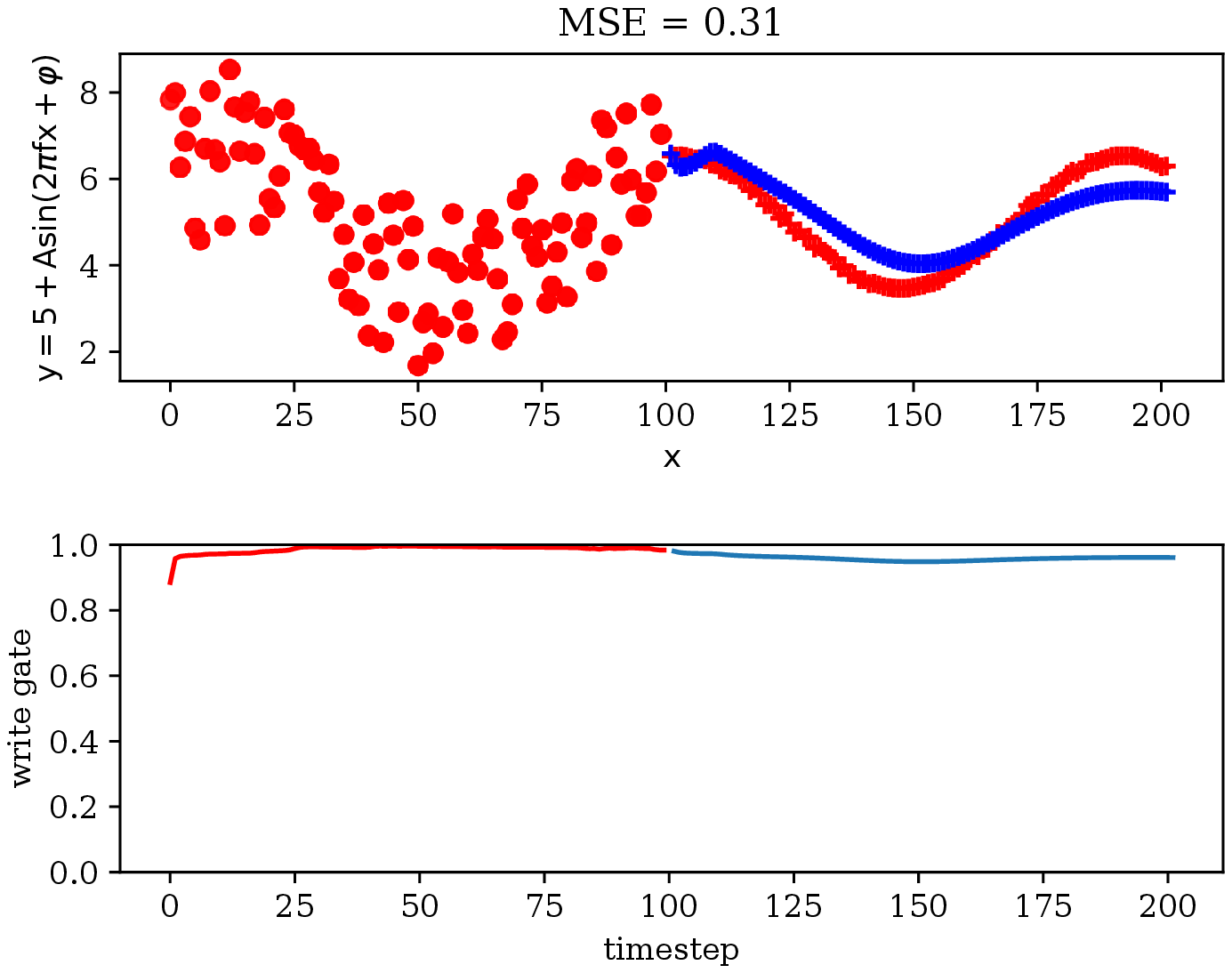}%
\end{minipage}
\par\end{centering}
\begin{centering}
\noindent\begin{minipage}[t]{1\columnwidth}%
\includegraphics[width=0.3\linewidth]{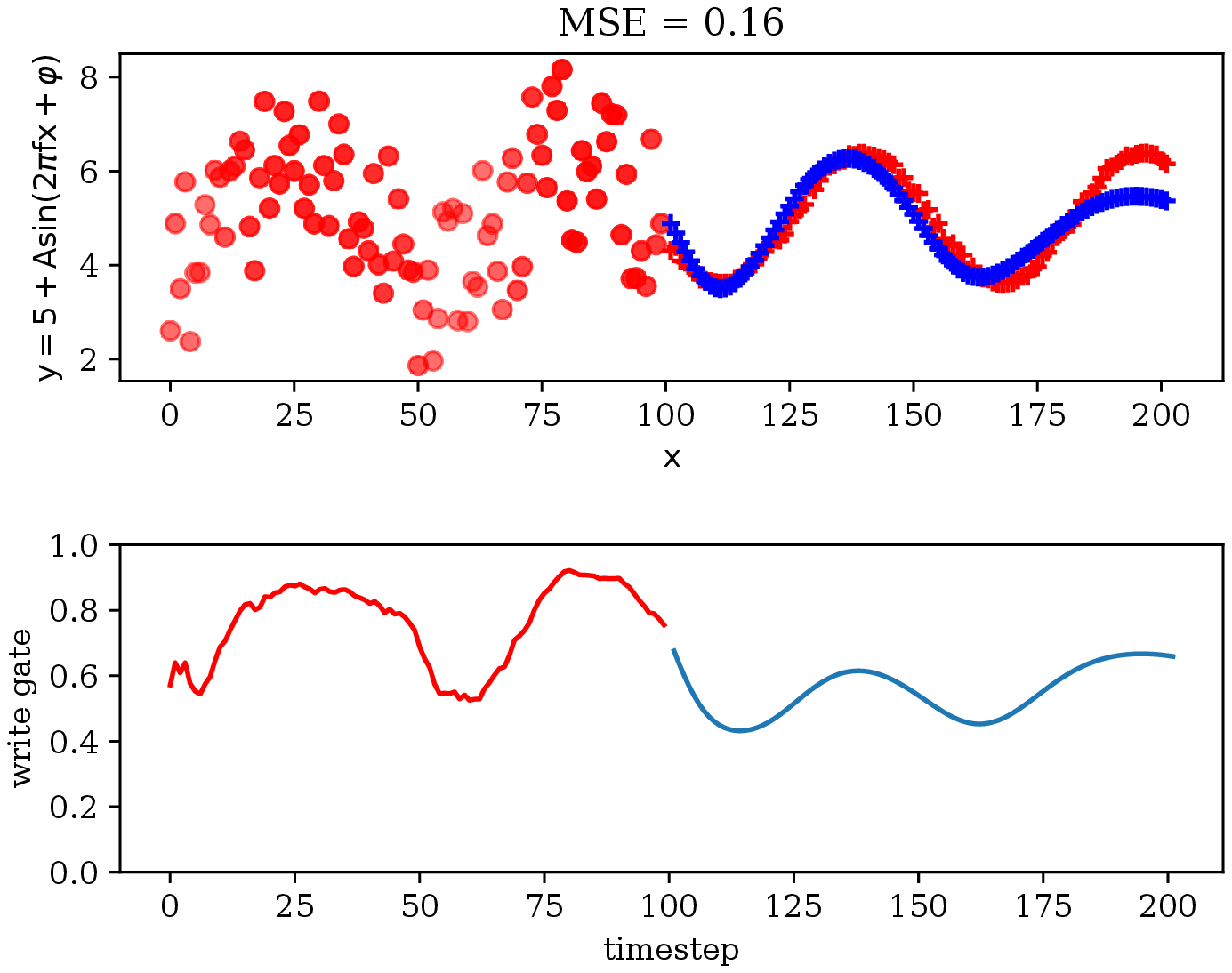}\includegraphics[width=0.3\linewidth]{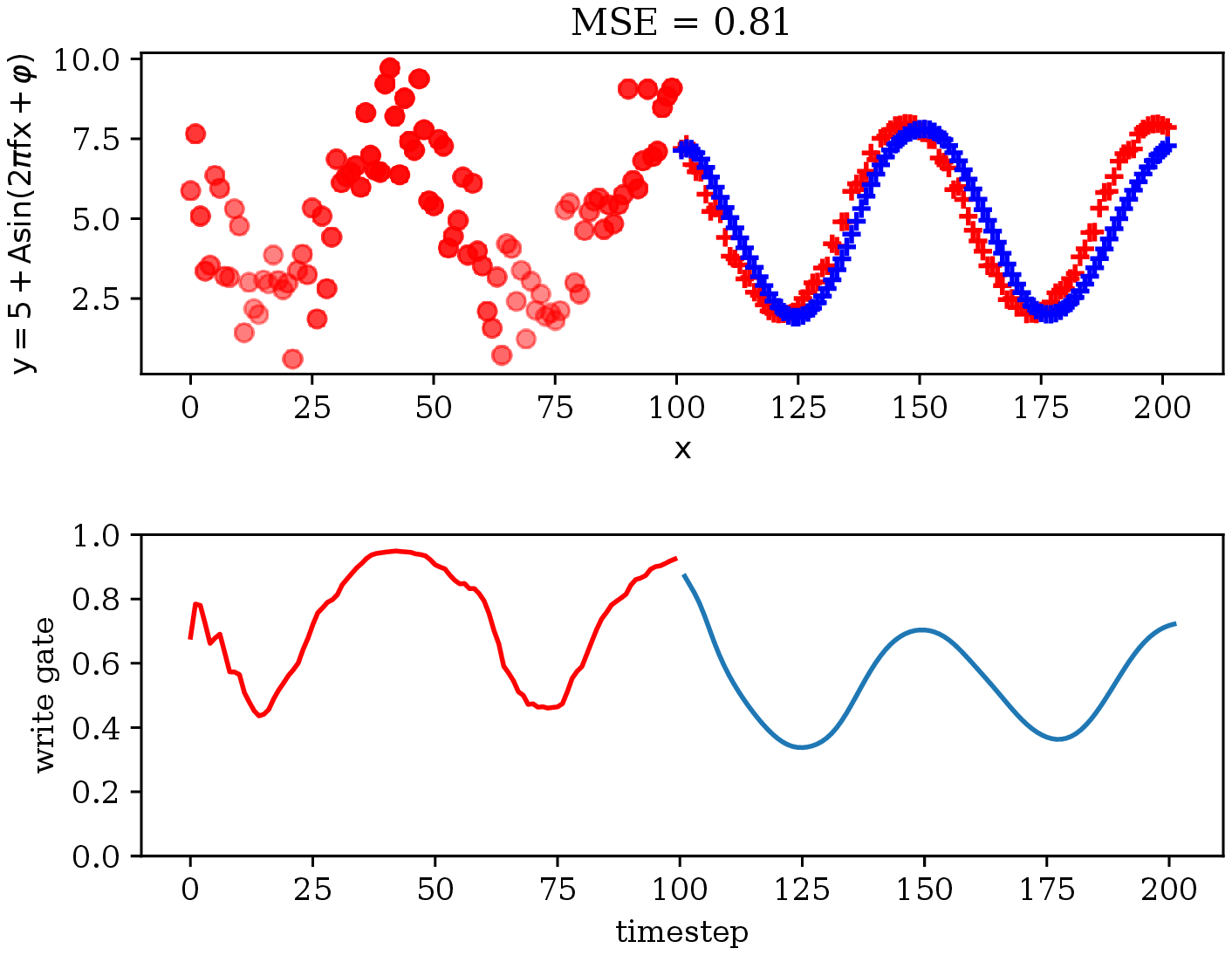}\includegraphics[width=0.3\linewidth]{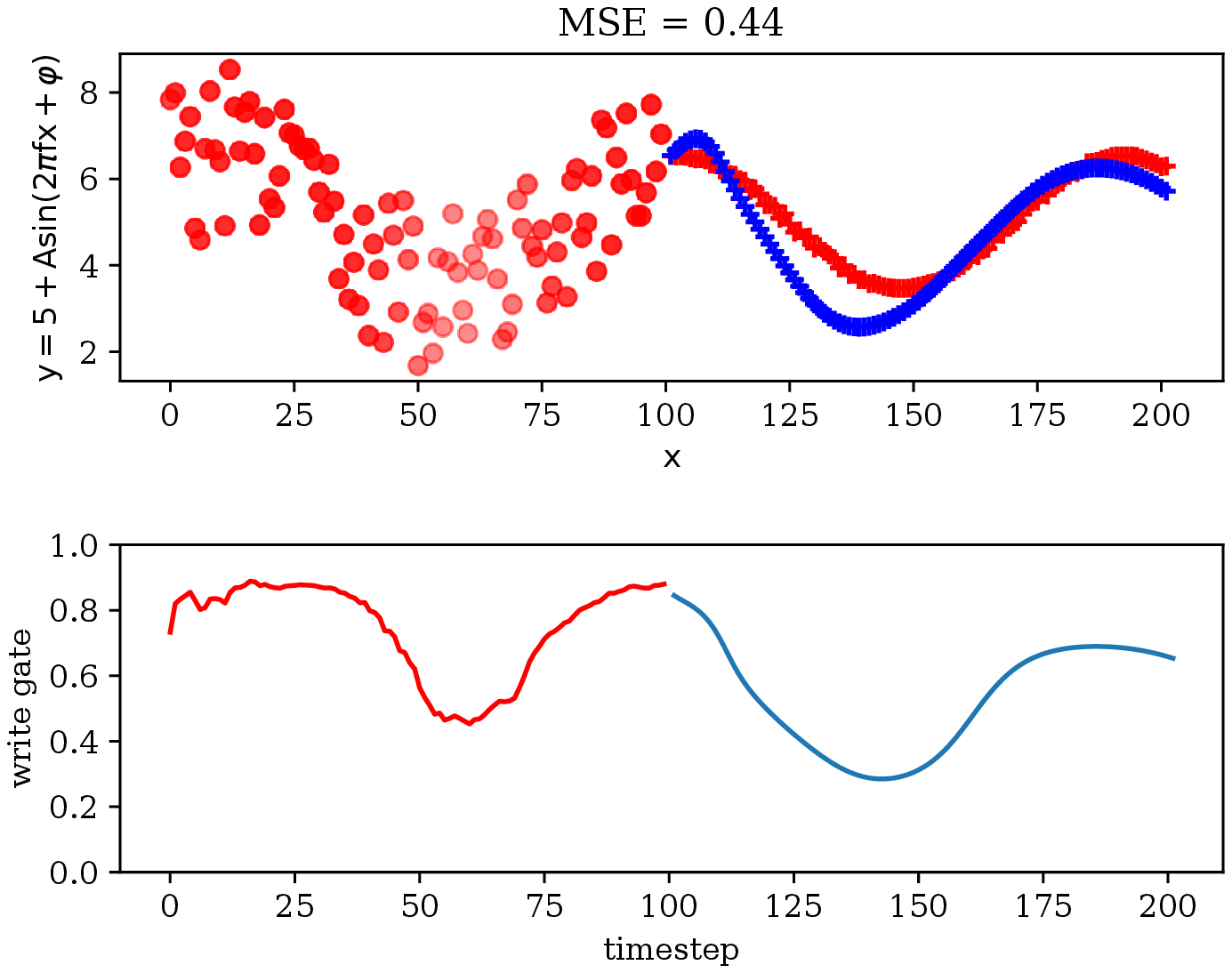}%
\end{minipage}
\par\end{centering}
\caption{Sinusoidal generation with noisy input sequence for DNC, UW and CUW
in top-down order. \label{fig:Sinusoid-noisy}}
\end{figure}

\subsection{Comparison with Non-Recurrent Methods in Flatten Image Classification
Task\label{subsec:Comparsion-with-non-recurrent}}

\begin{table}[H]
\begin{centering}
\begin{tabular}{lcc}
\hline 
Model & MNIST & pMNIST\tabularnewline
\hline 
The Transformer$^{\star}$ & 98.9 & 97.9\tabularnewline
Dilated CNN$^{\blacklozenge}$ & 98.3 & 96.7\tabularnewline
\hline 
DNC+CUW & 99.1 & 96.3\tabularnewline
\hline 
\end{tabular}
\par\end{centering}
\caption{Test accuracy (\%) on MNIST, pMNIST. Previously reported results are
from Vaswani et al., (2017)$^{\star}$ and Chang et al., (2017)$^{\blacklozenge}$.
\label{tab:mnist-1}}
\end{table}

\subsection{Details on Document Classification Datasets\label{subsec:Details-on-document}}

\begin{table}[H]
\begin{centering}
\begin{tabular}{cc>{\centering}p{0.1\linewidth}>{\centering}p{0.1\linewidth}>{\centering}p{0.1\linewidth}>{\centering}p{0.1\linewidth}}
\hline 
Dataset & Classes & Average lengths & Max lengths & Train samples & Test samples\tabularnewline
\hline 
IMDb & 2 & 282 & 2,783 & 25,000 & 25,000\tabularnewline
Yelp Review Polarity (Yelp P.) & 2 & 156 & 1,381 & 560,000 & 38,000\tabularnewline
Yelp Review Full (Yelp F.) & 5 & 158 & 1,381 & 650,000 & 50,000\tabularnewline
AG's News (AG) & 4 & 44 & 221 & 120,000 & 7,600\tabularnewline
DBPedia (DBP) & 14 & 55 & 1,602 & 560,000 & 70,000\tabularnewline
Yahoo! Answers (Yah. A.) & 10 & 112 & 4,392 & 1,400,000 & 60,000\tabularnewline
\hline 
\end{tabular}
\par\end{centering}
\caption{Statistics on several big document classification datasets}
\end{table}

\subsection{Document Classification Detailed Records\label{subsec:Document-classification-detailed}}

\begin{table}[H]
\begin{centering}
\begin{tabular}{cccccc}
\hline 
\multicolumn{2}{c}{Model} & AG & IMDb & Yelp P. & Yelp F.\tabularnewline
\hline 
\multirow{4}{*}{UW} & 1 & 93.42 & \textbf{91.39} & \textbf{96.39} & 64.89\tabularnewline
 & 2 & 93.52 & 91.30 & 96.31 & 64.97\tabularnewline
 & 3 & \textbf{93.69} & 91.25 & 96.39 & \textbf{65.26}\tabularnewline
\cline{2-6} \cline{3-6} \cline{4-6} \cline{5-6} \cline{6-6} 
 & Mean/Std & 93.54$\pm$0.08 & 91.32$\pm$0.04 & 96.36$\pm$0.03 & 65.04$\pm$0.11\tabularnewline
\hline 
\multirow{4}{*}{CUW} & 1 & 93.61 & 91.26 & \textbf{96.42} & \textbf{65.63}\tabularnewline
 & 2 & \textbf{93.87} & 91.18 & 96.29 & 65.05\tabularnewline
 & 3 & 93.70 & \textbf{91.32} & 96.36 & 64.80\tabularnewline
\cline{2-6} \cline{3-6} \cline{4-6} \cline{5-6} \cline{6-6} 
 & Mean/Std & 93.73$\pm$0.08 & 91.25$\pm$0.04 & 96.36$\pm$0.04 & 65.16$\pm$0.24\tabularnewline
\hline 
\end{tabular}
\par\end{centering}
\caption{Document classification accuracy (\%) on several datasets reported
for 3 different runs. Bold denotes the best records.}
\end{table}

\section{Supplementary for Chapter 7}

\subsection{Full Learning Curves on Single NTM Tasks\label{subsec:Full-Learning-Curves}}

\begin{figure}[H]
\begin{centering}
\includegraphics[width=1\textwidth]{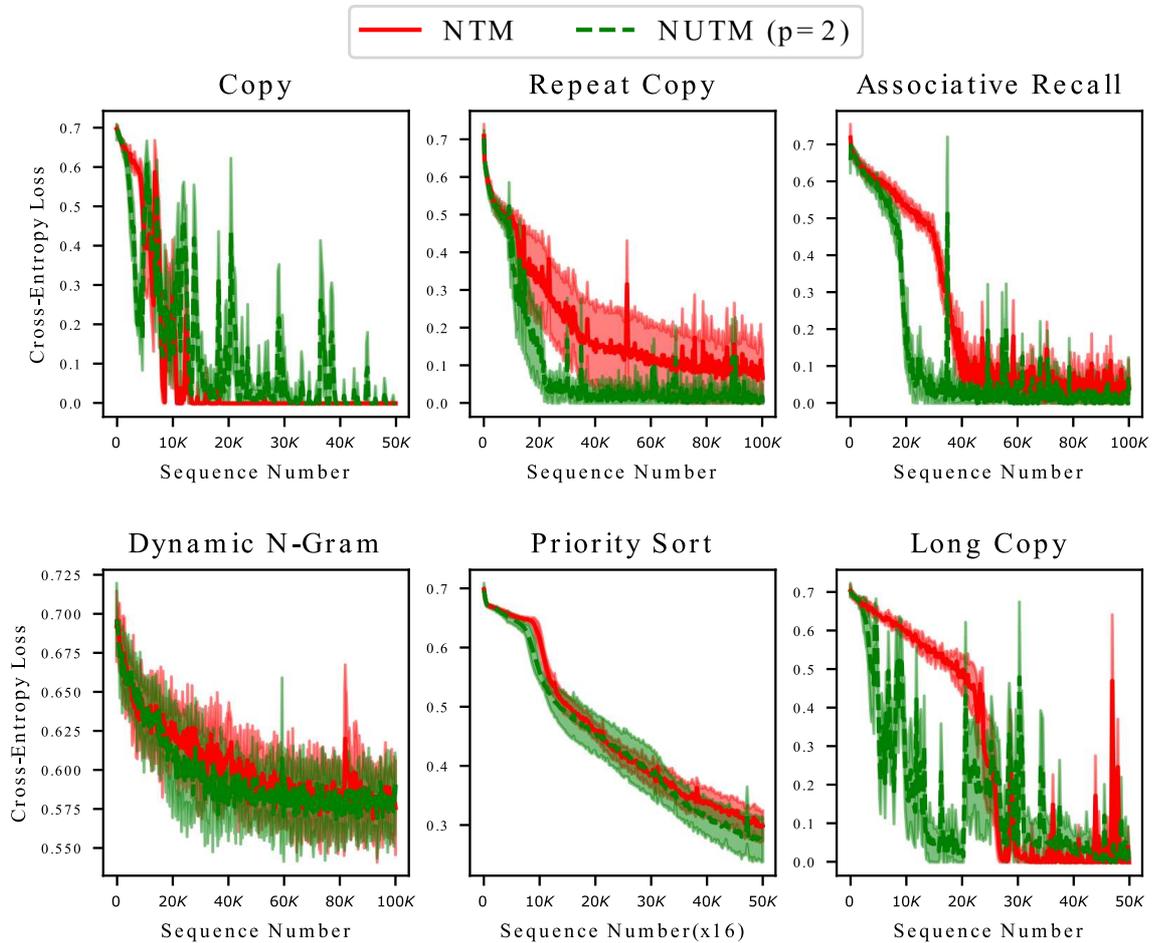}
\par\end{centering}
\caption{Learning curves on NTM tasks.\label{fig:Learning-curves-on}}
\end{figure}

\subsection{Clustering on The Latent Space\label{subsec:Clustering-on-The}}

As previously mentioned in Sec. 3.3, MANN should let its states form
clusters to well-simulate Turing Machine. Fig. \ref{fig:Visualisation-of-the}
(a) and (c) show NTM actually \foreignlanguage{australian}{organises}
its $c_{t}$ space into clusters corresponding to processing states
(e.g, encoding and decoding). NUTM, which explicitly partitions this
space, clearly learn better clusters of $c_{t}$ (see Fig. \ref{fig:Visualisation-of-the}
(b) and (d)). This contributes to NUTM's outperformance over NTM. 

\begin{figure}[H]
\begin{centering}
\includegraphics[width=0.95\linewidth]{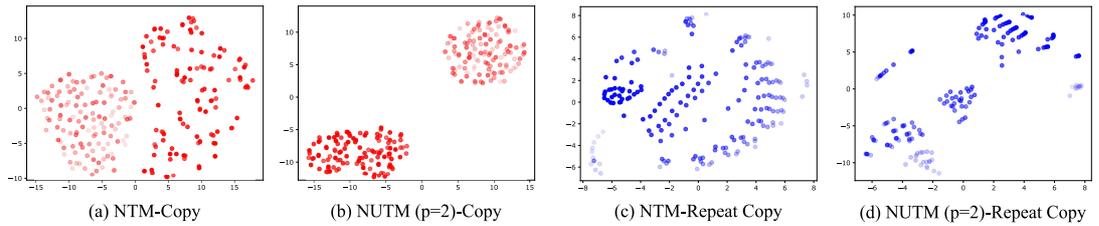}
\par\end{centering}
\caption{Visualisation of the first two principal components of $c_{t}$ space
in NTM (a,c) and NUTM (b,d) for Copy (red) and Repeat Copy (blue).
Fader color denotes lower timestep in a sequence. Both can learn clusters
of hidden states yet NUTM exhibits clearer partition. \label{fig:Visualisation-of-the} }
\end{figure}

\subsection{Program Usage Visualisations \label{subsec:Program-Usage-Visualizations}}

\ref{subsec:Visualization-on-program} and \ref{subsec:Visualization-on-program-1}
visualise the best inferences of NUTM on test data from single and
sequencing tasks. Each plot starts with the input sequence and the
predicted output sequence with error bits in the first row. The second
and fourth rows depict the read and write locations on data memory,
respectively. The third and fifth rows depict the program distribution
of the read head and write head, respectively. \ref{subsec:Perseveration-phenomenon-in}
visualises random failed predictions of NTM on sequencing tasks. The
plots follow previous pattern except for the program distribution
rows. 

\subsubsection{Visualisation on Program Distribution across Timesteps (Single Tasks)\label{subsec:Visualization-on-program}}

\begin{figure}[H]
\begin{centering}
\includegraphics[width=1\linewidth]{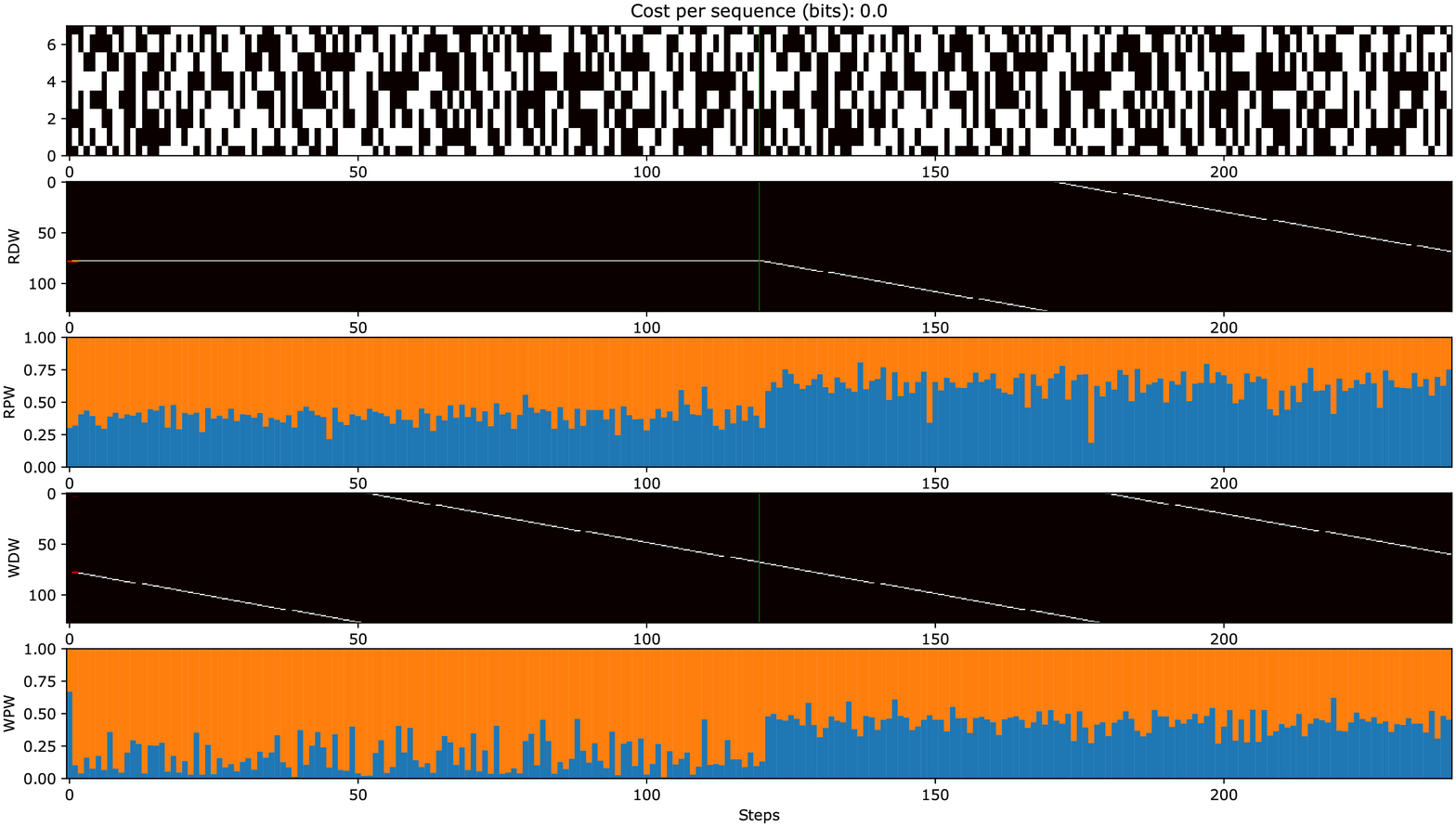}
\par\end{centering}
\caption{Copy (p=2).}
\end{figure}

\begin{figure}[H]
\begin{centering}
\includegraphics[width=1\linewidth]{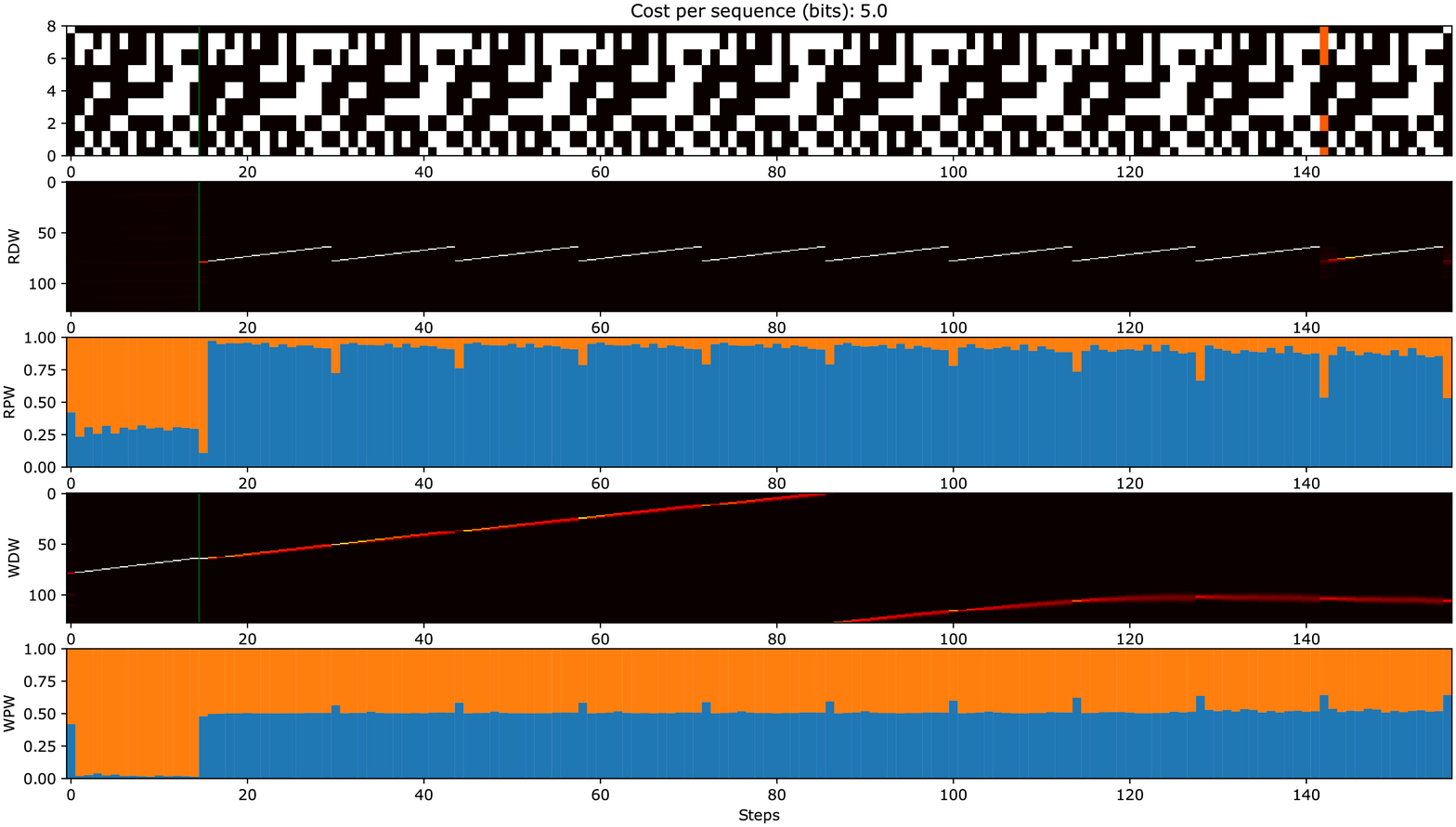}
\par\end{centering}
\caption{Repeat Copy (p=2).}
\end{figure}

\begin{figure}[H]
\begin{centering}
\includegraphics[width=1\linewidth]{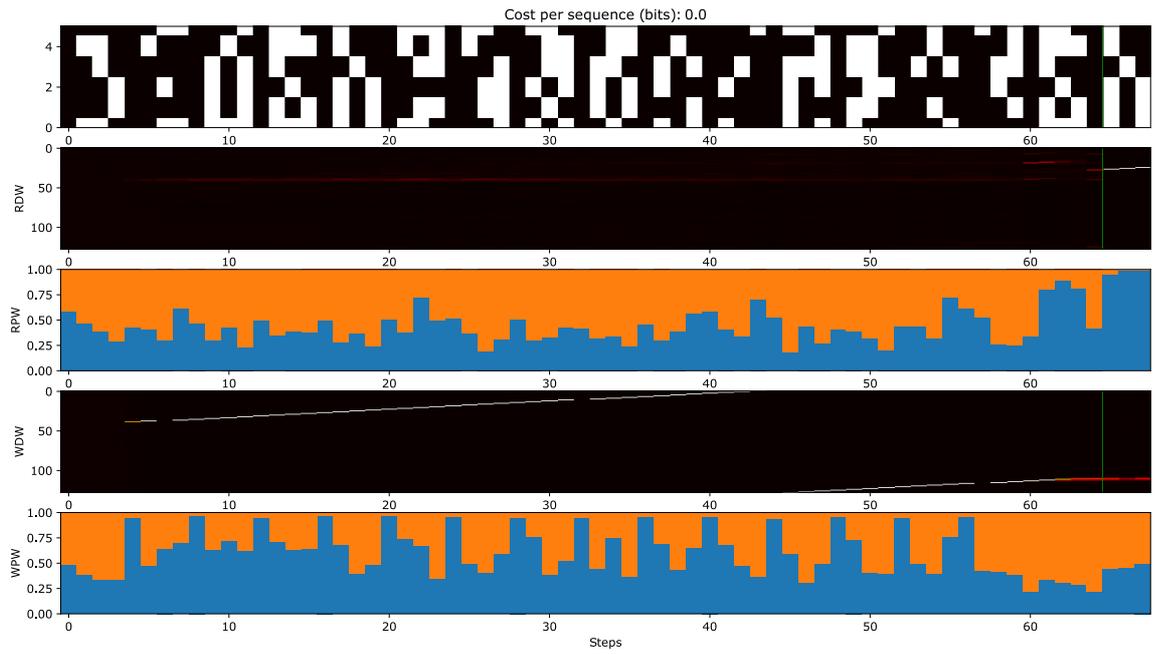}
\par\end{centering}
\caption{Associative Recall (p=2).}
\end{figure}

\begin{figure}[H]
\begin{centering}
\includegraphics[width=1\linewidth]{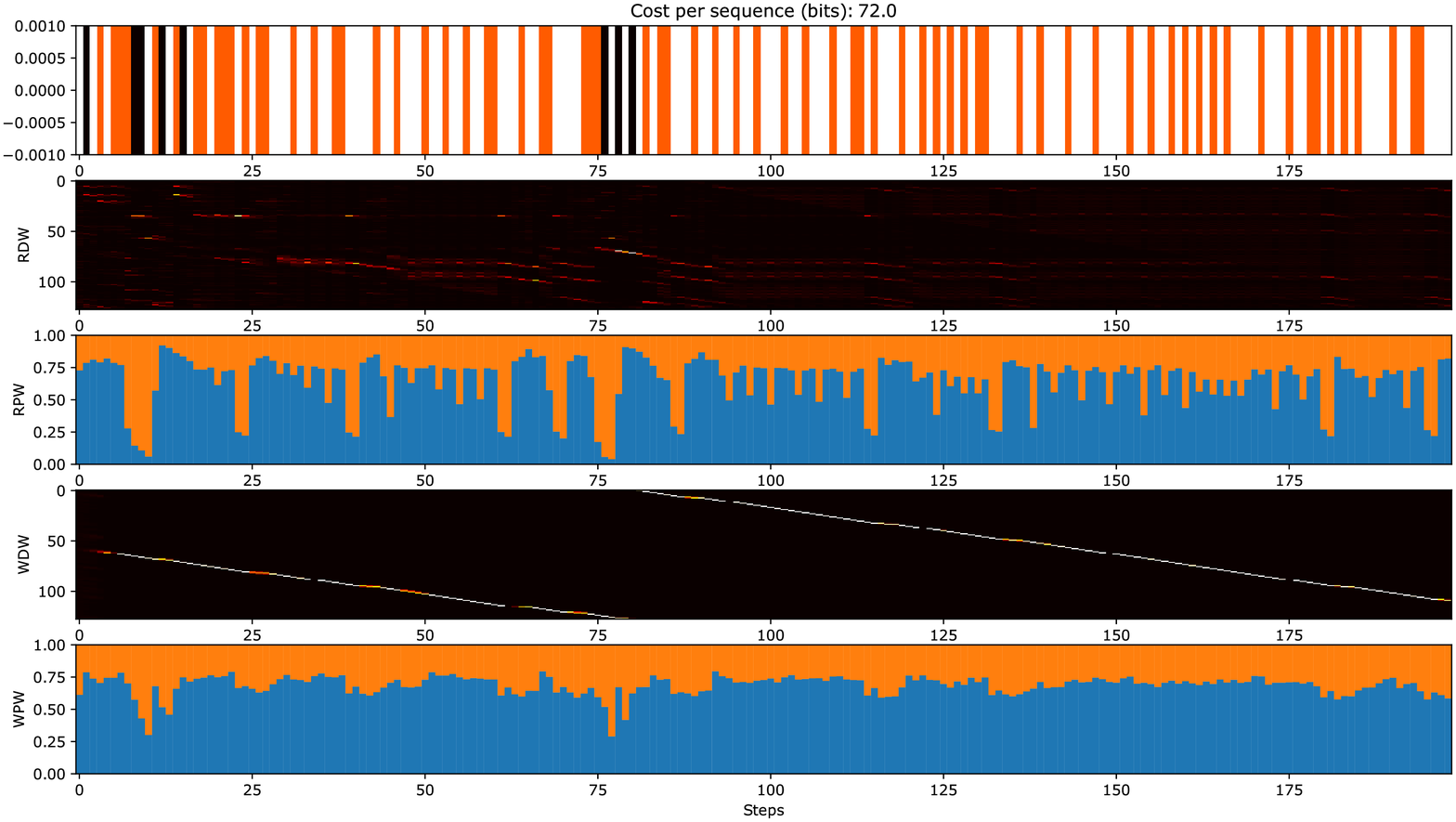}
\par\end{centering}
\caption{Dynamic N-grams (p=2).}
\end{figure}

\begin{figure}[H]
\begin{centering}
\includegraphics[width=1\linewidth]{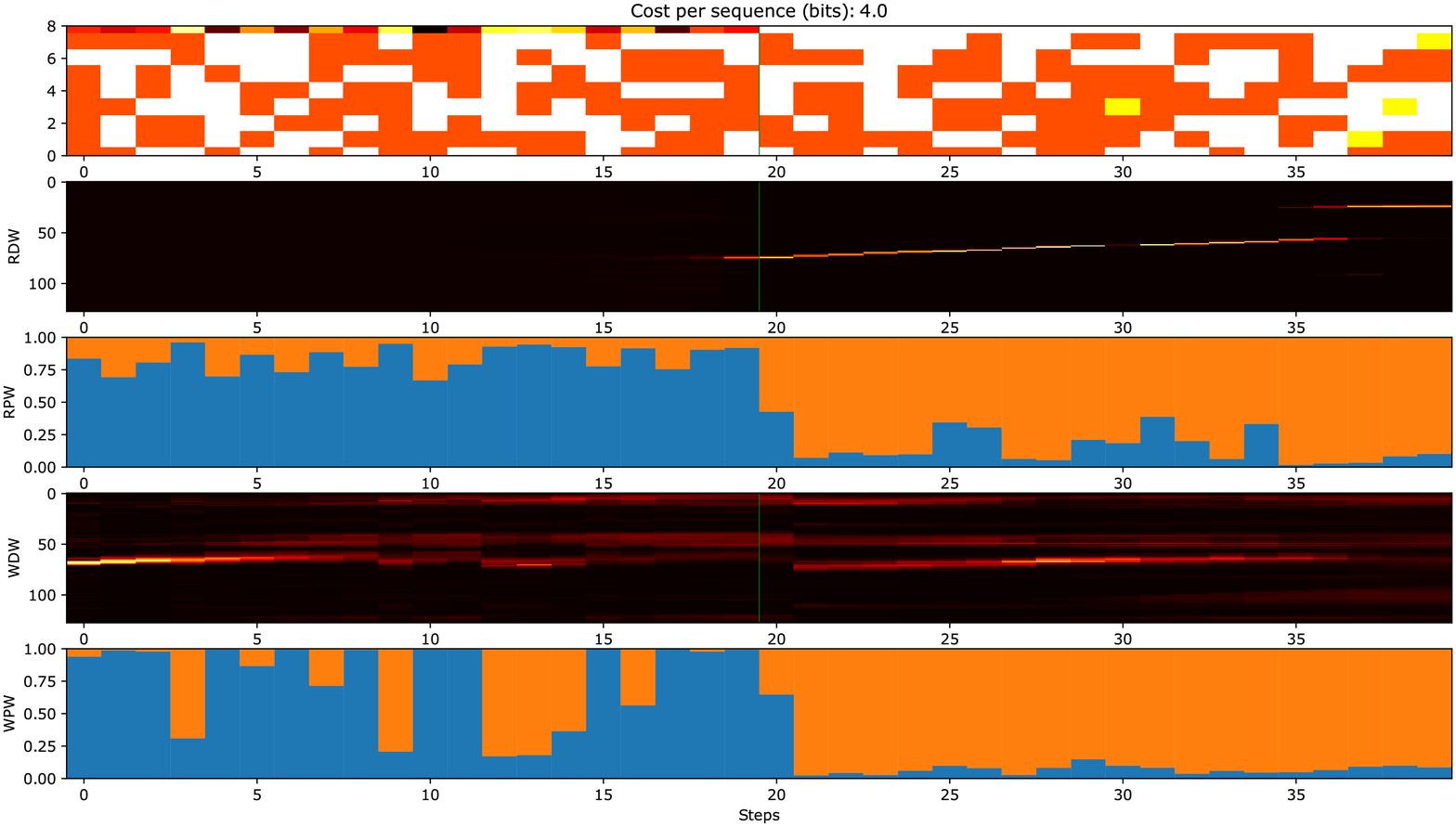}
\par\end{centering}
\caption{Priority Sort (p=2).}
\end{figure}

\begin{figure}[H]
\begin{centering}
\includegraphics[width=1\linewidth]{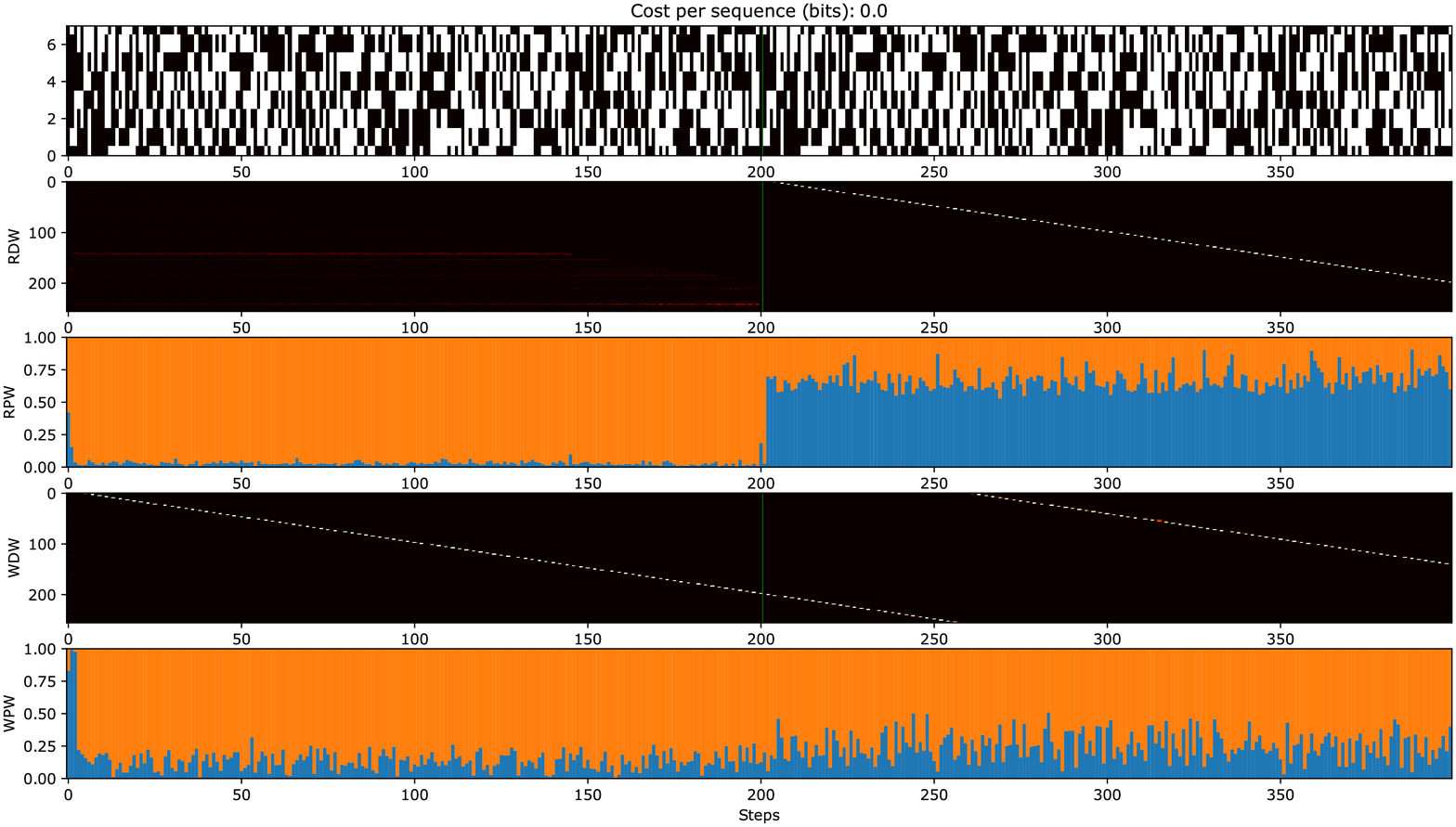}
\par\end{centering}
\caption{Long Copy (p=2).}
\end{figure}

\subsubsection{Visualisation on Program Distribution across Timesteps (Sequencing
Tasks)\label{subsec:Visualization-on-program-1}}

\begin{figure}[H]
\begin{centering}
\includegraphics[width=1\linewidth]{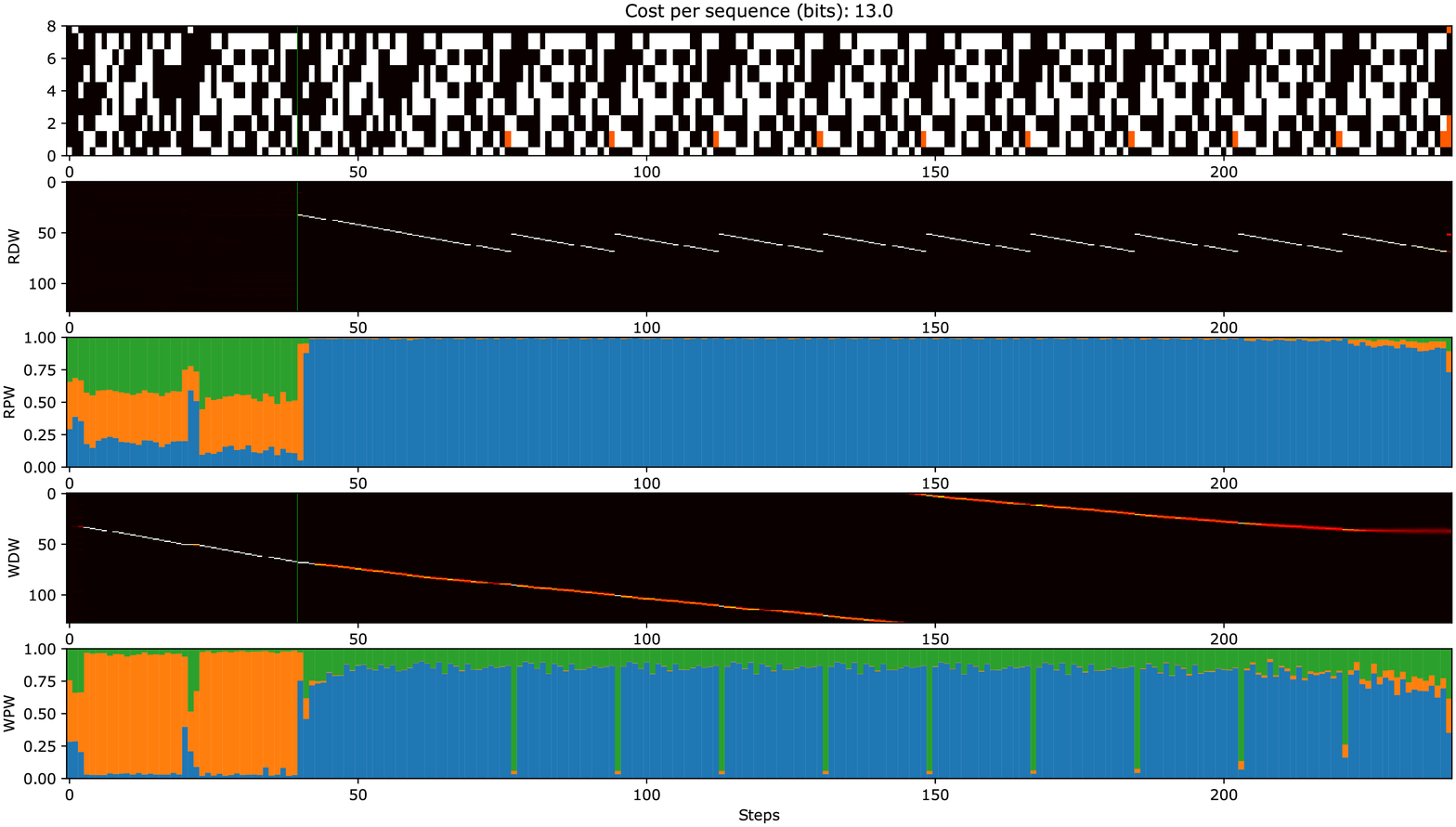}
\par\end{centering}
\caption{Copy+Repeat Copy (p=3).}
\end{figure}

\begin{figure}[H]
\begin{centering}
\includegraphics[width=1\linewidth]{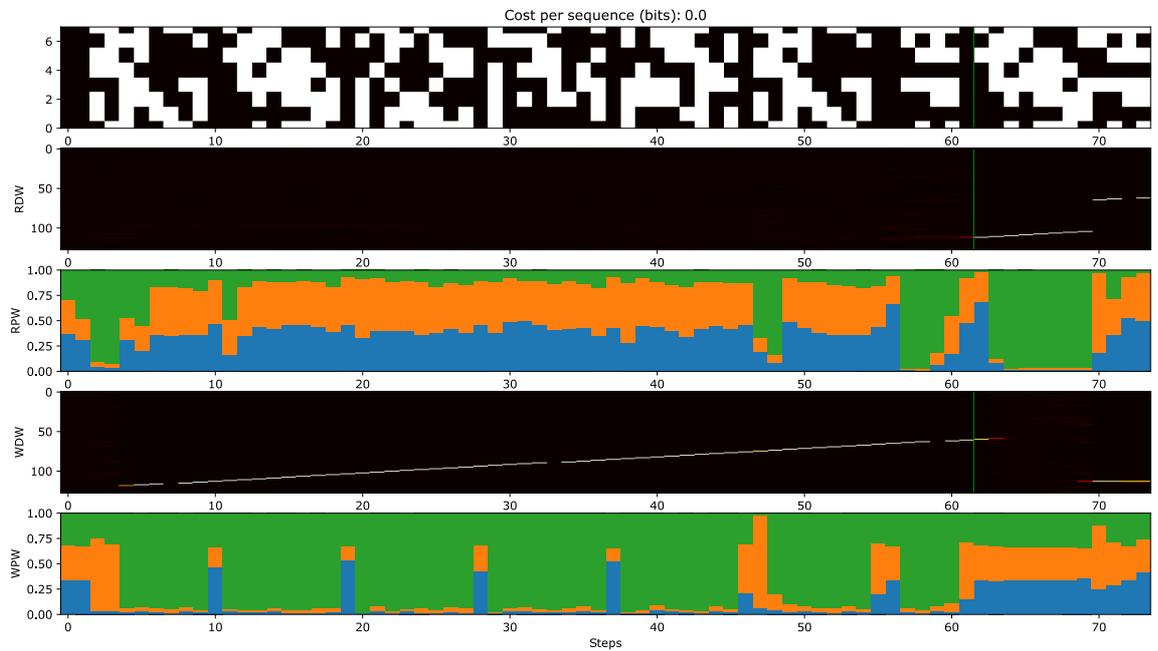}
\par\end{centering}
\caption{Copy+Associative Recall (p=3).}
\end{figure}

\begin{figure}[H]
\begin{centering}
\includegraphics[width=1\linewidth]{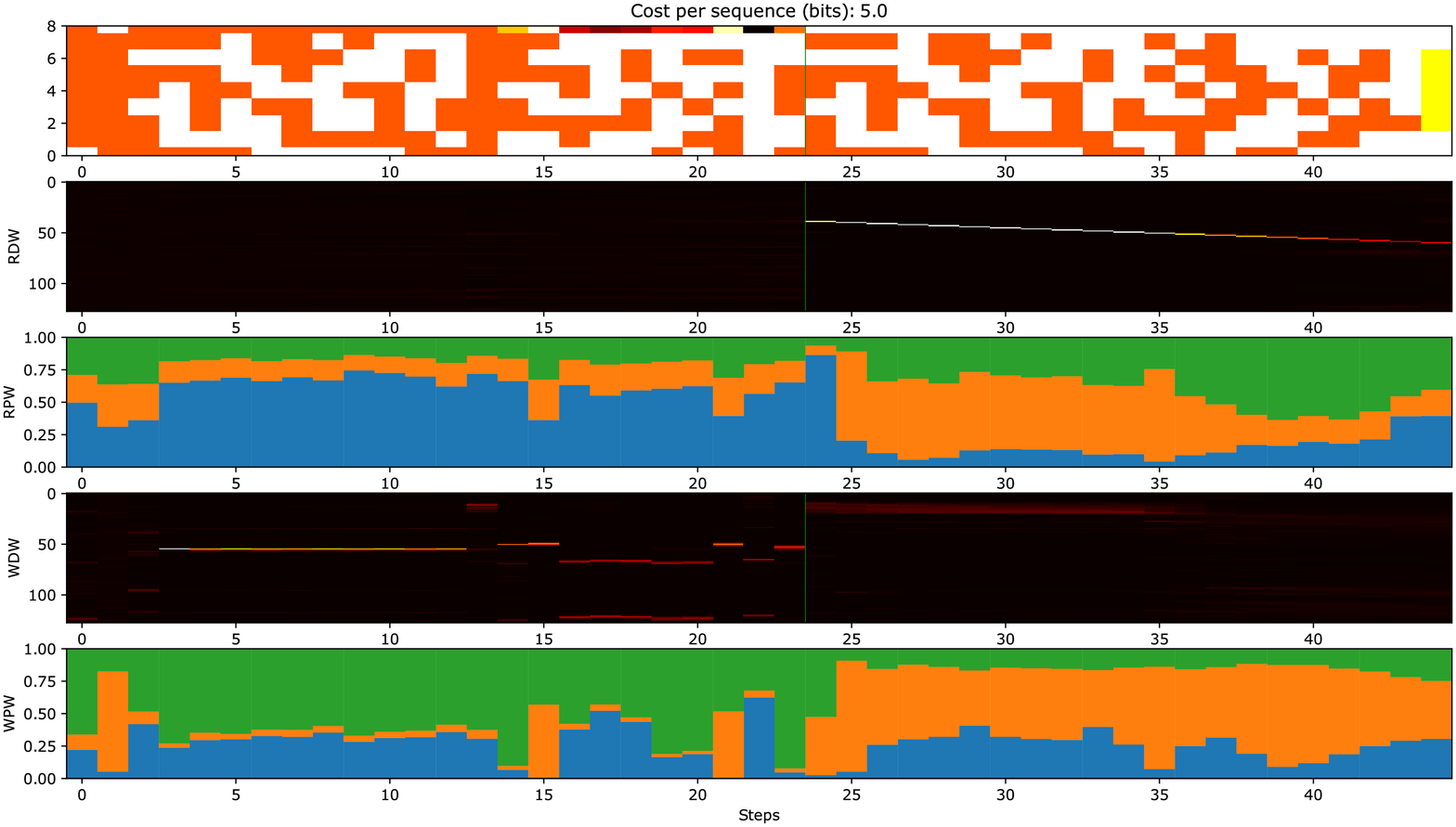}
\par\end{centering}
\caption{Copy+Priority Sort (p=3).}
\end{figure}

\begin{figure}[H]
\begin{centering}
\includegraphics[width=1\linewidth]{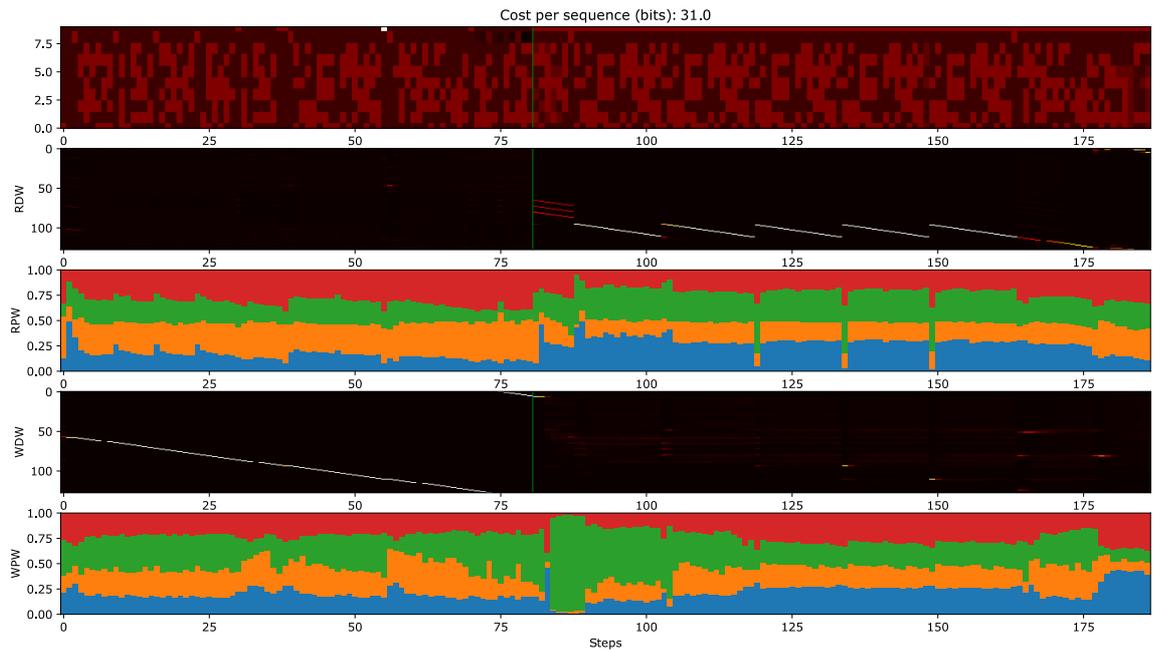}
\par\end{centering}
\caption{Copy+Repeat Copy+Associative Recall+Priority Sort (p=4).}
\end{figure}

\subsubsection{Perseveration Phenomenon in NTM (Sequencing Tasks)\label{subsec:Perseveration-phenomenon-in}}

\begin{figure}[H]
\begin{centering}
\includegraphics[width=1\linewidth]{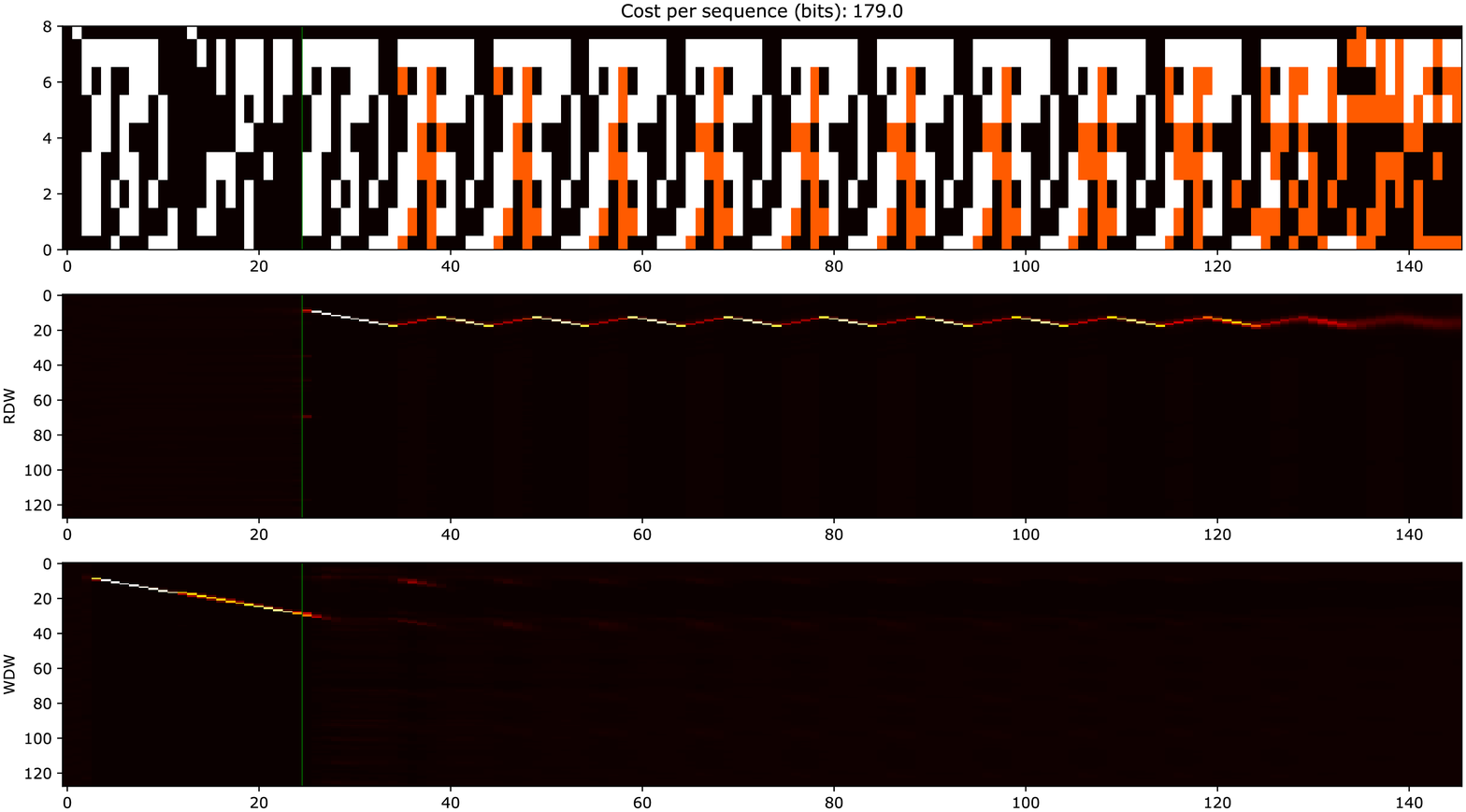}
\par\end{centering}
\caption{Copy+Repeat Copy perseveration (only Repeat Copy).}
\end{figure}

\begin{figure}[H]
\begin{centering}
\includegraphics[width=1\linewidth]{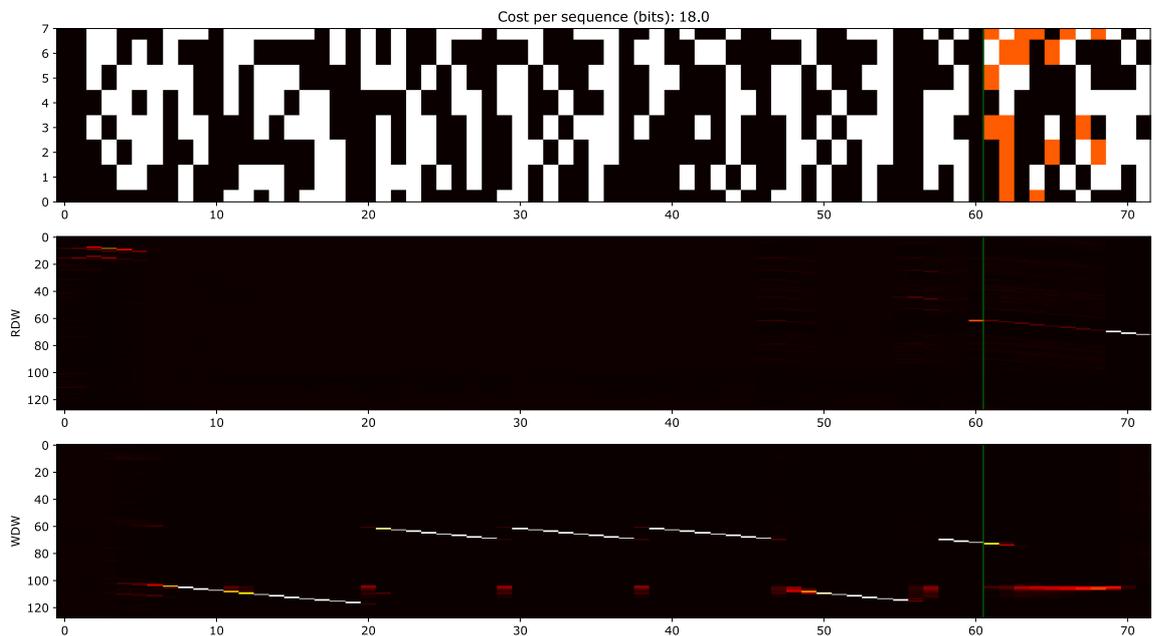}
\par\end{centering}
\caption{Copy+Associative Recall perseveration (only Copy).}
\end{figure}

\begin{figure}[H]
\begin{centering}
\includegraphics[width=1\linewidth]{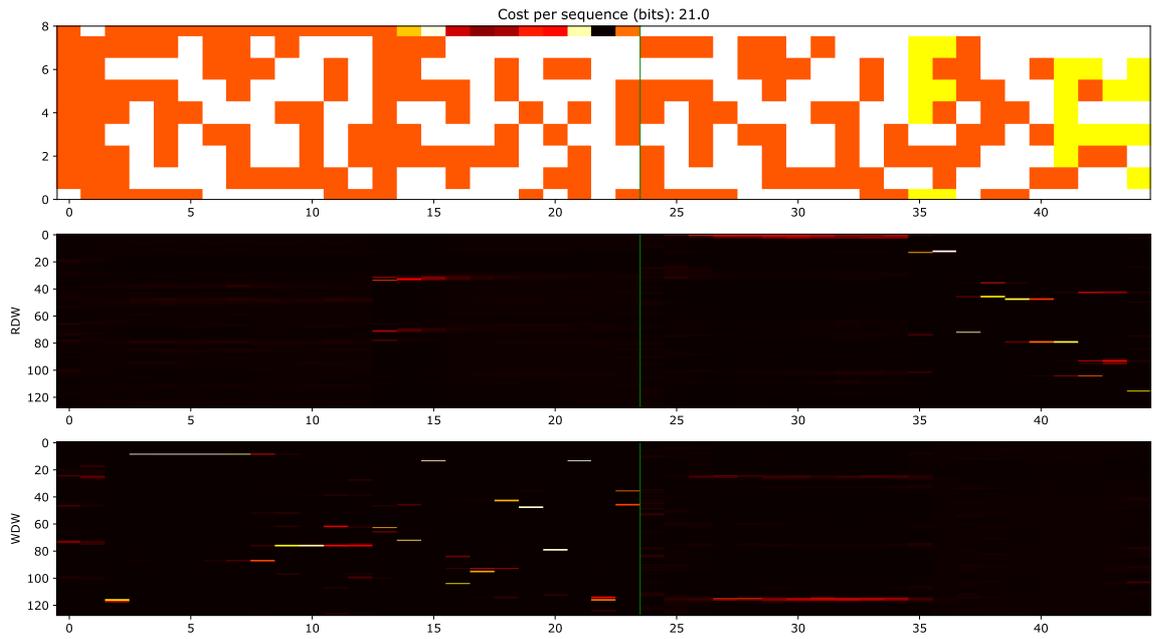}
\par\end{centering}
\caption{Copy+Priority Sort perseveration (only Copy).}
\end{figure}

\begin{figure}[H]
\begin{centering}
\includegraphics[width=1\linewidth]{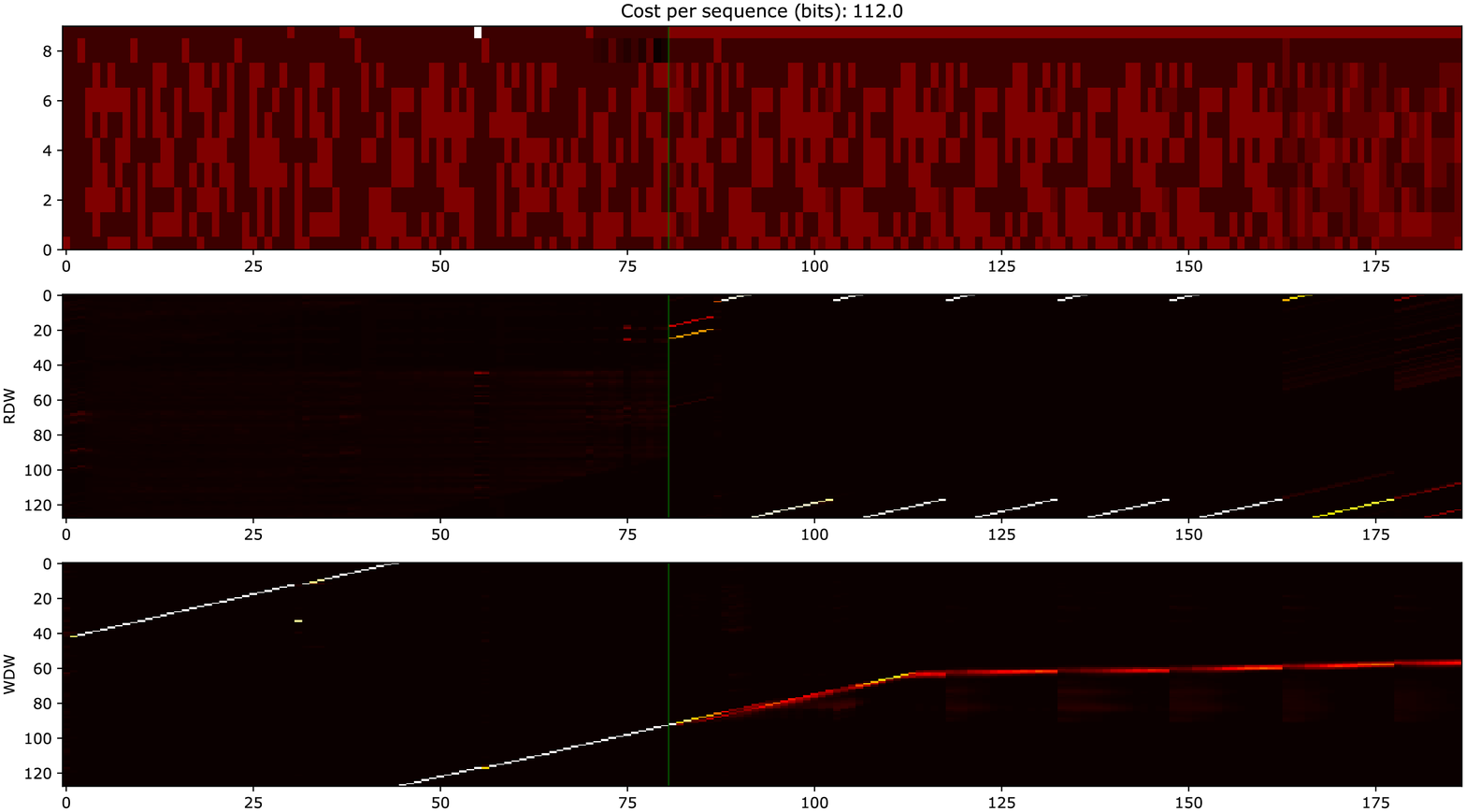}
\par\end{centering}
\caption{Copy+Repeat Copy+Associative Recall+Priority Sort perseveration (only
Repeat Copy).}
\end{figure}

\subsection{Details on Synthetic Tasks\label{subsec:Details-on-Synthetic}}

\subsubsection{NTM Single Tasks}

\begin{table}[H]
\begin{centering}
\begin{tabular}{ccccccccc}
\hline 
\multirow{2}{*}{{\footnotesize{}Tasks}} & \multicolumn{2}{c}{{\footnotesize{}\#Head}} & \multicolumn{2}{c}{{\footnotesize{}Controller Size}} & \multicolumn{2}{c}{{\footnotesize{}Memory Size}} & \multicolumn{2}{c}{{\footnotesize{}\#Parameters}}\tabularnewline
\cline{2-9} \cline{3-9} \cline{4-9} \cline{5-9} \cline{6-9} \cline{7-9} \cline{8-9} \cline{9-9} 
 & {\footnotesize{}NTM} & {\footnotesize{}NUTM} & {\footnotesize{}NTM} & {\footnotesize{}NUTM} & {\footnotesize{}NTM} & {\footnotesize{}NUTM} & {\footnotesize{}NTM} & {\footnotesize{}NUTM}\tabularnewline
\hline 
{\footnotesize{}Copy} & {\footnotesize{}1} & {\footnotesize{}1} & {\footnotesize{}100} & {\footnotesize{}80} & {\footnotesize{}128} & {\footnotesize{}128} & {\footnotesize{}63,260} & {\footnotesize{}52,206}\tabularnewline
\hline 
{\footnotesize{}Repeat Copy} & {\footnotesize{}1} & {\footnotesize{}1} & {\footnotesize{}100} & {\footnotesize{}80} & {\footnotesize{}128} & {\footnotesize{}128} & {\footnotesize{}63,381} & {\footnotesize{}52,307}\tabularnewline
\hline 
{\footnotesize{}Associative Recall} & {\footnotesize{}1} & {\footnotesize{}1} & {\footnotesize{}100} & {\footnotesize{}80} & {\footnotesize{}128} & {\footnotesize{}128} & {\footnotesize{}62,218} & {\footnotesize{}51,364}\tabularnewline
\hline 
{\footnotesize{}Dynamic N-grams} & {\footnotesize{}1} & {\footnotesize{}1} & {\footnotesize{}100} & {\footnotesize{}80} & {\footnotesize{}128} & {\footnotesize{}128} & {\footnotesize{}58,813} & {\footnotesize{}48,619}\tabularnewline
\hline 
{\footnotesize{}Priority Sort} & {\footnotesize{}5} & {\footnotesize{}5} & {\footnotesize{}200} & {\footnotesize{}150} & {\footnotesize{}128} & {\footnotesize{}128} & {\footnotesize{}344,068} & {\footnotesize{}302,398}\tabularnewline
\hline 
{\footnotesize{}Long Copy} & {\footnotesize{}1} & {\footnotesize{}1} & {\footnotesize{}100} & {\footnotesize{}80} & {\footnotesize{}256} & {\footnotesize{}256} & {\footnotesize{}63,260} & {\footnotesize{}52,206}\tabularnewline
\hline 
\end{tabular}
\par\end{centering}
~

\caption{Model hyper-parameters (single tasks).}
\end{table}

\begin{table}[H]
\begin{centering}
{\footnotesize{}}%
\begin{tabular}{lll}
\hline 
\multirow{1}{*}{{\small{}Tasks}} & {\small{}Training} & {\small{}Testing}\tabularnewline
\hline 
{\small{}Copy} & {\small{}Sequence length range: {[}1, 20{]}} & {\small{}Sequence length: 120}\tabularnewline
\hline 
\multirow{2}{*}{{\small{}Repeat Copy}} & {\small{}Sequence length range: {[}1, 10{]}} & {\small{}Sequence length range: {[}10, 20{]}}\tabularnewline
 & {\small{}\#Repeat range: {[}1, 10{]}} & {\small{}\#Repeat range: {[}10, 20{]}}\tabularnewline
\hline 
\multirow{3}{*}{{\small{}Associative Recall}} & {\small{}Sequence length: 3} & {\small{}Sequence length: 3}\tabularnewline
 & {\small{}\#Item range: {[}2, 6{]}} & {\small{}\#Item range: {[}6, 20{]}}\tabularnewline
 & {\small{}Item length: 3} & {\small{}Item length: 3}\tabularnewline
\hline 
{\small{}Dynamic N-grams} & {\small{}Sequence length: 50} & {\small{}Sequence length: 200}\tabularnewline
\hline 
\multirow{2}{*}{{\small{}Priority Sort}} & {\small{}\#Item: 20} & {\small{}\#Item: 20}\tabularnewline
 & {\small{}\#Sorted Item: 16} & {\small{}\#Sorted Item: 20}\tabularnewline
\hline 
{\small{}Long Copy} & {\small{}Sequence length range: {[}1, 40{]}} & {\small{}Sequence length: 200}\tabularnewline
\hline 
\end{tabular}{\footnotesize\par}
\par\end{centering}
~

\caption{Task settings (single tasks).}
\end{table}

\subsubsection{NTM Sequencing Tasks}

\begin{table}[H]
\begin{centering}
\begin{tabular}{ccccccccc}
\hline 
\multirow{2}{*}{{\footnotesize{}Tasks}} & \multicolumn{2}{c}{{\footnotesize{}\#Head}} & \multicolumn{2}{c}{{\footnotesize{}Controller Size}} & \multicolumn{2}{c}{{\footnotesize{}Memory Size}} & \multicolumn{2}{c}{{\footnotesize{}\#Parameters}}\tabularnewline
\cline{2-9} \cline{3-9} \cline{4-9} \cline{5-9} \cline{6-9} \cline{7-9} \cline{8-9} \cline{9-9} 
 & {\footnotesize{}NTM} & {\footnotesize{}NUTM} & {\footnotesize{}NTM} & {\footnotesize{}NUTM} & {\footnotesize{}NTM} & {\footnotesize{}NUTM} & {\footnotesize{}NTM} & {\footnotesize{}NUTM}\tabularnewline
\hline 
{\footnotesize{}C+RC} & {\footnotesize{}1} & {\footnotesize{}1} & {\footnotesize{}200} & {\footnotesize{}150} & {\footnotesize{}128} & {\footnotesize{}128} & {\footnotesize{}206,481} & {\footnotesize{}153,941}\tabularnewline
\hline 
{\footnotesize{}C+AR} & {\footnotesize{}1} & {\footnotesize{}1} & {\footnotesize{}200} & {\footnotesize{}150} & {\footnotesize{}128} & {\footnotesize{}128} & {\footnotesize{}206,260} & {\footnotesize{}153,770}\tabularnewline
\hline 
{\footnotesize{}C+PS} & {\footnotesize{}3} & {\footnotesize{}3} & {\footnotesize{}200} & {\footnotesize{}150} & {\footnotesize{}128} & {\footnotesize{}128} & {\footnotesize{}275,564} & {\footnotesize{}263,894}\tabularnewline
\hline 
{\footnotesize{}C+RC+AR+PS} & {\footnotesize{}3} & {\footnotesize{}3} & {\footnotesize{}250} & {\footnotesize{}200} & {\footnotesize{}128} & {\footnotesize{}128} & {\footnotesize{}394,575} & {\footnotesize{}448,379}\tabularnewline
\hline 
\end{tabular}
\par\end{centering}
~

\caption{Model hyper-parameters (sequencing tasks).}
\end{table}

\begin{table}[H]
\begin{centering}
{\footnotesize{}}%
\begin{tabular}{lll}
\hline 
\multirow{1}{*}{{\small{}Tasks}} & {\small{}Training} & {\small{}Testing}\tabularnewline
\hline 
\multirow{2}{*}{{\small{}C+RC}} & {\small{}Sequence length range: {[}1, 10{]}} & {\small{}Sequence length range: {[}10, 20{]}}\tabularnewline
 & {\small{}\#Repeat range: {[}1, 10{]}} & {\small{}\#Repeat range: {[}10, 15{]}}\tabularnewline
\hline 
\multirow{3}{*}{{\small{}C+AR}} & {\small{}Sequence length range: {[}1, 10{]}} & {\small{}Sequence length range: {[}10, 20{]}}\tabularnewline
 & {\small{}\#Item range: {[}2, 4{]}} & {\small{}\#Item range: {[}4, 6{]}}\tabularnewline
 & {\small{}Item length: 8} & {\small{}Item length: 8}\tabularnewline
\hline 
\multirow{3}{*}{{\small{}C+PS}} & {\small{}Sequence length range: {[}1, 10{]}} & {\small{}Sequence length range: {[}10, 20{]}}\tabularnewline
 & {\small{}\#Item: 10} & {\small{}\#Item: 10}\tabularnewline
 & {\small{}\#Sorted Item: 8} & {\small{}\#Sorted Item: 10}\tabularnewline
\hline 
\multirow{6}{*}{{\small{}C+RC+AR+PS}} & {\small{}Sequence length range: {[}1, 10{]}} & {\small{}Sequence length range: {[}10, 20{]}}\tabularnewline
 & {\small{}\#Repeat range: {[}1, 5{]}} & {\small{}\#Repeat: 6}\tabularnewline
 & {\small{}\#Item range: {[}2, 4{]}} & {\small{}\#Item: 5}\tabularnewline
 & {\small{}Item length: 6} & {\small{}Item length: 6}\tabularnewline
 & {\small{}\#Item: 10} & {\small{}\#Item: 10}\tabularnewline
 & {\small{}\#Sorted Item: 8} & {\small{}\#Sorted Item: 10}\tabularnewline
\hline 
\end{tabular}{\footnotesize\par}
\par\end{centering}
~

\caption{Task settings (sequencing tasks).}
\end{table}

\subsubsection{Continual Procedure Learning Tasks}

\begin{table}[H]
\begin{centering}
{\small{}}%
\begin{tabular}{lll}
\hline 
\multirow{1}{*}{{\small{}Tasks}} & {\small{}Training} & {\small{}Testing}\tabularnewline
\hline 
{\small{}Copy} & {\small{}Sequence length range: {[}1, 10{]}} & {\small{}Sequence length range: {[}1, 10{]}}\tabularnewline
\hline 
\multirow{2}{*}{{\small{}Repeat Copy}} & {\small{}Sequence length range: {[}1, 5{]}} & {\small{}Sequence length range: {[}1, 5{]}}\tabularnewline
 & {\small{}\#Repeat range: {[}1, 5{]}} & {\small{}\#Repeat range: {[}1, 5{]}}\tabularnewline
\hline 
\multirow{3}{*}{{\small{}Associative Recall}} & {\small{}Sequence length: 3} & {\small{}Sequence length: 3}\tabularnewline
 & {\small{}\#Item range: {[}2, 3{]}} & {\small{}\#Item range: {[}2, 3{]}}\tabularnewline
 & {\small{}Item length: 3} & {\small{}Item length: 3}\tabularnewline
\hline 
\multirow{2}{*}{{\small{}Priority Sort}} & {\small{}\#Item: 10} & {\small{}\#Item: 10}\tabularnewline
 & {\small{}\#Sorted Item: 8} & {\small{}\#Sorted Item: 8}\tabularnewline
\hline 
\end{tabular}{\small\par}
\par\end{centering}
~

\caption{Task settings (continual procedure learning tasks).}
\end{table}

\subsection{Details on Few-Shot Learning Task\label{subsec:Details-on-Few-shot}}

We use similar hyper-parameters as in \citet{santoro2016meta}, which
are reported in Tab. \ref{tab:NUTM-hyper-parameters-for-1}.

\begin{table}[H]
\begin{centering}
\begin{tabular}{ccccccccc}
\hline 
{\scriptsize{}Model} & {\scriptsize{}$p$} & {\scriptsize{}\#Head} & {\scriptsize{}Controller Size} & {\scriptsize{}$N$} & {\scriptsize{}$M$} & {\scriptsize{}$\mathbf{M}_{p}.K$ Size} & {\scriptsize{}Optimiser} & {\scriptsize{}Learning Rate}\tabularnewline
\hline 
{\scriptsize{}MANN (LRUA)} & {\scriptsize{}1} & {\scriptsize{}4} & {\scriptsize{}200} & {\scriptsize{}128} & {\scriptsize{}40} & {\scriptsize{}0} & {\scriptsize{}RMSprop} & {\scriptsize{}$10^{-4}$}\tabularnewline
\hline 
{\scriptsize{}NUTM (LRUA)} & {\scriptsize{}2} & {\scriptsize{}4} & {\scriptsize{}180} & {\scriptsize{}128} & {\scriptsize{}40} & {\scriptsize{}2} & {\scriptsize{}RMSprop} & {\scriptsize{}$10^{-4}$}\tabularnewline
{\scriptsize{}NUTM (LRUA)} & {\scriptsize{}3} & {\scriptsize{}4} & {\scriptsize{}150} & {\scriptsize{}128} & {\scriptsize{}40} & {\scriptsize{}3} & {\scriptsize{}RMSprop} & {\scriptsize{}$10^{-4}$}\tabularnewline
\hline 
\end{tabular}
\par\end{centering}
~

\caption{Hyper-parameters for few-shot learning.\label{tab:NUTM-hyper-parameters-for-1}}
\end{table}

Testing accuracy through time is listed below,

\begin{figure}[H]
\begin{centering}
\includegraphics[width=1\linewidth]{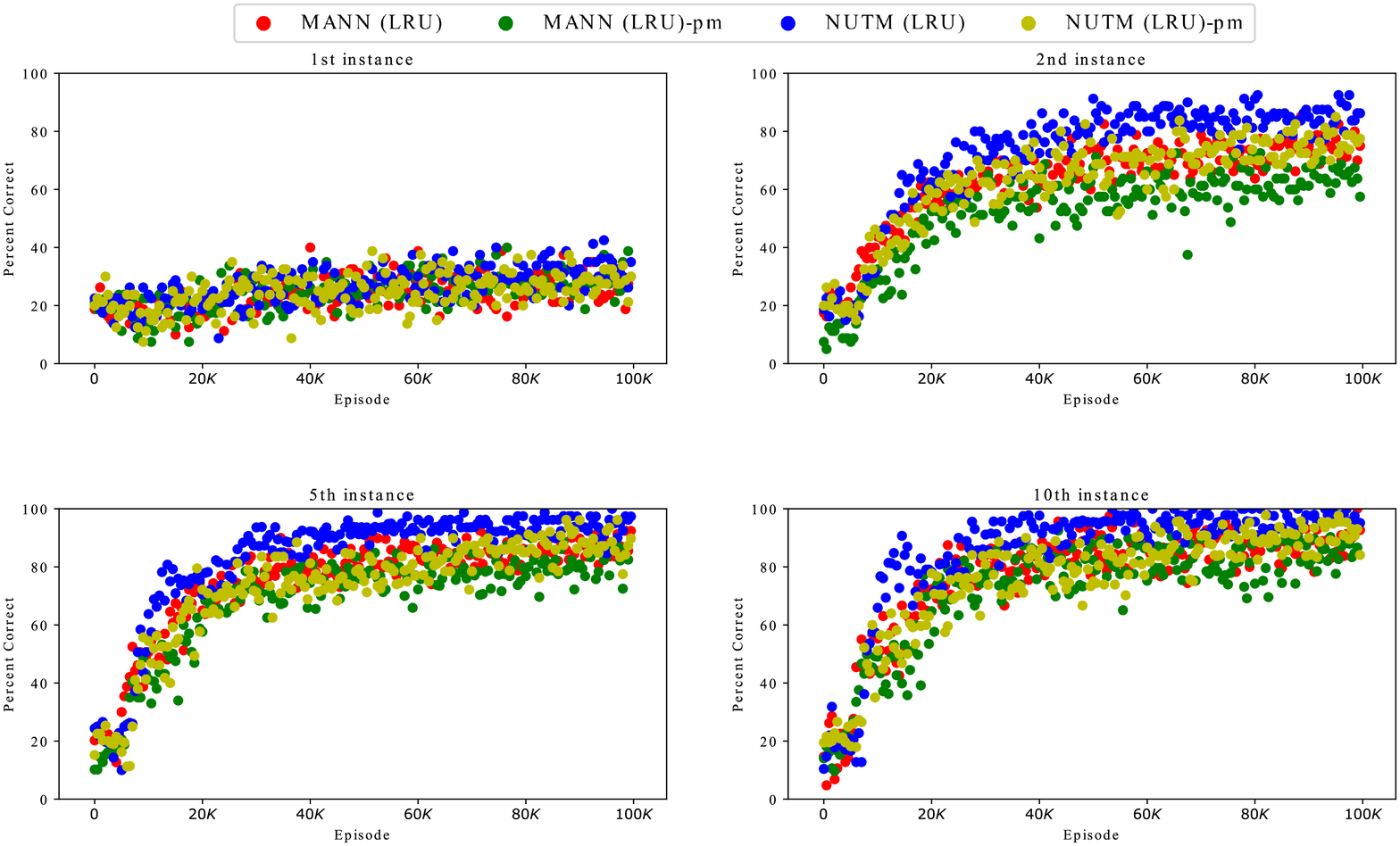}
\par\end{centering}
\caption{Testing accuracy during training (five random classes/episode, one-hot
vector labels, of length 50).}
\end{figure}

\begin{figure}[H]
\begin{centering}
\includegraphics[width=1\linewidth]{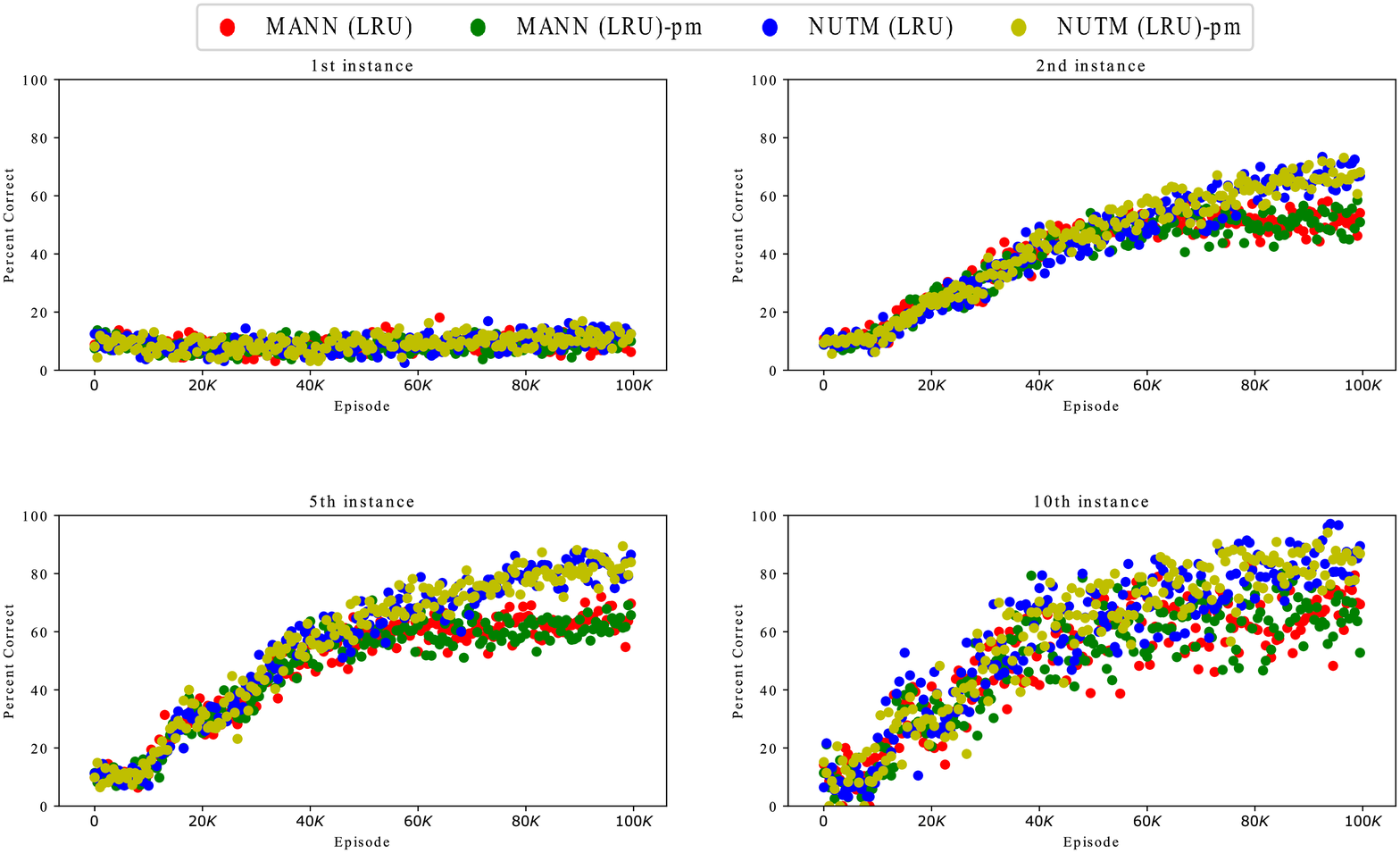}
\par\end{centering}
\caption{Testing accuracy during training (ten random classes/episode, one-hot
vector labels, of length 75).}
\end{figure}

Final testing accuracy is listed as follows,

\begin{table}
\begin{centering}
\begin{tabular}{lccccccc}
\hline 
\multirow{2}{*}{Model} & Persistent & \multicolumn{3}{c}{5 classes} & \multicolumn{3}{c}{10 classes}\tabularnewline
\cline{3-8} \cline{4-8} \cline{5-8} \cline{6-8} \cline{7-8} \cline{8-8} 
 & memory\tablefootnote{If the memory is not artificially erased between episodes, it is called
persistent. This mode is hard for the case of 5 classes \citet{santoro2016meta} } & $2^{nd}$ & $3^{rd}$ & $5^{th}$ & $2^{nd}$ & $3^{rd}$ & $5^{th}$\tabularnewline
\hline 
MANN (LRUA){*} & No & 82.8 & 91.0 & 94.9 & - & - & -\tabularnewline
MANN (LRUA) & No & 82.3 & 88.7 & 92.3 & 52.7 & 60.6 & 64.7\tabularnewline
NUTM (LRUA) & No & \textbf{85.7} & \textbf{91.3} & \textbf{95.5} & \textbf{68.0} & \textbf{78.1} & \textbf{82.8}\tabularnewline
\hline 
Human{*} & Yes & 57.3 & 70.1 & 81.4 & - & - & -\tabularnewline
MANN (LRUA){*} & Yes & $\approx58.0$ & - & $\approx75.0$ & $\approx60.0$ & - & $\approx80.0$\tabularnewline
MANN (LRUA) & Yes & 66.2 & 73.4 & 81.0 & 51.3 & 59.2 & 63.3\tabularnewline
NUTM (LRUA) & Yes & \textbf{77.8} & \textbf{85.8} & \textbf{89.8} & \textbf{69.0} & \textbf{77.9} & \textbf{82.7}\tabularnewline
\hline 
\end{tabular}
\par\end{centering}
~

\caption{Test-set classification accuracy (\%) on the Omniglot dataset after
100,000 episodes of training. {*} denotes available results from Santoro
et al., (2016) (some are estimated from plotted figures).}
\end{table}

It should be noted that our goal was not to achieve state of the art
performance on this dataset. It was to exhibit the benefit of NSM
to MANN. Compared to current methods, the MANN and NUTM used in our
experiments do not use CNN to extract visual features, thus achieve
lower accuracy. 

\subsection{Details on bAbI Task\label{subsec:Details-on-bAbI}}

We train the models using RMSprop optimiser with fixed learning rate
of $10^{-4}$ and momentum of 0.9. The batch size is 32 and we adopt
layer \foreignlanguage{australian}{normalisation} \citet{lei2016layer}
to DNC's layers. Following common practices \citet{W18-2606}, we
also remove temporal linkage for faster training. The details of hyper-parameters
are listed in Table \ref{tab:NUTM-hyper-parameters-for}. Full NUTM
($p=4$) results are reported in Table \ref{tab:babifull}.

\begin{table}[H]
\begin{centering}
\begin{tabular}{ccccccc}
\hline 
\#Head & Controller Size & $N$ & $M$ & $p$ & $\mathbf{M}_{p}.K$ Size & \#Parameters\tabularnewline
\hline 
4 & 172 & 196 & 64 & 4 & 4 & 794,773\tabularnewline
4 & 200 & 196 & 64 & 2 & 2 & 934,787\tabularnewline
\hline 
\end{tabular}
\par\end{centering}
~

\caption{NUTM hyper-parameters for bAbI.\label{tab:NUTM-hyper-parameters-for}}
\end{table}

\begin{table}[H]
\begin{centering}
\begin{tabular}{lcc}
\hline 
Task & bAbI Best Results & bAbI Mean Results\tabularnewline
\hline 
1: 1 supporting fact & 0.0 & 0.0 $\pm$ 0.0\tabularnewline
2: 2 supporting facts  & 0.2 & 0.6 $\pm$ 0.3\tabularnewline
3: 3 supporting facts  & 4.0 & 7.6 $\pm$ 3.9\tabularnewline
4: 2 argument relations  & 0.0 & 0.0 $\pm$ 0.0\tabularnewline
5: 3 argument relations  & 0.4 & 1.0 $\pm$ 0.4\tabularnewline
6: yes/no questions  & 0.0 & 0.0 $\pm$ 0.0\tabularnewline
7: counting  & 1.9 & 1.5 $\pm$ 0.8\tabularnewline
8: lists/sets  & 0.6 & 0.3 $\pm$ 0.2\tabularnewline
9: simple negation  & 0.0 & 0.0 $\pm$ 0.0\tabularnewline
10: indefinite knowledge  & 0.1 & 0.1 $\pm$ 0.0\tabularnewline
11: basic coreference  & 0.0 & 0.0 $\pm$ 0.0\tabularnewline
12: conjunction  & 0.0 & 0.0 $\pm$ 0.0\tabularnewline
13: compound coreference  & 0.1 & 0.0 $\pm$ 0.0\tabularnewline
14: time reasoning  & 0.3 & 1.6 $\pm$ 2.2\tabularnewline
15: basic deduction  & 0.0 & 2.6 $\pm$ 8.3\tabularnewline
16: basic induction  & 49.3 & 52.0 $\pm$ 1.7\tabularnewline
17: positional reasoning  & 4.7 & 18.4 $\pm$ 12.7\tabularnewline
18: size reasoning  & 0.4 & 1.6 $\pm$ 1.1\tabularnewline
19: path finding  & 4.3 & 23.7 $\pm$ 32.2\tabularnewline
20: agent\textquoteright s motivation  & 0.0 & 0.0 $\pm$ 0.0\tabularnewline
\hline 
Mean Error (\%) & 3.3 & 5.6 $\pm$ 1.9\tabularnewline
\hline 
Failed (Err. >5\%) & 1 & 3 $\pm$ 1.2\tabularnewline
\hline 
\end{tabular}
\par\end{centering}
~

\caption{NUTM ($p=4$) bAbI best and mean errors (\%).\label{tab:babifull}}
\end{table}

\subsection{Others}

If we deliberately set the key dimension equal to the number of programs,
we can even place an orthogonal basis constraint on the key space
of NSM by minimising the following loss, 

\begin{equation}
l_{p_{2}}=\left\Vert \mathbf{M}_{p}.K\mathbf{M}_{p}.K^{T}-\mathbf{I}\right\Vert 
\end{equation}
where $\mathbf{M}_{p}.K$ and $\mathbf{I}$ denote the key part in
NSM and the identity matrix, respectively.

For all tasks, $\eta_{t}$ is fixed to $0.1$, reducing with decay
rate of $0.9$. \selectlanguage{australian}%

\clearpage{}

\bibliographystyle{plainnat}
\addcontentsline{toc}{chapter}{\bibname}\bibliography{thesis}

\textbf{\textemdash \textemdash \textemdash \textemdash \textemdash \textemdash \textemdash \textemdash \textemdash \textendash }\textbf{\Large{}}\\
\textbf{Every reasonable effort has been made to acknowledge the owners
of copyright material. I would be pleased to hear from any copyright
owner who has been omitted or incorrectly acknowledged.}

\end{document}